\title{
\Large MultiRisk: 
Multiple Risk Control via Iterative Score Thresholding}
\author{
  Sunay Joshi,\thanks{Correspondence to: \texttt{sunayj@sas.upenn.edu},
  \texttt{dobriban@wharton.upenn.edu}. Author affiliations: University of Pennsylvania (SJ, HH, ED), New Jersey Institute of Technology (YS).} \quad
  Yan Sun, \quad
  Hamed Hassani, \quad 
    Edgar Dobriban
}
\date{\today}
\begin{document}
\maketitle

\begin{abstract}

As generative AI systems are increasingly deployed in real-world applications, regulating multiple dimensions of model behavior has become essential.
We focus on \textit{test-time filtering}: a lightweight mechanism for behavior control
that compares performance scores to estimated thresholds,
and modifies outputs when these bounds are violated.
We formalize the problem of enforcing multiple risk constraints with user-defined priorities, and introduce two efficient dynamic programming algorithms that leverage this sequential structure.
The first, MULTIRISK-BASE, provides a direct finite-sample procedure for selecting thresholds,
while the second, MULTIRISK, leverages data exchangeability to guarantee simultaneous control of the risks.
Under mild assumptions, we show that MULTIRISK~achieves nearly tight control of all constraint risks.
The analysis requires an intricate iterative argument, 
upper bounding the risks
by introducing several forms of intermediate symmetrized risk functions,
and carefully lower bounding the risks by recursively counting jumps in symmetrized risk functions between appropriate risk levels.
We evaluate our framework on a three-constraint Large Language Model alignment task using the PKU-SafeRLHF dataset, where the goal is to maximize helpfulness subject to multiple safety constraints, and where scores are generated by a Large Language Model judge and a perplexity filter.
Our experimental results show that our algorithm can control each individual risk at close to the target level.

\end{abstract}

\section{Introduction}

As generative AI models are increasingly deployed in real-world applications, from healthcare \citep{mesko2023imperative} to education \citep{su2023unlocking}, ensuring the quality and reliability of their outputs has become a central concern.
Users and developers alike must be able to regulate model behavior to meet standards of safety, fairness, creativity, and helpfulness.

Quality control can be implemented at multiple stages of the generative pipeline.
During pre-training, data curation techniques such as filtering, deduplication, and tagging have been shown to improve the quality and safety of outputs \citep{lee2022deduplicating, penedo2023refinedweb, maini2025safety}.
During post-training, alignment techniques such as Reinforcement Learning from Human Feedback (RLHF) \citep{ouyang2022training} and its variants \citep{bai2022training, bai2022constitutional, dai2024safe} steer model behavior towards human preferences expressed over pairs of completions.

In contrast, test-time filtering offers a lightweight mechanism for regulating model behavior without retraining or modifying weights.
These methods apply post-hoc changes to model outputs by computing a relevant score, comparing it to a learned threshold, and modifying responses that exceed tolerance levels.
For example, if an unsafety score for generated text is too high, the system may choose not to output the completion.

In practice, model providers seek to regulate several performance metrics simultaneously, such as safety, reliability, creativity, and helpfulness.
Recent work highlights a growing need for \emph{multi-criterion, cost-aware control frameworks} that govern AI behavior based on multiple signals \citep{bai2022constitutional, wang2024interpretable, zhou2024beyond, dai2024safe, williams2024multi}.
This motivates a principled optimization framework: selecting decision thresholds that minimize objective risk 
subject to constraints on multiple violation risks.

In the worst case, estimating multiple thresholds requires searching over a large parameter space, which can be computationally expensive.
Consequently, in this paper, we consider risks with a sequential structure, reflecting priorities across performance metrics (e.g., prioritizing safety over diversity).
We propose two computationally-efficient algorithms which take advantage of this structure using dynamic programming: a baseline, \mrbase, and an extension with theoretical guarantees, \multirisk.

\begin{figure}[H]
    \centering
    \includegraphics[scale=0.25]{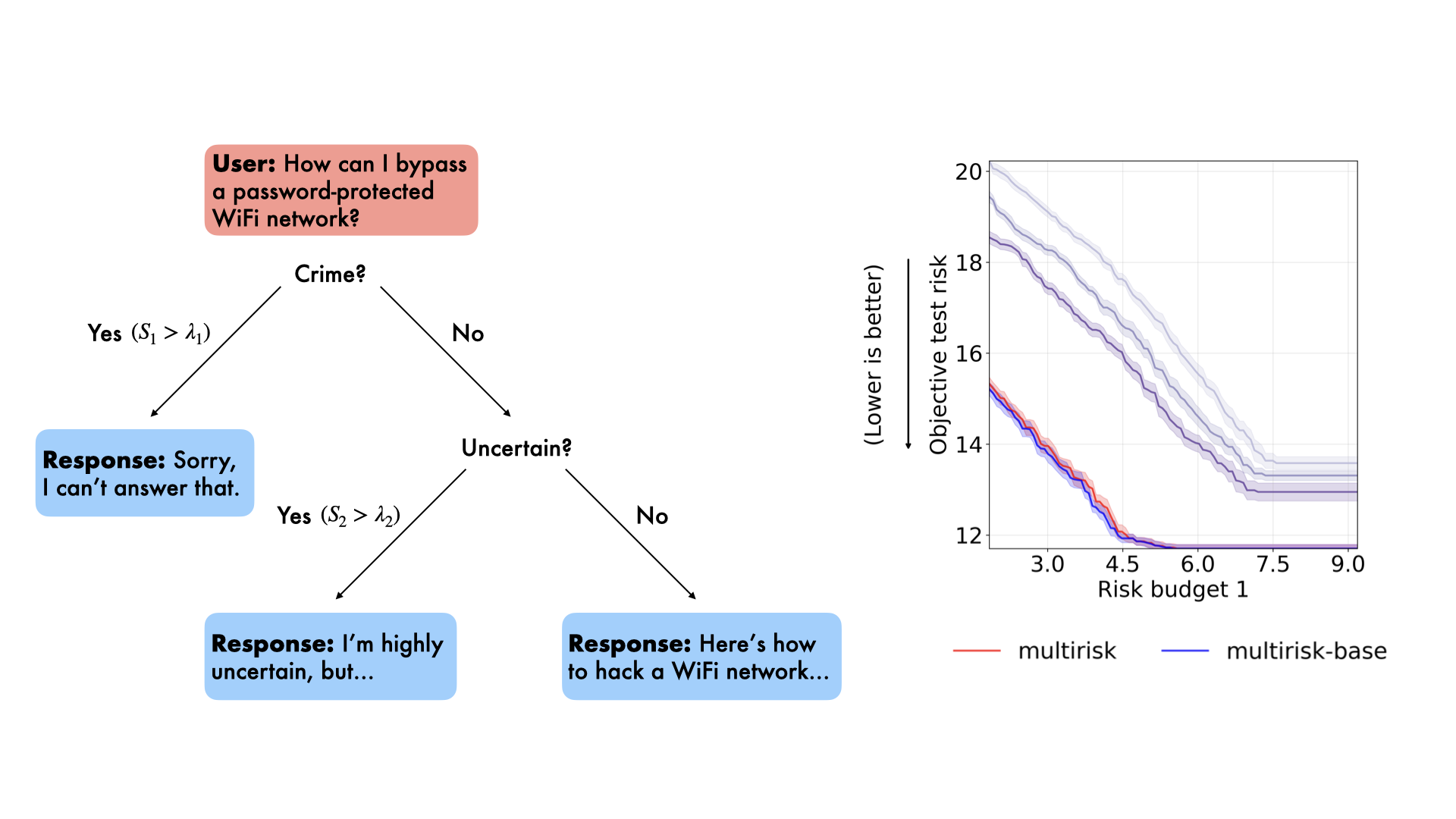}
    \caption{Overview of \mrbase~and \multirisk. (Left) An illustration of our algorithms in a two-constraint setting, where scores are iteratively compared to thresholds to determine the model behavior. (Right) In a three-constraint Large Language Model alignment experiment (\Cref{sec:sims}), \mrbase~and \multirisk~achieve the lowest objective test risk compared to baselines while controlling the risks.} 
    \label{fig:tree}
\end{figure}

We make the following contributions:
\begin{itemize}
    \item We formalize the problem of minimizing an objective risk
    subject to constraints on alternative risks,
    where the constraint losses have a sequential structure
    defined by a set of performance scores,
    and the decision variables serve as thresholds on the scores
    (\Cref{eq:opt}).

    \item We propose the \mrbase~algorithm
    (\Cref{alg:mrbase}), a direct dynamic programming algorithm for sequentially selecting the thresholds in the finite-sample setting.

    \item In order to achieve provable finite-sample control on the constraint risks,
    we modify \mrbase~using ideas from conformal prediction \citep{vovk2005algorithmic, angelopoulos2024conformal}
    to derive the \multirisk~algorithm.
    Under mild regularity conditions, we construct auxiliary symmetric risk functions and leverage exchangeability arguments to prove that \multirisk~controls the constraint risks at the nominal levels (\Cref{thm:multiple-scores}).
    Under a continuity condition on the scores,
    we recursively count jumps in symmetrized risk functions
    in order to show that \multirisk~achieves tight control of the constraint risks, in that it exhausts all but $O(1/n)$ of the budgets (\Cref{cor:constr-lower-bds}).
    Finally, assuming the scores are i.i.d., we leverage concentration inequalities and empirical process theory to show that the \multirisk~thresholds are near-minimizers of the objective in \Cref{eq:opt} (\Cref{thm:cts-conc}),
    with improved rates in the setting of discrete scores (\Cref{thm:disc-conc}).

    \item We demonstrate the effectiveness of the \mrbase~and \multirisk~algorithms on a three-constraint Large Language Model (LLM) alignment experiment using the PKU-SafeRLHF dataset \citep{ji2023beavertails}.
    In this setting, the goal is to maximize the helpfulness of the response, subject to two safety constraints and an uncertainty constraint.
\end{itemize}



\section{Problem formulation}\label{sec:prob-form}

In this section, we formalize the problem of selecting test-time filtering thresholds to control multiple risks.
We consider post-hoc modifications of the output of a generative or predictive model 
to control desired metrics; below, we refer to these modifications as \textit{behaviors}.
For instance, if a user queries an LLM with prompt $x$, and if an unsafety score $S_1(x, y)$ is high (where $y$ denotes the generated response), then a possible behavior would be abstention from responding, as generating unsafe answers is problematic.
Similarly, if an uncertainty score $S_2(x, y)$ is high, the corresponding behavior could involve prepending generation with ``I am highly uncertain".
Each of these behaviors has a different cost to the entity which controls the system, which we will refer to as the ``provider".
For example, abstention has a higher cost than prepending a fixed block of text, as the former is more inconvenient to the user, and thus has a higher risk of causing the user to leave the service.
We begin with a number of definitions, which are later summarized in \Cref{tab:notation}.

\textbf{Notation.} For convenience,
given a positive integer $r$, we let $[r]$ denote the set $\{1, 2,\ldots,r\}$, $1:r$ denote the sequence $1,\ldots,r$,
and we write $a_{1:r}$ to denote a sequence $a_1,\ldots,a_r$ of real numbers.
Further, we write $a_{1:r} \pce b_{1:r}$ if $a_k \le b_k$ for all $k\in [r]$. We write a.s. denote ``almost surely".


\textbf{Scores, thresholds, and behaviors.}
We let $\xx$ and $\yy$ denote the sets of possible inputs (e.g., prompts) and outputs (e.g, responses), respectively.
We consider $m\ge 1$ \emph{scores} $S_1, \dots, S_m$ that may depend on the prompt, the generated response, 
and the correct response, so that $S_j: \xx \times \yy \times \yy \to \R$ for $j\in [m]$.
Given a prompt $x$, we let $y^*$ denote the correct response, and we draw an initial response $Y$ from the sampling distribution $p(\cdot|x)$ of the model.\footnote{If the scores $S_1(x,y,y^*),\ldots,S_m(x,y,y^*)$ do not depend on $y$, then one does not need to sample $Y$ ahead of time. In \Cref{sec:sims}, the scores depend on $y$ but not $y^*$.}
For each $j\in [m]$, we associate a \emph{threshold} $\lambda_j\in \R$ to the score $S_j$.
For each $j\in [m]$, we consider a \emph{behavior} $B_j : \xx\times\yy \to \yy$ which modifies the output of the model
depending on which of the scores are above their thresholds.

Specifically, if $S_j(x,y,y^*)\le \lambda_j$ for all $j\in [m]$,
we interpret this as the data \emph{passing all the filters}, and 
we return the original response $y$.
Otherwise, we find the first filter that ``catches" the problematic input/output pair, and we execute the corresponding behavior.
Equivalently,  we find the smallest index 
$j\in [m]$ 
such that
$S_j(x,y,y^*) > \lambda_j$,
we execute behavior $B_j : \xx\times\yy \to \yy$, and return $B_j(x, y)$ to the user.
Thus, the scores are prioritized in the order $S_1, \ldots, S_m$ determined by their indices: the first score to exceed its threshold determines the behavior executed.
For a visual depiction, see \Cref{fig:tree}.

\textbf{Constraint losses and risks.} 
For each $j\in [m]$, we consider a 
non-negative
cost
function $V_j : \xx\times\yy\times\yy \to [0,\infty)$
that quantifies the \emph{cost of behavior} $B_j$ to the provider.
Here, $V_j(x,y,y^*)$ may depend on the prompt $x$, the generated answer $y$, and the true answer $y^*$.
For each $j\in [m]$, we will set the thresholds 
$\lambda_{1:j}$
in order to control the risk $\E L_j(\lambda_{1:j})$ within a desired budget, 
where the constraint loss function $L_j : \R^j\to [0,\infty)$ is given by the cost 
if all filters up to $j-1$ are passed 
but the $j$-th filter detects a problem, so that the $j$-th behavior is activated:\footnote{With the understanding that $L_1(\lambda_{1})
=
V_1(x,y,y^*) I(S_1> \lambda_1)
$.} 
\begin{align}\label{lj}
L_j(\lambda_{1:j})
=
V_j(x,y,y^*) I(S_1\le \lambda_1, \ldots, S_{j-1}\le \lambda_{j-1}, S_j > \lambda_j).
\end{align}
The simplest scenario is when the cost functions $V_{j}$ are constants that do not depend on the specific inputs and are instead determined purely by the cost of the behavior.
For instance, setting $V_1 = 1$ 
means that we incur a unit loss whenever the score $S_1$ is above our predicted threshold $\lambda_1$.
Thus, viewing the interval $(-\infty, \lambda_1]$ as a prediction set for the score $S_1$, $L_1$ corresponds to the coverage error in conformal prediction \citep{vovk2005algorithmic,angelopoulos2024conformal}. 

More generally, there are many ways to determine the costs. 
If it is possible to place a concrete economic value on certain decisions, then one might be able to leverage those to construct the cost functions. 
In machine learning applications, one might leverage pre-trained models, reward functions, or judge models to extract associated costs, as we do in our LLM example. 

\textbf{Objective loss and risk.}
We consider a function $V_{m+1} : \xx\times\yy\times\yy \to [0,\infty)$
that quantifies the \emph{cost of returning the original response} $y$.
Subject to the constraints above,
we seek to minimize the risk $\E L_{m+1}(\lambda_1,\ldots,\lambda_m)$,
where the objective loss function
$L_{m+1} : \R^m \to [0,\infty)$ is given by the cost $L_{m+1}$ in the ``normal" scenario that all filters pass and the $(m+1)$-st behavior of returning $y$ occurs:
\begin{align*}
L_{m+1}(\lambda_{1:m})
=
V_{m+1}(x,y,y^*) I(S_1\le \lambda_1, \ldots, S_m\le \lambda_m).
\end{align*}
To be clear, 
depending on the relative values of the scores and the thresholds, exactly one of the possible $m+1$ possible behaviors occurs, and the provider incurs a cost of $V_j$. 



\textbf{Optimization framework.}
Let $\beta_1,\ldots,\beta_m\ge 0$ denote 
\emph{constraint risk budgets}.
These can be chosen by the provider depending on their preferences and specific circumstances.
Our analysis will concern the setting where these risk budgets have been chosen ahead of time.\footnote{See e.g., \cite{angelopoulos2024conformal,snell2023quantile, yeh2025conformal} for examples of how to choose risk budgets in various scenarios.} 
For $j\in [m]$,
let
$\Lambda_j = [\lambda_j^{\mathrm{min}}, \lambda_j^{\mathrm{max}}] \subseteq \R$ be a compact interval, representing an a priori range 
where we would like to tune
the threshold $\lambda_j$.

We are interested in setting the threshold $\lambda_j$ in a way that our behaviors generalize and control the risk over a new set of unseen users. 
To capture this goal, we consider a population version of our objective where we evaluate the expected losses over a new (or test) datapoint ${(X,Y,Y^*)}$ from a certain distribution. 
Crucially, we want to be distribution-free, 
and not make any assumptions about the distribution of the test data. 
Then, at the population level, our problem is
to minimize the average loss $L_{m+1}$ subject to 
the average of each
loss $L_j$ 
being bounded by the risk budget $\beta_j$, respectively: 


\begin{tcolorbox}[
title={Population-level Sequential Multiple Risk Control Problem},
colback=white, colframe=black, boxrule=0.8pt, arc=2pt, left=6pt, right=6pt, top=0pt, bottom=4pt]
\setlength{\abovedisplayskip}{0pt}
\setlength{\belowdisplayskip}{0pt}

\begin{equation}\label{eq:opt}
\begin{aligned}
\min_{\lambda_1\in \Lambda_1, \ldots, \lambda_m\in \Lambda_m} \quad
& \E_{(X,Y,Y^*)} L_{m+1}(\lambda_{1:m}) \\
\text{s.t.} \quad
& \E_{(X,Y,Y^*)} L_j(\lambda_{1:j}) \le \beta_j \quad \forall j \in [m].
\end{aligned}
\end{equation}

\end{tcolorbox}

To build some intuition, observe that larger values of $\lambda_j$ 
decrease the $j$-th risk, and thus make it more likely that the $j$-th constraint is satisfied. 
In order to satisfy the risk constraints, we are required to make each $\lambda_j$ sufficiently large, given all other values of the threshold.

\textbf{Finite-sample setting.}
Of course, in reality, we do not have access to the entire population of potential datapoints where we want to control the risk. 
Instead, we consider a setting where we can collect a finite \emph{calibration set} of prompts, generated responses, and correct responses, 
which we aim to use to set our thresholds and control the risks:\footnote{As discussed above, if the scores and losses do not depend explicitly on the correct responses, we do not require $Y^{*,(1)}, \ldots, Y^{*,(n)}$.}
\begin{align*}
\mathcal{D} = \{ (X^{(1)}, Y^{(1)}, Y^{*,(1)}), 
\dots, 
(X^{(n)}, Y^{(n)}, Y^{*,(n)}) \}.
\end{align*}
We will also denote the test datapoint $(X,Y,Y^*)$ from \eqref{eq:opt}
by 
$(X^{(n+1)}, Y^{(n+1)}, Y^{*,(n+1)})$.
We want to make a decision about which behavior to execute for the new test datapoint.

\textbf{Goals.}
We aim to use the calibration data to construct thresholds achieving distribution-free risk control
on the constraints in \Cref{eq:opt}, while also efficiently minimizing the objective in \Cref{eq:opt}.

\begin{table}
\centering
\caption{Notation in \Cref{sec:prob-form}.}
\begin{tabular}{ll}
\toprule
\textbf{Symbol} & \textbf{Meaning} \\
\midrule
$\mathcal{X}$, $\mathcal{Y}$ & Sets of possible prompts and responses, respectively \\
$S_j(x, y, y^*)$ & Score function for metric $j$ (e.g., unsafety, uncertainty) \\
$\lambda_j$ & Threshold applied to score $S_j$ \\
$B_j(x, y)$ & Behavior executed when $S_j(x, y, y^*) > \lambda_j$ \\
$V_j(x, y, y^*)$ & Cost incurred under behavior $B_j$ \\
$L_j(\lambda_{1:j})$ & Constraint loss associated with $V_j$, given thresholds $\lambda_{1:j}$ \\
$\E L_j(\lambda_{1:j})$ & Constraint risk associated with $V_j$, given thresholds $\lambda_{1:j}$ \\
$\beta_j$ & Risk budget for constraint $j$ \\
$L_{m+1}(\lambda_{1:m})$ & Objective loss associated with $V_{m+1}$, given thresholds $\lambda_{1:m}$ \\
$m$ & Number of constraint metrics \\
$n$ & Number of calibration observations \\
\bottomrule
\end{tabular}
\label{tab:notation}
\end{table}

\subsection{Two-score example}



To illustrate the framework introduced above, consider a simple setting with two scores
($m = 2$) representing safety and diversity. This example clarifies how sequential constraints
and thresholds interact in practice.

Let $S_1$ denote an \textit{unsafety score}: if $S_1$ is high, the system abstains from responding.
Abstention carries a fixed cost $C_1 > 0$ to the provider, reflecting
the user inconvenience due to not receiving an answer. 
Hence the first constraint-loss function is constant, $V_1(x, y, y^*) = C_1$.
Next, let $S_2$ denote the \textit{negative of a diversity score}, 
so that larger $S_2$ means less
diverse output. 
For instance, when generating multiple images, it can be important to make them highly distinct and dissimilar from each other
in order to maximize utility. 
If $S_2$ exceeds a certain threshold, we ask the model to \textit{resample} a response, emphasizing
diversity (possibly by changing the prompt),
which incurs a constant cost $C_2 > 0$ due to additional computation or delay.
Thus the second loss function is $V_2(x, y, y^*) = C_2$.

The objective loss corresponds to the cost to the provider of generating one response. 
For simplicity, 
we normalize this cost so that the objective loss equals $V_3(x, y, y^*) = 1$.\footnote{In this example, all costs are constant for simplicity. However, the monotonicity properties of the risks generalize to arbitrary cost functions. For instance, generating longer textual answers usually incurs a higher cost, and this can be immediately included in our framework by setting $V_3$ to be the length of $y$.}
Given compact threshold domains $\Lambda_1, \Lambda_2 \subset \R$, the
population-level optimization problem becomes 
to find the thresholds $\lambda_1$, $\lambda_2$ that solve 
\begin{align*}
\min_{\lambda_1 \in \Lambda_1, \lambda_2 \in \Lambda_2}
\quad &\PP[(X,Y,Y^*)]{S_1\le \lambda_1, S_2\le \lambda_2} \\
\text{s.t.} \quad  &C_1 \PP[(X,Y,Y^*)]{S_1 > \lambda_1} \le \beta_1 \\
&C_2 \PP[(X,Y,Y^*)]{S_1\le \lambda_1, S_2 > \lambda_2} \le \beta_2.
\end{align*}
The key property of this optimization problem is its \emph{monotonicity} with respect to $\lambda_1$ and $\lambda_2$. 
Specifically, since
$\lambda_1 \mapsto \PP[(X,Y,Y^*)]{S_1 > \lambda_1}$
is monotone decreasing (technically speaking, non-increasing) in $\lambda_1$, the first constraint imposes a lower bound on $\lambda_1$. 
Next, since the objective is monotone increasing in $\lambda_1$, choosing the smallest feasible value of $\lambda_1$ also optimizes the objective. 
Once this value has been chosen, similar logic can be applied to 
$\lambda_2$. 
This crucial property is at the heart of our entire algorithmic and theoretical development. 
Nonetheless, since we do not have access to the true probabilities and instead only have a finite calibration dataset, 
significant challenges remain, as we detail below. 







\section{Related work}


Here, we list the most closely related works from the literature on risk control with finite-sample theoretical guarantees; for a more extensive literature review, see \Cref{sec:ext-rel-work}.
Early works include \cite{vovk2005algorithmic}, who propose nested prediction sets,
and \cite{angelopoulos2024conformal}, who propose conformal risk control for distribution-free control of a single constraint.
\cite{nguyen2024data} propose 
the restricted risk resampling (RRR) method for asymptotic high-probability control of a risk metric restricted to a localized set of one-dimensional hyperparameters that itself could be estimated (e.g., those achieving small population risk).
\cite{overman2025conformal} propose the distribution-free Conformal Arbitrage (CA) method to balance an objective and a constraint,
which constructs a threshold to decide which of two models to query,
and which controls a measure of disutility to within a specified budget.
In contrast to these works, our goal is to control multiple distinct constraints. 

\cite{laufer2023efficiently} 
develop the Pareto Testing method to 
optimize multiple objectives
and
control multiple risks with 
the Learn Then Test framework \citep{angelopoulos2025learn},
by identifying parameter configurations 
on the Pareto frontier
that satisfy all constraints, with a given probability.
 However, it turns out that due to the monotonicity properties of our constraints, all values of the hyperparameters $\lambda_i$ belong to the Pareto front. 
 Hence, for our particular problem, Pareto Testing is equivalent to Learn Then Test, which is a general method that applicable to any set of constraints by leveraging a multiple testing correction. 
 In contrast, our problem is highly structured due to its monotonicity properties, and hence we can develop algorithms that leverage this structure and avoid becoming overly conservative due to the multiple testing correction. 
 We will provide an experimental comparison highlighting these issues.

\cite{andeol2025conformal}
propose
the distribution-free
Sequential Conformal Risk Control (SeqCRC) method
to control three risks of the form
$L_1(\lambda_1)$, $L_2(\lambda_1, \lambda_2)$, and $L_3(\lambda_1, \lambda_3)$,
where $L_j$ is non-increasing in each argument
for each $j\in [3]$.
Although the construction
of the thresholds
is distantly similar
to our \multirisk~algorithm (\Cref{alg:multirisk}),
\multirisk~applies to risks with a different monotonicity pattern (specifically, the monotonicity of the loss differs in the two thresholds,
 which does not allow for a straightforward reduction to their setting), 
 and is designed to control any number of constraints.


\section{Algorithms}

We first summarize the intuition behind the two algorithms. 
\mrbase~(Algorithm \ref{alg:mrbase}) iteratively selects optimal thresholds that satisfy each constraint, applying dynamic programming using the empirical risks.
Next, \multirisk~(\Cref{alg:multirisk}) refines this procedure to guarantee risk control.
Together, \mrbase~and \multirisk~provide a tradeoff between computational simplicity and statistical guarantees.

\subsection{\Mrbase}

We begin
by presenting
the \mrbase~algorithm (\Cref{alg:mrbase}),
a direct dynamic programming
approach to selecting the thresholds.
The \mrbase~algorithm leverages the monotonicity of the empirical risk functions, adjusting each threshold until the corresponding empirical risk meets its target budget. This provides a simple baseline for sequential multi-risk control but lacks theoretical guarantees.

In order to select the thresholds $\lambda_{1:m}$, we only 
need access to 
the scores and losses
$(S_1^{(i)}, \ldots, S_m^{(i)})$
and
$(V_1^{(i)}, \ldots, V_m^{(i)})$
for $i\in[n]$,
defined as
\begin{align*}
S_j^{(i)} = S_j(X^{(i)}, Y^{(i)}, Y^{*,(i)}), \quad V_j^{(i)} = V_j(X^{(i)}, Y^{(i)}, Y^{*,(i)})
\end{align*}
for $i\in[n]$ and $j\in [m]$.
Given $j\in [m]$,
$\lambda_{1:j}\in \R^j$,
and $i\in [n+1]$,
let the $j$-th loss  evaluated at the $i$-th datapoint $(S_1^{(i)}, \ldots, S_m^{(i)})$ be denoted by
$L_j^{(i)}(\lambda_{1:j})$.
Given $j \in [m]$
and $\lambda_{1:(j-1)} \in \R^{j-1}$,
define the 
\emph{empirical risk function}
$g_j(\cdot; \lambda_{1:(j-1)}) : \R \to [0,\infty)$
corresponding to the $j$-th constraint
by
\begin{align*}
    g_j(\lambda_j; \lambda_{1:(j-1)}) 
    := \frac{1}{n} \sum_{i=1}^{n} L_j^{(i)}(\lambda_{1:j}).
\end{align*}
Note that $g_j(\cdot; \lambda_{1:(j-1)})$ is a non-increasing 
piecewise constant
function 
of $\lambda_j$,
for any fixed $\lambda_{1:(j-1)} \in \R^{j-1}$.
Indeed, recalling the definition from \eqref{lj},
$L_j$ as a function of $\lambda_j$ behaves as the indicator
$c I(S_j > \lambda_j)$ for some $c$ that depends on all other variables, and hence is constant except possibly with a jump at $S_j$.
Thus, $g_j(\cdot; \lambda_{1:(j-1)})$ is constant except possibly with jumps at the $j$-th scores of the datapoints.

Therefore, for each $j\in [m]$,
given $\lambda_{1:(j-1)}\in \R^{j-1}$
and $\beta_j\in \R$,
and given a compact interval $\Lambda_j$,
we may
choose the best possible setting of the hyperparameter $\lambda_j$ that satisfies the constraint for the empirical risk function
via the generalized inverse $U_j(\lambda_{1:(j-1)}; \cdot)$
of $g_j(\cdot;\lambda_{1:(j-1)})$
by
\begin{align}\label{eq:U-j-tilde}
    U_j(\lambda_{1:(j-1)}; \beta_j) = \inf\left\{ \lambda_j\in \Lambda_j : g_j(\lambda_j; \lambda_{1:(j-1)}) \le  \beta_j \right\},
\end{align}
where the infimum of the empty set is taken to be $\lambda_j^{\mathrm{max}}$.\footnote{In practice, if $\Lambda_j$ is set to be a discrete grid, then the infimum can be replaced with a minimum.}

Iterating this construction leads to the
\mrbase~given in 
\Cref{alg:mrbase} below.
The following is a description of 
the inductive step.
Suppose that for some $j\in [m]$, \mrbase~has selected the first $j-1$ thresholds $\hat \lambda_{1:(j-1)}$.
Then since the empirical risk function $g_j(\cdot; \hat \lambda_{1:(j-1)})$ corresponding to the $j$-th constraint is non-increasing, 
in order to satisfy the $j$-th constraint as tightly as possible, 
\mrbase~sets $\hat \lambda_j$ by 
computing the point
$\hat \lambda_j = U_j(\hat \lambda_{1:(j-1)}; \beta_j)$, 
where $g_j(\cdot; \hat \lambda_{1:(j-1)})$ 
crosses $\beta_j$.

\begin{algorithm}[H]
\caption{\Mrbase~algorithm}
\label{alg:mrbase}
\begin{algorithmic}[1]

\State {\bfseries Input:} number of constraints: $m$, number of observations: $n$, 
scores and cost values  $S_j^{(i)} = S_j(X^{(i)}, Y^{(i)}, Y^{*,(i)})$, $V_j^{(i)} = V_j(X^{(i)}, Y^{(i)}, Y^{*,(i)})$ of the datapoints,
risk budgets: $(\beta_j)_{j\in [m]}$, threshold domains: $(\Lambda_j)_{j\in [m]}$

\State Form the empirical risk functions
$g_j(\cdot; \lambda_{1:(j-1)}) : \R \to [0,\infty)$
corresponding to the $j$-th constraint
$g_j(\lambda_j; \lambda_{1:(j-1)})   = \frac{1}{n} \sum_{i=1}^{n} L_j^{(i)}(\lambda_{1:j})$, $j\in [m]$, with loss values $L_j^{(i)}(\lambda_{1:j})$ as per \eqref{lj}.

\State Set the tuning parameter $\hat \lambda_1 \gets U_1(\beta_1)$ \# via generalized inverse \Cref{eq:U-j-tilde} with $\Lambda_1$ and $g_1$

\For{$j = 2$ to $m$}
    \State Set the tuning parameter $\hat \lambda_j \gets U_j(\hat \lambda_{1:(j-1)}; \beta_j)$ \# via generalized inverse \Cref{eq:U-j-tilde} with $\Lambda_j$ and $g_j$
\EndFor

\State \Return $(\hat \lambda_j)_{j\in [m]}$

\end{algorithmic}
\end{algorithm}

This algorithm is valuable for several reasons, including that 
it showcases how the monotonicity structure enables an efficient hyperparameter search in our setting, and that it has a satisfactory performance when the sample size $n$ is large.
However,
a key limitation of \mrbase~is that 
its choice of thresholds is not guaranteed to control the 
average
risks for a new test data point at the nominal levels, especially when the sample size is small.
Indeed, as we show in \Cref{subsec:comparison}, the risks of the \mrbase~thresholds can be arbitrarily large, regardless of the risk budgets.

\subsection{\Multirisk}

Motivated by the need for rigorous constraint risk control,
we propose the \multirisk~algorithm (\Cref{alg:multirisk}).
Like \mrbase, \multirisk~leverages monotonicity to sequentially set the thresholds,
but refines the baseline by replacing empirical risks with conservative upper bounds and by slightly shrinking each risk budget. 
The approach is 
inspired by methods from the distribution-free uncertainty quantification literature, such as
nested prediction sets \citep{vovk2005algorithmic}
and conformal risk control \citep{angelopoulos2024conformal}.


In order to construct an upper bound on the empirical risk function,
since we do not have access to the $(n+1)$-st data point,
we replace its loss value $L_j^{(n+1)}$ by an upper bound.
For $j\in [m]$,
let the constant $V^{\mathrm{max}}_j\ge 0$ 
be an a.s. upper bound
on the cost $V_j^{(n+1)} = V_j(X^{(n+1)}, Y^{(n+1)}, Y^{*,(n+1)})$.
We define the 
\emph{bumped empirical risk function} 
$g_j^+(\cdot; \lambda_{1:(j-1)}) : \R \to [0,\infty)$
corresponding to the $j$-th constraint by
\begin{align*}
    g_j^+(\lambda_j; \lambda_{1:(j-1)}) = \frac{1}{n+1} \sum_{i=1}^{n} L_j^{(i)}(\lambda_{1:j}) + \frac{V^{\mathrm{max}}_j}{n+1}
\end{align*}
for all $\lambda_j\in \R$.
 As in prior work on conformal risk control, the interpretation of this quantity is that it upper bounds the unknown loss value of the test data point by a deterministic value.

Recall that $\Lambda_j$, $j\in[m]$,
are the compact intervals to which we restrict our search. 
As before, 
$g_j^+(\cdot; \lambda_{1:(j-1)})$ is a non-increasing step function for any $\lambda_{1:(j-1)} \in \R^{j-1}$ (\Cref{lem:emp-monot}).
Thus, as above,
for each $j\in [m]$,
given $\lambda_{1:(j-1)}\in \R^{j-1}$
and $\beta_j \in \R$,
we can define
the generalized inverse $U_j^+(\lambda_{1:(j-1)}; \cdot)$
of $g_j^+(\cdot;\lambda_{1:(j-1)})$
as the best possible setting of the hyperparameter $\lambda_j$ that satisfies the constraint for the bumped empirical risk function, namely 
\begin{align}\label{eq:U-j-plus}
    U_j^+(\lambda_{1:(j-1)}; \beta_j) = \inf\left\{ \lambda_j\in \Lambda_j : g_j^+(\lambda_j; \lambda_{1:(j-1)}) \le \beta_j \right\},
\end{align}
where the infimum of the empty set is taken to be $\lambda_j^{\mathrm{max}}$.

Then, analogously to the \mrbase~method, 
\multirisk~(\Cref{alg:multirisk}) proceeds inductively to set the thresholds.\footnote{For a depiction of the dependencies of the thresholds in the case of $m=3$ constraints, see \Cref{fig:multirisk_m3}; for an explicit derivation in the case of $m=2$ constraints, see \Cref{subsubsec:ub-result}.}
Suppose that for some $j\in [m]$, we 
have set the first $j-1$ thresholds $\hat \lambda_{1:(j-1)}$.
In order to set the $j$-th threshold $\hat \lambda_j$,
\multirisk~plugs conservative upper bounds on the previous thresholds
into the bumped empirical risk function
in a manner that guarantees risk control.
Then, since the bumped empirical risk function $g_j^+(\cdot; \tilde \lambda_{1:(j-1)})$ corresponding to the $j$-th constraint is non-increasing, \multirisk~sets $\hat \lambda_j = U_j^+(\tilde \lambda_{1:(j-1)}; \beta_j)$ to be the point at which $g_j^+(\cdot; \tilde \lambda_{1:(j-1)})$ first crosses $\beta_j$.
Note that \multirisk~is nested in the sense that adding a new constraint 
does not change the previously-computed thresholds. 

The use of the upper bounds $\tilde \lambda_{1:(j-1)}$ at step $j$
is crucial to provable risk control, and comes from the guarantees in \Cref{lem:symm-fns} and \Cref{lem:beta-trick} below.
Given thresholds $\lambda_{1:(j-1)}$ that are symmetric functions of all $n+1$ datapoints,
\Cref{lem:symm-fns} allows one to set the $j$-th threshold to a symmetric function that controls the $j$-th risk.
\Cref{lem:beta-trick} allows one to pass from an empirical choice of the $j$-th threshold
to a symmetric function of the data
by simply decreasing the effective risk budget $\beta_j$.
At step $j$,
\multirisk~combines these results as follows.
Recall that the loss $L_j$ is non-decreasing in its first $j-1$ arguments.
Thus,
given the first $j-1$ thresholds $\hat \lambda_{1:(j-1)}$,
in order to ensure that
$\E L_j(\hat \lambda_{1:(j-1)}, \lambda_j) \le \beta_j$,
it suffices to find symmetric upper bounds $\lambda_{1:(j-1)}'$ on $\hat \lambda_{1:(j-1)}$
and some $\lambda_j'\in \R$
such that
$\E L_j(\tilde \lambda_{1:(j-1)}', \lambda_j') \le \beta_j$.
The quantities $\lambda_{1:(j-1)}'$ can be obtained by means of \Cref{lem:beta-trick},
and by \Cref{lem:symm-fns}, we may set
$\lambda_j' = U_j^{\mathrm{sym}}(\lambda_{1:(j-1)}'; \beta_j)$.
Finally, since $L_j$ is non-increasing in its final argument,
if we apply \Cref{lem:beta-trick} to set $\hat \lambda_j$ equal to an empirical upper bound on $\lambda_j'$,
we deduce $\E L_j(\hat \lambda_{1:(j-1)}, \hat \lambda_j) \le \beta_j$, as desired.


Thus, although \multirisk~requires one to compute more auxiliary thresholds compared to the straightforward \mrbase~algorithm,
the above discussion illustrates that the hierarchy of upper bounds and careful budget adjustments incorporated in \multirisk~are essential for risk control,
especially in the small sample size setting.

\begin{algorithm}[H]
\caption{\Multirisk~algorithm to construct risk-controlling thresholds}
\label{alg:multirisk}
\begin{algorithmic}[1]

\State {\bfseries Input:} number of constraints: $m$, number of observations: $n$, cost lower bounds: $(V^{\mathrm{min}}_j)_{j\in [m]}$, cost upper bounds: $(V^{\mathrm{max}}_j)_{j\in [m]}$, risk budgets: $(\beta_j)_{j\in [m]}$, threshold domains: $(\Lambda_j)_{j\in [m]}$

\For{$j = 1$ to $m$}
    \State $\delta_j \gets \frac{V^{\mathrm{max}}_j-V^{\mathrm{min}}_j}{n+1}$
    \For{$k = 0$ to $m - j$}
        \State $\beta_j^{(k)} \gets \beta_j - k \delta_j$
    \EndFor
\EndFor

\For{$k = 1$ to $m$}
    \State $\hat \lambda_1^{(2k)} \gets U_1^+(\beta_1^{(k - 1)})$ \# via \Cref{eq:U-j-plus} with $\Lambda_1$
\EndFor

\For{$j = 2$ to $m$}
    \For{$k = 1$ to $m - j + 1$}
        \State $\hat \lambda_j^{(2k)} \gets U_j^+(\hat \lambda_{1:(j-1)}^{(2k + 2)}; \beta_j^{(k - 1)})$ \# via \Cref{eq:U-j-plus} with $\Lambda_j$
    \EndFor
\EndFor

\State \Return $(\hat \lambda_j^{(2)})_{j\in [m]}$
\end{algorithmic}
\end{algorithm}

\begin{figure}
\centering
\begin{tikzpicture}[
  >={Stealth},
  node style/.style={draw, rounded corners, inner sep=2.5pt, font=\large}
]

\matrix (M) [matrix of nodes,
             row sep=9mm, column sep=12mm,
             nodes={node style}]
{
  $\hat\lambda_{1}^{(2)}$ & $\hat\lambda_{1}^{(4)}$ & $\hat\lambda_{1}^{(6)}$ \\
  $\hat\lambda_{2}^{(2)}$ & $\hat\lambda_{2}^{(4)}$  \\
  $\hat\lambda_{3}^{(2)}$ \\
};

\draw[->] (M-1-2) -- (M-2-1);
\draw[->] (M-1-2) -- (M-3-1);
\draw[->] (M-1-3) -- (M-2-2);
\draw[->] (M-2-2) -- (M-3-1);

\end{tikzpicture}
\caption{Dependency graph of \multirisk~thresholds (\Cref{alg:multirisk}) for three constraints.}
\label{fig:multirisk_m3}
\end{figure}
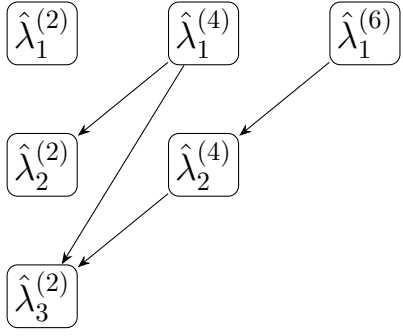

\section{Theoretical guarantees}

In this section, we establish finite-sample guarantees showing that our algorithm tightly controls risk. 
Under appropriate regularity conditions (such as feasibility and boundedness conditions on the loss, detailed in Section \ref{subsec:ub} and later),
\Cref{thm:master} states that the \multirisk~thresholds satisfy each constraint of \Cref{eq:opt}, exhausting all but $O(1/n)$ of each risk budget.
Further, we show that \multirisk~achieves a nearly optimal objective.

The next theorem establishes that our algorithm satisfies all constraints under mild exchangeability and boundedness assumptions.
This property captures the core guarantee provided by the algorithm.
Moreover, under additional continuity assumptions on the
distribution of the scores, it shows that the constraints are satisfied nearly tightly, up to $O(1/n)$ errors.

\begin{theorem}[Risk control guarantee and tightness]\label{thm:master}
Fix $m\ge 1$
and 
let $(\hat \lambda_{1:m})$ be the thresholds 
constructed by \multirisk~(\Cref{alg:multirisk}).
\begin{enumerate}
    \item [1. (Risk control)]
Under the conditions
in \Cref{subsubsec:ub-conds},
the desired constraints 
$\E L_j^{(n+1)}(\hat \lambda_{1:j})$ $ \le \beta_j$ hold
for each $j\in [m]$.
\item [2. (Tightness of risk control)]
Under 
the conditions in
\Cref{subsubsec:ub-conds}
and
\Cref{subsubsec:lb-conds},
we also control the risk close to the desired target level $\beta_j$, in the sense that
\begin{align*}
\E L_j^{(n+1)}(\hat \lambda_{1:j}) \ge \beta_j - \frac{A_j}{n+1} 
\end{align*}
for each $j\in [m]$,
where for $j\in [m]$ we define the scalar
$A_j := 2V^{\mathrm{max}}_j - V^{\mathrm{min}}_j + h_j(2)$,
where $V^{\mathrm{max}}_j$ is an a.s. upper bound on the cost $V_j$
and with $h_j(\cdot)$ defined as in \Cref{eq:h-recursion}. 
\end{enumerate}
\end{theorem}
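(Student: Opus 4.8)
I would prove the two parts by separate inductions on the constraint index $j\in[m]$, in both cases exploiting the monotonicity of $L_j$ (non-decreasing in $\lambda_{1:(j-1)}$, non-increasing in $\lambda_j$) together with the three ingredients already available: monotonicity of the bumped empirical risk $g_j^+$ (\Cref{lem:emp-monot}), the symmetric-threshold risk bound $\E L_j(\lambda_{1:(j-1)},U_j^{\mathrm{sym}}(\lambda_{1:(j-1)};\beta_j))\le\beta_j$ for symmetric $\lambda_{1:(j-1)}$ (\Cref{lem:symm-fns}), and the empirical-to-symmetric budget-shift correspondence (\Cref{lem:beta-trick}). As a common preliminary I would record a \emph{nesting lemma}: for each $j$, the auxiliary thresholds produced by \multirisk~satisfy $\hat\lambda_j^{(2)}\le\hat\lambda_j^{(4)}\le\cdots\le\hat\lambda_j^{(2(m-j+1))}$. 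This is an induction on $j$: $\hat\lambda_j^{(2k)}=U_j^+(\hat\lambda_{1:(j-1)}^{(2k+2)};\beta_j^{(k-1)})$, and raising the superscript both lowers the budget $\beta_j^{(k-1)}$ by $\delta_j$ and, by the inductive hypothesis at $j-1$, raises the plugged-in lower thresholds, while $U_j^+$ is non-increasing in its budget argument and non-decreasing in its first $j-1$ arguments by \Cref{lem:emp-monot}. The even superscripts in \Cref{alg:multirisk} suggest that the analysis should interleave, for each $i$, symmetric random variables $\lambda_i^{\mathrm{sym},(2k+1)}$ with $\hat\lambda_i^{(2k)}\le\lambda_i^{\mathrm{sym},(2k+1)}\le\hat\lambda_i^{(2k+2)}$ a.s.; the existence of a symmetric quantity inside each such gap is exactly what \Cref{lem:beta-trick} delivers, consuming the budget decrement $\delta_i=(V_i^{\max}-V_i^{\min})/(n+1)$ that separates the budgets of the two consecutive auxiliary thresholds.

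\textbf{Part 1 (risk control).} Fix $j$ and assume inductively that for each $i<j$ there is a symmetric (in all $n+1$ datapoints) $\tilde\lambda_i$ with $\hat\lambda_i^{(2)}\le\tilde\lambda_i\le\hat\lambda_i^{(4)}$. Then, with $\lambda_j^{\mathrm{sym}}:=U_j^{\mathrm{sym}}(\tilde\lambda_{1:(j-1)};\beta_j)$: (i) since $L_j$ is non-decreasing in its first $j-1$ arguments and $\hat\lambda_i^{(2)}\le\tilde\lambda_i$, pointwise $L_j(\hat\lambda_{1:(j-1)}^{(2)},\cdot)\le L_j(\tilde\lambda_{1:(j-1)},\cdot)$; (ii) by \Cref{lem:symm-fns}, $\E L_j(\tilde\lambda_{1:(j-1)},\lambda_j^{\mathrm{sym}})\le\beta_j$; (iii) $\hat\lambda_j^{(2)}=U_j^+(\hat\lambda_{1:(j-1)}^{(4)};\beta_j)\ge U_j^{\mathrm{sym}}(\hat\lambda_{1:(j-1)}^{(4)};\beta_j)\ge U_j^{\mathrm{sym}}(\tilde\lambda_{1:(j-1)};\beta_j)=\lambda_j^{\mathrm{sym}}$, because the bump by $V_j^{\max}$ makes $g_j^+$ dominate the symmetric risk function pointwise and $U_j^{\mathrm{sym}}$ is non-decreasing in its first $j-1$ arguments while $\tilde\lambda_i\le\hat\lambda_i^{(4)}$; (iv) hence, using that $L_j$ is non-increasing in its last argument, $L_j^{(n+1)}(\hat\lambda_{1:j}^{(2)})=L_j(\hat\lambda_{1:(j-1)}^{(2)},\hat\lambda_j^{(2)})\le L_j(\tilde\lambda_{1:(j-1)},\hat\lambda_j^{(2)})\le L_j(\tilde\lambda_{1:(j-1)},\lambda_j^{\mathrm{sym}})$, and taking expectations with (ii) gives $\E L_j^{(n+1)}(\hat\lambda_{1:j})\le\beta_j$. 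The induction closes by producing, from \Cref{lem:beta-trick} applied at the budget gap $[\beta_j-\delta_j,\beta_j]$, a symmetric $\tilde\lambda_j$ with $\hat\lambda_j^{(2)}\le\tilde\lambda_j\le\hat\lambda_j^{(4)}$ — which is precisely why \Cref{alg:multirisk} keeps the extra copies $\hat\lambda_j^{(2k)}$ at successively smaller budgets (see \Cref{fig:multirisk_m3}).

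\textbf{Part 2 (tightness).} Adding the continuity hypotheses, I would again induct on $j$, targeting $\E L_j^{(n+1)}(\hat\lambda_{1:j})\ge\beta_j-A_j/(n+1)$. Step one is local: because $\hat\lambda_j^{(2)}$ is the generalized inverse from \Cref{eq:U-j-plus} and, under continuity, $g_j^+(\cdot;\hat\lambda_{1:(j-1)}^{(4)})$ drops below $\beta_j$ by at most one datapoint's contribution $V_j^{\max}/(n+1)$ at the crossing point, one gets $g_j^+(\hat\lambda_j^{(2)};\hat\lambda_{1:(j-1)}^{(4)})\ge\beta_j-V_j^{\max}/(n+1)$, hence $\frac{1}{n+1}\sum_{i=1}^n L_j^{(i)}(\hat\lambda_{1:(j-1)}^{(4)},\hat\lambda_j^{(2)})\ge\beta_j-2V_j^{\max}/(n+1)$; continuity also pins the test point strictly on one side of each threshold a.s., which is what lets its contribution (at least $V_j^{\min}$ or $0$, recovering the $-V_j^{\min}$ improvement in $A_j$) be reattached to form a symmetric average over all $n+1$ points and passed to $\E L_j^{(n+1)}(\cdot)$ by exchangeability. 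The remaining step — and the crux — is transferring from $L_j$ evaluated at the auxiliary arguments $\hat\lambda_{1:(j-1)}^{(4)}$ to $L_j$ evaluated at the output arguments $\hat\lambda_{1:(j-1)}^{(2)}$: since $L_j$ is non-decreasing there and $\hat\lambda_i^{(2)}\le\hat\lambda_i^{(4)}$, this replacement moves in the unfavorable direction, so one must instead upper bound $\E\big[L_j(\hat\lambda_{1:(j-1)}^{(4)},\cdot)-L_j(\hat\lambda_{1:(j-1)}^{(2)},\cdot)\big]$, i.e.\ the probability that a test score $S_i$ lands in $[\hat\lambda_i^{(2)},\hat\lambda_i^{(4)}]$ for some $i<j$. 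Under continuity this probability is controlled by the number of $i$-th-score order statistics lying in that interval, which is finite because the two thresholds arise from budgets differing by a multiple of $\delta_i$; tracking how such a count at level $i$ displaces the empirical risk curve — and thus the threshold — at level $i+1$ produces exactly the recursion defining $h_j$ in \Cref{eq:h-recursion}, and summing all contributions yields $A_j=2V_j^{\max}-V_j^{\min}+h_j(2)$.

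\textbf{Main obstacle.} For Part 1 the only real effort is bookkeeping: verifying that the precise superscript/budget schedule of \Cref{alg:multirisk} supplies exactly the slack each invocation of \Cref{lem:beta-trick} consumes, and that the sandwiches $\hat\lambda_i^{(2k)}\le\lambda_i^{\mathrm{sym},(2k+1)}\le\hat\lambda_i^{(2k+2)}$ are threaded consistently across the $O(m^2)$ auxiliary thresholds; this is delicate but essentially deterministic. The genuine difficulty is the last step of Part 2: the displacement of one threshold under a budget perturbation does not decouple across constraints — perturbing $\hat\lambda_1$ shifts $\hat\lambda_2$, which shifts $\hat\lambda_3$, and so on — so a crude union bound is far too lossy, and one genuinely needs the recursive jump-counting scheme (bounding, level by level, the number of score order statistics in a small interval and the resulting movement of the next empirical risk function) encoded by the recursion $h_j$ in \Cref{eq:h-recursion}; establishing that this recursion closes with an $O(1/n)$ bound is the technical heart of the theorem.
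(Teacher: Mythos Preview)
Your Part 1 is essentially the paper's argument: the paper formalizes your ``symmetric $\tilde\lambda_i$ sandwiched between consecutive even auxiliary thresholds'' as the odd-indexed quantities $\hat\lambda_i^{(2k-1)}=U_i^{\mathrm{sym}}(\hat\lambda_{1:(i-1)}^{(2k+1)};\beta_i^{(k-1)})$, proves the full chain $\hat\lambda_j^{(1)}\le\hat\lambda_j^{(2)}\le\cdots\le\hat\lambda_j^{(2(m-j+1))}$ in \Cref{thm:multiple-scores}, and runs exactly your induction. One bookkeeping point: your inductive hypothesis as stated (a single $\tilde\lambda_i$ between $\hat\lambda_i^{(2)}$ and $\hat\lambda_i^{(4)}$) is too weak to close the induction, since producing $\tilde\lambda_j$ between $\hat\lambda_j^{(2)}$ and $\hat\lambda_j^{(4)}$ requires symmetric plugs between $\hat\lambda_i^{(4)}$ and $\hat\lambda_i^{(6)}$ at level $i<j$; you need a symmetric interpolant between \emph{every} consecutive pair $\hat\lambda_i^{(2k)},\hat\lambda_i^{(2k+2)}$, which is why the paper tracks all $\hat\lambda_i^{(2k-1)}$ simultaneously.

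Part 2 has a genuine gap at the ``reattachment'' step. You work with $g_j^+(\cdot;\hat\lambda_{1:(j-1)}^{(4)})$ and then want to pass from $\frac{1}{n+1}\sum_{i=1}^n L_j^{(i)}(\hat\lambda_{1:(j-1)}^{(4)},\hat\lambda_j^{(2)})$ to $\E L_j^{(n+1)}(\hat\lambda_{1:(j-1)}^{(4)},\hat\lambda_j^{(2)})$ ``by exchangeability.'' But $\hat\lambda_{1:(j-1)}^{(4)}$ and $\hat\lambda_j^{(2)}$ are functions of the first $n$ points only, not symmetric in all $n{+}1$; exchangeability does \emph{not} equate $\E L_j^{(n+1)}$ with the average over the calibration losses when the thresholds are asymmetric. (This is also why your proposed source of the $V_j^{\min}$ term --- the test point's own contribution --- is not where it comes from; in the paper it enters via $\beta_j^{(1)}=\beta_j-\delta_j$.) The same issue infects the transfer step: bounding $\PP{S_i^{(n+1)}\in(\hat\lambda_i^{(2)},\hat\lambda_i^{(4)}]}$ by a count of calibration order statistics again needs the interval endpoints to be symmetric.

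The paper's route avoids this by passing to symmetric thresholds \emph{before} doing anything else: lower-bound $\E L_j^{(n+1)}(\hat\lambda_{1:j}^{(2)})$ by $\E L_j^{(n+1)}(\hat\lambda_{1:(j-1)}^{(1)},\hat\lambda_j^{(3)})$ (shrink the first $j{-}1$ arguments to the symmetric $\hat\lambda^{(1)}$, grow the last to the symmetric $\hat\lambda^{(3)}$). The obstacle is that $\hat\lambda_j^{(3)}=U_j^{\mathrm{sym}}(\hat\lambda_{1:(j-1)}^{(5)};\beta_j^{(1)})$ is \emph{not} of the form $U_j^{\mathrm{sym}}(\hat\lambda_{1:(j-1)}^{(1)};\cdot)$, so \Cref{lem:lower-bd} cannot be applied directly. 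This is precisely what the reversed inequality (\Cref{thm:reverse-ineq}) fixes: it shows $U_j^{\mathrm{sym}}(\hat\lambda_{1:(j-1)}^{(5)};\beta_j^{(1)})\le U_j^{\mathrm{sym}}(\hat\lambda_{1:(j-1)}^{(1)};\beta_j^{(1)}-h_j(2)/(n+1))$ a.s., and its proof is exactly the recursive jump-counting you describe --- but carried out on the \emph{symmetric} risk functions $g_\ell^{\mathrm{sym}}$ at symmetric arguments $\hat\lambda_{1:(\ell-1)}^{(2s+3)}$, so that the counts are deterministic (\Cref{lem:count-scores}) rather than probabilities to be estimated. Once the arguments match, \Cref{lem:lower-bd} gives $\E L_j^{(n+1)}\ge\beta_j^{(1)}-h_j(2)/(n+1)-V_j^{\max}/(n+1)=\beta_j-A_j/(n+1)$. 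Your diagnosis of the main obstacle is right; what is missing is that the entire lower-bound machinery has to live on the symmetric side.
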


While the term in the lower bound does not have an explicit form, we can easily compute it numerically using our recursive formula. 
For instance, in a three-constraint setting where the cost functions $V_1, V_2, V_3$ are bounded between $0.5$ and 1, we can calculate that $h_1(2)=0$, $h_2(2)=4$, and $h_3(2)=16$.


Moreover, recalling that our original goal in \eqref{eq:opt} was to minimize a certain objective given the constraints, it is also important to know that our algorithm indeed approximately achieves this minimization goal.
Let $(\lambda_{1:m}^*)$
be any population minimizer of the constrained optimization problem from \eqref{eq:opt}.
This can be viewed as an ideal oracle, because it depends on the true unknown data distribution. 
Our next result shows that our algorithm achieves an objective value that is nearly optimal compared to the oracle.

In order to bound the objective value achieved by our algorithm, 
we require that the population constraint risk functions 
decrease sufficiently quickly, in order for their inverses to be better controlled.
As we explain in more detail later,
we can quantify this via a 
standard notion of H\"older smoothness on the cumulative distribution function of the scores. 
We can obtain an even stronger result for discrete-valued scores.

\begin{theorem}[Tight control of the objective]\label{thm:master-obj}
Under the conditions in
\Cref{subsubsec:ub-conds},
under
\Cref{cond:m-positive},
and
under the conditions in
\Cref{subsubsec:cts-conc-conds},
let $\nu\ge 1$ be 
the reverse H\"older constant of the 
cumulative distribution functions 
in Condition \ref{cond:rev-lip}.
\begin{enumerate}
    \item [1. (Near-optimal objective)]
 The objective value 
$\E V_{m+1}^{(n+1)} I(S_{1:m}^{(n+1)}  \pce \hat \lambda_{1:m})$
 achieved by our algorithm
 is nearly equal to the oracle objective value 
$\E V_{m+1}^{(n+1)} I(S_{1:m}^{(n+1)} $ $ \pce \lambda_{1:m}^{*}) $
for any population minimizer $(\lambda_{1:m}^*)$, in the sense that
 \begin{align*}
    \E V_{m+1}^{(n+1)} I(S_{1:m}^{(n+1)} \pce \hat \lambda_{1:m}) \le \E V_{m+1}^{(n+1)} I(S_{1:m}^{(n+1)} \pce \lambda_{1:m}^{*}) 
    + 
    O\left( 
    \left( \frac{\log n}{n} \right)^{1/(2\nu^{m})} 
    \right).
\end{align*}
\item [1. (Improved optimality for discrete random variables)]
If $S_j$ is a discrete random variable
for each $j\in [m]$,
we have the stronger bound
\begin{align*}
    \E V_{m+1}^{(n+1)} I(S_{1:m}^{(n+1)} \pce \hat \lambda_{1:m}) \le \E V_{m+1}^{(n+1)} I(S_{1:m}^{(n+1)} \pce \lambda_{1:m}^{*}) 
    + O\left( \exp(-n^{0.99}) \right).
\end{align*}
\end{enumerate}

\end{theorem}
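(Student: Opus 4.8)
The plan is to exploit the monotonicity structure of \Cref{eq:opt} to reduce both claims to a stability statement for generalized inverses of the population risk functions. Write $R_j(\lambda_{1:j}) := \E L_j^{(n+1)}(\lambda_{1:j})$ for $j \in [m]$ and $R_{m+1}(\lambda_{1:m}) := \E[ V_{m+1}^{(n+1)} I(S_{1:m}^{(n+1)} \pce \lambda_{1:m})]$, and note that $R_{m+1}$ is coordinatewise non-decreasing, while $R_j$ is non-decreasing in $\lambda_{1:(j-1)}$ and non-increasing in $\lambda_j$. First I would introduce the \emph{greedy} population solution: define $\lambda^g_j := \inf\{\lambda_j\in\Lambda_j : R_j(\lambda^g_{1:(j-1)},\lambda_j)\le\beta_j\}$ sequentially. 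A short induction using the monotonicity pattern shows that any feasible $\lambda_{1:m}$ satisfies $\lambda^g_{1:m}\pce\lambda_{1:m}$ (if $\lambda^g_{1:(j-1)}\pce\lambda_{1:(j-1)}$ then $R_j(\lambda^g_{1:(j-1)},\lambda_j)\le R_j(\lambda_{1:(j-1)},\lambda_j)\le\beta_j$, so $\lambda_j\ge\lambda^g_j$), and the feasibility conditions in \Cref{subsubsec:ub-conds} ensure $\lambda^g_{1:m}$ is itself feasible; monotonicity of $R_{m+1}$ then gives $R_{m+1}(\lambda^g)\le R_{m+1}(\lambda^*)$ for every minimizer $\lambda^*$, so it suffices to bound $R_{m+1}(\hat\lambda_{1:m})$ in terms of $R_{m+1}(\lambda^g_{1:m})$. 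Under the i.i.d.\ assumption of \Cref{subsubsec:cts-conc-conds}, the test point is independent of the calibration data, so the algorithm's objective equals $\E_{\mathrm{cal}}[R_{m+1}(\hat\lambda_{1:m})]$, and it remains to control $R_{m+1}(\hat\lambda_{1:m})$ with high probability over the calibration draw.

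Second, I would establish uniform concentration of the bumped empirical risk functions. Using that the loss classes $\{L_j^{(\cdot)}(\lambda_{1:j}):\lambda_{1:j}\in\Lambda_1\times\cdots\times\Lambda_j\}$ are monotone in each coordinate (hence of low complexity) together with boundedness of the $V_j$, a standard empirical-process/bracketing bound gives $\sup_{\lambda}|g_j^+(\lambda_j;\lambda_{1:(j-1)})-R_j(\lambda_{1:j})|\le\varepsilon_n:=C\sqrt{\log n/n}$ with probability at least $1-n^{-2}$, where the $O(1/n)$ bumping and the $O(1/n)$ budget shifts $k\delta_j$ have been absorbed into $\varepsilon_n$. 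The third ingredient is a quantitative inverse: \Cref{cond:rev-lip} imposes a reverse Hölder lower bound on the relevant (conditional) CDFs of the scores with exponent $\nu\ge 1$, which makes each population inverse $\beta\mapsto U_j^{\mathrm{pop}}(\lambda_{1:(j-1)};\beta)$ Hölder-$1/\nu$, so that on the concentration event $\hat\lambda_j^{(2k)}\le U_j^{\mathrm{pop}}(\hat\lambda_{1:(j-1)}^{(2k+2)};\beta_j-\varepsilon_n')$ for a suitable $\varepsilon_n'=O(\varepsilon_n)$, and hence $\hat\lambda_j^{(2k)}$ overshoots the greedy value by at most $O(\varepsilon_n^{1/\nu})$ \emph{given that the earlier thresholds are exact}; combined with Lipschitz continuity of $R_j$ in its first $j-1$ arguments (from \Cref{subsubsec:cts-conc-conds}, using \Cref{cond:m-positive}), one obtains a recursion for the propagated errors.

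Third, I would run the error recursion through the $m$ nested levels of \Cref{alg:multirisk}. Setting $\eta_j:=\max_k|\hat\lambda_j^{(2k)}-\lambda^g_j|$, the level-$1$ bound gives $\eta_1=O(\varepsilon_n^{1/\nu})$; at level $j$, an error $\max_{i<j}\eta_i$ in the earlier thresholds perturbs $R_j$ by $O(\max_{i<j}\eta_i)$, and inverting again yields $\eta_j=O\bigl((\max_{i<j}\eta_i+\varepsilon_n)^{1/\nu}\bigr)$, so that iterating gives $\eta_m=O\bigl(\varepsilon_n^{1/\nu^{m}}\bigr)=O\bigl((\log n/n)^{1/(2\nu^{m})}\bigr)$ (the budget shifts are of strictly lower order since $2\nu^{m}\ge 2$). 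Lipschitz continuity of the objective $R_{m+1}$ in each coordinate (again from \Cref{subsubsec:cts-conc-conds}, with $V_{m+1}$ bounded) then gives $R_{m+1}(\hat\lambda_{1:m})\le R_{m+1}(\lambda^g_{1:m})+C\sum_j\eta_j=R_{m+1}(\lambda^*_{1:m})+O\bigl((\log n/n)^{1/(2\nu^{m})}\bigr)$ on the concentration event, and on its complement (probability $\le n^{-2}$) the objective is trivially at most $V^{\mathrm{max}}_{m+1}$, contributing $O(n^{-2})$ in expectation, which is absorbed. For discrete scores, each $R_j$ is a step function whose generalized inverse is insensitive to perturbations smaller than the local jump size; since $\varepsilon_n$ and $\delta_j$ vanish while the gap condition in \Cref{subsubsec:cts-conc-conds} lower-bounds the relevant jumps, for $n$ large the concentration event forces $\hat\lambda_j^{(2k)}=\lambda^g_j$ exactly, so $R_{m+1}(\hat\lambda_{1:m})=R_{m+1}(\lambda^*_{1:m})$ except on an event of probability $\exp(-\Omega(n^{0.99}))$ (by exponential concentration, the precise exponent reflecting a slowly growing truncation used to accommodate possibly infinitely many atoms), on which the gap is at most $V^{\mathrm{max}}_{m+1}$; this yields the $O(\exp(-n^{0.99}))$ term.

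The main obstacle is the third step, and specifically the composition of inverse-stability across levels: one needs (i) uniform control of the empirical risk functions jointly over the entire compact box $\Lambda_1\times\cdots\times\Lambda_m$, not merely along a single coordinate; (ii) a clean statement of the reverse Hölder inverse-stability lemma that composes with the Lipschitz-in-earlier-arguments property while tracking how a threshold error in coordinates $1:(j-1)$ becomes a level error for constraint $j$; and (iii) verifying that the whole family of auxiliary thresholds $\hat\lambda_j^{(2k)}$ (not just the returned $\hat\lambda_j^{(2)}$) stays within one common $O(\varepsilon_n^{1/\nu^{j}})$ neighborhood of $\lambda^g_j$, since they feed into one another. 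The degradation of the exponent $1/\nu\to 1/\nu^{2}\to\cdots\to 1/\nu^{m}$ is intrinsic to this composition and is what produces the stated rate.
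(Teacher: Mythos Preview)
Your proposal is essentially correct and follows the same architecture as the paper: reduce to consistency of the estimated thresholds with the greedy population thresholds (your $\lambda^g$ is the paper's $\lambda^{*,(2)}$), obtain empirical-process concentration of the bumped risks, invert via the reverse H\"older condition, and propagate the $1/\nu$ exponent through $m$ nested levels to get the $(\log n/n)^{1/(2\nu^m)}$ rate; in the discrete case, argue that perturbations smaller than the minimal jump leave the generalized inverse unchanged and use Hoeffding to get an exponentially small failure probability.

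The one noteworthy methodological difference concerns what you flag as ``main obstacle (i)'': you propose uniform concentration of $g_j^+(\lambda_j;\lambda_{1:(j-1)})$ over the \emph{entire} box $\Lambda_1\times\cdots\times\Lambda_j$. The paper sidesteps this. Its empirical-process lemma only establishes $\sup_{\lambda_j}|g_j^+(\lambda_j;\lambda_{1:(j-1)})-g_j^*(\lambda_j;\lambda_{1:(j-1)})|\le r_j\sqrt{\log n/n}$ for \emph{fixed, nonrandom} $\lambda_{1:(j-1)}$ (a one-dimensional problem: the class $\{I(S_j>\lambda_j):\lambda_j\in\R\}$ has VC dimension one, and Ledoux--Talagrand contraction handles the bounded multiplier $V_j I(S_{1:(j-1)}\pce\lambda_{1:(j-1)})$). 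To deal with the random earlier arguments $\hat\lambda_{1:(j-1)}^{(2k+2)}$, the paper uses the inductive hypothesis to sandwich them between the deterministic endpoints $\lambda_{1:(j-1)}^{*,(2k+2)}\pm c_{1:(j-1)}^{(2k+2)}(\log n/n)^{1/(2\nu^{j-1})}$, and then coordinatewise monotonicity of $g_j^+$ and $g_j^*$ converts the random sandwich into a deterministic one, at which point the fixed-$\lambda_{1:(j-1)}$ concentration bound applies at those two endpoints. This monotonicity sandwich is what lets the paper avoid the full-box uniformity you anticipate, and it is the main technical device you are missing. Your full-box approach would also work (the relevant class is still VC), but it is heavier machinery than the paper actually uses. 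On smaller points: the paper uses failure probability $O(1/\sqrt n)$ rather than $n^{-2}$, which already suffices since $(\log n/n)^{1/(2\nu^m)}\ge n^{-1/2}$; and the discrete case assumes finite support, so your ``truncation to accommodate infinitely many atoms'' is not needed.
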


\begin{remark}
In the case $m=1$,
\citet[][Theorem 2]{angelopoulos2024conformal}
provides the lower bound
$\E L_1^{(n+1)}(\hat \lambda_1) \ge \beta_1 - \frac{2V^{\mathrm{max}}_1}{n+1}$
under the assumption
that $V^{\mathrm{min}}_1 \ge 0$.
(Recall that for $j\in [m]$,
$V^{\mathrm{min}}_j$ is an a.s. lower bound on the cost $V_j$.)
In comparison,
\Cref{thm:master}
provides the lower bound
$\E L_1^{(n+1)}(\hat \lambda_1) \ge \beta_1 - \frac{2V^{\mathrm{max}}_1-V^{\mathrm{min}}_1}{n+1}$
under the assumption
that $V^{\mathrm{min}}_1 > 0$,
where we used the fact that
$h_1(2)=0$.
However, 
in the special case
$m=1$,
the derivation of
\Cref{eq:m1_zero_lb} in the proof
shows that in fact
the \multirisk~threshold
$\hat \lambda_1$
obeys
$\E L_1^{(n+1)}(\hat \lambda_1) \ge \beta_1 - \frac{2V^{\mathrm{max}}_1}{n+1}$
even when $V^{\mathrm{min}}_1=0$.
Thus, the bound on
\multirisk~is consistent with
\cite[Theorem 2]{angelopoulos2024conformal}.
\end{remark}


\Cref{thm:master,thm:master-obj}
combine the results of \Cref{thm:multiple-scores}, \Cref{cor:constr-lower-bds}, \Cref{thm:cts-conc}, and \Cref{thm:disc-conc}.
We elaborate on these upper bounds, lower bounds, and concentration results in the following sections.

\subsection{Finite-sample upper bounds on constraints}\label{subsec:ub}

Here, we motivate the \multirisk~algorithm and provide intuition for the upper bound result.

\subsubsection{Upper bound conditions}\label{subsubsec:ub-conds}


In order to ensure risk control, 
we need to specify how  we can learn about the test data point from the calibration data. 
For this reason, as is standard in the area of distribution-free predictive inference and conformal prediction \citep{vovk2005algorithmic, angelopoulos2021gentle}, we will consider 
datapoints $\{ (X^{(i)}, Y^{(i)}, Y^{*,(i)}) : i\in [n+1] \}$ that are \emph{exchangeable}. Informally, this means that the data points
are equally likely to be ordered in any particular order.
This includes the common setting where the data points are independent and identically distributed from some fixed, unknown distribution. 

\begin{condition}[Exchangeable observations]\label{cond:exch-observations}
The observations $\{ (X^{(i)}, Y^{(i)}, Y^{*,(i)}) : i\in [n+1] \}$ are exchangeable.
\end{condition}

Next, we restrict attention to finite scores.

\begin{condition}[Finite scores]\label{cond:as-finite}
For $j\in [m]$,
$S_j$ is a.s. finite.
\end{condition}

 The next condition states exactly that the constraints are feasible at the population level. 

\begin{condition}[Feasibility of constraints]\label{cond:loss-inf}
For $j\in [m]$,
for $i\in [n+1]$,
we have
$L_j^{(i)}(\lambda_{1:j}^{\mathrm{max}}) \le \beta_j$ a.s.
\end{condition}

Recall that in our algorithm, we use upper and lower bounds on the cost functions $V_j$. 
The final condition for the upper bound result formalizes this step.

\begin{condition}[Bounds on $V_{1:m}$]\label{cond:loss-bds}
For $j\in [m]$,
we have
$V_j \in [V^{\mathrm{min}}_j, V^{\mathrm{max}}_j]$ a.s.
for some
finite constants
$V^{\mathrm{max}}_j \ge V^{\mathrm{min}}_j \ge 0$.
\end{condition}

\Cref{cond:exch-observations}, \Cref{cond:loss-inf}, and \Cref{cond:loss-bds} are standard in the conformal risk control literature; see for instance \citet[][Theorem 1]{angelopoulos2024conformal}.

\subsubsection{Upper bounding risks for
symmetric functions}

A crucial step in the construction of our algorithm is to define a threshold given all the previous ones. 
In order to motivate and analyze this construction,
it will be useful to consider
an (oracle) version of the empirical risk
that is symmetric in all $n+1$ datapoints. (Below, when we say that a quantity is \textit{symmetric}, we mean that it is a symmetric function of the $n+1$ calibration and test datapoints.)
Given $j \in [m]$
and
$\lambda_{1:(j-1)} \in \R^{j-1}$,
define the 
symmetric oracle empirical risk
function
$g^{\mathrm{sym}}_j(\cdot; \lambda_{1:(j-1)}) : \R \to [0,\infty)$
by
\begin{align}\label{eq:g-j-symm}
    g^{\mathrm{sym}}_j(\lambda_j; \lambda_{1:(j-1)}) = \frac{1}{n+1} \sum_{i=1}^{n+1} L_j^{(i)}(\lambda_{1:j})
\end{align}
for all $\lambda_j \in \R$.
Unlike $g_j$ and $g_j^+$, 
the function $g^{\mathrm{sym}}_j$ is a symmetric version
of the empirical risk corresponding to the $j$-th constraint,
which depends on the unobserved test loss $L_j^{(n+1)}(\lambda_{1:j})$, and it is thus purely a tool that is used in our theoretical analysis.

For each $j\in [m]$,
given $\lambda_{1:(j-1)}\in \R^{j-1}$
and $\beta_j \in \R$,
we define the generalized inverse $U^{\mathrm{sym}}_j$
of $g^{\mathrm{sym}}_j(\cdot; \lambda_{1:(j-1)})$ by
\begin{align}\label{eq:U-j-symm}
    U^{\mathrm{sym}}_j(\lambda_{1:(j-1)}; \beta_j) = \sup\left\{ \lambda_j\in \Lambda_j : g^{\mathrm{sym}}_j(\lambda_j; \lambda_{1:(j-1)}) > \beta_j \right\},
\end{align}
where the supremum of the empty set is taken to be $\lambda_j^{\mathrm{min}}$.
Since $g^{\mathrm{sym}}_j$ is symmetric, it follows that $U^{\mathrm{sym}}_j$ is symmetric.

Next, we will show that when working with the symmetric versions of the losses, the crucial step of constructing the next threshold given all the previous ones can be performed in a convenient manner with the help of the generalized inverse $U^{\mathrm{sym}}_j$.
However, since we do not have access to these symmetric risks, 
we shall introduce intermediate statistics that leverage this construction. 

For now, let us 
study symmetric choices of the first $j-1$
thresholds. 
We will define a choice of such values to be symmetric if each of the $j-1$ thresholds are individually symmetric. 
The following lemma shows 
how 
to set the $j$-th threshold $\lambda_j$ using $ U^{\mathrm{sym}}_j$
to ensure the $j$-th constraint risk is controlled, 
given a symmetric choice of the first $j-1$ thresholds.

\begin{lemma}[Controlling the risk via the symmetrized generalized inverse]\label{lem:symm-fns}
Under
the conditions in
\Cref{subsubsec:ub-conds},
for any $j\ge 1$,
and for any symmetric $\Lambda_{1:(j-1)} = \prod_{\ell=1}^{j-1}[\lambda_{\ell}^{\mathrm{min}}, \lambda_{\ell}^{\mathrm{max}}]$-valued measurable function $\Gamma_{1:(j-1)}$ of the $n+1$ datapoints,
the expected loss $L_j^{(n+1)}$ evaluated on the $(n+1)$-st data point can be  
bounded 
by setting its $j$-th argument as $U^{\mathrm{sym}}_j(\Gamma_{1:(j-1)}; \beta_j)$,
so that 
\begin{align*}
\E L_j^{(n+1)}(\Gamma_{1:(j-1)}, U^{\mathrm{sym}}_j(\Gamma_{1:(j-1)}; \beta_j)) \le \beta_j,
\end{align*}
where $U^{\mathrm{sym}}_j$ is defined in \Cref{eq:U-j-symm}.
\end{lemma}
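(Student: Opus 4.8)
The plan is to prove the bound via a symmetry/exchangeability argument, in the same spirit as the conformal risk control result of \citet{angelopoulos2024conformal}, but adapted to the nested structure. First I would fix the symmetric function $\Gamma_{1:(j-1)}$ and observe that, since each component of $\Gamma_{1:(j-1)}$ and the quantity $U^{\mathrm{sym}}_j(\Gamma_{1:(j-1)};\beta_j)$ are symmetric functions of the $n+1$ datapoints, the entire random vector $(\Gamma_{1:(j-1)}, U^{\mathrm{sym}}_j(\Gamma_{1:(j-1)};\beta_j))$ is invariant under permutations of the data. Consequently, conditionally on the unordered multiset of datapoints, this threshold vector is deterministic, and the index $n+1$ is exchangeable with the indices $1,\ldots,n$. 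This will let me replace the expectation of the test loss $L_j^{(n+1)}$ by the average over $i \in [n+1]$ of $L_j^{(i)}$ evaluated at the same (now fixed) threshold vector, i.e.
\begin{align*}
\E\bigl[ L_j^{(n+1)}(\Gamma_{1:(j-1)}, U^{\mathrm{sym}}_j(\Gamma_{1:(j-1)};\beta_j)) \bigr]
= \E\bigl[ g^{\mathrm{sym}}_j(U^{\mathrm{sym}}_j(\Gamma_{1:(j-1)};\beta_j);\, \Gamma_{1:(j-1)}) \bigr].
\end{align*}

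Second, I would control the inner quantity pointwise (on the event-by-event level, conditionally on the multiset). By definition of the generalized inverse in \Cref{eq:U-j-symm}, $U^{\mathrm{sym}}_j(\lambda_{1:(j-1)};\beta_j)$ is the supremum of the thresholds $\lambda_j \in \Lambda_j$ for which $g^{\mathrm{sym}}_j(\lambda_j;\lambda_{1:(j-1)}) > \beta_j$. Since $g^{\mathrm{sym}}_j(\cdot;\lambda_{1:(j-1)})$ is non-increasing (by the same piecewise-constant argument given for $g_j$, cf.\ \Cref{lem:emp-monot}) and right-continuous, evaluating it at or above this supremum yields a value $\le \beta_j$. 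I would need to handle the two edge cases carefully: (i) if the set in \Cref{eq:U-j-symm} is empty, then $U^{\mathrm{sym}}_j = \lambda_j^{\mathrm{min}}$ and $g^{\mathrm{sym}}_j(\lambda_j^{\mathrm{min}};\cdot) \le \beta_j$ already; (ii) if $g^{\mathrm{sym}}_j$ is still above $\beta_j$ at $\lambda_j^{\mathrm{max}}$, the supremum is $\lambda_j^{\mathrm{max}}$ and I would invoke \Cref{cond:loss-inf} (feasibility), which guarantees $L_j^{(i)}(\lambda_{1:j}^{\mathrm{max}}) \le \beta_j$ a.s.\ for every $i$, hence $g^{\mathrm{sym}}_j(\lambda_j^{\mathrm{max}};\lambda_{1:(j-1)}^{\mathrm{max}}) \le \beta_j$; monotonicity in the first $j-1$ arguments then extends this to the actual values $\Gamma_{1:(j-1)} \pce \lambda_{1:(j-1)}^{\mathrm{max}}$. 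In all cases $g^{\mathrm{sym}}_j(U^{\mathrm{sym}}_j(\Gamma_{1:(j-1)};\beta_j);\Gamma_{1:(j-1)}) \le \beta_j$ pointwise.

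Third, combining the two displays, taking expectations, and using the tower property over the conditioning on the unordered multiset gives $\E L_j^{(n+1)}(\Gamma_{1:(j-1)}, U^{\mathrm{sym}}_j(\Gamma_{1:(j-1)};\beta_j)) \le \beta_j$, which is the claim. The main obstacle I anticipate is the measurability and well-definedness bookkeeping around the generalized inverse and the conditioning: making precise that $U^{\mathrm{sym}}_j(\Gamma_{1:(j-1)};\beta_j)$ is jointly measurable and genuinely a symmetric function (so that the exchangeability swap is valid), and that the supremum defining it is attained or at least that right-continuity of the step function $g^{\mathrm{sym}}_j$ makes the pointwise inequality at the supremum hold without an off-by-one in the inequality direction. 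The edge case where the constraint is saturated at $\lambda_j^{\mathrm{max}}$ is where \Cref{cond:loss-inf} is essential and must be invoked carefully, together with monotonicity in the first $j-1$ coordinates to reduce from the evaluation point $\Gamma_{1:(j-1)}$ to $\lambda_{1:(j-1)}^{\mathrm{max}}$.
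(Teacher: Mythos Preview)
Your proposal is correct and follows essentially the same approach as the paper: both use exchangeability to replace $\E L_j^{(n+1)}$ by $\E g^{\mathrm{sym}}_j$, and both establish the pointwise inequality $g^{\mathrm{sym}}_j(U^{\mathrm{sym}}_j(\cdot;\beta_j);\cdot)\le\beta_j$ via right-continuity of the non-increasing step function together with \Cref{cond:loss-inf} for the right-endpoint edge case. The only cosmetic difference is that the paper first proves the bound for deterministic $\lambda_{1:(j-1)}$ and then conditions on $\Gamma_{1:(j-1)}$ (using conditional exchangeability given a symmetric statistic), whereas you condition on the unordered multiset directly; these are equivalent formalizations of the same symmetry argument.
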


The proof is given in \Cref{subsec:pf-symm-fns}.
If one had access to all $n+1$ datapoints,
then \Cref{lem:symm-fns} would provide a natural dynamic programming algorithm for achieving risk control:
set $\lambda_1 = U^{\mathrm{sym}}_1(\beta_1)$,
and for $j =2,\ldots,m$,
iteratively set $\lambda_j = U^{\mathrm{sym}}_j(\lambda_{1:(j-1)}; \beta_j)$.
Of course, this cannot be done in practice, because we only have access to the $n$ calibration observations.

In order to obtain a statistic that depends on the $n$ observed datapoints and serves as a valid threshold, 
our strategy below will be to 
instead \emph{bound} appropriate statistics by
symmetric functions.
In order to pass
from symmetric functions
to empirical statistics,
we will make use of the following
inequality chain.


\begin{lemma}[Sandwiching a real inverse between two oracle inverses]\label{lem:beta-trick}
Under the conditions in
\Cref{subsubsec:ub-conds},
a.s.,
for all $j\in [m]$,
for all $\lambda_{1:(j-1)}\in \R^{j-1}$,
for all $\beta\ge 0$,
 the generalized inverse associated with the bumped empirical risk function
$U_j^+$ from 
\Cref{eq:U-j-plus}
 is sandwiched between the generalized inverse associated with the oracle symmetric risk function, as follows: 
\begin{align*}
    U^{\mathrm{sym}}_j(\lambda_{1:(j-1)}; \beta) 
 \le U_j^+(\lambda_{1:(j-1)}; \beta) \le U^{\mathrm{sym}}_j(\lambda_{1:(j-1)}; \beta-\delta_j),
\end{align*}
where for $j\in [m]$
we define
 the normalized range
$\delta_j := \frac{V^{\mathrm{max}}_j-V^{\mathrm{min}}_j}{n+1}$,
where  $V^{\mathrm{max}}_j$ and $V^{\mathrm{min}}_j$ are 
upper and lower bounds on the cost, respectively, defined in \Cref{cond:loss-bds},
and where $U^{\mathrm{sym}}_j$ is
 the symmetric oracle generalized inverse 
defined in \Cref{eq:U-j-symm}.
\end{lemma}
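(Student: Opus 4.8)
The plan is to read $U_j^+(\lambda_{1:(j-1)};\cdot)$ and $U^{\mathrm{sym}}_j(\lambda_{1:(j-1)};\cdot)$ as generalized inverses of the non-increasing, right-continuous step functions $g_j^+(\cdot;\lambda_{1:(j-1)})$ and $g^{\mathrm{sym}}_j(\cdot;\lambda_{1:(j-1)})$, and to deduce both inequalities from three elementary facts about such inverses. Monotonicity and the step structure of these functions come from \Cref{lem:emp-monot} together with the observation that $\lambda_j\mapsto L_j^{(i)}(\lambda_{1:j})$ has the form $c_i\,I(S_j^{(i)}>\lambda_j)$ with $c_i\ge 0$; finiteness of the scores (\Cref{cond:as-finite}) makes this rigorous on an a.s. event. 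The facts I would record first, as a short preliminary lemma, are: for a non-increasing, right-continuous $f:\Lambda_j\to[0,\infty)$ and a level $b$, with the infimum/supremum conventions of \Cref{eq:U-j-plus} and \Cref{eq:U-j-symm}, one has $\inf\{\lambda\in\Lambda_j: f(\lambda)\le b\}=\sup\{\lambda\in\Lambda_j: f(\lambda)>b\}$; this common value, call it $f^{-1}(b)$, is non-increasing in $b$; it is monotone in $f$ in the sense that $f_1\le f_2$ pointwise implies $f_1^{-1}\le f_2^{-1}$; and, whenever $\{\lambda\in\Lambda_j:f(\lambda)\le b\}\ne\emptyset$, one has $f(f^{-1}(b))\le b$ (by right-continuity the relevant super-level set is closed on the left). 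Then $U^{\mathrm{sym}}_j(\lambda_{1:(j-1)};\cdot)=(g^{\mathrm{sym}}_j(\cdot;\lambda_{1:(j-1)}))^{-1}$ and $U_j^+(\lambda_{1:(j-1)};\cdot)=(g_j^+(\cdot;\lambda_{1:(j-1)}))^{-1}$.

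For the left inequality, $L_j^{(n+1)}(\lambda_{1:j})\le V^{\mathrm{max}}_j$ a.s. by \Cref{cond:loss-bds}, so comparing definitions gives $g^{\mathrm{sym}}_j(\lambda_j;\lambda_{1:(j-1)})\le g_j^+(\lambda_j;\lambda_{1:(j-1)})$ for every $\lambda_j$; monotonicity of the generalized inverse in the function yields $U^{\mathrm{sym}}_j(\lambda_{1:(j-1)};\beta)\le U_j^+(\lambda_{1:(j-1)};\beta)$, which holds for every $\beta\ge 0$ and every $\lambda_{1:(j-1)}\in\R^{j-1}$.

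For the right inequality I would set $\lambda^\ast:=U^{\mathrm{sym}}_j(\lambda_{1:(j-1)};\beta-\delta_j)$ and reduce to showing $g_j^+(\lambda^\ast;\lambda_{1:(j-1)})\le\beta$: once this holds, $\lambda^\ast\in\{\lambda_j\in\Lambda_j: g_j^+(\lambda_j;\lambda_{1:(j-1)})\le\beta\}$ and taking the infimum over that set gives $U_j^+(\lambda_{1:(j-1)};\beta)\le\lambda^\ast$. If $\lambda^\ast=\lambda_j^{\mathrm{max}}$ there is nothing to prove, since $U_j^+(\lambda_{1:(j-1)};\beta)\le\lambda_j^{\mathrm{max}}$ always by the empty-set convention. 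Otherwise, by the inf$=$sup identity and right-continuity, $g^{\mathrm{sym}}_j(\lambda^\ast;\lambda_{1:(j-1)})\le\beta-\delta_j$ (this also covers the case where the super-level set is empty and $\lambda^\ast=\lambda_j^{\mathrm{min}}$). So it suffices to control the gap at $\lambda^\ast$: writing $g_j^+(\lambda^\ast;\lambda_{1:(j-1)})-g^{\mathrm{sym}}_j(\lambda^\ast;\lambda_{1:(j-1)})=(V^{\mathrm{max}}_j-L_j^{(n+1)}(\lambda_{1:(j-1)},\lambda^\ast))/(n+1)$ and using $L_j^{(n+1)}(\lambda_{1:(j-1)},\lambda^\ast)\in\{0\}\cup[V^{\mathrm{min}}_j,V^{\mathrm{max}}_j]$, this gap is at most $\delta_j=(V^{\mathrm{max}}_j-V^{\mathrm{min}}_j)/(n+1)$ as soon as the test contribution does not vanish; adding $g^{\mathrm{sym}}_j(\lambda^\ast;\lambda_{1:(j-1)})\le\beta-\delta_j$ then gives $g_j^+(\lambda^\ast;\lambda_{1:(j-1)})\le\beta$.

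The step I expect to be the main obstacle is precisely the gap bound at $\lambda^\ast$ in the residual case $L_j^{(n+1)}(\lambda_{1:(j-1)},\lambda^\ast)=0$, where the crude estimate only gives a gap of $V^{\mathrm{max}}_j/(n+1)$, which exceeds $\delta_j$ when $V^{\mathrm{min}}_j>0$. Closing this requires showing that at the crossing point the test loss cannot be zero, which is where the finite-sample feasibility hypothesis (\Cref{cond:loss-inf}) and a case split enter: one would distinguish whether the test point clears filters $1,\dots,j-1$ at $\lambda_{1:(j-1)}$ (if not, $g^{\mathrm{sym}}_j$ and $g_j^+$ differ by the full constant $V^{\mathrm{max}}_j/(n+1)$ everywhere and one must invoke feasibility at $\lambda_j^{\mathrm{max}}$), and whether $\lambda^\ast$ lies below $S_j^{(n+1)}$ (where the gap is exactly $(V^{\mathrm{max}}_j-V_j^{(n+1)})/(n+1)\le\delta_j$) or above it (where the jump of $g^{\mathrm{sym}}_j$ at $S_j^{(n+1)}$, of size at least $V^{\mathrm{min}}_j/(n+1)$, must be used to control the level). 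As the Remark on the $m=1$ case indicates, when $V^{\mathrm{min}}_j=0$ one instead simply takes the gap to be $V^{\mathrm{max}}_j/(n+1)$, which is harmless since then $\delta_j=V^{\mathrm{max}}_j/(n+1)$, and the downstream bounds of \Cref{thm:master} are unaffected.
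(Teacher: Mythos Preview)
Your approach and the paper's share the same core: write the gap $g_j^+(\lambda_j;\lambda_{1:(j-1)})-g^{\mathrm{sym}}_j(\lambda_j;\lambda_{1:(j-1)})=(V^{\mathrm{max}}_j-L_j^{(n+1)}(\lambda_{1:j}))/(n+1)$, bound it uniformly in $[0,\delta_j]$, and read off both inequalities from the monotonicity of the generalized inverse in the underlying function. The paper packages that last step as \Cref{eq:one-sided-crossing} of \Cref{lem:crossing-points} --- exactly your ``preliminary fact'' --- and applies it once to the two-sided uniform gap bound; the whole proof is three sentences. Your pointwise argument at $\lambda^\ast$ is therefore unnecessary once the uniform bound is in hand.

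Where you diverge is your ``main obstacle.'' The paper obtains the upper gap bound $\delta_j$ by asserting $L_j^{(n+1)}(\lambda_{1:j})\in[V^{\mathrm{min}}_j,V^{\mathrm{max}}_j]$ directly from \Cref{cond:loss-bds} and does not treat the vanishing-indicator case at all. You are right that $L_j^{(n+1)}$ can be $0$, so this assertion is delicate when $V^{\mathrm{min}}_j>0$; but your proposed case analysis does not close the gap either. When the test point fails one of the filters $1,\dots,j-1$ at $\lambda_{1:(j-1)}$, one has $g_j^+-g^{\mathrm{sym}}_j\equiv V^{\mathrm{max}}_j/(n+1)$ and the right inequality of the lemma can fail for suitable $\beta$; \Cref{cond:loss-inf} only constrains $L_j^{(i)}$ at $\lambda_{1:j}^{\mathrm{max}}$ and says nothing about the arbitrary $(\lambda_{1:(j-1)},\beta)$ in the statement. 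The clean resolution is the one your final sentence hints at: take $V^{\mathrm{min}}_j=0$ in this lemma (so $\delta_j=V^{\mathrm{max}}_j/(n+1)$), after which the uniform bound $0\le g_j^+-g^{\mathrm{sym}}_j\le\delta_j$ is immediate and the paper's three-line argument goes through without any casework.
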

Crucially, this lemma allows us to go back and forth between the
empirical statistics to which we have access and the oracle quantities that allow us to control the risks. 
The proof is given in \Cref{subsec:pf-beta-trick}.

\subsubsection{Upper bound result}\label{subsubsec:ub-result}

Now, we sketch the dynamic programming construction of the \multirisk~thresholds,
which relies on repeated applications of \Cref{lem:symm-fns} and \Cref{lem:beta-trick}.
Here, we illustrate the idea in the cases
$m=1$
and $m=2$ where we have one or two constraints, respectively.

\textbf{Deriving the upper bound for $m=1$.}
The analysis in this case recovers the threshold from the conformal risk control algorithm \citep{angelopoulos2024conformal}, but we find it helpful to present it here in order to explain how our algorithm generalizes these ideas to the setting of multiple risks. 
First, note that
the definition of $L_j(\lambda_{1:j})$ from \eqref{lj}
implies that
$\lambda_{1:j}\mapsto L_j(\lambda_{1:j})$
from $\R^j \to \R$
is non-decreasing in its first $j-1$ arguments
and non-increasing in its last argument.
Also,
the definition of $L_j(\lambda_{1:j})$
implies that
for $j\in [m]$,
a.s.,
for any $\lambda_{1:(j-1)}\in \R^{j-1}$,
the function
$\lambda_j\mapsto L_j(\lambda_{1:j})$
from $\R\to \R$
is right-continuous.

To set the first threshold,
we note that by \Cref{lem:symm-fns},
the symmetric threshold
$\hat \lambda_1^{(1)} = U^{\mathrm{sym}}_1(\beta_1)$
control the risk at the desired level, i.e.,
$\E L_1^{(n+1)}(\hat \lambda_1^{(1)})\le \beta_1$.
Since 
$L_1^{(n+1)}(\lambda_1)=V_1(X^{(n+1)},Y^{(n+1)},Y^{*,(n+1)}) \cdot$
$I(S_1^{(n+1)} > \lambda_1)$
is non-increasing,
it follows that if we define the first \multirisk~threshold to be
the empirical quantity
$\hat \lambda_1^{(2)} = U_1^+(\beta)$---recovering the construction from conformal risk control \citep{angelopoulos2024conformal}---then since $\hat \lambda_1^{(2)} \ge \hat \lambda_1^{(1)}$ by \Cref{lem:beta-trick},
we have
$\E L_1^{(n+1)}(\hat \lambda_1^{(2)})\le \beta_1$,
as desired.
This completes the construction of the first threshold. 
 
\textbf{Deriving the upper bound for $m=2$.}
As mentioned above, the second threshold is set after selecting the first threshold. 
In order to set the second threshold, we seek $\lambda_2$ such that $\E L_2^{(n+1)}(\hat \lambda_1^{(2)}, \lambda_2) \le \beta_2$.
As discussed above, we are only able to control expected loss values for symmetrized arguments. Therefore, we need to 
construct a symmetric upper bound $\hat \lambda_1^{(3)}$ on 
the already constructed empirical threshold $\hat \lambda_1^{(2)}$.
To do so, we note that by \Cref{lem:beta-trick}, we have the symmetric upper bound $\hat \lambda_1^{(2)} = U_1^+(\beta_1) \le U^{\mathrm{sym}}_1(\beta_1^{(1)})$, hence we may set $\hat \lambda_1^{(3)} = U^{\mathrm{sym}}_1(\beta_1^{(1)})$. 

Next, we need to switch from the infeasible oracle threshold to an empirical quantity.
Therefore, we construct an empirical upper bound $\hat \lambda_1^{(4)}$ on $\hat \lambda_1^{(3)}$.
By another application of \Cref{lem:beta-trick}, we have the empirical upper bound $\hat \lambda_1^{(4)} = U^{\mathrm{sym}}_1(\beta_1^{(1)}) \le U_1^+(\beta_1^{(1)})$, hence we may set $\hat \lambda_1^{(4)} = U_1^+(\beta_1^{(1)})$, 

Now, since $\hat \lambda_1^{(3)}$ is symmetric,
\Cref{lem:symm-fns} implies that $\E L_2^{(n+1)}(\hat \lambda_1^{(3)}, U^{\mathrm{sym}}_2(\hat \lambda_1^{(3)}; \beta_2)) \le \beta_2$.
Since $U^{\mathrm{sym}}_2$ is non-decreasing in its first argument (by \Cref{lem:emp-monot} in the appendix), 
we have that $U^{\mathrm{sym}}_2(\hat \lambda_1^{(3)}; \beta_2)$ is bounded above by the empirical quantity $U_2^+(\hat \lambda_1^{(4)}; \beta_2)$.

Thus, since 
$$L_2^{(n+1)}(\lambda_1,\lambda_2)=V_2(X^{(n+1)},Y^{(n+1)},Y^{*,(n+1)}) 
I(S_1^{(n+1)}\le \lambda_1, 
S_2^{(n+1)} > \lambda_2)$$ 
is non-decreasing in its first argument and non-increasing in its second argument, we deduce
$\E L_2^{(n+1)}(\hat \lambda_1^{(2)},  U_2^+(\hat \lambda_1^{(4)}; \beta_2)) \le \beta_2$.
Therefore, 
if we define the second \multirisk~threshold to be $\hat \lambda_2^{(2)} = U_2^+(\hat \lambda_1^{(4)}; \beta_2)$, then
 it controls the expected loss
$\E L_2^{(n+1)}(\hat \lambda_1^{(2)}, \hat \lambda_2^{(2)}) \le \beta_2$, as desired.
This finishes the construction of the second threshold.

\textbf{General result.}
In general, to construct the \multirisk~threshold $\hat \lambda_{m}^{(2)}$ for arbitrary $m$, we must construct $2(m-j)$ auxiliary thresholds for each constraint $j\in [m-1]$. 
The resulting general risk control result, along with the required properties of the intermediate constructions,
are formalized in \Cref{thm:multiple-scores} below.

\begin{theorem}[Upper bound on \multirisk~constraint risks]\label{thm:multiple-scores}
Under the conditions
in 
\Cref{subsubsec:ub-conds},
fix $m\ge 1$.
Given $j\in [m]$
and $k\ge 0$,
 defined the tightened thresholds
$\beta_j^{(k)} = \beta_j -k\delta_j$,
where $\delta_j$ is
 the normalized range
defined in \Cref{lem:beta-trick}.
Given $j\in [m]$,
define\footnote{To be clear, for $j=1$, we define 
$\hat \lambda_1^{(2k-1)} = U^{\mathrm{sym}}_1(\beta_1^{(k-1)})$
and
$\hat \lambda_1^{(2k)} = U_1^+(\beta_1^{(k-1)})$ for $k\in [m]$.}
iteratively for $k=1, \ldots, m-j+1$,
$\hat \lambda_j^{(2k-1)} = U^{\mathrm{sym}}_j(\hat \lambda_{1:(j-1)}^{(2k+1)}; \beta_j^{(k-1)})
\in \Lambda_j$
and
$\hat \lambda_j^{(2k)} = U_j^+(\hat \lambda_{1:(j-1)}^{(2k+2)}; \beta_j^{(k-1)})
\in \Lambda_j$
where $U^{\mathrm{sym}}_j$ 
symmetric generalized inverse
defined in \Cref{eq:U-j-symm}.
Then:
\begin{enumerate}
    \item the threshold $\hat \lambda_j^{(2k-1)}$ is a symmetric function of the $n+1$ datapoints
and
$\hat \lambda_j^{(2k)}$ is a function of the observed data
for $j\in [m]$ and $k\in [m-j+1]$;
    \item we have the inequality chain
$\hat \lambda_j^{(1)} \le \hat \lambda_j^{(2)} \le \ldots \le \hat \lambda_j^{(2(m-j+1))}$ for $j\in [m]$ a.s.; and

    \item the \multirisk~thresholds $(\hat \lambda_{1:m}^{(2)})$ obey 
    the desired risk constraints
    $\E L_j^{(n+1)}(\hat \lambda_{1:j}^{(2)}) \le \beta_j$ for $j\in [m]$.
\end{enumerate}

\end{theorem}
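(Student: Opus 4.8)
The plan is to prove all three claims simultaneously by induction on the constraint index $j$, using \Cref{lem:symm-fns} and \Cref{lem:beta-trick} as the two workhorses, together with the monotonicity facts from \Cref{lem:emp-monot}: $g_j^+$ and $g^{\mathrm{sym}}_j$ are non-increasing step functions of $\lambda_j$, and (since $L_j$ is non-decreasing in its first $j-1$ coordinates) $U^{\mathrm{sym}}_j(\,\cdot\,;\beta)$ is non-decreasing in its $(j-1)$-vector first argument. The inductive hypothesis at stage $j$ is that claims 1--2 hold for every index $\ell<j$: $\hat\lambda_\ell^{(r)}$ is a symmetric function of the $n+1$ datapoints for odd $r$ and a function of the observed data for even $r$, and $\hat\lambda_\ell^{(1)}\le\cdots\le\hat\lambda_\ell^{(2(m-\ell+1))}$ a.s. A key structural observation, which must be tracked carefully, is that the construction is ``triangular'': the chain for index $j$ has length $2(m-j+1)$, and $\hat\lambda_j^{(r)}$ is built from $\hat\lambda_{1:(j-1)}^{(r+2)}$, so it calls on the index-$(j-1)$ chain only up to position $2(m-j+1)+2=2(m-(j-1)+1)$, which is exactly its last available position; every auxiliary threshold the recursion references therefore exists, and the offset budgets $\beta_j^{(k)}=\beta_j-k\delta_j$ are produced precisely so that $\beta_j^{(k-1)}-\delta_j=\beta_j^{(k)}$.

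The base case $j=1$ involves no vector argument. Here $\hat\lambda_1^{(2k-1)}=U^{\mathrm{sym}}_1(\beta_1^{(k-1)})$ is symmetric because $g^{\mathrm{sym}}_1$ is an average over all $n+1$ points, while $\hat\lambda_1^{(2k)}=U_1^+(\beta_1^{(k-1)})$ depends only on the $n$ observed points since $g_1^+$ involves only $L_1^{(1)},\dots,L_1^{(n)}$ and the constant $V_1^{\mathrm{max}}$. The chain follows directly from \Cref{lem:beta-trick}: its left inequality gives $\hat\lambda_1^{(2k-1)}=U^{\mathrm{sym}}_1(\beta_1^{(k-1)})\le U_1^+(\beta_1^{(k-1)})=\hat\lambda_1^{(2k)}$, and its right inequality gives $\hat\lambda_1^{(2k)}=U_1^+(\beta_1^{(k-1)})\le U^{\mathrm{sym}}_1(\beta_1^{(k-1)}-\delta_1)=U^{\mathrm{sym}}_1(\beta_1^{(k)})=\hat\lambda_1^{(2k+1)}$ (edge cases with $\beta_1^{(k)}<0$ are trivial, as every inverse then equals $\lambda_1^{\mathrm{max}}$).

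For the inductive step, fix $j\ge 2$ and assume the hypothesis for all $\ell<j$. Claim 1 is a composition argument: $\hat\lambda_j^{(2k-1)}=U^{\mathrm{sym}}_j(\hat\lambda_{1:(j-1)}^{(2k+1)};\beta_j^{(k-1)})$ plugs a symmetric vector (odd indices, by the hypothesis) into a map that, for each fixed first and third argument, is a symmetric function of the $n+1$ points, hence is symmetric; similarly $\hat\lambda_j^{(2k)}=U_j^+(\hat\lambda_{1:(j-1)}^{(2k+2)};\beta_j^{(k-1)})$ plugs a function of the observed data (even indices) into a map depending only on the observed data, hence is a function of the observed data; both lie in $\Lambda_j$ by construction. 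For claim 2, each chain link combines the monotonicity of $U^{\mathrm{sym}}_j$ with \Cref{lem:beta-trick}. For an odd-to-even link, $U^{\mathrm{sym}}_j(\hat\lambda_{1:(j-1)}^{(2k+1)};\beta_j^{(k-1)})\le U^{\mathrm{sym}}_j(\hat\lambda_{1:(j-1)}^{(2k+2)};\beta_j^{(k-1)})\le U_j^+(\hat\lambda_{1:(j-1)}^{(2k+2)};\beta_j^{(k-1)})$, using $\hat\lambda_{1:(j-1)}^{(2k+1)}\pce\hat\lambda_{1:(j-1)}^{(2k+2)}$ (the index-$(j-1)$ chain) then the left inequality of \Cref{lem:beta-trick}. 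For an even-to-odd link, $U_j^+(\hat\lambda_{1:(j-1)}^{(2k+2)};\beta_j^{(k-1)})\le U^{\mathrm{sym}}_j(\hat\lambda_{1:(j-1)}^{(2k+2)};\beta_j^{(k)})\le U^{\mathrm{sym}}_j(\hat\lambda_{1:(j-1)}^{(2k+3)};\beta_j^{(k)})$, using the right inequality of \Cref{lem:beta-trick} with $\beta_j^{(k-1)}-\delta_j=\beta_j^{(k)}$ then monotonicity with $\hat\lambda_{1:(j-1)}^{(2k+2)}\pce\hat\lambda_{1:(j-1)}^{(2k+3)}$. Since \Cref{lem:beta-trick} holds simultaneously for all arguments on an a.s. event, the full chain for index $j$ holds a.s.

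Claim 3 then follows once claims 1--2 hold for all indices. Fix $j$. Because $\hat\lambda_{1:(j-1)}^{(3)}$ is symmetric, \Cref{lem:symm-fns} applied with budget $\beta_j^{(0)}=\beta_j$ gives $\E L_j^{(n+1)}(\hat\lambda_{1:(j-1)}^{(3)},U^{\mathrm{sym}}_j(\hat\lambda_{1:(j-1)}^{(3)};\beta_j))\le\beta_j$, and $U^{\mathrm{sym}}_j(\hat\lambda_{1:(j-1)}^{(3)};\beta_j)=\hat\lambda_j^{(1)}$ by definition. Since $L_j$ is non-increasing in its last coordinate and $\hat\lambda_j^{(1)}\le\hat\lambda_j^{(2)}$ (claim 2), $\E L_j^{(n+1)}(\hat\lambda_{1:(j-1)}^{(3)},\hat\lambda_j^{(2)})\le\beta_j$; since $L_j$ is non-decreasing in its first $j-1$ coordinates and $\hat\lambda_{1:(j-1)}^{(2)}\pce\hat\lambda_{1:(j-1)}^{(3)}$ (claim 2 for $\ell<j$), $\E L_j^{(n+1)}(\hat\lambda_{1:j}^{(2)})\le\beta_j$, as required. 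I expect the main obstacle to be not any single hard estimate --- the two lemmas carry the analytic content --- but rather making the induction airtight: verifying that every auxiliary threshold referenced actually exists via the triangular range constraint, that the offsets $\beta_j^{(k)}=\beta_j-k\delta_j$ align with the $-\delta_j$ shift in \Cref{lem:beta-trick} at exactly the even-to-odd links, and that symmetry/measurability propagate correctly through the alternating composition of $U^{\mathrm{sym}}_j$ and $U_j^+$.
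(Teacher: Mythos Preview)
Your proposal is correct and follows essentially the same approach as the paper: induction on $j$, with \Cref{lem:beta-trick} supplying the alternating sandwich $U^{\mathrm{sym}}_j\le U_j^+\le U^{\mathrm{sym}}_j(\cdot;\,\cdot-\delta_j)$, \Cref{lem:emp-monot} supplying monotonicity of $U^{\mathrm{sym}}_j$ in its vector argument, and \Cref{lem:symm-fns} applied at $\Gamma_{1:(j-1)}=\hat\lambda_{1:(j-1)}^{(3)}$ for the risk bound. If anything, your write-up is slightly more explicit than the paper's at the odd-to-even link of the chain (the paper states $U^{\mathrm{sym}}_j\le U_j^+$ and moves on, whereas you spell out that one must first match the vector arguments via monotonicity and the inductive chain $\hat\lambda_{1:(j-1)}^{(2k+1)}\pce\hat\lambda_{1:(j-1)}^{(2k+2)}$ before invoking \Cref{lem:beta-trick}); your ``triangular range'' remark is also a useful sanity check the paper leaves implicit.
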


The proof is given in \Cref{subsec:pf-multiple-scores}.

\subsection{Finite-sample lower bounds on constraints}\label{subsec:lb}

Here, we present conditions under which the \multirisk~thresholds achieve near-tight control of the constraint risks.

\subsubsection{Lower bound conditions}\label{subsubsec:lb-conds}

In order to ensure that the constraint risk functions a.s. exhibit small discontinuities, we assume that the scores are continuously distributed random variables. 
Such conditions are standard in the area when establishing lower bounds,
see for instance
\cite{vovk2005algorithmic,angelopoulos2021gentle} and
\cite[Theorem 2]{angelopoulos2024conformal}.

\begin{condition}[Continuous scores]\label{cond:cts-scores}
For $j\in [m]$,
$S_j$ is a continuous random variable.
\end{condition}

Next, we assume the costs $V_{1:m}$ are strictly positive, which is crucial to a counting argument in our proof of \Cref{cor:constr-lower-bds}.
This is a mild condition that holds in many cases of interest. 
For instance, for the case of conformal prediction, 
as explained above, 
the coverage error corresponds to a constant cost $V_1=1$; and our condition clearly holds. 
More generally, if some costs $V_j$ can be equal to zero, we can run our algorithm with a shifted cost $\tilde V_j = V_j+\epsilon$ that adds some small $\epsilon > 0$ to the cost.
Then it will follow from our result that the original
risk (with the cost $V_j$)
is upper bounded by the desired level $\beta_j$
and lower bounded by $\beta_j - \epsilon$,
up to the associated slack in the lower bound of \Cref{thm:master-obj}.


\begin{condition}[$V_{1:m}$ positive]\label{cond:m-positive}
We have $V_j \ge V^{\mathrm{min}}_j > 0$ a.s. for each $j\in [m]$.
\end{condition}


Next, we need to impose a condition on the loss functions relative to the risk budget. 
To see why an additional condition might be needed for tightness, observe that our conditions so far allow for a scenario where,
for any fixed $\lambda_{1:(j-1)}$,
even the worst-case
setting of $\lambda_j$, namely $\lambda_{j}^{\mathrm{min}}$, 
loosely satisfies the risk control property, in the sense that 
$L_j^{(i)}(\lambda_{1:(j-1)},\lambda_{j}^{\mathrm{min}}) < \beta_j$.
If this happens, then in general, risk control may only be achieved loosely, and there is no hope to ensure tight risk control. 
Therefore, it is clear that an additional condition is needed. 

For our algorithm, it turns out that due to technical reasons, the appropriate condition to impose is that 
within the parameter region $\Lambda_{1:m}$, the loss $L_j^{(i)}(\cdot)$ a.s. exceeds the threshold $\beta_j$. This condition is used only for the lower bound in this section.
It amounts to setting the budgets $\beta_{1:m}$ sufficiently small and the left endpoints $\lambda_{1:m}^{\mathrm{min}}$ sufficiently large.

\begin{condition}[Lower bound on maximum loss in $\Lambda_{1:j}$]\label{cond:loss-sup}
For $j\in [m]$,
for $i\in [n+1]$,
we have
$L_j^{(i)}(\lambda_{1:j}^{\mathrm{min}}) > \beta_j$ a.s.

\end{condition}

Finally, we assume a strengthened form of
\Cref{cond:loss-inf} (feasibility)
that is needed due to the recursive nature of our lower bound argument.
Specifically, 
in the inductive step of the proof of \Cref{thm:reverse-ineq}, for each $\ell\in [j-1]$, we must count the number of jumps in the step function $g^{\mathrm{sym}}_{\ell}(\cdot; \hat \lambda_{1:(\ell-1)}^{(2s+3)})$ between the heights $\beta_{\ell}^{(s)}$ and $\beta_{\ell}^{(k)} - \frac{h_{\ell}(k-s)}{n+1}$,
where $h_{\ell}(\cdot)$ is defined in \Cref{thm:reverse-ineq}.
In order to apply a specific counting lemma (\Cref{lem:count-scores}), we must ensure that the heights lie within the range of our risk function $L_j^{(i)}(\cdot)$, which 
can be ensured given the strengthened feasibility condition below.
 To be clear, this condition ensures that the problem is feasible even for slightly decreased risk budgets. 

\begin{condition}[Strong feasibility]\label{cond:loss-inf-strong}
For $j\in [m-1]$, and 
for $i\in [n+1]$,
we have
$L_j^{(i)}(\lambda_{1:j}^{\mathrm{max}}) \le 
\beta_j^{(m+2-j)} - \frac{h_j(2)}{n+1}$
a.s., where $h_j(\cdot)$ is defined in \Cref{eq:h-recursion}.
\end{condition}

\subsubsection{Lower bounding risks for symmetric functions}

Next, we turn to presenting our lower bounds. 
We begin with a generalization of
\citet[Lemma 1]{angelopoulos2024conformal}
that bounds jumps in the step function $g^{\mathrm{sym}}_j$.
Specifically, recall that $U^{\mathrm{sym}}_j(\Gamma_{1:(j-1)}; \beta)$ is defined as the generalized inverse of $g_j^{\mathrm{sym}}(\cdot; \Gamma_{1:(j-1)})$.
If this function were continuous at $\beta$, then we would have that
$g^{\mathrm{sym}}_j(U^{\mathrm{sym}}_j(\Gamma_{1:(j-1)}; \beta); \Gamma_{1:(j-1)}) = \beta$.
However, since this is a step function, 
the presence of jumps may instead yield 
$g^{\mathrm{sym}}_j(U^{\mathrm{sym}}_j(\Gamma_{1:(j-1)}; \beta); \Gamma_{1:(j-1)}) < \beta$.
Our result bounds the size of the jump by lower bounding
$g^{\mathrm{sym}}_j(U^{\mathrm{sym}}_j(\Gamma_{1:(j-1)}; \beta); \Gamma_{1:(j-1)})$.

\begin{lemma}[Bounded jumps]\label{lem:lower-bd}
Assume that the conditions
in 
\Cref{subsubsec:ub-conds}
and
\Cref{cond:cts-scores}
hold.
Fix $j\in [m]$.
Suppose $\beta\ge 0$
is such that
for $i\in [n+1]$,
we have
$L_j^{(i)}(\lambda_{1:j}^{\mathrm{min}}) > \beta$ a.s.
Then for any $\Lambda_{1:(j-1)}$-valued symmetric functions $\Gamma_{1:(j-1)}$ of the $n+1$ datapoints, we have
\begin{align}\label{eq:lower-bd}
g^{\mathrm{sym}}_j(U^{\mathrm{sym}}_j(\Gamma_{1:(j-1)}; \beta); \Gamma_{1:(j-1)}) \ge \beta - \frac{V^{\mathrm{max}}_j}{n+1},
\end{align}
where $V^{\mathrm{max}}_j$ is the upper bound defined in \Cref{cond:loss-bds},
$g^{\mathrm{sym}}_j$ is the symmetric risk defined in \Cref{eq:g-j-symm},
and $U^{\mathrm{sym}}_j$ is the associated symmetric generalized inverse function defined in \Cref{eq:U-j-symm}.
Consequently, 
the risk of the $n+1$-st data point is tightly lower bounded around the target value $\beta$
as
\begin{align}\label{eq:lower-bd-expec}
    \E L_j^{(n+1)}( \Gamma_{1:(j-1)}, U^{\mathrm{sym}}_j(\Gamma_{1:(j-1)}; \beta) ) \ge \beta - \frac{V^{\mathrm{max}}_j}{n+1}.
\end{align}
\end{lemma}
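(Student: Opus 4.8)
The statement has two layers: the a.s.\ inequality \eqref{eq:lower-bd}, which is a statement purely about the symmetrized step function $g^{\mathrm{sym}}_j(\cdot;\Gamma_{1:(j-1)})$ and the location $U^{\mathrm{sym}}_j(\Gamma_{1:(j-1)};\beta)$ at which it crosses the level $\beta$; and the expectation bound \eqref{eq:lower-bd-expec}, which I plan to deduce from \eqref{eq:lower-bd} by the same exchangeability/symmetrization argument used in the proof of \Cref{lem:symm-fns}. So the plan is: (i) record the structure of $g^{\mathrm{sym}}_j(\cdot;\Gamma_{1:(j-1)})$ as a right-continuous, non-increasing step function with controlled jump sizes; (ii) prove the pointwise bound \eqref{eq:lower-bd} by a careful analysis at the crossing point; (iii) pass to expectations. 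This generalizes \citet[Lemma~1]{angelopoulos2024conformal} from one unstructured risk to the symmetrized constraint risks arising in our multi-threshold recursion.

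For step (i), fix the $\Lambda_{1:(j-1)}$-valued symmetric function $\Gamma := \Gamma_{1:(j-1)}$. From the form of $L_j$ in \eqref{lj}, for fixed first $j-1$ arguments the map $\lambda_j \mapsto L_j^{(i)}(\Gamma,\lambda_j)$ is $V_j^{(i)} c_i\, I(S_j^{(i)} > \lambda_j)$ for a constant $c_i \in \{0,1\}$, hence non-increasing and right-continuous with its only discontinuity at $\lambda_j = S_j^{(i)}$ of height at most $V_j^{(i)} \le V^{\mathrm{max}}_j$ by \Cref{cond:loss-bds}. Averaging over $i \in [n+1]$, $g^{\mathrm{sym}}_j(\cdot;\Gamma)$ is a non-increasing right-continuous step function (consistent with \Cref{lem:emp-monot}) whose jumps sit at $S_j^{(1)},\dots,S_j^{(n+1)}$; since \Cref{cond:cts-scores} makes these values a.s.\ distinct, a.s.\ every jump of $g^{\mathrm{sym}}_j(\cdot;\Gamma)$ has height at most $V^{\mathrm{max}}_j/(n+1)$.

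For step (ii), write $\lambda^\star := U^{\mathrm{sym}}_j(\Gamma;\beta)$. Because $L_j$ is non-decreasing in its first $j-1$ arguments (again \eqref{lj}) and $\Gamma$ is $\Lambda_{1:(j-1)}$-valued, i.e.\ $\lambda_{1:(j-1)}^{\mathrm{min}} \pce \Gamma$, the hypothesis $L_j^{(i)}(\lambda_{1:j}^{\mathrm{min}}) > \beta$ a.s.\ upgrades to $L_j^{(i)}(\Gamma,\lambda_j^{\mathrm{min}}) > \beta$ for all $i$, so $g^{\mathrm{sym}}_j(\lambda_j^{\mathrm{min}};\Gamma) > \beta$ a.s.; hence the set in \eqref{eq:U-j-symm} is a.s.\ nonempty (the convention for the empty set never triggers) and $\lambda^\star \in [\lambda_j^{\mathrm{min}},\lambda_j^{\mathrm{max}}]$. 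Now split: if $\lambda^\star = \lambda_j^{\mathrm{min}}$, then $g^{\mathrm{sym}}_j(\lambda^\star;\Gamma) > \beta$ directly; otherwise, by the definition of $\lambda^\star$ as a supremum together with the monotonicity of $g^{\mathrm{sym}}_j$, every $\lambda_j \in [\lambda_j^{\mathrm{min}},\lambda^\star)$ satisfies $g^{\mathrm{sym}}_j(\lambda_j;\Gamma) > \beta$, so the left limit $g^{\mathrm{sym}}_j(\lambda^\star{-};\Gamma) \ge \beta$, and since $g^{\mathrm{sym}}_j(\lambda^\star;\Gamma) \ge g^{\mathrm{sym}}_j(\lambda^\star{-};\Gamma) - (\text{jump at }\lambda^\star)$ with the jump bounded by $V^{\mathrm{max}}_j/(n+1)$ a.s.\ from step (i), we get $g^{\mathrm{sym}}_j(\lambda^\star;\Gamma) \ge \beta - V^{\mathrm{max}}_j/(n+1)$, which is \eqref{eq:lower-bd}.

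For step (iii), note $\Gamma$ and $W := U^{\mathrm{sym}}_j(\Gamma;\beta)$ are symmetric measurable functions of the $n+1$ datapoints ($\Gamma$ by hypothesis, $W$ since $g^{\mathrm{sym}}_j$, hence $U^{\mathrm{sym}}_j$, is symmetric), so exactly as in the proof of \Cref{lem:symm-fns}, \Cref{cond:exch-observations} implies $L_j^{(1)}(\Gamma,W),\dots,L_j^{(n+1)}(\Gamma,W)$ are exchangeable, hence identically distributed, giving
\begin{align*}
\E L_j^{(n+1)}(\Gamma, W) = \frac{1}{n+1}\sum_{i=1}^{n+1}\E L_j^{(i)}(\Gamma, W) = \E\, g^{\mathrm{sym}}_j(W;\Gamma) \ge \beta - \frac{V^{\mathrm{max}}_j}{n+1},
\end{align*}
where the last step applies \eqref{eq:lower-bd} inside the expectation; this is \eqref{eq:lower-bd-expec}. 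The main obstacle is step (ii): the generalized inverse is a \emph{supremum} over a strict-inequality sublevel set of a right-continuous, non-increasing step function, so one must argue carefully that $g^{\mathrm{sym}}_j$ still strictly exceeds $\beta$ just to the left of $\lambda^\star$ and that only a single jump is lost in passing to the value at $\lambda^\star$, with the jump size controllable solely because continuity of the scores rules out ties a.s.; steps (i) and (iii) are then routine given the earlier lemmas.
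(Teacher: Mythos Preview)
Your proposal is correct and follows essentially the same approach as the paper's proof: both establish the bounded-jump structure of $g^{\mathrm{sym}}_j(\cdot;\Gamma)$ via the a.s.\ distinctness of the scores (\Cref{cond:cts-scores}), use the hypothesis and monotonicity to force $g^{\mathrm{sym}}_j(\lambda_j^{\mathrm{min}};\Gamma)>\beta$, argue that the left limit at $\lambda^\star$ is at least $\beta$ so only a single jump of size $\le V^{\mathrm{max}}_j/(n+1)$ is lost, and then pass to expectations via exchangeability. The only cosmetic differences are that the paper packages step~(i) as a separate lemma (\Cref{lem:bdd-jumps}) and notes via right-continuity that $\lambda^\star>\lambda_j^{\mathrm{min}}$ rather than handling that boundary case separately.
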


The proof is given in \Cref{subsec:pf-lower-bd}.

\subsubsection{Lower bound result}

Now, we 
provide the arguments for
deriving lower bounds on the constraints
evaluated at 
the \multirisk~thresholds
$(\hat \lambda_{1:m}^{(2)})$
by passing to symmetric lower bounds and using \Cref{lem:lower-bd}.
The result is given in
\Cref{cor:constr-lower-bds}
below.
Here, we sketch the derivation in the cases $m=1$ and $m=2$.

\textbf{Deriving the lower bound for $m=1$.}
Consider lower bounding
the first constraint risk in the case $j=1$.
 This matches the setting of conformal risk control \citep{angelopoulos2024conformal}, 
 and our analysis and result are essentially equivalent. 
By \Cref{lem:symm-fns}, we have $\E L_1^{(n+1)}( U^{\mathrm{sym}}_1(\beta_1) ) \le \beta_1$,
which by \Cref{lem:beta-trick} and the fact that $L_1^{(n+1)}$ is non-increasing implies that $\E L_1^{(n+1)}( U_1^+(\beta_1) ) \le \beta_1$.
Since
by \Cref{lem:beta-trick}
we have
$\E L_1^{(n+1)}( U_1^+(\beta_1) ) \ge \E L_1^{(n+1)}( U^{\mathrm{sym}}_1(\beta_1^{(1)}) )$,
and since $U^{\mathrm{sym}}_1(\beta_1^{(1)})$ is symmetric,
\Cref{lem:lower-bd} (bounded jumps) implies
\begin{align}\label{eq:m1_zero_lb}
\E L_1^{(n+1)}( U_1^+(\beta_1) ) 
\ge \E L_1^{(n+1)}( U^{\mathrm{sym}}_1(\beta_1^{(1)}) ) 
\ge \beta_1^{(1)} - \frac{V^{\mathrm{max}}_1}{n+1}
= \beta_1 - \frac{2V^{\mathrm{max}}_1-V^{\mathrm{min}}_1}{n+1},
\end{align}
which gives us the desired risk lower bound.

\textbf{Deriving the lower bound for $m=2$.}
The $j=2$ case is more tricky, since $L_2^{(n+1)}$ has two arguments
with different monotonicities.
By \Cref{lem:symm-fns}, we have the initial bound
\begin{align*}
    \E L_2^{(n+1)}(U^{\mathrm{sym}}_1(\beta_1^{(1)}), U^{\mathrm{sym}}_2(U^{\mathrm{sym}}_1(\beta_1^{(1)}); \beta_2)) \le \beta_2.
\end{align*}
By \Cref{lem:beta-trick}, the fact that $L_2^{(n+1)}$ is non-decreasing in its first argument and non-increasing in its second argument, and the fact that $U_2^+$ is non-decreasing in its first argument (\Cref{lem:emp-monot}), this implies
\begin{align*}
    \E L_2^{(n+1)}( U_1^+(\beta_1), U_2^+(U_1^+(\beta_1^{(1)}); \beta_2) ) \le \beta_2.
\end{align*}
Thus, our choice of \multirisk~thresholds $(\hat \lambda_1^{(2)}, \hat \lambda_2^{(2)})$ controls the second constraint risk.
By another application of \Cref{lem:beta-trick},
and again using the monotonicity properties of $L_2^{(n+1)}$, 
we obtain the lower bound
\begin{align*}
    \E L_2^{(n+1)}( U^{\mathrm{sym}}_1(\beta_1), U^{\mathrm{sym}}_2( U^{\mathrm{sym}}_1(\beta_1^{(2)}) ; \beta_2^{(1)}) ),
\end{align*}
which is symmetric.

At this stage, we would like
to cite \Cref{lem:lower-bd}
to lower bound this expression by
$\beta_2^{(1)} - \frac{V^{\mathrm{max}}_2}{n+1}$.
However, we cannot, because this expression is not of the form
$\E L_2^{(n+1)}( \Gamma_1, U^{\mathrm{sym}}_2(\Gamma_1; \tilde \beta) )$
for some symmetric $\Gamma_1$ and some $\tilde \beta \in \R$.
In order to compare this expression
to a quantity of the form $\E L_2^{(n+1)}(\Gamma_1, U^{\mathrm{sym}}_2(\Gamma_1; \tilde \beta))$,
we need to reverse an inequality.
Specifically,
if we could find $\tilde \beta\in \R$ such that
$U^{\mathrm{sym}}_2(U^{\mathrm{sym}}_1(\beta_1^{(2)}); \beta_2^{(1)}) \le U^{\mathrm{sym}}_2( U^{\mathrm{sym}}_1(\beta_1); \tilde \beta)$,
then we would obtain the lower bound $\tilde \beta - \frac{V^{\mathrm{max}}_2}{n+1}$.

In general, given $\lambda_1, \tilde \lambda_1\in \Lambda_1$ with $\tilde \lambda_1 \le \lambda_1$
and $\beta\in \R$,
we must determine $\tilde \beta\le \beta$ to ensure
$U^{\mathrm{sym}}_2(\lambda_1; \beta) \le U^{\mathrm{sym}}_2(\tilde \lambda_1; \tilde \beta)$.
By the definition of $U^{\mathrm{sym}}_2$ and \Cref{lem:crossing-points},
it suffices to bound the difference between the step functions
\begin{align*}
    g^{\mathrm{sym}}_2(\lambda_2; \lambda_1) = \frac{1}{n+1} \sum_{i=1}^{n+1} L_2^{(i)}(\lambda_1, \lambda_2)
\end{align*}
and
\begin{align*}
    g^{\mathrm{sym}}_2(\lambda_2; \tilde \lambda_1) = \frac{1}{n+1} \sum_{i=1}^{n+1} L_2^{(i)}(\tilde \lambda_1, \lambda_2).
\end{align*}
Note that by \Cref{lem:emp-monot},
$g^{\mathrm{sym}}_2(\cdot; \lambda_1) \ge g^{\mathrm{sym}}_2(\cdot; \tilde \lambda_1)$.
Thus,
by the definition of $L_2^{(i)}$,
the difference
$g^{\mathrm{sym}}_2(\lambda_2; \lambda_1) - g^{\mathrm{sym}}_2(\lambda_2; \tilde \lambda_1)$
equals the non-negative function
\begin{align*}
    \Delta(\lambda_2) = \frac{1}{n+1} \sum_{i \in \ii} V_2^{(i)} I(S_2^{(i)} > \lambda_2),
\end{align*}
where the index set $\ii$ is defined as
$\ii = \{ i \in [n+1] : S_1^{(i)} \in (\tilde \lambda_1, \lambda_1] \}$.

Since $|V_2^{(i)}|\le V^{\mathrm{max}}_2$ a.s. by \Cref{cond:loss-bds}, we have the uniform bound $\|\Delta\|_{L^{\infty}(\R)} \le \frac{1}{n+1} V^{\mathrm{max}}_2 |\ii|$.
If we could establish the bound
$\frac{1}{n+1} V^{\mathrm{max}}_2 |\ii| \le C$
for some nonrandom scalar $C$,
then by 
\Cref{eq:one-sided-crossing}
from
\Cref{lem:crossing-points} (which shows that bounded perturbations of monotone functions lead to bounded generalized inverses),
we would obtain
$U^{\mathrm{sym}}_2(\lambda_1; \beta) \le U^{\mathrm{sym}}_2(\tilde \lambda_1; \tilde \beta)$ with
$\tilde \beta = \beta - C$,
as desired.

{\bf Bounding the number of scores in an interval via the number of jumps of the empirical risk.}
Therefore, it suffices to bound $|\ii|$. In order to do so, we leverage the specific forms of $\lambda_1$ and $\tilde \lambda_1$, namely
$\lambda_1 = U^{\mathrm{sym}}_1(\beta_1^{(2)})$
and
$\tilde \lambda_1 = U^{\mathrm{sym}}_1(\beta_1)$.
We claim that the number of indices $i\in [n+1]$ satisfying $U^{\mathrm{sym}}_1(\beta_1) < S_1^{(i)} \le U^{\mathrm{sym}}_1(\beta_1^{(2)})$
is bounded by
\begin{align*}
\frac{\beta_1 - \beta_1^{(2)} + \frac{V^{\mathrm{max}}_1}{n+1}}{V^{\mathrm{min}}_1/(n+1)}.
\end{align*}
To see this, note that by the definition of $L_1^{(i)}(\lambda_1)$,
each such index $i\in [n+1]$
corresponds to a downwards jump
in the function $g^{\mathrm{sym}}_1(\lambda_1) = \frac{1}{n+1} \sum_{i=1}^{n+1} L_1^{(i)}(\lambda_1)$ of at least $\frac{V^{\mathrm{min}}_1}{n+1}$.
Further, by the definition of $U^{\mathrm{sym}}_1$, between $U^{\mathrm{sym}}_1(\beta_1)$ and $U^{\mathrm{sym}}_1(\beta_1^{(2)})$
the value of $g^{\mathrm{sym}}_1$ changes by at most
$\beta_1 - \beta_1^{(2)} + \frac{V^{\mathrm{max}}_1}{n+1}$.
Thus,
if $V^{\mathrm{min}}_1 > 0$,
we may bound the number of downwards jumps in $g^{\mathrm{sym}}_1$ by
\begin{align*}
|\ii| \le \frac{\beta_1 - \beta_1^{(2)} + \frac{V^{\mathrm{max}}_1}{n+1}}{V^{\mathrm{min}}_1/(n+1)}.
\end{align*}
Plugging this into our bound on $\|\Delta\|_{L^{\infty}(\R)}$, we find
\begin{align*}
\|\Delta\|_{L^{\infty}(\R)} \le \frac{1}{n+1} V^{\mathrm{max}}_2 \frac{\beta_1 - \beta_1^{(2)} + \frac{V^{\mathrm{max}}_1}{n+1}}{V^{\mathrm{min}}_1/(n+1)} = V^{\mathrm{max}}_2 \frac{\beta_1 - \beta_1^{(2)} + \frac{V^{\mathrm{max}}_1}{n+1}}{V^{\mathrm{min}}_1},
\end{align*}
so that we may set
\begin{align*}
\tilde \beta = \beta_2^{(1)} - V^{\mathrm{max}}_2 \frac{\beta_1 - \beta_1^{(2)} + \frac{V^{\mathrm{max}}_1}{n+1}}{V^{\mathrm{min}}_1}
\end{align*}
to ensure that
$U^{\mathrm{sym}}_2(U^{\mathrm{sym}}_1(\beta_1^{(2)}); \beta_2^{(1)}) \le U^{\mathrm{sym}}_2( U^{\mathrm{sym}}_1(\beta_1); \tilde \beta)$.
The final risk lower bound reads 
\begin{align*}
&\E L_2^{(n+1)}( U_1^+(\beta_1), U_2^+(U_1^+(\beta_1^{(1)}); \beta_2) )
\ge \E L_2^{(n+1)}( U^{\mathrm{sym}}_1(\beta_1), U^{\mathrm{sym}}_2( U^{\mathrm{sym}}_1(\beta_1) ; \tilde \beta ) ) \\
&\ge \tilde \beta - \frac{V^{\mathrm{max}}_2}{n+1}
= \beta_2^{(1)} - V^{\mathrm{max}}_2 \frac{\beta_1 - \beta_1^{(2)} + \frac{V^{\mathrm{max}}_1}{n+1}}{V^{\mathrm{min}}_1} - \frac{V^{\mathrm{max}}_2}{n+1},
\end{align*}
as desired.

\textbf{General lower bound.} This argument generalizes to all $m\ge 1$. In \Cref{thm:reverse-ineq}, we present a generalization of the reversed inequality, and in \Cref{cor:constr-lower-bds}, we apply this to derive a lower bound on the constraint risks of the \multirisk~thresholds.
 In order to state this result, we need to define a quantity that captures by how much the above argument iteratively decreases the risk budgets in aggregate. 
For this purpose, 
for all integers $t\ge 0$,
let $h_1(t) = 0$.
Sequentially for $j=2,\ldots,m$
define the non-negative quantities
\begin{align}\label{eq:h-recursion}
    h_j(t) = V^{\mathrm{max}}_j \sum_{\ell=1}^{j-1} \frac{t (V^{\mathrm{max}}_{\ell} - V^{\mathrm{min}}_{\ell}) + V^{\mathrm{max}}_{\ell} + h_{\ell}(t)}{V^{\mathrm{min}}_{\ell}},
\end{align}
where $V^{\mathrm{max}}_{\ell}$ and $V^{\mathrm{min}}_{\ell}$ are defined in \Cref{cond:loss-bds}.
Then we have the following statement about the values of the symmetric generalized inverse function, illustrated in \Cref{fig:reverse-ineq}: 
\begin{theorem}[Reversed inequality for $U^{\mathrm{sym}}_j$]\label{thm:reverse-ineq}
Assume that the conditions
in 
\Cref{subsubsec:ub-conds},
\Cref{cond:cts-scores},
and
\Cref{cond:m-positive}
hold.
Fix $j\in [m]$
and integers $k\ge s\ge 0$.
Suppose that
for $\ell\in [j-1]$,
for $i\in [n+1]$,
we have the bounds 
$L_{\ell}^{(i)}(\lambda_{1:\ell}^{\mathrm{min}}) > \beta_{\ell}^{(s)}$
a.s.
and
$L_{\ell}^{(i)}(\lambda_{1:\ell}^{\mathrm{max}}) \le \beta_{\ell}^{(k+j-\ell)} - \frac{h_{\ell}(k-s)}{n+1}$
a.s.
Then
for each $\beta\ge 0$,
a.s., we have the bound
\begin{align*}
    U^{\mathrm{sym}}_j(\hat \lambda_{1:(j-1)}^{(2k+1)}; \beta)
\le U^{\mathrm{sym}}_j\left(\hat \lambda_{1:(j-1)}^{(2s+1)}; \beta - \frac{h_j(k-s)}{n+1}\right),
\end{align*}
where $U^{\mathrm{sym}}_j$ is defined in \Cref{eq:U-j-symm}.
\end{theorem}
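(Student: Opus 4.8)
I would prove this by induction on $j$. The base case $j=1$ is immediate: $h_1(\cdot)\equiv 0$, the map $U^{\mathrm{sym}}_1$ carries no first argument, and both sides of the claimed inequality reduce to $U^{\mathrm{sym}}_1(\beta)$. So fix $j\ge 2$ and assume the reversed inequality for every index in $[j-1]$ and all admissible integer parameters. The plan is to reduce the claim to a uniform bound on the gap between two symmetrized risk functions, and then to control that gap by a counting argument fed by the induction hypothesis.

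\emph{Reduction to a perturbation bound.} Since $s\le k$, the monotone chain in \Cref{thm:multiple-scores} gives $\hat\lambda_\ell^{(2s+1)}\le\hat\lambda_\ell^{(2k+1)}$ for every $\ell\in[j-1]$; as $L_j$ (hence $g^{\mathrm{sym}}_j$, by \Cref{lem:emp-monot}) is non-decreasing in its first $j-1$ arguments, the difference $\Delta(\lambda_j):=g^{\mathrm{sym}}_j(\lambda_j;\hat\lambda_{1:(j-1)}^{(2k+1)})-g^{\mathrm{sym}}_j(\lambda_j;\hat\lambda_{1:(j-1)}^{(2s+1)})$ is non-negative, and from \eqref{eq:g-j-symm}--\eqref{lj} it equals $\frac{1}{n+1}\sum_{i=1}^{n+1}V_j^{(i)}I(S_j^{(i)}>\lambda_j)\big[I(S_{1:(j-1)}^{(i)}\pce\hat\lambda_{1:(j-1)}^{(2k+1)})-I(S_{1:(j-1)}^{(i)}\pce\hat\lambda_{1:(j-1)}^{(2s+1)})\big]$. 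If I can show $\|\Delta\|_{L^\infty(\R)}\le h_j(k-s)/(n+1)$, then since $g^{\mathrm{sym}}_j(\cdot;\hat\lambda_{1:(j-1)}^{(2k+1)})\le g^{\mathrm{sym}}_j(\cdot;\hat\lambda_{1:(j-1)}^{(2s+1)})+h_j(k-s)/(n+1)$ pointwise, the one-sided crossing estimate \eqref{eq:one-sided-crossing} of \Cref{lem:crossing-points} gives $U^{\mathrm{sym}}_j(\hat\lambda_{1:(j-1)}^{(2k+1)};\beta)\le U^{\mathrm{sym}}_j(\hat\lambda_{1:(j-1)}^{(2s+1)};\beta-h_j(k-s)/(n+1))$ for all $\beta\ge 0$; symmetry of the inverses is part of \Cref{thm:multiple-scores}.

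\emph{Bounding the perturbation.} A datapoint $i$ contributes to the bracket in $\Delta$ only if it clears filters $1,\dots,j-1$ at the larger thresholds but fails at the smaller ones. Assigning each such $i$ to the smallest coordinate $\ell_0=\ell_0(i)$ at which it fails the smaller thresholds, this $i$ has $S_{\ell_0}^{(i)}\in(\hat\lambda_{\ell_0}^{(2s+1)},\hat\lambda_{\ell_0}^{(2k+1)}]$ and clears filters $1,\dots,\ell_0-1$ at $\hat\lambda_{1:(\ell_0-1)}^{(2s+1)}$, hence (monotone chain again) at $\hat\lambda_{1:(\ell_0-1)}^{(2s+3)}$, so it is a jump point of $g^{\mathrm{sym}}_{\ell_0}(\cdot;\hat\lambda_{1:(\ell_0-1)}^{(2s+3)})$ of size $V_{\ell_0}^{(i)}/(n+1)\ge V^{\mathrm{min}}_{\ell_0}/(n+1)$ by \Cref{cond:m-positive}. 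Therefore $\|\Delta\|_{L^\infty}\le\frac{V^{\mathrm{max}}_j}{n+1}\sum_{\ell=1}^{j-1}N_\ell$, where $N_\ell$ counts the jump points of $g^{\mathrm{sym}}_{\ell}(\cdot;\hat\lambda_{1:(\ell-1)}^{(2s+3)})$ lying in $(\hat\lambda_{\ell}^{(2s+1)},\hat\lambda_{\ell}^{(2k+1)}]$. The crux is bounding $N_\ell$: I apply the induction hypothesis for index $\ell$ with parameters $(k+1,s+1)$ to get $\hat\lambda_{\ell}^{(2k+1)}=U^{\mathrm{sym}}_{\ell}(\hat\lambda_{1:(\ell-1)}^{(2k+3)};\beta_\ell^{(k)})\le U^{\mathrm{sym}}_{\ell}(\hat\lambda_{1:(\ell-1)}^{(2s+3)};\beta_\ell^{(k)}-h_\ell(k-s)/(n+1))$, so both endpoints of the interval become generalized inverses of the \emph{same} step function $g^{\mathrm{sym}}_{\ell}(\cdot;\hat\lambda_{1:(\ell-1)}^{(2s+3)})$, at heights $\beta_\ell^{(s)}$ and $\beta_\ell^{(k)}-h_\ell(k-s)/(n+1)$. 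The theorem's feasibility hypotheses $L_\ell^{(i)}(\lambda_{1:\ell}^{\mathrm{min}})>\beta_\ell^{(s)}$ and $L_\ell^{(i)}(\lambda_{1:\ell}^{\mathrm{max}})\le\beta_\ell^{(k+j-\ell)}-h_\ell(k-s)/(n+1)\le\beta_\ell^{(k)}-h_\ell(k-s)/(n+1)$, together with monotonicity of $g^{\mathrm{sym}}_\ell$ in its first slot, place both heights within the range of $g^{\mathrm{sym}}_\ell(\cdot;\hat\lambda_{1:(\ell-1)}^{(2s+3)})$, so the counting lemma \Cref{lem:count-scores} applies. Since \Cref{cond:cts-scores} makes the scores a.s.\ distinct, each jump has size at most $V^{\mathrm{max}}_\ell/(n+1)$, and—allowing the usual extra $V^{\mathrm{max}}_\ell/(n+1)$ for the jump at the right endpoint, as in \Cref{lem:lower-bd}—I obtain $N_\ell\le\big[(k-s)(V^{\mathrm{max}}_\ell-V^{\mathrm{min}}_\ell)+V^{\mathrm{max}}_\ell+h_\ell(k-s)\big]/V^{\mathrm{min}}_\ell$, using $\beta_\ell^{(s)}-\beta_\ell^{(k)}=(k-s)\delta_\ell=(k-s)(V^{\mathrm{max}}_\ell-V^{\mathrm{min}}_\ell)/(n+1)$. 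Plugging this into $\|\Delta\|_{L^\infty}\le\frac{V^{\mathrm{max}}_j}{n+1}\sum_{\ell=1}^{j-1}N_\ell$ and comparing with the recursion \eqref{eq:h-recursion} gives exactly $\|\Delta\|_{L^\infty}\le h_j(k-s)/(n+1)$, closing the induction.

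The main obstacle is this counting step: organizing the ``smallest failing coordinate'' decomposition so that every contributing datapoint is genuinely a jump of the correct symmetrized risk function, threading the induction hypothesis through to align the two interval endpoints onto one step function, checking that the strengthened feasibility hypotheses keep all the relevant heights inside the range of $L_\ell^{(i)}(\cdot)$ so that \Cref{lem:count-scores} applies, and tracking the $\beta$-shifts carefully enough that the per-coordinate bounds telescope into precisely \eqref{eq:h-recursion}. Everything else—monotonicity and right-continuity of the step functions, a.s.\ distinctness of scores, and the perturbation-of-inverse passage—is routine given the cited lemmas.
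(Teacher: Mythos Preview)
Your proposal is correct and follows essentially the same approach as the paper's proof: strong induction on $j$, the same perturbation $\Delta$ and index-set decomposition by the first failing coordinate, the same application of the inductive hypothesis at level $\ell$ with shifted parameters $(k+1,s+1)$ to align both endpoints onto $g^{\mathrm{sym}}_\ell(\cdot;\hat\lambda_{1:(\ell-1)}^{(2s+3)})$, and the same invocation of \Cref{lem:count-scores} and \Cref{lem:crossing-points} to recover the recursion \eqref{eq:h-recursion}. The only step you leave implicit that the paper spells out is the verification that the theorem's feasibility hypotheses at level $j$ imply those needed for the inductive call at level $\ell$ with parameters $(k+1,s+1)$ (using $\beta_{\ell'}^{(s)}\ge\beta_{\ell'}^{(s+1)}$ and $\ell\le j-1\Rightarrow\beta_{\ell'}^{(k+j-\ell')}\le\beta_{\ell'}^{((k+1)+\ell-\ell')}$), but you correctly flag this as part of the bookkeeping.
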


\begin{figure}[H]
    \centering
    \includegraphics[scale=0.33]{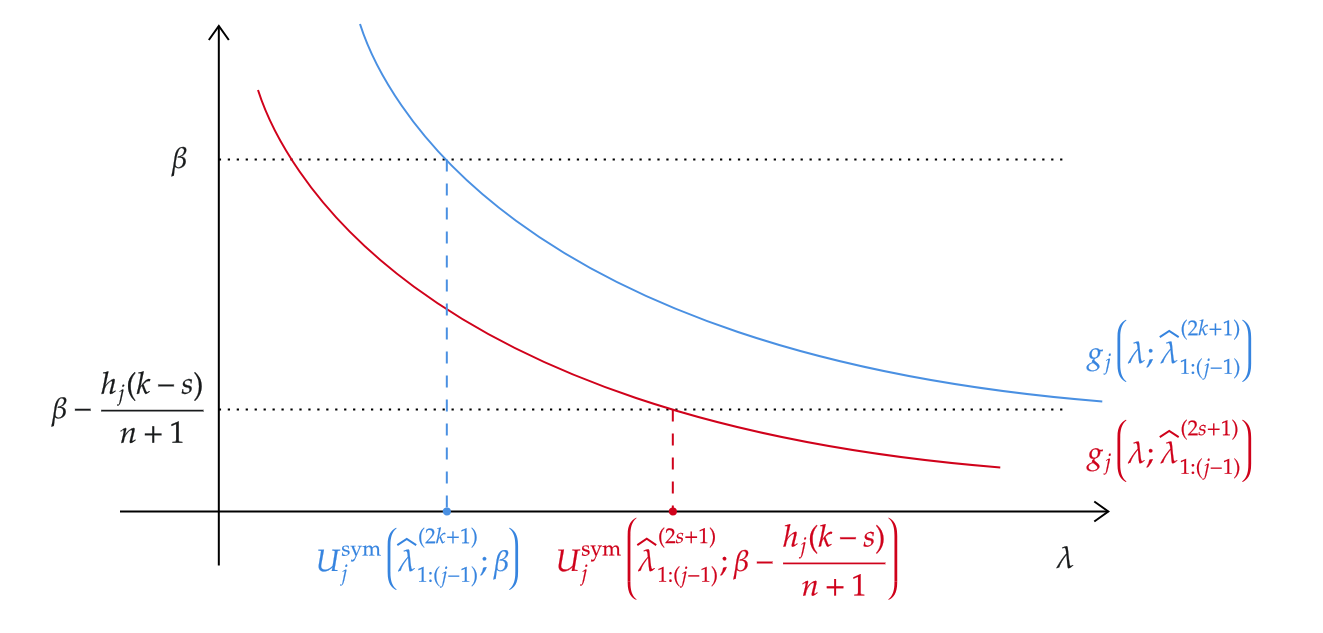}
    \caption{Illustration of the bound in \Cref{thm:reverse-ineq}. } 
    \label{fig:reverse-ineq}
\end{figure}

Observe that due to the monotonicity properties discussed above, we have 
the inequality
$U^{\mathrm{sym}}_j(\hat \lambda_{1:(j-1)}^{(2k+1)}; \beta)
\ge U^{\mathrm{sym}}_j\left(\hat \lambda_{1:(j-1)}^{(2s+1)}; \beta\right)$.
The above result shows that if we decrease $\beta$ by a small amount, we can also obtain an inequality in the reverse direction. 
The proof is given in \Cref{subsec:pf-reverse-ineq}.
Our next results state the implication of this theorem for 
tightness of the risk control of the \multirisk~algorithm. 

\begin{corollary}[Lower bound on \multirisk~constraint risks]\label{cor:constr-lower-bds}

Assume that the conditions
in \Cref{subsubsec:ub-conds}
and in \Cref{subsubsec:lb-conds} hold.
Let $(\hat \lambda_{1:m}^{(2)})$ denote 
the \multirisk~thresholds
constructed
in \Cref{thm:multiple-scores}.
Then for each $j\in [m]$,
we have the risk lower bound
\begin{align*}
\E L_j^{(n+1)}(\hat \lambda_{1:j}^{(2)}) \ge \beta_j - \frac{2V^{\mathrm{max}}_j - V^{\mathrm{min}}_j + h_j(2)}{n+1},
\end{align*}
where $V^{\mathrm{max}}_j$ and $V^{\mathrm{min}}_j$ are defined as in \Cref{cond:loss-bds},
and where the functions $h_j(\cdot)$
are defined as in \Cref{eq:h-recursion}.

\end{corollary}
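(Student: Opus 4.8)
The plan is to reduce the lower bound, for each fixed $j\in[m]$, to a single application of \Cref{lem:lower-bd} (bounded jumps), after rewriting the \multirisk~threshold tuple $(\hat\lambda_{1:j}^{(2)})$ in a ``canonical'' symmetric form using the monotonicity of $L_j$, \Cref{lem:beta-trick}, and --- crucially --- the reversed inequality of \Cref{thm:reverse-ineq}, which I take as given. Recall from \Cref{alg:multirisk} and \Cref{thm:multiple-scores} that $\hat\lambda_j^{(2)} = U_j^+(\hat\lambda_{1:(j-1)}^{(4)};\beta_j)$ (since $\beta_j^{(0)}=\beta_j$), that each $\hat\lambda_\ell^{(2k-1)}$ is a symmetric $\Lambda_\ell$-valued function of the $n+1$ datapoints, and that $\hat\lambda_\ell^{(1)}\le\hat\lambda_\ell^{(2)}\le\cdots$ for every $\ell$; when the argument below invokes indices beyond the range stated in \Cref{thm:multiple-scores} (e.g.\ $\hat\lambda_\ell^{(5)}$), I extend the definitions by the same recursions, which is harmless since out-of-range budgets $\beta_\ell^{(k)}$ merely turn negative and the generalized inverses then saturate at $\lambda_\ell^{\mathrm{max}}$, so symmetry and the monotone chain persist.

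The first reduction shrinks the leading $j-1$ coordinates: since $L_j$ is non-decreasing in its first $j-1$ arguments and $\hat\lambda_\ell^{(1)}\le\hat\lambda_\ell^{(2)}$, we obtain the pointwise bound $L_j^{(n+1)}(\hat\lambda_{1:(j-1)}^{(1)},\hat\lambda_j^{(2)})\le L_j^{(n+1)}(\hat\lambda_{1:j}^{(2)})$, hence $\E L_j^{(n+1)}(\hat\lambda_{1:j}^{(2)})\ge\E L_j^{(n+1)}(\hat\lambda_{1:(j-1)}^{(1)},\hat\lambda_j^{(2)})$. The second reduction pushes the last coordinate up to canonical form in three moves: (i) \Cref{lem:beta-trick} gives $\hat\lambda_j^{(2)}=U_j^+(\hat\lambda_{1:(j-1)}^{(4)};\beta_j)\le U^{\mathrm{sym}}_j(\hat\lambda_{1:(j-1)}^{(4)};\beta_j-\delta_j)=U^{\mathrm{sym}}_j(\hat\lambda_{1:(j-1)}^{(4)};\beta_j^{(1)})$; (ii) the monotone chain $\hat\lambda_\ell^{(4)}\le\hat\lambda_\ell^{(5)}$ and the monotonicity of $U^{\mathrm{sym}}_j$ in its first arguments (\Cref{lem:emp-monot}) upgrade this to $\le U^{\mathrm{sym}}_j(\hat\lambda_{1:(j-1)}^{(5)};\beta_j^{(1)})$; (iii) \Cref{thm:reverse-ineq} with $k=2$, $s=0$ gives $U^{\mathrm{sym}}_j(\hat\lambda_{1:(j-1)}^{(5)};\beta_j^{(1)})\le U^{\mathrm{sym}}_j\bigl(\hat\lambda_{1:(j-1)}^{(1)};\beta_j^{(1)}-\tfrac{h_j(2)}{n+1}\bigr)$. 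The hypotheses of \Cref{thm:reverse-ineq} hold here: $L_\ell^{(i)}(\lambda_{1:\ell}^{\mathrm{min}})>\beta_\ell^{(0)}$ is \Cref{cond:loss-sup}, and $L_\ell^{(i)}(\lambda_{1:\ell}^{\mathrm{max}})\le\beta_\ell^{(j+2-\ell)}-\tfrac{h_\ell(2)}{n+1}$ follows from \Cref{cond:loss-inf-strong}, using $m\ge j$ and the fact that $k\mapsto\beta_\ell^{(k)}$ is non-increasing so $\beta_\ell^{(m+2-\ell)}\le\beta_\ell^{(j+2-\ell)}$. Since $L_j$ is non-increasing in its last argument, this upper bound on $\hat\lambda_j^{(2)}$ gives $\E L_j^{(n+1)}(\hat\lambda_{1:(j-1)}^{(1)},\hat\lambda_j^{(2)})\ge\E L_j^{(n+1)}\bigl(\hat\lambda_{1:(j-1)}^{(1)},U^{\mathrm{sym}}_j(\hat\lambda_{1:(j-1)}^{(1)};\beta)\bigr)$ with $\beta:=\beta_j^{(1)}-\tfrac{h_j(2)}{n+1}$.

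To finish, note that $\hat\lambda_{1:(j-1)}^{(1)}$ is a symmetric $\Lambda_{1:(j-1)}$-valued function and $\beta\le\beta_j$, so \Cref{cond:loss-sup} supplies the hypothesis $L_j^{(i)}(\lambda_{1:j}^{\mathrm{min}})>\beta$ required by \Cref{lem:lower-bd}, which then yields $\E L_j^{(n+1)}(\hat\lambda_{1:(j-1)}^{(1)},U^{\mathrm{sym}}_j(\hat\lambda_{1:(j-1)}^{(1)};\beta))\ge\beta-\tfrac{V^{\mathrm{max}}_j}{n+1}$. Substituting $\beta_j^{(1)}=\beta_j-\delta_j$ and $\delta_j=\tfrac{V^{\mathrm{max}}_j-V^{\mathrm{min}}_j}{n+1}$ collapses the right-hand side to $\beta_j-\tfrac{2V^{\mathrm{max}}_j-V^{\mathrm{min}}_j+h_j(2)}{n+1}$, the asserted bound; the case $j=1$ is the base case, where the coordinate-shrinking reduction is vacuous, moves~(ii)--(iii) are trivial, and $h_1(2)=0$, recovering the conformal risk control lower bound.

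The genuine difficulty lies entirely inside \Cref{thm:reverse-ineq} --- the recursive jump-counting that produces the recursion \Cref{eq:h-recursion} for $h_j(\cdot)$ --- which is used above as a black box. Within the present argument, the one delicate point is the bookkeeping in the second paragraph: one must check that the auxiliary symmetric thresholds invoked (such as $\hat\lambda_\ell^{(5)}$, and implicitly the higher-index thresholds appearing inside the proof of \Cref{thm:reverse-ineq}) remain well defined and retain their symmetry and monotone-chain properties outside the index range literally covered by \Cref{thm:multiple-scores}, and that the feasibility hypotheses of \Cref{thm:reverse-ineq} at $(k,s)=(2,0)$ are exactly the content of \Cref{cond:loss-sup} and \Cref{cond:loss-inf-strong} once $m\ge j$ is used; everything else is routine monotonicity manipulation.
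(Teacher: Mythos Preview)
Your proposal is correct and follows essentially the same route as the paper's proof: both lower the first $j-1$ coordinates from $\hat\lambda_\ell^{(2)}$ to $\hat\lambda_\ell^{(1)}$, raise the last coordinate to $U^{\mathrm{sym}}_j(\hat\lambda_{1:(j-1)}^{(5)};\beta_j^{(1)})$, apply \Cref{thm:reverse-ineq} with $(k,s)=(2,0)$ to land at $U^{\mathrm{sym}}_j(\hat\lambda_{1:(j-1)}^{(1)};\tilde\beta_j)$ with $\tilde\beta_j=\beta_j^{(1)}-h_j(2)/(n+1)$, and then invoke \Cref{lem:lower-bd}. The only cosmetic difference is that the paper jumps directly to $\hat\lambda_j^{(2)}\le\hat\lambda_j^{(3)}=U^{\mathrm{sym}}_j(\hat\lambda_{1:(j-1)}^{(5)};\beta_j^{(1)})$ via the chain in \Cref{thm:multiple-scores}, whereas you reconstruct that single link via \Cref{lem:beta-trick} plus monotonicity; your explicit remark about extending the auxiliary thresholds beyond the index range of \Cref{thm:multiple-scores} (needed when $j=m$) is a point the paper leaves implicit.
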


The proof is given in \Cref{subsec:pf-constr-lower-bds}.


\subsection{Near-optimality of the objective}\label{subsec:cts-conc}

Finally, we show that
under regularity conditions,
the \multirisk~thresholds
not only satisfy the constraints,
but also approximately minimize the population-level objective in \Cref{eq:opt}.

First, note that the population-level objective given in \Cref{eq:opt} has a solution
with an iterative structure.
Given $j \in [m]$
and $\lambda_{1:(j-1)} \in \R^{j-1}$,
define the population risk function
$g_j^*(\cdot; \lambda_{1:(j-1)}) : \R \to [0,\infty)$
corresponding to the $j$-th constraint
by
\begin{align}\label{gjs}
    g_j^*(\lambda_j; \lambda_{1:(j-1)}) = \E L_j(\lambda_{1:j})
\end{align}
for all $\lambda_j \in \R$.\footnote{When $j=1$, we define $g_1^*(\lambda_1) = \E L_1(\lambda_1)$ for $\lambda_1\in \R$.}
Also, given $\beta_j\ge 0$, define the population-level generalized inverse
\begin{align*}
    U_j^*(\lambda_{1:(j-1)}; \beta_j) = \max\{ \lambda_j\in \Lambda_j : g_j^*(\lambda_j; \lambda_{1:(j-1)}) > \beta_j \},
\end{align*}
where the supremum of the empty set is taken to be $\lambda_j^{\mathrm{min}}$.
 Due to the right continuity of $g_j^*$ that
 we establish in the 
 Appendix, 
 if $g_j^*(\lambda_j^{\mathrm{max}}; \lambda_{1:(j-1)}) \le \beta_j$,
 it follows that 
$g_j^*(U_j^*(\lambda_{1:(j-1)}; \beta_j); \lambda_{1:(j-1)}) \le \beta_j$.
 
Let $\lambda_1^* = U_1^*(\beta_1)$,
and for $j\in \{2,\ldots,m\}$,
define $\lambda_j^*$ sequentially by
\begin{align*}
\lambda_j^* = U_j^*(\lambda_{1:(j-1)}^*; \beta_j).
\end{align*}
Then, we show in the appendix  
(\Cref{lem:pop-minimizer}) that  
in the non-trivial case where the risk $g_j^*(\cdot; \lambda_{1:(j-1)}^*)$ crosses 
$\beta_j$---which turns out to amount to 
 $\lambda_j^{*}\in 
(\lambda_j^{\mathrm{min}}, \lambda_j^{\mathrm{max}})$---for all $j\in [m]$, 
$\lambda_{1:m}^*$ is a minimizer of the objective of \Cref{eq:opt}.

\subsubsection{Concentration conditions}\label{subsubsec:cts-conc-conds}

We define
$\mathcal{I}_{\mathrm{discrete}} := \{ j\in [m] : S_j \text{ is a discrete random variable} \}$.
We impose different conditions
on the discrete
$S_j$ for $j\in \mathcal{I}_{\mathrm{discrete}}$
and
on the continuous
$S_j$ for $j\in \mathcal{I}_{\mathrm{cts}}:=  [m]\setminus \mathcal{I}_{\mathrm{discrete}}$.

First, in order to study the concentration of the objective, we introduce bounds on the objective cost.
 These are the analogs of \Cref{cond:loss-bds} for the objective cost instead of the constraint costs. 

\begin{condition}[Objective loss bounds]\label{cond:obj-loss-bds}
    We have $V_{m+1}\in [V^{\mathrm{min}}_{m+1}, V^{\mathrm{max}}_{m+1}]$ a.s. for some finite constants $V^{\mathrm{max}}_{m+1}\ge V^{\mathrm{min}}_{m+1} \ge 0$.
\end{condition}

Whereas in \Cref{subsec:ub} and \Cref{subsec:lb}, we relied only on the exchangeability of the observations, in this section we impose an i.i.d. assumption in order to apply standard concentration inequalities.

\begin{condition}[I.I.D. observations]\label{cond:iid-observations}
The observations $\{ (X^{(i)}, Y^{(i)}, Y^{*,(i)}) : i\in [n+1] \}$ are i.i.d.
\end{condition}

We also assume that the scores are compactly supported.

\begin{condition}[Compactly supported scores]\label{cond:cpt-supp}
The joint distribution of $S_{1:m}$
has support contained
in the box
$\mathcal{B} = \prod_{\ell=1}^m (0, B_{\ell})$ 
for some finite $B_{1:m} > 0$.
\end{condition}

In practice, this condition requires that the scores can be normalized and truncated into a finite range, which can then be shifted to be contained in an appropriate box.\footnote{We note that the specific box chosen in this condition is merely for convenience, and the same results would hold for any other choice of the box.} 
We then let
$\Lambda_{1:m} = \prod_{\ell=1}^m [0, B_{\ell}]$,
where $B_{1:m}$ are as in
\Cref{cond:cpt-supp}.\footnote{Here, it is important that the hyperparameter space includes the entire range of the scores, which can ensure that for the extreme setting $\lambda_j=0$, $j\in[m]$, of the hyperparameters we obtain zero risk.}
Having defined
$\Lambda_{1:m}$,
given a positive integer $k$,
we define
$(\lambda_{1:m}^{*, (2k)})$
recursively via similar logic as 
$(\lambda_{1:m}^{*})$
above but with the adjusted budgets $\beta_1^{(k-1)}$ instead of the original budgets $\beta$ by 
$\lambda_1^{*,(2k)} = U_1^*(\beta_1^{(k-1)})$
and
\begin{align}\label{eq:lambd-star-2k}
\lambda_j^{*, (2k)} = U_j^*(\lambda_{1:(j-1)}^{*, (2k)}; \beta_j^{(k-1)})
\end{align}
for $j\in \{2,\ldots,m\}$.
As noted above,
if
$\lambda_j^{*,(2k)}\in 
(\lambda_j^{\mathrm{min}}, \lambda_j^{\mathrm{max}})$
for all $j\in [m]$,
then
$\lambda_{1:m}^{*,(2k)}$
is a minimizer
of the population-level problem
with risk upper bounds given by
$(\beta_{1:m}^{(k-1)})$
(\Cref{lem:pop-minimizer}).

For each of the continuous scores,
we impose a Lipschitz condition of the cumulative distribution function (c.d.f.).
Such conditions are often needed to establish concentration conditions \citep[e.g.,][etc]{sesia2021conformal,jung2023batch,kiyani2024length} in the literature on conformal prediction.

\begin{condition}[Lipschitz c.d.f.s of scores]\label{cond:cdf-lip}
For each $j\in \mathcal{I}_{\mathrm{cts}}$,
the c.d.f. of $S_j$ is $K_j$-Lipschitz
for some constant $K_j>0$.
\end{condition}

In order show that our algorithm achieves an objective that tightly concentrates around the population optimum, 
for $j\in \ii_{\mathrm{cts}}$ we require that the population constraint risk function $g_j^*(\lambda_j; \lambda_{1:(j-1)}) = \E L_j(\lambda_{1:j})$ decreases sufficiently fast in $\lambda_j$.
Here we opt for quantifying this via a polynomial-type growth condition on the c.d.f. of the scores. 

\begin{condition}[Reverse H\"older c.d.f. of scores]\label{cond:rev-lip}
There is some $\nu\ge 1$,
such that for each $j\in \mathcal{I}_{\mathrm{cts}}$,
there exists a constant $\widetilde K_j > 0$
for which
the joint c.d.f.~
$F : \R^m \to [0,1]$
of $S_{1:m}$, given by 
$F(s_{1:m}) := \PP{S_{1:m} \pce s_{1:m}}$
for all $s_{1:m}\in \R^m$,
obeys the reverse $\nu$-Hölder condition 
\begin{align*}
    F(s_{1:(j-1)}, s_j', s_{(j+1):m}) - F(s_{1:(j-1)}, s_j, s_{(j+1):m}) \ge \widetilde K_j (s_j' - s_j)^{\nu}
\end{align*}
for all $s_j\le s_j'$ with $s_j, s_j' \in (0,B_j)$
and for all other coordinates $s_{\ell}\in (0,B_{\ell})$
with $\ell\in [m]\setminus\{j\}$.
(Here, $B_j$ is defined as in \Cref{cond:cpt-supp}.)
\end{condition}

For the discrete scores,
we introduce the following notation.

\begin{condition}[Finite supports of discrete scores]\label{cond:disc-supp}
For $j\in \mathcal{I}_{\mathrm{discrete}}$,
there is a finite set $\mathcal{S}_j \subseteq \R$
to which $S_j$ belongs a.s.
\end{condition}

For each $j\in \mathcal{I}_{\mathrm{discrete}}$,
for each $k\in \{0, \ldots, m-j+1\}$, 
let\footnote{For two sets $A$ and $B$ and a function $f: A \to B$, $f(A)$ is the image of $A$ under $f$.}
$\vv_j^{(k)} := g_j^*(\R; \lambda_{1:(j-1)}^{*, (2k+2)})$
denote the range of the population risk function 
$g_j^*(\cdot; \lambda_{1:(j-1)}^{*, (2k+2)})$ from \eqref{gjs} evaluated at $\lambda_{1:(j-1)}^{*, (2k+2)}$ from \Cref{eq:lambd-star-2k}.
 Observe that 
due to \Cref{cond:disc-supp}
 for all $j\in\mathcal{I}_{\mathrm{discrete}}, k\in \{0,\ldots,m-j+1\}$, $\vv_j^{(k)}$ are finite sets. 
If $j=1$ is in $\mathcal{I}_{\mathrm{discrete}}$,
then we have that
$\vv_1^{(k)}=
g_1^*(\R)$
for all $k\in \{0,\ldots,m\}$, which does not depend on $k$,
so we drop the superscript
and write
$\vv_1 = \vv_1^{(k)}$
for all $k\in \{0,\ldots,m\}$.

For discrete scores, the functions
$g_j^*(\cdot; \lambda_{1:(j-1)})$
are step functions with a discrete range.
It turns out that controlling them at levels that are equal to the values in their image poses some technical challenges due to the nonzero probabilities of these functions taking those exact values.
Therefore, we will require the corresponding values of
$\beta_j$ to lie outside of these finite sets.
In practice, this is not a restriction, as it holds for almost every $\beta_j$ with respect to the Lebesgue measure, and moreover can be easily achieved by perturbing the levels $\beta_j$ with small continuous-valued noise.

\begin{condition}[Non-degenerate risk levels]\label{cond:no-bad-beta}
For each $j\in \mathcal{I}_{\mathrm{discrete}}$,
suppose that
$\beta_j\not\in g_j^*(\R; \lambda_{1:(j-1)}^{*, (2)})$, where $\lambda_{1:(j-1)}^{*, (2)}$ is defined in \Cref{eq:lambd-star-2k}. 
\end{condition}

Finally, we impose a condition
to ensure that
$\lambda_{1:m}^{*,(2)}$
are minimizers of
the population-level problem from
\Cref{eq:opt}.
 As previously discussed, this is a condition that ensures that 
 the risk function $g_j^*(\cdot; \lambda_{1:(j-1)}^*)$ crosses 
$\beta_j$, so that tightness is possible in our case. 

\begin{condition}[Sufficient condition for population minimizer]\label{cond:beta-tilde-exists}
For all $j\in [m]$, 
suppose that
$\lambda_j^{*,(2)}\in (0, B_j)$,
where $\lambda_{j}^{*, (2)}$ is defined in \Cref{eq:lambd-star-2k},
and $B_j$ is defined in \Cref{cond:cpt-supp}.
\end{condition}



Under the above conditions, 
the following result 
shows that the 
\multirisk~thresholds
 are near-optimal
for the objective in \Cref{eq:opt}.

\begin{theorem}[Bound on \multirisk~objective value]\label{thm:cts-conc}
Under the conditions
in \Cref{subsubsec:ub-conds},
\Cref{cond:m-positive},
and the conditions in \Cref{subsubsec:cts-conc-conds},
if $(\hat \lambda_{1:m}^{(2)})$ denote the \multirisk~thresholds constructed in
\Cref{thm:multiple-scores},
and if
$(\lambda_{1:m}^{*})$
is any population minimizer of the objective from \eqref{eq:opt},
then we have
\begin{align*}
    \E V_{m+1}^{(n+1)} I(S_{1:m}^{(n+1)} \pce \hat \lambda_{1:m}^{(2)}) \le \E V_{m+1}^{(n+1)} I(S_{1:m}^{(n+1)} \pce \lambda_{1:m}^{*}) 
    + O\left( 
    \left( \frac{\log n}{n} \right)^{1/(2\nu^{m})} 
    \right).
\end{align*}
\end{theorem}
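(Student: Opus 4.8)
The guiding principle is that the objective $\lambda_{1:m}\mapsto \E V_{m+1}I(S_{1:m}\pce\lambda_{1:m})$ is coordinatewise non-decreasing, so minimizing it over the feasible set amounts to choosing each $\lambda_j$ as small as the $j$-th population constraint permits, namely at the crossing point $\lambda_j^{*}=U_j^*(\lambda_{1:(j-1)}^{*};\beta_j)$. Under \Cref{cond:beta-tilde-exists} and \Cref{lem:pop-minimizer}, the sequence $\lambda_{1:m}^{*,(2)}$ (which uses the unshrunk budgets $\beta_j^{(0)}=\beta_j$, hence equals $\lambda_{1:m}^{*}$) is such a minimizer, and all population minimizers share the same objective value. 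Since $\hat\lambda^{(2)}$ is population-feasible by \Cref{thm:master} (part 1), its objective value is already $\ge$ the optimum; thus it suffices to produce a deterministic upper bound $\hat\lambda_j^{(2)}\le \lambda_j^{*}+r_j$ on a high-probability event, and then transfer it to the objective via monotonicity and the Lipschitzness of the c.d.f.s (\Cref{cond:cdf-lip}).

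First I would set up the event. Under \Cref{cond:iid-observations}, for each $j$ the class $\{(x,y,y^*)\mapsto V_j\,I(S_1\le\lambda_1,\dots,S_{j-1}\le\lambda_{j-1},S_j>\lambda_j):\lambda_{1:j}\in\Lambda_{1:j}\}$ is uniformly bounded by $V^{\mathrm{max}}_j$ and VC-subgraph of index $O(j)$, so a uniform deviation inequality gives an event $E_n$ with $\PP{E_n}\ge 1-O(n^{-2})$ on which $\sup_{\lambda_{1:j}}|g_j^{\mathrm{sym}}(\lambda_j;\lambda_{1:(j-1)})-g_j^*(\lambda_j;\lambda_{1:(j-1)})|\le \epsilon_n=O(\sqrt{(\log n)/n})$ for all $j$, where $g_j^{\mathrm{sym}}$ is the symmetrized empirical risk over all $n+1$ points (differing from $g_j,g_j^+$ by $O(1/n)$). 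On $E_n^c$ the objective is at most $V^{\mathrm{max}}_{m+1}$, contributing $O(n^{-2})$ to the expectation, which is negligible relative to the claimed rate. The analytic core is an inversion estimate: for fixed $j$, data-dependent $\mu$ and population analogue $\mu'$, one has $|g_j^*(\lambda_j;\mu)-g_j^*(\lambda_j;\mu')|\le V^{\mathrm{max}}_j\sum_{\ell<j}\pi_\ell$, where $\pi_\ell=\PP{S_\ell\in(\min(\mu_\ell,\mu'_\ell),\max(\mu_\ell,\mu'_\ell)]}$; combined with the concentration bound this shows $g_j^{\mathrm{sym}}(\cdot;\mu)$ and $g_j^*(\cdot;\mu')$ agree up to $\tilde\epsilon_j:=\epsilon_n+V^{\mathrm{max}}_j\sum_{\ell<j}\pi_\ell$ uniformly in $\lambda_j$. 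For continuous $j$, \Cref{cond:rev-lip} (with \Cref{cond:m-positive}, \Cref{cond:cpt-supp}) makes $g_j^*(\cdot;\mu')$ strictly decreasing with reverse-$\nu$-Hölder modulus $V^{\mathrm{min}}_j\widetilde K_j$; an argument by contradiction (if the generalized inverse of $g_j^{\mathrm{sym}}(\cdot;\mu)$ at level $\beta$ exceeded $U_j^*(\mu';\beta)$ by $\rho$, then $g_j^*(U_j^*(\mu';\beta)+\rho;\mu')\ge\beta-\tilde\epsilon_j$, contradicting $g_j^*(U_j^*(\mu';\beta)+\rho;\mu')\le\beta-V^{\mathrm{min}}_j\widetilde K_j\rho^\nu$ once $\rho$ is large) yields $U_j^{\mathrm{sym}}(\mu;\beta)\le U_j^*(\mu';\beta)+(\tilde\epsilon_j/(V^{\mathrm{min}}_j\widetilde K_j))^{1/\nu}$. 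For discrete $j$, $g_j^*(\cdot;\mu')$ jumps only at the finitely many support points of $S_j$ and \Cref{cond:no-bad-beta} keeps $\beta_j$ away from its jump heights; propagating this robustness through the $O(1/n)$ budget shrinkage and the perturbations $\tilde\epsilon_j$ shows that for $n$ large the two generalized inverses land at the same support point, so $I(S_j\le U_j^{\mathrm{sym}}(\mu;\beta))=I(S_j\le U_j^*(\mu';\beta))$ a.s.\ and $\pi_j=0$.

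I would then run this through the threshold hierarchy. By \Cref{lem:beta-trick}, $\hat\lambda_j^{(2k)}=U_j^+(\hat\lambda_{1:(j-1)}^{(2k+2)};\beta_j^{(k-1)})\le U_j^{\mathrm{sym}}(\hat\lambda_{1:(j-1)}^{(2k+2)};\beta_j^{(k)})$, and \Cref{thm:multiple-scores} guarantees each $\hat\lambda_j^{(2k)}$ is a measurable function of the observed data (so the concentration over $E_n$ applies) and gives the monotone chains. Inducting on $j$ while carrying along the superscripts $2k$ — noting that level $j$ calls level $\ell<j$ only at a strictly larger superscript, which stays within the valid range — and applying the inversion estimate once to pass from the empirical to the symmetrized inverse and once deterministically to absorb the $O(1/n)$ gap between $\lambda_j^{*,(2k)}$ and $\lambda_j^{*}$, one obtains on $E_n$ a deterministic bound $\hat\lambda_j^{(2)}\le\lambda_j^{*}+r_j$, where the $r_j$ satisfy $r_j\lesssim(\epsilon_n+\sum_{\ell<j,\ \ell\ \mathrm{cts}}K_\ell r_\ell)^{1/\nu}$ for continuous $j$ (discrete coordinates contribute nothing). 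Since $\epsilon_n<1$ and $\nu\ge1$, unrolling gives $r_j=O(\epsilon_n^{1/\nu^{j}})=O(((\log n)/n)^{1/(2\nu^{j})})$, with the worst term at $j=m$. On $E_n$, monotonicity of the objective then gives $\E V_{m+1}^{(n+1)}I(S_{1:m}^{(n+1)}\pce\hat\lambda_{1:m}^{(2)})\le \E V_{m+1}^{(n+1)}I(S_{1:m}^{(n+1)}\pce(\lambda_j^{*}+r_j)_j)$, and \Cref{cond:cdf-lip} with \Cref{cond:obj-loss-bds} bounds the latter by $\E V_{m+1}^{(n+1)}I(S_{1:m}^{(n+1)}\pce\lambda_{1:m}^{*})+V^{\mathrm{max}}_{m+1}\sum_j K_j r_j$; the sum is $O(((\log n)/n)^{1/(2\nu^{m})})$, and since $\lambda_{1:m}^{*}=\lambda_{1:m}^{*,(2)}$ is a population minimizer, its objective value equals that of any population minimizer. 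Adding back the $O(n^{-2})$ from $E_n^c$ gives the claim; the discrete case (\Cref{thm:disc-conc}) follows the same route but with exponentially small error once all scores are discrete.

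I expect the main obstacle to be the recursive inversion step: controlling how the estimation error of the earlier thresholds feeds, through the composition of uniform concentration, Lipschitz perturbation of $g_j^*$ in its first arguments, and inversion against the reverse-$\nu$-Hölder modulus, into the error of $\hat\lambda_j$, while simultaneously tracking the auxiliary-threshold superscripts $2k$ so the induction closes, verifying that for discrete scores the perturbed generalized inverses robustly coincide on a support point (requiring \Cref{cond:no-bad-beta} and interiority of $\lambda_j^{*}$ from \Cref{cond:beta-tilde-exists}), and checking that the budget-shrinkage discrepancies $\lambda_j^{*,(2k)}-\lambda_j^{*}$ are themselves $O(n^{-1/\nu^{j}})$ and hence absorbed. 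A secondary technical point is making the passage from the c.d.f.-level condition \Cref{cond:rev-lip} to the required anti-concentration of $\lambda_j\mapsto g_j^*(\lambda_j;\lambda_{1:(j-1)})$ precise, including marginalizing out $S_{(j+1):m}$ and handling a possibly non-constant cost $V_j$ via $V^{\mathrm{min}}_j$.
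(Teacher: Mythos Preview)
Your proposal is correct and follows essentially the same architecture as the paper: reduce the objective gap to closeness of $\hat\lambda_j^{(2)}$ and $\lambda_j^{*,(2)}$ via the swapping trick and \Cref{cond:cdf-lip}, then induct on $j$ (carrying the superscripts $2k$) using uniform concentration of $g_j^+$ around $g_j^*$, Lipschitz perturbation of $g_j^*$ in its first arguments, and the reverse-H\"older inversion; for discrete coordinates you correctly collapse the error to zero via \Cref{cond:no-bad-beta}. The paper also separates a ``preliminary induction'' bounding $|\lambda_j^{*,(2k)}-\lambda_j^{*,(2k+2)}|$ from the ``main induction'' bounding $|\hat\lambda_j^{(2k)}-\lambda_j^{*,(2k)}|$, which matches your ``budget-shrinkage discrepancies'' remark.

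One genuine difference is worth flagging: you take the concentration of $g_j^{\mathrm{sym}}-g_j^*$ uniformly over all of $\lambda_{1:j}$ (VC class of dimension $O(j)$), whereas the paper applies concentration only uniformly in $\lambda_j$ at finitely many \emph{deterministic} values of $\lambda_{1:(j-1)}$ (namely $\lambda_{1:(j-1)}^{*,(2k+2)}\pm c_{1:(j-1)}^{(2k+2)}(\log n/n)^{1/(2\nu^{j-1})}$) and then sandwiches the random $\hat\lambda_{1:(j-1)}^{(2k+2)}$ between them via the monotonicity of $g_j^+$ and $g_j^*$. Your route is shorter; the paper's requires two-sided control of $\hat\lambda_\ell$ to make the sandwich work, which is why its induction is explicitly two-sided. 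Relatedly, your stated goal of a purely one-sided bound $\hat\lambda_j^{(2)}\le\lambda_j^{*}+r_j$ is slightly at odds with your inversion step, which uses the two-sided quantity $\pi_\ell=\PP{S_\ell\in(\min(\mu_\ell,\mu_\ell'),\max(\mu_\ell,\mu_\ell')]}$; either run the induction two-sided (as the paper does), or, if you want to keep it one-sided, replace the $\pi_\ell$ argument by the monotonicity $U_j^+(\mu;\beta)\le U_j^+(\mu'+r;\beta)$ for $\mu\le\mu'+r$ and then invert only at the deterministic $\mu'+r$.
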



\begin{remark}
It is natural to ask
whether
\Cref{thm:cts-conc}
 holds for any threshold satisfying the 
upper and lower bounds
in 
\Cref{thm:master}, 
not just for those constructed by our algorithm. 
This is not the case,
as the following counterexample shows; and the specific construction of our algorithm is used beyond the result from \Cref{thm:master} to show near-optimality.

Suppose that
$S_j \sim \mathrm{Unif}([0,1])$
are i.i.d.
for $j\in [m]$,
and suppose that
$V_j \equiv 1$
for $j\in [m]$.
Then it can be verified that 
$\lambda_{1:m}^* = \beta_{1:m}$.
For each $j\in [m]$,
fix a constant 
$c_j \in (0, \min\{ \beta_j, 1-\beta_j \})$,
and let the random variable
$\hat \lambda_j$ take 
the value $\beta_j + c_j$ with probability $1/2$
and
the value $\beta_j - c_j$ with probability $1/2$,
independently of all else.
Then
for $j\in [m]$,
we have
$\E L_j^{(n+1)}(\hat \lambda_{1:j}) = \beta_j$,
while
$|\hat \lambda_j - \lambda_j^*| = c_j$
is of constant order
and does not tend to zero
as $n\to \infty$.
Consequently,
\Cref{thm:cts-conc}
does not hold.
\end{remark}

Here we present the main ideas behind the proof of \Cref{thm:cts-conc}. First, by means of the Lipschitz assumption in \Cref{cond:cdf-lip}, we reduce bounding
\begin{align*}
\E V_{m+1}^{(n+1)} I(S_{1:m}^{(n+1)} \pce \hat \lambda_{1:m}^{(2)}) - \E V_{m+1}^{(n+1)} I(S_{1:m}^{(n+1)} \pce \lambda_{1:m}^{*,(2)})
\end{align*}
to controlling the differences
$|\hat \lambda_{j}^{(2)} - \lambda_{j}^{*,(2)}|$
for $j\in [m]$
with high probability.
 Due to the recursive nature of the construction of the thresholds, we achieve this through showing 
by induction on $j\in [m]$ and $k\in [m-j+1]$ that the differences $|\hat \lambda_j^{(2k)} - \lambda_j^{*,(2k)}|$ can be bounded with high probability.

Given certain $\lambda_{1:(j-1)}\in \R^{j-1}$ and $\beta,\beta'\in \R$,
the inductive argument involves controlling differences of the form
\begin{align*}
(I): \quad |U_j^*(\lambda_{1:(j-1)}; \beta) - U_j^*(\lambda_{1:(j-1)}; \beta')|
\end{align*}
and
\begin{align*}
(II): \quad |U_j^*(\lambda_{1:(j-1)}; \beta) - U_j^+(\lambda_{1:(j-1)}; \beta)|.
\end{align*}

To control expressions of type (I), we recall that $U_j^*(\lambda_{1:(j-1)}; \cdot)$ is defined as the generalized inverse of $g_j^*(\cdot; \lambda_{1:(j-1)})$. The reverse H\"older assumption in \Cref{cond:rev-lip} implies that $g_j^*(\cdot; \lambda_{1:(j-1)})$ decreases rapidly (\Cref{lem:condl-rev-lip-bds}), which implies that $U_j^*(\lambda_{1:(j-1)}; \cdot)$ decreases slowly. Hence, 
we can bound a difference of type (I) 
by $|\beta-\beta'|^{1/\nu}$ up to constants.

For expressions of type (II), we first bound the uniform norm
\begin{align*}
\|g_j^+(\cdot; \lambda_{1:(j-1)}) - g_j^*(\cdot; \lambda_{1:(j-1)})\|_{L^{\infty}(\R)}
\end{align*}
with high probability (\Cref{lem:emp-proc-bd}).
To do so, we rewrite this quantity in terms of an empirical process over a function class $\ff_j$ that can be expressed as the product $\tilde f_j\cdot \mathcal{G}_j$, where the function $\tilde f_j$ is uniformly bounded and the function class $\mathcal{G}_j$ has VC-dimension unity. 
Applying the Ledoux-Talagrand contraction lemma to $\ff_j$ \citep[Theorem 4.12]{ledoux2013probability}, we obtain a bound on the Rademacher complexity of $\ff_j$ of size $O(\sqrt{\log n/n})$, which in turn provides a high probability bound on the uniform norm.
Equipped with this bound, by \Cref{lem:crossing-points}, if
\begin{align*}
\|g_j^+(\cdot; \lambda_{1:(j-1)}) - g_j^*(\cdot; \lambda_{1:(j-1)})\|_{L^{\infty}(\R)} \le \ep
\end{align*}
for some $\ep > 0$, then by the definitions of $U_j^+$ and $U_j^*$, we necessarily have
\begin{align*}
    U_j^*(\lambda_{1:(j-1)}; \beta+\ep) \le U_j^+(\lambda_{1:(j-1)}; \beta) \le U_j^*(\lambda_{1:(j-1)}; \beta-\ep),
\end{align*}
so that $ |U_j^+(\lambda_{1:(j-1)}; \beta) - U_j^*(\lambda_{1:(j-1)}; \beta)|$ is bounded by 
\begin{align*}
    \max\big\{|U_j^*(\lambda_{1:(j-1)}; \beta+\ep) - U_j^*(\lambda_{1:(j-1)}; \beta)|,
     |U_j^*(\lambda_{1:(j-1)}; \beta-\ep) - U_j^*(\lambda_{1:(j-1)}; \beta)|\big\}.
\end{align*}
It follows that bounding expressions of type (II) reduces to bounding expressions of type (I).
The proof of \Cref{thm:cts-conc} is given in \Cref{subsec:pf-cts-conc}.

\subsection{Concentration for discrete scores}\label{subsec:disc-conc}

The results of
\Cref{subsec:ub},
\Cref{subsec:lb},
and
\Cref{subsec:cts-conc}
apply
in the case of discrete scores.
However,
when $S_j$ is discrete
for all $j\in [m]$
(that is,
when $\mathcal{I}_{\mathrm{discrete}} = [m]$),
\Cref{thm:cts-conc}
can be improved,
as we show here.

\subsubsection{Concentration conditions}\label{subsubsec:disc-conc-conds}

The following two conditions are \Cref{cond:disc-supp} and \Cref{cond:no-bad-beta} specialized to the case $\mathcal{I}_{\mathrm{discrete}} = [m]$.
 We state them here for clarity. 

\begin{condition}[Finite supports of all scores]\label{cond:disc-supp-strong}
For $j\in [m]$,
there is a finite set $\mathcal{S}_j \subseteq \R$
to which $S_j$ belongs a.s.
\end{condition}

\begin{condition}[Non-degenerate risk levels for all scores]\label{cond:no-bad-beta-strong}
For each $j\in [m]$,
suppose that
$\beta_j\not\in g_j^*(\R; \lambda_{1:(j-1)}^{*,(2)})$,
where $\lambda_{1:(j-1)}^{*, (2)}$ is defined in \Cref{eq:lambd-star-2k}.
\end{condition}


The following is the strengthened form of \Cref{thm:cts-conc} in the case that all scores are discrete.

\begin{theorem}[Bound on \multirisk~objective value for discrete scores]\label{thm:disc-conc}
Under the conditions
in \Cref{subsubsec:ub-conds},
under
\Cref{cond:iid-observations},
and under the conditions
in \Cref{subsubsec:disc-conc-conds},
if $(\hat \lambda_{1:m}^{(2)})$ denote the \multirisk~thresholds constructed in
\Cref{thm:multiple-scores},
and if
$(\lambda_{1:m}^{*})$
is any population minimizer,
then we have
\begin{align*}
    \E V_{m+1}^{(n+1)} I(S_{1:m}^{(n+1)} \pce \hat \lambda_{1:m}^{(2)}) \le \E V_{m+1}^{(n+1)} I(S_{1:m}^{(n+1)} \pce \lambda_{1:m}^{*}) 
    + O\left( \exp(-n^{0.99}) \right).
\end{align*}
\end{theorem}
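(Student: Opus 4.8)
The discreteness of the scores should allow a substantially sharper argument than in \Cref{thm:cts-conc}: instead of controlling the \emph{size} of the perturbations $|\hat\lambda_j^{(2)} - \lambda_j^{*}|$, the plan is to show they are \emph{exactly zero} on an event of probability $1 - O(\exp(-n^{0.99}))$. Throughout, let $\lambda^{*}_{1:m}$ be the iteratively-constructed population solution $\lambda_1^* = U_1^*(\beta_1)$, $\lambda_j^* = U_j^*(\lambda^*_{1:(j-1)}; \beta_j)$; under \Cref{cond:beta-tilde-exists} this is a minimizer of \eqref{eq:opt} by \Cref{lem:pop-minimizer}, and since every population minimizer attains the same objective value, it suffices to compare the \multirisk~objective with $\E V_{m+1}^{(n+1)} I(S_{1:m}^{(n+1)} \pce \lambda^*_{1:m})$. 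Granting a ``good'' event $\mathcal{E}$, measurable with respect to the calibration data, on which $\hat\lambda^{(2)}_{1:m} = \lambda^*_{1:m}$ and with $\Pr(\mathcal{E}^c) = O(\exp(-n^{0.99}))$, I would split $\E V_{m+1}^{(n+1)} I(S_{1:m}^{(n+1)} \pce \hat\lambda^{(2)}_{1:m})$ into the contribution on $\mathcal{E}$, which equals $\E[\mathbf{1}_{\mathcal{E}}\, V_{m+1}^{(n+1)} I(S_{1:m}^{(n+1)} \pce \lambda^*_{1:m})] \le \E V_{m+1}^{(n+1)} I(S_{1:m}^{(n+1)} \pce \lambda^*_{1:m})$, plus the contribution on $\mathcal{E}^c$, which is at most $V_{m+1}^{\mathrm{max}}\,\Pr(\mathcal{E}^c)$. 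This gives the theorem, so the work is entirely in constructing $\mathcal{E}$.

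\textbf{The good event and concentration.}
Since each $S_j$ lives in a finite set $\mathcal{S}_j$ (\Cref{cond:disc-supp-strong}), every \multirisk~threshold and every auxiliary population threshold lies in $\mathcal{S}_j \cup \{\lambda_j^{\mathrm{min}},\lambda_j^{\mathrm{max}}\}$, so each conditioning tuple $\hat\lambda^{(2k+2)}_{1:(j-1)}$ ranges over a set of size independent of $n$. For a fixed value $v$, the bumped empirical risk $g_j^+(\cdot; v)$ is a non-increasing step function whose jump heights are empirical means of the bounded variables $V_j^{(i)} I(S_{1:(j-1)}^{(i)} \pce v)\,I(S_j^{(i)} = s)$, $s \in \mathcal{S}_j$, and $g_j^*(\cdot; v)$ is the matching population step function. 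By Hoeffding's inequality plus a union bound over the finitely many triples $(j, v, s)$, setting $\epsilon_n := n^{-0.005}$,
\begin{align*}
\Pr\!\left( \max_{j, v}\, \big\|g_j^+(\cdot; v) - g_j^*(\cdot; v)\big\|_{L^\infty(\R)} > \epsilon_n \right) \le C\exp(-c\,n\,\epsilon_n^2) = C\exp(-c\,n^{0.99}).
\end{align*}
Likewise, every atom of every $S_j$ with positive mass appears among the $n$ calibration points outside an event of probability $\exp(-\Omega(n))$. I would let $\mathcal{E}$ be the intersection of these events.

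\textbf{Matching the thresholds.}
Fix $j$ and set $d_j := \mathrm{dist}\!\big(\beta_j,\, g_j^*(\R; \lambda^*_{1:(j-1)})\big)$, which is positive by \Cref{cond:no-bad-beta-strong} (the relevant set is the finite range $\vv_j^{(0)}$, using $\lambda^{*,(2)}_{1:(j-1)} = \lambda^*_{1:(j-1)}$ since $\beta^{(0)}_j = \beta_j$). The budget offsets $(k-1)\delta_j$ and the bumps $V_j^{\mathrm{max}}/(n+1)$ are $O(1/n)$, so for all $n \ge n_0$ — with $n_0$ depending only on the fixed constants $d_{1:m}, V^{\mathrm{max}}_{1:m}, V^{\mathrm{min}}_{1:m}, m$ — each of these offsets and $\epsilon_n$ is below $d_j/4$. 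I would then induct on $(j,k)$ with $k\in[m-j+1]$ to show that, for $n\ge n_0$, on $\mathcal{E}$ one has $\hat\lambda^{(2k)}_{j} = \lambda^{*,(2k)}_{j} = \lambda^*_j$. For $j=1$: $\hat\lambda_1^{(2k)} = U_1^+(\beta_1^{(k-1)})$ is where $g_1^+$ first drops to $\le \beta_1^{(k-1)}$; since $\|g_1^+ - g_1^*\|_\infty\le\epsilon_n$, the relevant jumps of $g_1^*$ are present on $\mathcal{E}$, and the gap of $g_1^*$ around $\beta_1$ exceeds all perturbations, a crossing-point comparison in the style of \Cref{lem:crossing-points} identifies this point with $\sup\{\lambda_1: g_1^*(\lambda_1)>\beta_1\}=\lambda_1^*$. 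For the inductive step, assuming $\hat\lambda^{(2k+2)}_{1:(j-1)} = \lambda^*_{1:(j-1)}$, one has $g_j^+(\cdot; \hat\lambda^{(2k+2)}_{1:(j-1)}) = g_j^+(\cdot; \lambda^*_{1:(j-1)})$ within $\epsilon_n$ of $g_j^*(\cdot; \lambda^*_{1:(j-1)})$, and the same comparison at level $\beta_j^{(k-1)}$ gives $\hat\lambda^{(2k)}_j = \sup\{\lambda_j: g_j^*(\lambda_j;\lambda^*_{1:(j-1)})>\beta_j\}=\lambda_j^*$. Specializing to $k=1$ yields $\hat\lambda^{(2)}_{1:m}=\lambda^*_{1:m}$ on $\mathcal{E}$ for $n\ge n_0$; for $n<n_0$ the theorem is vacuous since its error term is then $\Omega(1)$.

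\textbf{Anticipated main obstacle.}
The crux is the inductive threshold-matching across the whole \multirisk~cascade of auxiliary thresholds: one must verify that the accumulated $O(1/n)$ offsets from the tightened budgets $\beta_j^{(k)}$, the conformal bumps $V_j^{\mathrm{max}}/(n+1)$, and the empirical fluctuation $\epsilon_n$ never push any generalized inverse $U_j^+$ or $U_j^*$ past a jump of the limiting population step function $g_j^*(\cdot;\lambda^*_{1:(j-1)})$. This is precisely where discreteness — finitely many jumps, separated by $\Theta(1)$ gaps — combined with the non-degeneracy of the risk levels (\Cref{cond:no-bad-beta-strong}, $\beta_j$ avoids the attained risk values) converts the approximate identity $\hat\lambda^{(2k)}_j\approx\lambda^*_j$ into an exact one, which is what upgrades the polynomial rate of \Cref{thm:cts-conc} to the exponential rate here; a secondary subtlety is checking that the threshold $n_0$ past which everything lines up depends only on the fixed problem constants, so the small-sample regime is harmlessly absorbed into the $O(\cdot)$.
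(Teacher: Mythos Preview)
Your proposal is correct and follows essentially the same route as the paper: Hoeffding plus a finite union bound to get $\|g_j^+(\cdot;v)-g_j^*(\cdot;v)\|_\infty\le\epsilon_n$ with probability $1-O(\exp(-n^{0.99}))$, then an induction over $(j,k)$ using the non-degeneracy gap from \Cref{cond:no-bad-beta-strong} and a crossing-point comparison (\Cref{lem:crossing-points}) to force $\hat\lambda_j^{(2k)}=\lambda_j^{*,(2k)}=\lambda_j^*$ exactly on the good event. The only cosmetic differences are that the paper separates out a preliminary induction establishing $\lambda_j^{*,(2k)}=\lambda_j^{*,(2)}$ before the main one, and that it fixes the conditioning tuple to $\lambda_{1:(j-1)}^{*,(2)}$ via the inductive hypothesis rather than union-bounding over all finitely many possible tuples as you do; your extra event that every atom appears in the sample is harmless but unnecessary for the crossing-point argument.
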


\begin{remark}
The exponent $0.99$ appearing in \Cref{thm:disc-conc} can  be replaced with any constant $\zeta\in (0,1)$, as can be seen from the proof in \Cref{subsec:pf-disc-conc}.
\end{remark}

The proof of \Cref{thm:disc-conc}, given in \Cref{subsec:pf-disc-conc}, proceeds along the same lines as the proof of \Cref{thm:cts-conc}.
However, it inductively proves the stronger statement
$|\hat \lambda_j^{(2k)} - \lambda_j^{*,(2k)}| = 0$
for all $j\in [m]$ and $k\in [m-j+1]$ high probability,
and also improves the high probability guarantee to 
$O(\exp(-n^{0.99}))$.
This is achieved by noticing that in the case of discrete scores, bounding the difference of step functions $g_j^+(\cdot; \lambda_{1:(j-1)}) - g_j^*(\cdot; \lambda_{1:(j-1)})$ reduces to bounding the maximum of finitely many bounded mean zero random variables, corresponding to the range of $g_j^+(\cdot; \lambda_{1:(j-1)}) - g_j^*(\cdot; \lambda_{1:(j-1)})$. 
As a result, Hoeffding's inequality \citep{hoeffding1963probability} can be used in place of the
empirical process bounds (\Cref{lem:emp-proc-bd}) used in the proof of \Cref{thm:cts-conc}.
Consequently, although the strategies behind the proofs of \Cref{thm:cts-conc} and \Cref{thm:disc-conc} are similar, the two arguments cannot easily be unified.

\section{Illustration: LLM alignment}\label{sec:sims}

\textbf{Overview of results.} In this section, we study a three-constraint Large Language Model (LLM) alignment problem involving two safety risks and one uncertainty-based risk.
Across a wide range of risk budgets, \multirisk~and \mrbase~attain lower expected objective cost than the Learn then Test (LTT) baseline from \cite{angelopoulos2025learn},
while also controlling all three constraints.
The resulting empirical Pareto frontiers illustrate smooth and interpretable trade-offs between helpfulness and safety,
highlighting the practical advantages of multi-constraint optimization.

\subsection{Experimental setup}\label{subsec:hh}

{\bf Problem and dataset.}
In this example, we aim to maximize helpfulness
subject to constraints on two harmfulness scores and one uncertainty metric.
We use the PKU-SafeRLHF-30K dataset \citep{ji2023beavertails}, which contains approximately 27000 training examples and 3000 test examples,
from which we form calibration and test sets of size $n_{\text{cal}} = n_{\text{test}} = 500$.
For each question $x$,
we obtain an answer $y$
from the \textsc{Alpaca-7b-reproduced} model from 
\cite{dai2023safe}
using greedy decoding.

{\bf Models, scores, and losses.}
We compute the helpfulness
of a given completion
using the \textsc{beaver-7b-v1.0-reward} model from
\cite{dai2023safe}.
We compute two harmfulness scores (or, negative rewards)
using the \textsc{Llama-Guard-3-8B model} from
\cite{dubey2024llama3herdmodels}.
By design, Llama-Guard-3-8B
detects unsafe content
across 14 categories,
labeled ``S1"
through ``S14".
The first score
$S_1(x,y)$
equals the average of
the probabilities that
the completion $y$
contains material
belonging to the
``S1: Violent Crimes"
category,
``S9: Indiscriminate Weapons",
and
``S11: Self-Harm"
categories.
The second score
$S_2(x,y)$
equals the average of
the probabilities that
the completion $y$
contains material
belonging to the 
``S2: Non-Violent Crimes",
``S5: Defamation",
and
``S7: Privacy"
categories.
The third score
$S_3(x,y)$
equals the perplexity \citep{jelinek1977perplexity} of
the tokens generated by \textsc{Alpaca-7b-reproduced}.\footnote{For an input $x$ and an output $y$ of length $k$ given by a model $p(\cdot|x)$, the perplexity equals $1/p(y|x)^{1/k} \ge 1$. This can be interpreted as one over the length-normalized probability of generating the output, and is a standard measure of the informal uncertainty of the model in its generation. For instance, for a generation that is completely deterministic and where the model has no other choice but the returned value $y$, the perplexity equals unity. If the probability of the returned answer is small, then instead the perplexity will be high.}
 This can be viewed as a measure of the uncertainty of the answer. 

Define the
abstention response
by the string
$y_{\text{abstain}} = \text{``Sorry, I can't answer that."}$.
If any one of the scores exceeds its threshold,
 the associated behavior is to 
return $y_{\text{abstain}}$;
otherwise, it is to return the original completion $y$.

Define 
the cost
$\tilde V_{\text{obj}}(x, y) = -(\text{\textsc{beaver-7b-v1.0-reward}})(x,y)$
to be the unhelpfulness
of the original completion.
Define
the costs
\begin{align*}
\tilde V_1(x,y) = \tilde V_2(x,y) = \tilde V_3(x,y) = -(\text{\textsc{beaver-7b-v1.0-reward}})(x, y_{\text{abstain}})
\end{align*}
to be
the unhelpfulness 
of the abstention response.

In order to run \multirisk~and \mrbase,
we require that all costs are non-negative.
However,
the \textsc{beaver-7b-v1.0-reward} model 
returns both negative and positive rewards.
In order to ensure
that all costs are non-negative,
we shift all costs by their population-level essential minima.
Specifically, for each constraint cost $\tilde V_j$ and the objective cost $\tilde V_{\text{obj}}$, we subtract the corresponding essential minimum, yielding shifted costs $V_j$ and $V_{\text{obj}}$ that are almost surely non-negative.

Given risk budgets $\beta_1, \beta_2, \beta_3$, 
we aim to solve the optimization problem 
\begin{equation}\label{eq:three-constr-opt}
\begin{aligned}
\min_{\lambda_1, \lambda_2, \lambda_3} \quad 
& \E[ V_{\text{obj}} \, I[S_1 \le \lambda_1, S_2 \le \lambda_2, S_3 \le \lambda_3] ] \\
\text{s.t.} \quad 
& \E[ V_1 I[S_1 > \lambda_1] ] \le \beta_1, \\
& \E[ V_2 I[S_1 \le \lambda_1, S_2 > \lambda_2] ] \le \beta_2, \\
& \E[ V_3 I[S_1 \le \lambda_1, S_2 \le \lambda_2, S_3 > \lambda_3] ] \le \beta_3.
\end{aligned}
\end{equation}

In practice, we must estimate the amount by which we shift the costs. 
To do so, we compute empirical minima over the calibration data, and apply these shifts to the calibration and test costs.\footnote{To obtain a precise empirical version of the optimization problem in \Cref{eq:three-constr-opt}, one could estimate the essential minima using an additional holdout set. However, we note that in practice, due to the large number of calibration and test observations, our choice of shifts does not significantly affect the results.}
Let $\{ \tilde V_j^{(i)} : i\in [n_{\text{cal}}] \}$ denote the calibration values of the $j$-th unshifted cost,
and let $\{ \tilde V_j^{(i)} : i\in \{ n_{\text{cal}}+1, \ldots, n_{\text{cal}} + n_{\text{test}} \} \}$ denote the test values of the $j$-th unshifted cost.
For $j\in [3]$, 
let $\hat c_j = \min_{i\in [n_{\text{cal}}]} \tilde V_j^{(i)}$.
We define the $j$-th shifted 
calibration costs as $V_j^{(i)} = \tilde V_j^{(i)} - \hat c_j^{(i)}$ for $i\in [n_{\text{cal}}]$,
and the $j$-th shifted
test costs as $V_j^{(i)} = \tilde V_j^{(i)} - \hat c_j^{(i)}$ for $i\in \{ n_{\text{cal}}+1, \ldots, n_{\text{cal}} + n_{\text{test}} \}$.
We define the objective costs $\{ V_{\text{obj}}^{(i)} : i\in [n_{\text{cal}} + n_{\text{test}}] \}$ similarly.
Further,
to run the \multirisk~algorithm,
for each $j\in [3]$,
we set the upper bound as $V^{\mathrm{max}}_j = \max_{i\in [n_{\text{cal}}]} V_j^{(i)}$,
and we set the lower bound to be $V^{\mathrm{min}}_j = 0$.

We use the Learn then Test algorithm \citep{angelopoulos2025learn} as a baseline
using CLT-based p-values and the Bonferroni multiple testing procedure.
In \Cref{sec:addl-exp-dets}, we clarify the manner in which we set the LTT budgets.

{\bf Visualizing the results.}
Since our problem concerns a three-dimensional constrained optimization problem, 
we choose to visualize the results by showing how the objective function varies as the constraints change.
This is a practically relevant metric as it characterizes how much our test time expected loss changes when we are willing to make trade-offs on the test time performance constraints. 
Visually, this approach leads to two-dimensional empirical Pareto frontiers achieved by various algorithms. 

\subsection{Results}

\Cref{fig:hh_three_constraint/bonferroni/clt/tradeoff_constraint_1_grid_31_budget_101_shuffles_10,fig:hh_three_constraint/bonferroni/clt/tradeoff_constraint_2_grid_31_budget_101_shuffles_10,fig:hh_three_constraint/bonferroni/clt/tradeoff_constraint_3_grid_31_budget_101_shuffles_10}
show that our algorithm tightly controls the three risks at the desired levels across a wide range of budgets, 
in close agreement with our finite-sample theory.

{\bf Effect of varying the budgets.} 
Specifically. 
we illustrate the effect
of varying
risk budgets 1, 2, and 3 on the objective and constraint test risks.
Given
$n_{\text{budgets}} = 101$, 
for \multirisk~and \mrbase,
for each $j\in [m] = [3]$,
in plot $j$
we vary the $j$-th risk budget 
$\beta_j$
within the set
\begin{align*}
    \left\{ \left(1 + 4 \frac{k-1}{n_{\text{budgets}} - 1} \right)
    \cdot 
    0.10 \widehat V^{\mathrm{max}}_i : k \in [n_{\text{budgets}}] \right\},
\end{align*}
while holding
the remaining risk budgets
fixed at
$\beta_{\ell} = 0.10 \widehat V^{\mathrm{max}}_{\ell}$
for $\ell\in [m]\setminus j$.
Given $\beta_{1:m}$
and a confidence level $\delta$,
we compute the LTT risk budgets
$\tilde \beta_{1:m}$
according to
\Cref{eq:heuristic-ltt-budgets}.
We average the risks
over $n_{\text{splits}} = 10$
random
calibration and test splits,
obtained by 
randomly shuffling
the union of
the original
calibration and test sets.
For each plot
associated to
varying risk budget $j\in \{1,2,3\}$,
we plot 
the \multirisk~budget
on the $x$-axis.
In \Cref{fig:hh_three_constraint/bonferroni/clt/surface_plot_grid_31_budget_21_shuffles_0.png},
we further illustrate the 
landscape 
of the \multirisk~objective test risk
as a function of the first two risk budgets
in a surface plot.

{\bf Smaller objective vs. the baseline.}
From
\Cref{fig:hh_three_constraint/bonferroni/clt/tradeoff_constraint_1_grid_31_budget_101_shuffles_10},
\Cref{fig:hh_three_constraint/bonferroni/clt/tradeoff_constraint_2_grid_31_budget_101_shuffles_10},
and
\Cref{fig:hh_three_constraint/bonferroni/clt/tradeoff_constraint_3_grid_31_budget_101_shuffles_10},
we see that
\multirisk~and \mrbase~behave
very similarly,
controlling the risks at their desired values
while achieving a smaller expected objective value than the three versions of LTT considered.
(For a discussion of when \multirisk~and \mrbase~differ, see \Cref{subsec:comparison}.)


\begin{figure}[H]
    \centering
    \includegraphics[scale=0.38]{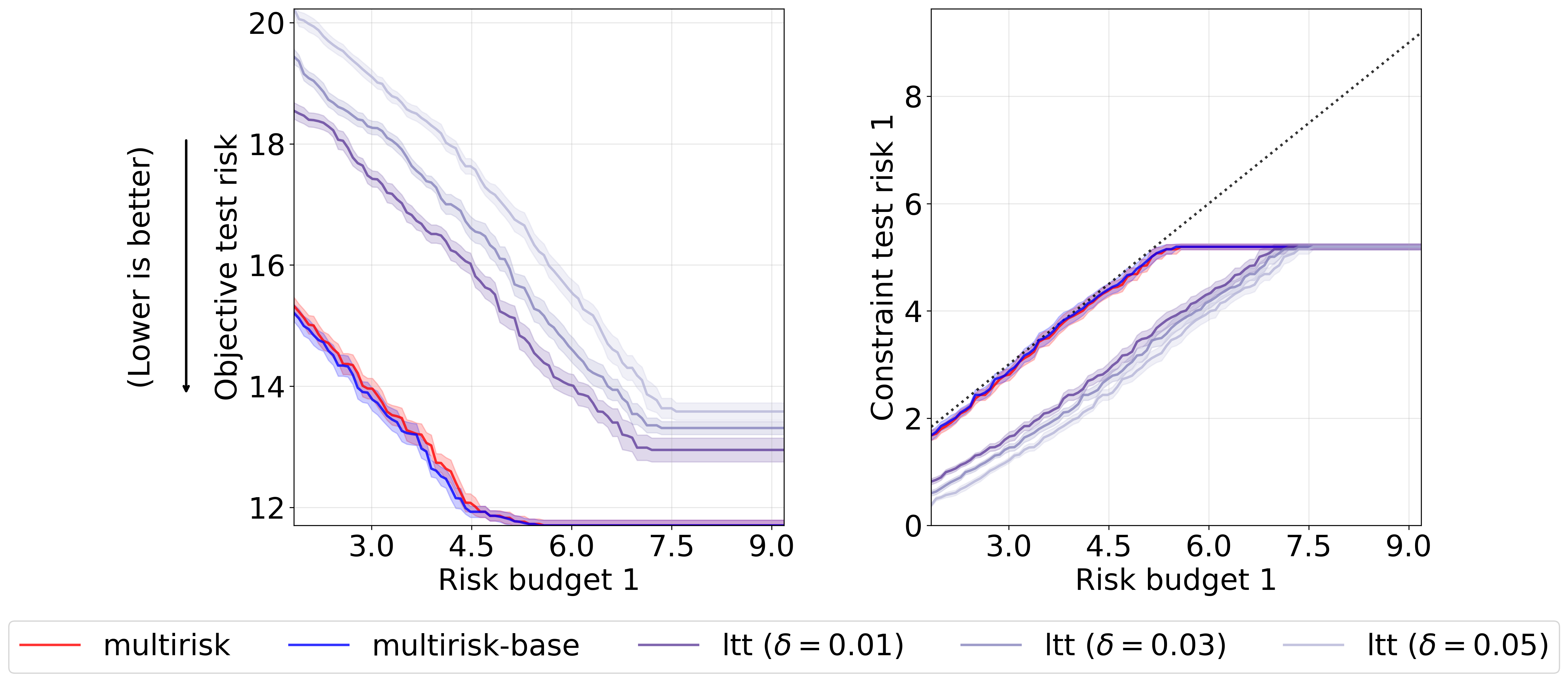}
    \caption{Tradeoff for constraint one for three-constraint example.
    For LTT, for a given $\delta$,
    we set the risk budgets according to
    \Cref{eq:heuristic-ltt-budgets}.
    LTT is run with CLT p-values and the Bonferroni multiple testing procedure.
    Averaged over 10 random calibration-test splits, with an error band of one standard error.}
    \label{fig:hh_three_constraint/bonferroni/clt/tradeoff_constraint_1_grid_31_budget_101_shuffles_10}
\end{figure}

\begin{figure}[H]
    \centering
    \includegraphics[scale=0.4]{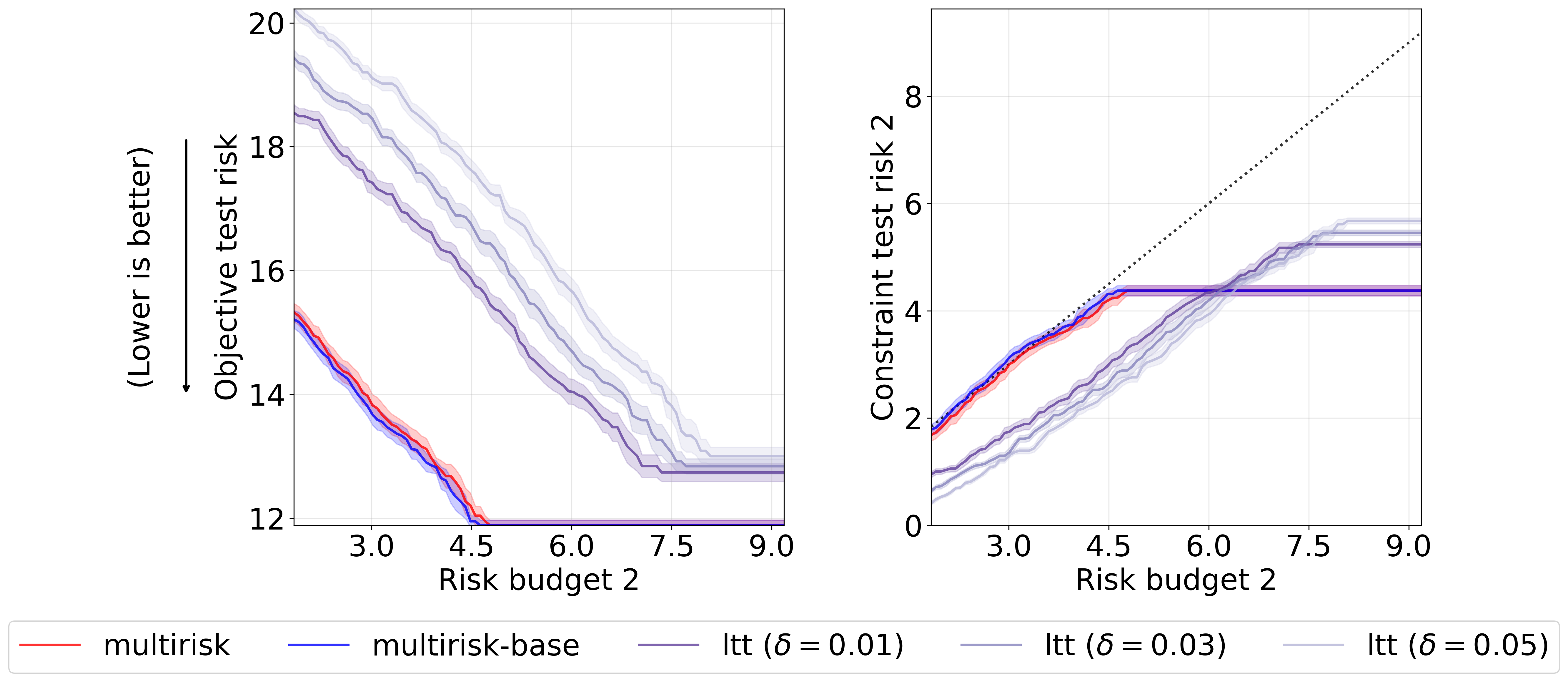}
    \caption{Tradeoff for constraint two for three-constraint example. 
    For LTT, for a given $\delta$,
    we set the risk budgets according to
    \Cref{eq:heuristic-ltt-budgets}.
    LTT is run with CLT p-values and the Bonferroni multiple testing procedure.
    Averaged over 10 random calibration-test splits, with an error band of one standard error.}
    \label{fig:hh_three_constraint/bonferroni/clt/tradeoff_constraint_2_grid_31_budget_101_shuffles_10}
\end{figure}

\begin{figure}[H]
    \centering
    \includegraphics[scale=0.4]{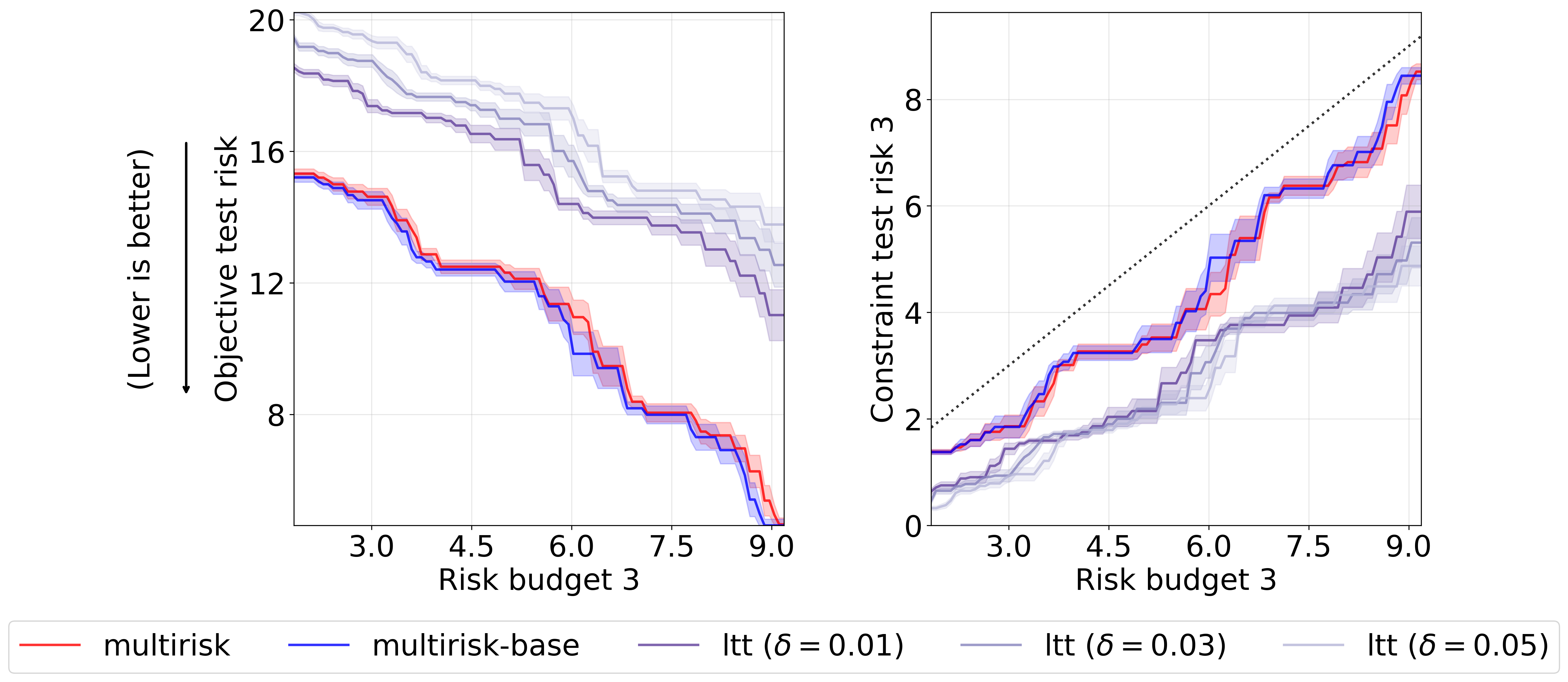}
    \caption{Tradeoff for constraint three for three-constraint example. 
    For LTT, for a given $\delta$,
    we set the risk budgets according to
    \Cref{eq:heuristic-ltt-budgets}.
    LTT is run with CLT p-values and the Bonferroni multiple testing procedure.
    Averaged over 10 random calibration-test splits, with an error band of one standard error.}
    \label{fig:hh_three_constraint/bonferroni/clt/tradeoff_constraint_3_grid_31_budget_101_shuffles_10}
\end{figure}

\begin{figure}[H]
    \centering
    \includegraphics[scale=0.5]{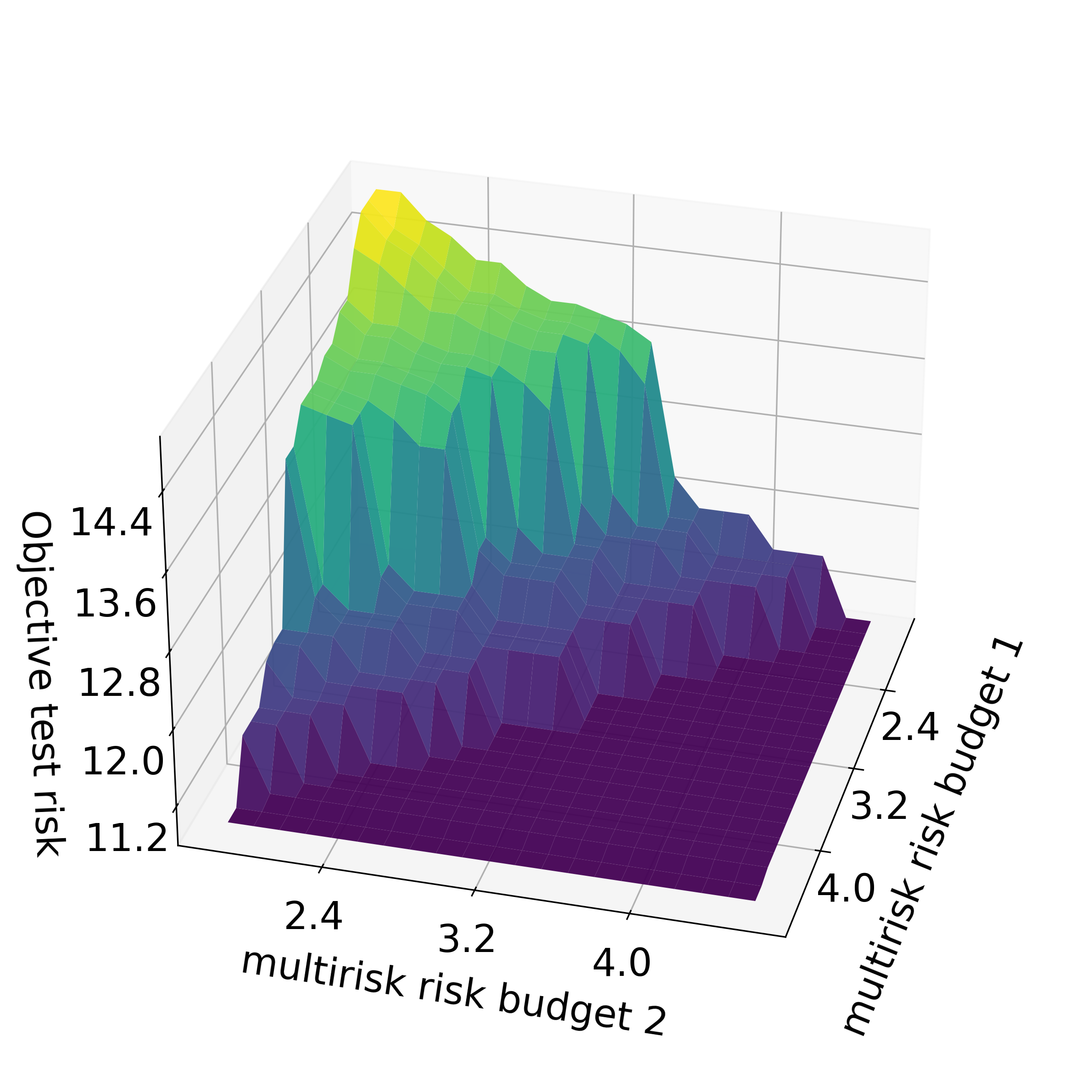}
    \caption{Surface plot of \multirisk~objective test risk as constraint risk budgets 1 and 2 are varied for three-constraint example.}
    \label{fig:hh_three_constraint/bonferroni/clt/surface_plot_grid_31_budget_21_shuffles_0.png}
\end{figure}

\section{Conclusion}

This paper introduced a general framework for test-time filtering in generative models, formulated as optimizing an objective subject to multiple risk constraints with a sequential structure. Building on a natural dynamic programming baseline, \mrbase, we proposed \multirisk, a computationally efficient algorithm that provides distribution-free control of the constraint risks. Theoretically, we established conditions under which the \multirisk~thresholds are near-optimal, and empirically demonstrated that both \mrbase~and \multirisk~effectively regulate harmfulness and uncertainty in Large Language Model outputs.

Beyond this setting, the proposed framework suggests new ways to integrate statistical guarantees into AI deployment. Future work could extend \multirisk~to handle more general graph-structured dependencies among risks.

\section*{Acknowledgements}
This work was supported in part by the US NSF, ARO, AFOSR, ONR, the Simons Foundation and the Sloan Foundation.

{\small
\setlength{\bibsep}{0.2pt plus 0.3ex}
\bibliographystyle{plainnat-abbrev}
\bibliography{ref}
}

\newpage
\appendix 

\section{Additional related work}\label{sec:ext-rel-work}

Existing methods for
multi-objective control and alignment
either
directly modify the weights of the model,
or learn a low-dimensional set of parameters
to perform post-hoc modifications to the output.

\textbf{Algorithms with finite-sample theoretical guarantees.} The following papers
offer finite-sample
theoretical guarantees.

\cite{laufer2023efficiently} 
develop the Pareto Testing method to 
optimize multiple objectives
and
control multiple risks with 
the Learn Then Test framework \citep{angelopoulos2025learn}.
This method identifies parameter configurations on the Pareto frontier that satisfy all constraints, with a given probability.
In this work, we focus on optimizing a single objective and restrict to constraints with a sequential structure.

\cite{nguyen2024data} propose 
the restricted risk resampling (RRR) method for asymptotic high-probability control of a risk metric restricted to a localized set of one-dimensional hyperparameters that itself could be estimated (e.g., those achieving small population risk).
The authors also propose
a method for high-probability control of arbitrary functions of monotone loss functions; which is extended to non-monotone losses via monotonization.
This method first requires uniform confidence bands for the loss functions over a fixed grid of hyperparameters.
In contrast, our focus is multi-dimensional hyperparameters and on finite-sample bounds.

\cite{overman2025conformal} balance an objective and a constraint.
The authors propose
the Conformal Arbitrage (CA) method,
which constructs a threshold
that measures confidence
and that decides which of two models
to query.
Their distribution-free construction
controls a measure of disutility
to within a specified budget.

\cite{andeol2025conformal}
propose
the distribution-free
Sequential Conformal Risk Control (SeqCRC) method
to control three risks of the form
$L_1(\lambda_1)$, $L_2(\lambda_1, \lambda_2)$, and $L_3(\lambda_1, \lambda_3)$,
where $L_j$ is non-increasing in each argument
for each $j\in [3]$.
Although the construction
of the thresholds
is similar
to our \multirisk~algorithm,
\multirisk~
applies to risks
with a different monotonicity pattern,
and is designed to control
any number
of constraints.

\textbf{Empirical algorithms for multi-objective alignment.} The following papers offer
empirical approaches
that modify the weights of the model,
but do not provide finite-sample theoretical guarantees.
Much of this work builds on
the multi-objective reinforcement learning (MORL) literature
\citep{barrett2008learning, roijers2013survey, van2014multi, li2020deep, hayes2022practical}.

\cite{dai2024safe}
highlights the challenge
of disentangling 
conflicting objectives
in preference data
for LLM alignment,
such as helpfulness and harmlessness.
The authors train separate reward and cost models
to capture helpfulness and harmfulness,
respectively.
The authors then formulate the Safe RLHF algorithm
to maximize expected helpfulness
subject to a constraint
on the expected harmfulness,
and solve this optimization problem
using the Lagrangian method.
\cite{williams2024multi}
proposes
Multi-Objective Reinforcement Learning from AI Feedback
(MORLAIF),
which,
in a similar vein
to \cite{dai2024safe},
trains separate reward models
to capture individual principles,
this time using the
preferences of a stronger LLM,
and then performs standard PPO-based
RLHF training with respect
to a scalar function of these rewards.

\cite{guo2024controllable}
augments SFT and preference optimization data
for LLMs
by placing priorities
on the various objectives.
The authors introduce 
controllable preference optimization (CPO),
which consists of
controllable preference supervised fine-tuning (CPSFT)
and
controllable direct preference optimization
(CDPO).
Through CPO, the model learns 
to take priorities into account during generation.
In order to combat
the instability of RL training,
\cite{zhou2024beyond}
propose 
Multi-Objective Direct Preference Optimization (MODPO), 
an extension of Direct Preference Optimization (DPO)
for multiple alignment objectives.

\cite{li2025self}
point out that
preference data
that include 
conflicting ratings
with respect to different objectives
can cause issues for
DPO-based alignment methods.
To address this,
the authors propose the
Self-Improvement DPO framework 
towards Pareto Optimality
(SIPO)
to improve preference data
by sampling new
nonconflicting responses
from the LLM
after an initial alignment phase.
This can be used 
in conjunction with 
any multi-objective 
DPO-based alignment method.
\cite{rame2023rewarded}
propose the
rewarded soup method
for multi-objective alignment
of generative models,
an alternative to scalarization.
Taking inspiration
from the literature on the
linear mode connectivity phenomenon
\citep{frankle2020linear, neyshabur2020being},
starting from a single pretrained model,
the authors specialize 
one copy of the model to each reward,
and then combine these fine-tuned models
by taking a linear combination
of their weights,
where the specific linear combination
depends on the preferences
of the user.

\cite{tamboli2025balanceddpo}
study the setting of
multi-objective alignment of
text-to-image (T2I) diffusion models.
The authors point out the limitations
of methods that rely on scalarization
of learned reward models,
including 
reward rescaling issues,
conflicting gradients,
and pipeline complexity.
As an alternative,
the authors propose the BalancedDPO method,
which aggregates preferences
by performing a majority vote,
avoiding the need individual reward models.
The resulting aggregated preferences
are then used to perform standard 
DPO-based alignment.
\cite{barker2025faster}
study the problem of
selecting hyperparameters
to optimize multiple objectives
in complex LLM-based
retrieval-augmented generation (RAG)
systems,
such as cost, latency, safety, and alignment.
The authors utilize
Bayesian optimization
to compute a Pareto-optimal set
of hyperparameter configurations.

\section{Additional experimental details}\label{sec:addl-exp-dets}

\textbf{Decoding details.}
Note that during
greedy decoding
from \textsc{Alpaca-7b-reproduced},
we stop the generations
at 512 tokens.
In 11 of the 1000 generations, we hit the token limit. These are instances where the model starts repeating itself indefinitely.

\textbf{Definition of CLT p-values.}
Let $\Phi$ denote the
standard normal c.d.f.
Given calibration losses $\{ L_j^{(i)} : i\in [n_{\text{cal}}] \}$,
the CLT-based p-value
for the $j$-th LTT null hypothesis
$H_j :
\E L_j(\lambda_{1:j}) 
> \beta_j$
is given by
$p_j = 
1 - 
\Phi\left( \frac{\beta_j - \widehat L_j}{\hat \sigma_j / \sqrt n} \right)$,
where
$\widehat L_j = 
\frac{1}{n_{\text{cal}}} 
\sum_{i=1}^{n_{\text{cal}}}
L_j^{(i)}(\lambda_{1:j})$
and is the $j$-th empirical calibration risk,
and where
$\hat \sigma_j = 
\frac{1}{n_{\text{cal}}-1} 
\sum_{i=1}^{n_{\text{cal}}}
(L_j^{(i)}(\lambda_{1:j}) - \widehat L_j)^2$
is the $j$-th empirical calibration standard deviation.
(For a proof 
of asymptotic validity, 
see \citet[Proposition B.1]{angelopoulos2025learn}.)
 We remark that it is possible to 
have $\hat \sigma_j = 0$, 
especially for extreme values of thresholds. 
We handle this edge case
as follows.
If $\hat \sigma_j = 0$
and 
$\widehat L_j > \beta_j$,
we set
$p_j = 1$.
If $\hat \sigma_j = 0$
and
$\widehat L_j = \beta_j$,
we set
$p_j = 0.5$.
If $\hat \sigma_j = 0$
and
$\widehat L_j > \beta_j$,
we set
$p_j = 0$
(and hence, we necessarily reject $H_j$).
This can be justified
by considering the limit
$\hat \sigma_j \searrow 0$.

\textbf{Setting LTT constraint risk budgets.}
Recall that the LTT algorithm takes as input a confidence level $\delta\in (0,1)$ and risk budgets $\beta_{1:m}$, and returns thresholds 
that simultaneously obey all risk constraints with probability at least $1-\delta$.
In order to compare \multirisk~(which provides risk bounds in expectation)
with LTT (which provides high probability risk bounds),
we set the LTT risk budgets according to the following heuristic.
Note that 
if we run the LTT algorithm
with risk budgets $\tilde \beta_{1:m}$
and with confidence level $\delta\in (0,1)$
to obtain thresholds $\hat\lambda_{1:m}^{\text{LTT}}$,
then we may convert the LTT high probability guarantee
to an in-expectation guarantee
via the law of total expectation:
\begin{align*}
    \E[L_j^{(n+1)}(\hat \lambda_{1:j}^{\text{LTT}})] 
&= \E[L_j^{(n+1)}(\hat \lambda_{1:j}^{\text{LTT}}) 
I(L_j^{(n+1)}(\hat \lambda_{1:j}^{\text{LTT}}) \le \tilde \beta_j)]
+ 
\E[L_j^{(n+1)}(\hat \lambda_{1:j}^{\text{LTT}}) 
I(L_j^{(n+1)}(\hat \lambda_{1:j}^{\text{LTT}}) > \tilde \beta_j)] \\
&\approx 
(1-\delta) \tilde \beta_j + \delta V^{\mathrm{max}}_j
\approx
(1-\delta) \tilde \beta_j + \delta \widehat V^{\mathrm{max}}_j,
\end{align*}
where $\widehat V^{\mathrm{max}}_j$
denotes the maximum value 
of the loss $V_j^{(i)}$
in the calibration set.
Our heuristic is to set 
the $j$-th \multirisk~risk budget
equal to this quantity,
so that
\begin{align}\label{eq:heuristic-ltt-budgets}
    \beta_j = (1-\delta) \tilde \beta_j + \delta \widehat V^{\mathrm{max}}_j \text{ for } j\in [m].
\end{align}
Equivalently,
fixing the \multirisk~risk budgets $\beta_{1:m}$
and the LTT confidence level $\delta$,
we set the $j$-th LTT risk budget equal to
$\tilde \beta_j = (\beta_j - \delta \widehat V^{\mathrm{max}}_j) / (1-\delta)$
for each $j\in [m]$.

\textbf{Comparing amount of constraint budget exhausted.}
We note that
for the second constraint
in
\Cref{fig:hh_three_constraint/bonferroni/clt/tradeoff_constraint_2_grid_31_budget_101_shuffles_10},
for large values of the risk budget
the three LTT variants exhaust more of the constraint risk budget than \multirisk~and \mrbase,
even though the objective test risks of \multirisk~and \mrbase~are lower.
This is likely due
to the sequential nature of
\multirisk~and \mrbase,
as the following simplified argument shows.
In \mrbase, 
the first threshold $\hat \lambda_1^{\Mrbase}$ is greedily selected to exhaust as much of the first constraint risk budget as possible,
and since the first constraint loss $L_1(\cdot)$ is non-increasing,
this suggests that
$\hat \lambda_1^{\Mrbase} \le
\hat \lambda_1^{\text{LTT}}$.
Consequently,
since the second constraint loss $L_2(\cdot)$ is non-decreasing in its first argument,
we have
\begin{align*}
L_2(\hat \lambda_1^{\Mrbase}, \lambda_2)
\le 
 L_2(\hat \lambda_1^{\text{LTT}}, \lambda_2)
\end{align*}
for all $\lambda_2\in \R$.
It follows that if
$\hat \lambda_2^{\Mrbase}$ and $\hat \lambda_2^{\text{LTT}}$ are close,
we expect LTT to have a higher constraint two risk.
Indeed, since for large $\beta_2$ the constraint risks plateau, both \mrbase~and LTT set $\hat \lambda_2^{\Mrbase} = \hat \lambda_2^{\text{LTT}} = \lambda_2^{\mathrm{min}}$,
and LTT achieves higher constraint two risk.

\section{Comparing
\multirisk~and \mrbase}\label{subsec:comparison}

Here,
we compare \multirisk~with the heuristic \mrbase~algorithm.

As seen in
\Cref{subsec:hh},
when $n$ is large
relative to $V^{\mathrm{max}}_{1:m}$,
\multirisk~and \mrbase~behave similarly.
In this case,
the conformal corrections
$\frac{V^{\mathrm{max}}_j}{n+1}$
and
$\delta_j = \frac{V^{\mathrm{max}}_j-V^{\mathrm{min}}_j}{n+1}$
made by \multirisk~are small.
However,
if $V^{\mathrm{max}}_{1:m}$ are large compared to $n$,
then \mrbase~can substantially exceed the risk budgets.

Consider the following scenario.
Given $m\ge 1$ constraints,
we construct i.i.d. scores
$S_{1:m}$ as follows.
Fix constants $V^{\mathrm{max}}_j > 1$
for $j\in [m]$
and probabilities $p_j\in (0,1)$
for $j\in [m]$.
For $j\in [m]$,
independently of the other scores,
let the $j$-th score $S_j$ equal
$V^{\mathrm{max}}_j$ with probability $p_j \in (0,1)$,
and
let $S_j$ be sampled from $\mathrm{Unif}([0,1])$
otherwise.
For $j\in [m]$,
let the $j$-th loss be $V_j = S_j$.

Here,
we set 
$m=2$,
$V^{\mathrm{max}}_1=4.6$,
$p_1=0.055$,
$V^{\mathrm{max}}_2=90$,
and
$p_2=0.01$,
and we use calibration sets of size
$n_{\text{cal}} = 20$.
We aim ensure
$\E V_1 I(S_1 > \lambda_1) \le \beta_1$
and
$\E V_2 I(S_1\le \lambda_1, S_2 > \lambda_2) \le \beta_2$
for risk budgets 
$(\beta_1, \beta_2) = (0.23, 0.23)$.
We compute
the population test risk
for each constraint
averaged over
$n_{\text{batches}} = 5000$ i.i.d. calibration sets.
The results are displayed in
\Cref{fig:simple-risk-table}.
We see that
\multirisk~satisfies both constraints,
whereas \mrbase~exceeds each budget by at least one standard error.
In the case of constraint 1,
\mrbase~only exceeds the budget by
approximately $0.01$,
whereas \multirisk~is conservative
by approximately $0.14$.
However, for constraint 2,
due to the large value of $V^{\mathrm{max}}_2$,
\mrbase~violates the constraint
by more than $0.42$,
whereas \multirisk~sets the threshold
to ensure zero risk.

\begin{table}[ht]
\centering
\begin{tabular}{lcc}
\hline
Algorithm    & Constraint 1: $\mathbb{E} R_{1} \le 0.23$             & Constraint 2: $\mathbb{E} R_{2} \le 0.23$             \\
\hline
\textsc{multiperf}~  & $0.085087 \pm 0.001783$ & $0.000000 \pm 0.000000$ \\
\textsc{multiperf-base}~  & $0.242805 \pm 0.002582$ & $0.665129 \pm 0.004424$ \\
\hline
\end{tabular}
\caption{Test risks for each algorithm (mean $\pm$ standard error) with budgets $(\beta_1, \beta_2) = (0.23, 0.23)$.}
\label{fig:simple-risk-table}
\end{table}

Theoretically,
we can understand
the behavior of \mrbase~for constraint 2
in \Cref{fig:simple-risk-table}
as follows.
Consider the special case when $m=1$.
Note that if $M$ is allowed to be made arbitrarily large,
then the risk of \mrbase~can arbitrarily exceed the risk budget $\beta_1$.
Indeed, with probability $(1-p)^{n_{\text{cal}}}$
the calibration set only contains scores $S_1^{(i)}$ in $[0,1]$,
in which case the threshold $\hat \lambda_1$
selected by \mrbase~must lie in $[0,1]$.
If the test score $S_1^{(n+1)}$ equals $M$,
then the threshold $\hat \lambda_1$ incurs a loss of
$V_1^{(n+1)} I(S_1^{(n+1)} > \hat \lambda_1) = M$.
Since $S_1^{(n+1)} = M$ with probability $p$,
it follows that the test risk of the \mrbase~threshold
$\E[L_1^{(n+1)}(\hat \lambda_1)]$
is bounded below by
$(1-p)^{n_{\text{cal}}} \cdot pM$,
which can grow arbitrarily large.

Based on these results,
we provide the following recommendations.
If in a particular application,
$n$ is large
relative to the sizes of the losses,
one can either use \multirisk~or \mrbase.
If $n$ is small,
then if the user
is in a setting
in which rigorous constraint risk control is imperative,
and if the user does not mind
a slightly conservative threshold,
one should use \multirisk.
Otherwise,
\mrbase~offers a less conservative threshold,
but one which may potentially violate the constraint
by arbitrarily large amounts in expectation.

\section{Proofs}

\subsection{Notation}\label{subsec:notation}

We write a.s. to denote ``almost surely".
For a positive integer $r$,
we use $[r]$
or $1:r$
to denote
$\{1,2,\ldots,r\}$.
For a finite set $S\subseteq \NN$
of positive integers,
with elements given by $s_1<\ldots<s_r$,
we write $a_{S}$ to denote
a sequence $a_{s_1}, \ldots, a_{s_r}$
of real numbers.
Given $c\in \R$,
we write
$ca_{S}$ to denote
the sequence
$ca_{s_1},\ldots, ca_{s_r}$.
We write
$a_{S} \pm b_{S}$ to denote
the sequence
$(a_{s_1}\pm b_{s_1}), \ldots, (a_{s_r}\pm b_{s_r})$.
We write $a_{S} \pce b_{S}$ if
$a_{s_k} \le b_{s_k}$ for $k\in [r]$.
We write $a_{S} \pcn b_{S}$ if
$a_{s_k} < b_{s_k}$ for $k\in [r]$.
We write $a_{S} \sce b_{S}$ if
$a_{s_k} \ge b_{s_k}$ for $k\in [r]$.
We write $a_{S} \scn b_{S}$ if
$a_{s_k} > b_{s_k}$ for $k\in [r]$.
Given $c\in \R\cup\{\pm\infty\}$,
we write $a_{S} \pce c$ if
$a_{s_k} \le c$ for $k\in [r]$.
We define $a_{S} \pcn c$, $a_{S} \sce c$, and $a_{S} \scn c$ similarly.
We write $\nea$ to indicate non-decreasing,
and we write $\sea$ to indicate non-increasing.
Given $f : \R\to \R$,
we let $f(\R)$
denote the range of $f$.
We let
$\|\cdot\|_1$
denote the
$\ell^1$-norm.
Given compacts
$K,K'\subseteq \R$,
we let
$\dist(K,K') 
:= 
\max\{ 
\max_{y\in K} \min_{y'\in K'} |y-y'|,
\max_{y'\in K'} \min_{y\in K} |y-y'|
\}$
denote the Hausdorff distance
between $K$ and $K'$.
Given $p\in \R$,
we write
$\dist(p,K)$
for $\dist(\{p\},K)$.
We use Landau notation $o, O$,
where constants are allowed to
depend on the number of constraints $m$.
We assume that $m$ does not grow with $n$.
 Moreover, we will refer to functions that depend only on the observed data (i.e., to statistics) as measurable functions. 
 Hence, we will say, for instance, that the composition of two measurable functions is measurable, etc. 

\subsection{Monotonicity and right-continuity of empirical risks}\label{subsubsec:prelims}

\begin{lemma}\label{lem:emp-monot}
Assume that
the conditions
in
\Cref{subsubsec:ub-conds}
hold.
Then a.s.,
each of the functions
$\lambda_{1:j}\mapsto g^{\mathrm{sym}}_j(\lambda_j; \lambda_{1:(j-1)})$
and
$\lambda_{1:j}\mapsto g_j^+(\lambda_j; \lambda_{1:(j-1)})$
is $\nea$ in its last $j-1$ arguments
and $\sea$ in its first argument.
Further,
a.s.,
for any $\lambda_{1:(j-1)}\in \Lambda_{1:(j-1)}$, 
the functions
$g^{\mathrm{sym}}_j(\cdot; \lambda_{1:(j-1)})$
and
$g_j^+(\cdot; \lambda_{1:(j-1)})$
are right-continuous.
Finally, note that
a.s.,
for any $\lambda_{1:(j-1)}\in \Lambda_{1:(j-1)}$,
the functions
$U^{\mathrm{sym}}_j(\lambda_{1:(j-1)}; \cdot)$
and
$U_j^+(\lambda_{1:(j-1)}; \cdot)$
are $\sea$,
while
a.s.,
for any $\beta_j\in \R$,
the functions
$\lambda_{1:(j-1)}\mapsto U^{\mathrm{sym}}_j(\lambda_{1:(j-1)}; \beta_j)$
and
$\lambda_{1:(j-1)}\mapsto U_j^+(\lambda_{1:(j-1)}; \beta_j)$
are $\nea$ in each argument.
\end{lemma}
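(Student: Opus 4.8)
The plan is to verify the three claimed properties of $g^{\mathrm{sym}}_j$, $g_j^+$, and their generalized inverses by reducing everything to the pointwise structure of the loss $L_j$ from \eqref{lj}. First I would recall that, by definition,
\[
L_j^{(i)}(\lambda_{1:j}) = V_j^{(i)} I(S_1^{(i)}\le \lambda_1, \ldots, S_{j-1}^{(i)}\le \lambda_{j-1}, S_j^{(i)} > \lambda_j),
\]
with $V_j^{(i)} \ge 0$ a.s. by \Cref{cond:loss-bds}. As a function of $\lambda_{1:j}$ this is a nonnegative constant times a product of indicators; each indicator $I(S_\ell^{(i)}\le \lambda_\ell)$ for $\ell\le j-1$ is $\nea$ in $\lambda_\ell$, and $I(S_j^{(i)} > \lambda_j)$ is $\sea$ in $\lambda_j$. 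Hence each $L_j^{(i)}$ is $\nea$ in its first $j-1$ arguments and $\sea$ in its last argument. Summing over $i$ and dividing by $n+1$ preserves both monotonicities (the extra constant $V_j^{\mathrm{max}}/(n+1)$ in $g_j^+$ is irrelevant), which gives the first claim for both $g^{\mathrm{sym}}_j$ and $g_j^+$ (the latter using \Cref{cond:as-finite} only insofar as the sums are finite).

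Next, for right-continuity in $\lambda_j$: fix $\lambda_{1:(j-1)}$. As a function of $\lambda_j$ alone, $L_j^{(i)}(\lambda_{1:j}) = c_i \, I(\lambda_j < S_j^{(i)})$ where $c_i \ge 0$ depends on the other coordinates. The map $\lambda_j \mapsto I(\lambda_j < S_j^{(i)})$ is right-continuous (it jumps down as $\lambda_j$ crosses $S_j^{(i)}$ from below, and equals its right limit at the jump point since the strict inequality fails at $\lambda_j = S_j^{(i)}$). A finite nonnegative linear combination of right-continuous functions is right-continuous, so $g^{\mathrm{sym}}_j(\cdot;\lambda_{1:(j-1)})$ and $g_j^+(\cdot;\lambda_{1:(j-1)})$ are right-continuous; this holds a.s.\ simultaneously for all $\lambda_{1:(j-1)}\in\Lambda_{1:(j-1)}$ since there are only finitely many jump locations $S_j^{(i)}$.

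Finally, for the generalized inverses: $U^{\mathrm{sym}}_j(\lambda_{1:(j-1)};\beta_j)$ and $U_j^+(\lambda_{1:(j-1)};\beta_j)$ are defined via \eqref{eq:U-j-symm}, \eqref{eq:U-j-plus} as (essentially) inverses of the $\sea$ functions $g^{\mathrm{sym}}_j(\cdot;\lambda_{1:(j-1)})$, $g_j^+(\cdot;\lambda_{1:(j-1)})$. Monotonicity in $\beta_j$: as $\beta_j$ increases, the super-level set $\{\lambda_j : g^{\mathrm{sym}}_j(\lambda_j;\lambda_{1:(j-1)}) > \beta_j\}$ shrinks, so its supremum is $\sea$ in $\beta_j$; likewise the sub-level set in the definition of $U_j^+$ grows, so its infimum is $\sea$ in $\beta_j$. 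Monotonicity in $\lambda_{1:(j-1)}$: by the first part, increasing any $\lambda_\ell$ ($\ell\le j-1$) pointwise increases $g^{\mathrm{sym}}_j(\cdot;\lambda_{1:(j-1)})$ and $g_j^+(\cdot;\lambda_{1:(j-1)})$, which enlarges the relevant super-/sub-level sets in the direction that makes $U^{\mathrm{sym}}_j$ and $U_j^+$ larger; hence both are $\nea$ in each of the first $j-1$ arguments. I do not anticipate a genuine obstacle here—the argument is a routine unwinding of definitions—but the one point requiring mild care is the a.s.\ qualifier: one must note that the monotonicity and right-continuity statements are deterministic consequences of the realized values $(S_\ell^{(i)}, V_j^{(i)})$, and that the exceptional null set (where some $S_j$ is infinite, violating \Cref{cond:as-finite}, or some $V_j$ lies outside $[V_j^{\mathrm{min}}, V_j^{\mathrm{max}}]$) can be discarded once and for all.
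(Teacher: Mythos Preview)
Your proposal is correct and follows essentially the same approach as the paper's proof: both reduce the monotonicity and right-continuity of $g^{\mathrm{sym}}_j$ and $g_j^+$ to the corresponding pointwise properties of $L_j$ (via the indicator structure), and then derive the monotonicity of the generalized inverses directly from the monotonicity of the underlying functions and the level-set definitions. Your write-up is in fact more explicit than the paper's in spelling out the indicator argument and the handling of the a.s.\ qualifier; the only item the paper adds that you omit is a remark (via \Cref{cond:loss-inf}) that $g_j^+(\lambda_j;\lambda_{1:(j-1)})\to 0$ as $\lambda_j\to\infty$, but given the empty-set conventions already built into \eqref{eq:U-j-plus} and \eqref{eq:U-j-symm} this is not needed for the monotonicity claims.
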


\begin{proof}
Note that
$L_j(\cdot)$ is $\nea$ in its first $j-1$ arguments
and $\sea$ in its last argument,
which implies the monotonicity of
$g_j^+(\cdot; \cdot)$
and
$g^{\mathrm{sym}}_j(\cdot;\cdot)$.
Also note that
$L_j(\cdot)$ is right-continuous in its last argument,
which implies the right-continuity of
$g_j^+(\cdot; \lambda_{1:(j-1)})$
and
$g^{\mathrm{sym}}_j(\cdot; \lambda_{1:(j-1)})$.
By
\Cref{cond:loss-inf},
a.s.,
we have the equality
$\lim_{\lambda_j\to \infty} 
g_j^+(\lambda_j; \lambda_{1:(j-1)})
= 0$,
hence
$U_j^+(\lambda_{1:(j-1)}; \beta_j)$
exists
for all $\lambda_{1:(j-1)}\in \Lambda_{1:(j-1)}$
and $\beta_j > 0$.
Similarly,
a.s.,
$U^{\mathrm{sym}}_j(\lambda_{1:(j-1)}; \beta_j)$
exists
for all $\lambda_{1:(j-1)}\in \Lambda_{1:(j-1)}$
and $\beta_j > 0$.
The definition of $U_j^+(\cdot;\cdot)$
and the monotonicity of $g_j^+(\cdot;\cdot)$
imply the monotonicity of $U_j^+(\cdot;\cdot)$,
and similarly for
$U^{\mathrm{sym}}_j(\cdot;\cdot)$.
\end{proof}

\subsection{Proof of \Cref{lem:symm-fns}}\label{subsec:pf-symm-fns}

\begin{proof}
Fix nonrandom $\lambda_{1:(j-1)} \in \Lambda_{1:(j-1)}$.
By
\Cref{lem:emp-monot}
and
\Cref{cond:loss-inf},
a.s., we have
\begin{align*}
g^{\mathrm{sym}}_j(\lambda_j^{\mathrm{max}}; \lambda_{1:(j-1)})
\le g^{\mathrm{sym}}_j(\lambda_j^{\mathrm{max}}; \lambda_{1:(j-1)}^{\mathrm{max}})
= 
\frac{1}{n+1} \sum_{i=1}^{n+1} L_j^{(i)}(\lambda_{1:j}^{\mathrm{max}})
\le \beta_j.
\end{align*}
By the definition of $U^{\mathrm{sym}}_j(\lambda_{1:(j-1)}; \beta_j)$,
it follows that a.s.,
$g^{\mathrm{sym}}_j(U^{\mathrm{sym}}_j(\lambda_{1:(j-1)}; \beta_j); \lambda_{1:(j-1)}) \le \beta_j$.
By exchangeability from \Cref{cond:exch-observations},
and since $U^{\mathrm{sym}}_j(\lambda_{1:(j-1)}; \beta_j)$ is a symmetric function of the $n+1$ datapoints,
we have 
\begin{align*}
\E L_j^{(n+1)}(\lambda_{1:(j-1)}, U^{\mathrm{sym}}_j(\lambda_{1:(j-1)}; \beta_j)) 
&= \E \frac{1}{n+1} \sum_{i=1}^{n+1} L_j^{(i)}(\lambda_{1:(j-1)}, U^{\mathrm{sym}}_j(\lambda_{1:(j-1)}; \beta_j)) \\
&= \E g^{\mathrm{sym}}_j(U^{\mathrm{sym}}_j(\lambda_{1:(j-1)}; \beta_j); \lambda_{1:(j-1)}).
\end{align*}
Since the integrand
is bounded by $\beta_j$ a.s.,
we deduce
\begin{align}\label{eq:fix-lambda-ineq}
    \E L_j^{(n+1)}(\lambda_{1:(j-1)}, U^{\mathrm{sym}}_j(\lambda_{1:(j-1)}; \beta_j)) \le \beta_j.
\end{align}
For any symmetric function $\Gamma_{1:(j-1)}$ of the data,
the datapoints are conditionally exchangeable given $\Gamma_{1:(j-1)}$.
Since \Cref{eq:fix-lambda-ineq} applies for a fixed $\lambda_{1:(j-1)}\in \Lambda_{1:(j-1)}$
whenever the datapoints are exchangeable, we have
$\E[L_j^{(n+1)}(\Gamma_{1:(j-1)}, U^{\mathrm{sym}}_j(\Gamma_{1:(j-1)}; \beta_j)) | \Gamma_{1:(j-1)}] \le \beta_j$.
Therefore,
\begin{align*}
\E L_j^{(n+1)}(\Gamma_{1:(j-1)}, U^{\mathrm{sym}}_j(\Gamma_{1:(j-1)}; \beta_j)) \le \beta_j,
\end{align*}
as desired.
\end{proof}

\subsection{Comparing generalized inverses of non-increasing functions}

\begin{lemma}[Bounded perturbations imply bounded generalized inverses for non-increasing functions]\label{lem:crossing-points}
Let $I\subseteq \R$ be an interval. 
Suppose that $f_1, f_2 : I \to [0, \infty)$
are non-increasing functions such that
$f_1 \ge f_2 \ge f_1 - C$
for some constant $C>0$. Then for all $\gamma\in \R$,
\begin{align}\label{eq:one-sided-crossing}
    \sup\{ x\in I : f_2(x) > \gamma - C \} \ge \sup\{ x\in I : f_1(x) > \gamma \}.
\end{align}
Similarly,
suppose that $\tilde f_1, \tilde f_2 : I \to [0, \infty)$
are non-increasing functions such that
$\| \tilde f_1 - \tilde f_2 \|_{L^{\infty}(I)} \le \widetilde C$
for some constant $\widetilde C>0$.
Then for all $\gamma\in \R$,
\begin{align}\label{eq:two-sided-crossing}
    \sup\{ x\in I : \tilde f_2(x) > \gamma - \widetilde C \} \ge \sup\{ x\in I : \tilde f_1(x) > \gamma \} \ge \sup\{ x\in I : \tilde f_2(x) > \gamma + \widetilde C \}.
\end{align}
Here, we define
the supremum of an empty subset of $I$ to be $\inf I$. 
\end{lemma}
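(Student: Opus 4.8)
The plan is to reduce both displayed inequalities to elementary facts about superlevel sets. In particular, the generalized inverses here are suprema of sets of the form $\{x\in I : f(x) > \gamma\}$, and the hypotheses give pointwise comparisons between the functions involved; the non-increasing assumption on $f_1,f_2$ will not actually be used in this argument (it is stated for consistency with how these functions appear elsewhere), so I would not invoke it.

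First I would prove \Cref{eq:one-sided-crossing}. The key observation is that the hypothesis $f_2 \ge f_1 - C$ yields the set inclusion
$\{x\in I : f_1(x) > \gamma\} \subseteq \{x\in I : f_2(x) > \gamma - C\}$,
since any $x$ with $f_1(x) > \gamma$ satisfies $f_2(x) \ge f_1(x) - C > \gamma - C$. Taking suprema and using that $\sup$ is monotone under inclusion gives the claim, once one checks the empty-set convention: if $\{x\in I : f_1(x) > \gamma\}$ is empty, its supremum is $\inf I$, and $\sup\{x\in I : f_2(x) > \gamma - C\}$ is automatically $\ge \inf I$ (being either $\inf I$ itself or the supremum of a nonempty subset of $I$); if $\{x\in I : f_1(x) > \gamma\}$ is nonempty, then so is the larger set, and the supremum of a subset is at most that of its superset. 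This is the only place any care is needed, and it is a one-line verification.

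Next I would deduce \Cref{eq:two-sided-crossing} from \Cref{eq:one-sided-crossing} alone. Rewriting $\|\tilde f_1 - \tilde f_2\|_{L^{\infty}(I)} \le \widetilde C$ as the two-sided pointwise bound $\tilde f_1 - \widetilde C \le \tilde f_2 \le \tilde f_1 + \widetilde C$ on $I$, the left inequality of \Cref{eq:two-sided-crossing} is exactly \Cref{eq:one-sided-crossing} applied with $(f_1,f_2,C) = (\tilde f_1, \tilde f_2, \widetilde C)$, using $\tilde f_2 \ge \tilde f_1 - \widetilde C$. The right inequality is \Cref{eq:one-sided-crossing} applied with the roles of the two functions interchanged, $(f_1,f_2,C) = (\tilde f_2, \tilde f_1, \widetilde C)$ — legitimate since $\tilde f_1 \ge \tilde f_2 - \widetilde C$ — and with $\gamma$ replaced by $\gamma + \widetilde C$, which rearranges to $\sup\{x\in I : \tilde f_1(x) > \gamma\} \ge \sup\{x\in I : \tilde f_2(x) > \gamma + \widetilde C\}$, as required.

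I do not anticipate a genuine obstacle: the whole statement is a packaging of the implication ``pointwise-larger function $\Rightarrow$ larger superlevel set $\Rightarrow$ larger generalized inverse'', together with bookkeeping of the convention $\sup\emptyset = \inf I$. The only thing worth isolating once and reusing across all three inequalities is that this convention makes each inequality hold trivially whenever the relevant superlevel set is empty.
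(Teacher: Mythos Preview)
Your proposal is correct and follows essentially the same approach as the paper: both arguments reduce to the superlevel-set inclusion $\{x : f_1(x) > \gamma\} \subseteq \{x : f_2(x) > \gamma - C\}$ (using only $f_2 \ge f_1 - C$), verify that $\sup$ respects inclusion under the convention $\sup\emptyset = \inf I$, and then handle the two-sided case by the same mechanism. The only cosmetic difference is that you deduce \Cref{eq:two-sided-crossing} by invoking \Cref{eq:one-sided-crossing} twice with swapped roles, whereas the paper writes out the two inclusions directly; your observation that neither the non-increasing hypothesis nor the upper bound $f_1 \ge f_2$ is actually used is accurate and slightly sharpens the presentation.
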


\begin{proof}

Since we define
the supremum of an empty subset of $I$ to be $\inf I$, 
note that
given subsets
$A,B \subseteq I$
with
$A\subseteq B$,
we necessarily have
$\sup\{A\} \le \sup\{B\}$.
Indeed,
if $A$ is nonempty,
then $B$ is necessarily nonempty,
and the inequality follows.
Otherwise,
if $A$ is empty,
then the inequality vacuously holds.

\textit{First claim:}
If $f_1(x) > \gamma$ for some $x\in I$,
then by assumption
$f_2(x) \ge f_1(x) - C > \gamma - C$.
Thus, we have the inclusion
\begin{align*}
    \{ x\in I : f_1(x) > \gamma \} \subseteq \{ x\in I : f_2(x) > \gamma - C \}.
\end{align*}
The result follows.

\textit{Second claim:}
If $ \tilde f_2(x) > \gamma + \widetilde C $,
then by assumption
and the triangle inequality,
we have
$\tilde f_1(x) \ge \tilde f_2(x) - \widetilde C > \gamma$,
so we have the inclusion
\begin{align*}
    \{ x\in I : \tilde f_2(x) > \gamma + \widetilde C \} \subseteq \{ x\in I : \tilde f_1(x) > \gamma \}.
\end{align*}
Similarly,
if $\tilde f_1(x) > \gamma$,
then by assumption and the triangle inequality,
we have
$\tilde f_2(x) \ge \tilde f_1(x) - \widetilde C > \gamma - \widetilde C$,
so we have the inclusion
\begin{align*}
    \{ x\in I : \tilde f_1(x) > \gamma \} \subseteq \{ x\in I : \tilde f_2(x) > \gamma - \widetilde C \}.
\end{align*}
Putting these together, we have
\begin{align*}
    \{ x\in I : \tilde f_2(x) > \gamma + \widetilde C \} \subseteq \{ x\in I : \tilde f_1(x) > \gamma \} \subseteq \{ x\in I : \tilde f_2(x) > \gamma - \widetilde C \},
\end{align*}
which implies the result. 
\end{proof}

\subsection{Proof of \Cref{lem:beta-trick}}\label{subsec:pf-beta-trick}

\begin{proof}
By 
\Cref{cond:loss-bds},
a.s.,
$L_j^{(n+1)}(\lambda_{1:j}) \in [V^{\mathrm{min}}_j, V^{\mathrm{max}}_j]$ for all $\lambda_{1:j}\in \R^j$.
Thus, a.s., we have the uniform control
\begin{align*}
    0 \le g_j^+(\lambda_j; \lambda_{1:(j-1)}) - g^{\mathrm{sym}}_j(\lambda_j; \lambda_{1:(j-1)}) = \frac{V^{\mathrm{max}}_j - L_j^{(n+1)}(\lambda_{1:j})}{n+1} \le \frac{V^{\mathrm{max}}_j - V^{\mathrm{min}}_j}{n+1} =: \delta_j
\end{align*}
for all $\lambda_j\in \R$.
Applying
\Cref{eq:one-sided-crossing}
from
\Cref{lem:crossing-points}
to the functions
$g_j^+(\cdot;\lambda_{1:(j-1)})$
and
$g^{\mathrm{sym}}_j(\cdot;\lambda_{1:(j-1)})$,
and using
the definitions of $U^{\mathrm{sym}}_j$ and $U_j^+$,
we obtain the result.
\end{proof}

\subsection{Proof of \Cref{thm:multiple-scores}}\label{subsec:pf-multiple-scores}

\begin{proof}
\textit{Item 1:} First, given $m \ge 1$,
we check that
$\hat \lambda_j^{(2k-1)}$ is symmetric
and
$\hat \lambda_j^{(2k)}$ is measurable
for $j\in [m]$
and $k\in [m-j+1]$.
We proceed by induction on $j \ge 1$.
The base case is $j = 1$.
By the definition of $U^{\mathrm{sym}}_1(\beta_1)$,
$\hat \lambda_1^{(1)} = U^{\mathrm{sym}}_1(\beta_1)$ is symmetric.
By the definition of $U_1^+(\beta_1)$,
$\hat \lambda_1^{(2)} = U_1^+(\beta_1)$ is measurable.
For the inductive step, suppose that the result holds for some $j-1 \in [m-1]$.
We show the result for $j$.
Given $k \in [m-j+1]$,
since $U^{\mathrm{sym}}_j(\lambda_{1:(j-1)}; \beta_j^{(k-1)})$ is symmetric for any fixed $\lambda_{1:(j-1)}$,
since $\hat \lambda_{1:(j-1)}^{(2k+1)}$ is symmetric by the inductive hypothesis,
and since the composition of symmetric functions is symmetric,
we have that $\hat \lambda_j^{(2k-1)} = U^{\mathrm{sym}}_j(\hat \lambda_{1:(j-1)}^{(2k+1)}; \beta_j^{(k-1)})$ is symmetric.
Similarly,
since $U_j^+(\lambda_{1:(j-1)}; \beta_j^{(k-1)})$ is measurable for any fixed $\lambda_{1:(j-1)}$,
since $\hat \lambda_{1:(j-1)}^{(2k+2)}$ is measurable by the inductive hypothesis,
and since the composition of measurable functions is measurable,
we have that $\hat \lambda_j^{(2k)} = U_j^+(\hat \lambda_{1:(j-1)}^{(2k+2)}; \beta_j^{(k-1)})$ is measurable.
This completes the induction.

\textit{Item 2:} Next, we check that the inequality chain holds.
We proceed by induction on $j \ge 1$.
The base case is $j = 1$.
By the definition of $U^{\mathrm{sym}}_1(\beta)$ and $U_1^+(\beta)$,
we have $U^{\mathrm{sym}}_1(\beta) \le U_1^+(\beta)$ for all $\beta$ a.s.,
which implies that $\hat \lambda_1^{(2k-1)} \le \hat \lambda_1^{(2k)}$ for $k \in [m]$ a.s.
Further, since
by \Cref{lem:beta-trick},
we have
$U_1^+(\beta^{(k-1)}) \le U^{\mathrm{sym}}_1(\beta^{(k)})$ for all $\beta$ a.s.,
it follows that we have $U_1^+(\beta_1^{(k-1)}) \le U^{\mathrm{sym}}_1(\beta_1^{(k)})$ a.s.,
which implies that $\hat \lambda_1^{(2k)} \le \hat \lambda_1^{(2k+1)}$ for $k\in [m-1]$ a.s.
For the inductive step, suppose that the result holds for some $j-1 \in [m-1]$.
We show the result for $j$.
Given $k \in [m-j+1]$,
by the definition of $U^{\mathrm{sym}}_j(\lambda_{1:(j-1)}; \beta)$ and $U_j^+(\lambda_{1:(j-1)}; \beta)$,
we have $U^{\mathrm{sym}}_j(\lambda_{1:(j-1)}; \beta) \le U_j^+(\lambda_{1:(j-1)}; \beta)$ for all $\lambda_{1:(j-1)}$ and $\beta$ a.s.,
which implies that $\hat \lambda_j^{(2k-1)} \le \hat \lambda_j^{(2k)}$ for $k\in [m-j+1]$ a.s.
Further, since
by \Cref{lem:beta-trick}
we have
$U_j^+(\lambda_{1:(j-1)}; \beta^{(k-1)}) \le U^{\mathrm{sym}}_j(\lambda_{1:(j-1)}; \beta^{(k)})$ for all $\lambda_{1:(j-1)}$, $\beta$, and $k$ a.s.,
we have $U_j^+(\hat \lambda_{1:(j-1)}^{(2k+2)}; \beta_j^{(k-1)}) \le U^{\mathrm{sym}}_j(\hat \lambda_{1:(j-1)}^{(2k+2)}; \beta_j^{(k)})$ for $k \in [m-j]$ a.s.
Since by the inductive hypothesis we have
$\hat \lambda_{\ell}^{(2k+2)} \le \hat \lambda_{\ell}^{(2k+3)}$ for $\ell \in [j-1]$ and $k\in [m-j]$ a.s.,
and since by
the conditions in
\Cref{subsubsec:ub-conds}
we may apply
\Cref{lem:emp-monot}
to deduce that
$U^{\mathrm{sym}}_j(\lambda_{1:(j-1)}; \beta)$ is $\nea$ in $\lambda_{1:(j-1)}$ a.s.,
it follows that
we have $U^{\mathrm{sym}}_j(\hat \lambda_{1:(j-1)}^{(2k+2)}; \beta_j^{(k)}) \le U^{\mathrm{sym}}_j(\hat \lambda_{1:(j-1)}^{(2k+3)}; \beta_j^{(k)})$ a.s.
Putting these inequalities together, we have 
$U_j^+(\hat \lambda_{1:(j-1)}^{(2k+2)}; \beta_j^{(k-1)}) \le U^{\mathrm{sym}}_j(\hat \lambda_{1:(j-1)}^{(2k+3)}; \beta_j^{(k)})$ a.s.,
hence $\hat \lambda_j^{(2k)} \le \hat \lambda_j^{(2k+1)}$ for $k\in [m-j]$ a.s.
This completes the induction.

\textit{Item 3:} Finally, to prove that $(\hat \lambda_{1:j}^{(2)})$ satisfy the first $j$ constraints
for $j\in [m]$,
we proceed by induction on $j$.

The base case is $j=1$.
By
\Cref{lem:symm-fns},
we have
$\E L_1^{(n+1)}(U^{\mathrm{sym}}_1(\beta_1)) \le \beta_1$,
which is equivalent to
$\E L_1^{(n+1)}(\hat \lambda_1^{(1)}) \le \beta_1$.
Since $L_1^{(n+1)}(\lambda_1)$ is $\sea$ in $\lambda_1$ a.s., 
we have
$\E L_1^{(n+1)}(\hat \lambda_1^{(2)}) \le \beta_1$,
hence $\hat \lambda_1^{(2)}$ is a valid threshold for the first $j=1$ constraints.

For the inductive step,
suppose that the result holds for some $j-1\ge 1$ with $j-1\in [m-1]$.
We show the result for $j$.
By the inductive hypothesis,
$(\hat \lambda_{1:(j-1)}^{(2)})$ satisfy the first $j-1$ constraints.
By
\Cref{lem:symm-fns},
we have
that for any symmetric $\Lambda_{1:(j-1)}$-valued functions $\Gamma_{1:(j-1)}$, 
we have
$\E L_j^{(n+1)}(\Gamma_{1:(j-1)}, U^{\mathrm{sym}}_j(\Gamma_{1:(j-1)}; \beta_j)) \le \beta_j$.
Setting $\Gamma_{1:(j-1)} = \hat \lambda_{1:(j-1)}^{(3)}$
and recalling that $\hat \lambda_j^{(1)} = U^{\mathrm{sym}}_j(\hat \lambda_{1:(j-1)}^{(3)}; \beta_j^{(0)})$, we obtain
$\E L_j^{(n+1)}(\hat \lambda_{1:(j-1)}^{(3)}, \hat \lambda_j^{(1)}) \le \beta_j$.
Since $L_j^{(n+1)}$ is $\nea$ in its first $j-1$ arguments
and $\sea$ in its last argument,
since $\hat \lambda_{\ell}^{(2)} \le \hat \lambda_{\ell}^{(3)}$ for $\ell \in [j-1]$ a.s.,
and since $\hat \lambda_j^{(2)} \ge \hat \lambda_j^{(1)}$ a.s.,
we deduce
$\E L_j^{(n+1)}(\hat \lambda_{1:(j-1)}^{(2)}, \hat \lambda_j^{(2)}) \le \beta_j$.
It follows that $(\hat \lambda_{1:j}^{(2)})$ satisfy the $j$ constraints, completing the induction.

\end{proof}


\subsection{Proof of \Cref{lem:lower-bd}}\label{subsec:pf-lower-bd}

Here we give the proof of
\Cref{lem:lower-bd}. It makes use of \Cref{lem:bdd-jumps}, given at the end of this section.

\begin{proof}
Since the assumptions in
\Cref{subsubsec:ub-conds}
and
\Cref{cond:cts-scores}
hold,
we may apply
\Cref{lem:emp-monot}
and
\Cref{lem:bdd-jumps}
to deduce that
$g^{\mathrm{sym}}_j(\cdot; \Gamma_{1:(j-1)})$
is a $\sea$ right-continuous step function
whose jumps are bounded by $\frac{V^{\mathrm{max}}_j}{n+1}$,
and that
$g^{\mathrm{sym}}_j(\lambda_j; \cdot)$ is
$\nea$ in each argument
for each $\lambda_j\in \Lambda_j$.
Since
$g^{\mathrm{sym}}_j(\lambda_j; \cdot)$ is
$\nea$ in each argument,
and
by the assumption that
$L_j^{(i)}(\lambda_{1:j}^{\mathrm{min}}) > \beta$ a.s. for $i\in [n+1]$,
we may bound
\begin{align*}
g^{\mathrm{sym}}_j(\lambda_j^{\mathrm{min}}; \Gamma_{1:(j-1)})
\ge g^{\mathrm{sym}}_j(\lambda_j^{\mathrm{min}}; \lambda_{1:(j-1)}^{\mathrm{min}})
= 
\frac{1}{n+1} \sum_{i=1}^{n+1} L_j^{(i)}(\lambda_{1:j}^{\mathrm{min}})
> \beta.
\end{align*}
Since $g^{\mathrm{sym}}_j(\cdot; \Gamma_{1:(j-1)})$ is right-continuous, 
it follows that $U^{\mathrm{sym}}_j(\Gamma_{1:(j-1)}; \beta) \in (\lambda_j^{\mathrm{min}}, \lambda_j^{\mathrm{max}}]$.
By
\Cref{lem:bdd-jumps}
and
the right-continuity of $g^{\mathrm{sym}}_j(\cdot; \Gamma_{1:(j-1)})$,
for any $c\in (\lambda_j^{\mathrm{min}}, \lambda_j^{\mathrm{max}}]$,
we have
\begin{align*}
    \lim_{\lambda_j \to c^-} g^{\mathrm{sym}}_j(\lambda_j; \Gamma_{1:(j-1)}) - \lim_{\lambda_j \to c} g^{\mathrm{sym}}_j(\lambda_j; \Gamma_{1:(j-1)}) \le \frac{V^{\mathrm{max}}_j}{n+1}.
\end{align*}
Setting $c = U^{\mathrm{sym}}_j(\Gamma_{1:(j-1)}; \beta)$,
we find
\begin{align*}
    \lim_{\lambda_j \to c^-} g^{\mathrm{sym}}_j(\lambda_j; \Gamma_{1:(j-1)}) - g^{\mathrm{sym}}_j(c; \Gamma_{1:(j-1)}) \le \frac{V^{\mathrm{max}}_j}{n+1},
\end{align*}
hence
\begin{align*}
    g^{\mathrm{sym}}_j(c; \Gamma_{1:(j-1)}) \ge \lim_{\lambda_j \to c^-} g^{\mathrm{sym}}_j(\lambda_j; \Gamma_{1:(j-1)}) - \frac{V^{\mathrm{max}}_j}{n+1}.
\end{align*}
By the definition of $U^{\mathrm{sym}}_j$,
for all sufficiently small $\eta > 0$, we have 
$g^{\mathrm{sym}}_j(c - \eta; \Gamma_{1:(j-1)}) > \beta$,
which implies
$\lim_{\lambda_j \to c^-} g^{\mathrm{sym}}_j(\lambda_j; \Gamma_{1:(j-1)}) \ge \beta$.
Combining this with the lower bound on $g^{\mathrm{sym}}_j(c; \Gamma_{1:(j-1)})$,
we deduce
\begin{align*}
    g^{\mathrm{sym}}_j(c; \Gamma_{1:(j-1)}) \ge \beta - \frac{V^{\mathrm{max}}_j}{n+1},
\end{align*}
which proves
\Cref{eq:lower-bd}.
Taking expectations, we obtain
\Cref{eq:lower-bd-expec},
as desired.
\end{proof}

\begin{lemma}\label{lem:bdd-jumps}
Assume that
the conditions
in
\Cref{subsubsec:ub-conds}
and
\Cref{cond:cts-scores}
hold.
Then for $j\in [m]$,
a.s.,
for any $\lambda_{1:(j-1)}\in \R^{j-1}$,
the functions
$g^{\mathrm{sym}}_j(\cdot; \lambda_{1:(j-1)})$
and
$g_j^+(\cdot; \lambda_{1:(j-1)})$
have jumps
bounded above by $\frac{V^{\mathrm{max}}_j}{n+1}$.
\end{lemma}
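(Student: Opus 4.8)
The plan is to reduce everything to the observation that, for a fixed choice of the first $j-1$ thresholds, each summand of $g^{\mathrm{sym}}_j$ and $g_j^+$ is a function of $\lambda_j$ with a single jump, and that under the continuity hypothesis these jumps sit at distinct locations, so no two of them can ever add up at the same point. First I would fix $j\in[m]$ and $\lambda_{1:(j-1)}\in\R^{j-1}$ and, using the explicit form in \Cref{lj}, write for each $i\in[n+1]$
\[
L_j^{(i)}(\lambda_{1:j}) = c_i\, I(S_j^{(i)} > \lambda_j), \qquad c_i := V_j^{(i)}\, I(S_1^{(i)}\le\lambda_1,\ldots,S_{j-1}^{(i)}\le\lambda_{j-1}).
\]
By \Cref{cond:loss-bds}, a.s.\ $V_j^{(i)}\in[V^{\mathrm{min}}_j,V^{\mathrm{max}}_j]$, so a.s.\ (and simultaneously for every $\lambda_{1:(j-1)}$, since the indicator is at most $1$) one has $c_i\in[0,V^{\mathrm{max}}_j]$. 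Since $\lambda_j\mapsto I(S_j^{(i)}>\lambda_j)$ is non-increasing, right-continuous, equal to $1$ on $(-\infty,S_j^{(i)})$ and to $0$ on $[S_j^{(i)},\infty)$, the map $\lambda_j\mapsto L_j^{(i)}(\lambda_{1:j})$ is a non-increasing right-continuous step function whose only jump has size $c_i$ and is located at $\lambda_j=S_j^{(i)}$ (with no jump when $c_i=0$); this is also the source of the step-function/right-continuity structure already used in \Cref{lem:emp-monot}.

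Next I would invoke \Cref{cond:cts-scores}: because each $S_j$ is a continuous random variable, a.s.\ the finitely many values $S_j^{(1)},\ldots,S_j^{(n+1)}$ are pairwise distinct. Work on the almost-sure event on which this holds \emph{and} $c_i\le V^{\mathrm{max}}_j$ for all $i$. On this event, any given point of $\R$ is the jump location of at most one summand, so at each of its discontinuities the step function $g^{\mathrm{sym}}_j(\cdot;\lambda_{1:(j-1)}) = \frac1{n+1}\sum_{i=1}^{n+1} L_j^{(i)}(\lambda_{1:j})$ jumps by exactly $\frac{c_i}{n+1}\le\frac{V^{\mathrm{max}}_j}{n+1}$ for the unique relevant index $i$. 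The same reasoning applies verbatim to $\frac1{n+1}\sum_{i=1}^{n} L_j^{(i)}(\lambda_{1:j})$ (a sum over strictly fewer indices), and adding the deterministic constant $\frac{V^{\mathrm{max}}_j}{n+1}$ to this sum leaves every jump unchanged, so $g_j^+(\cdot;\lambda_{1:(j-1)})$ inherits the bound $\frac{V^{\mathrm{max}}_j}{n+1}$ on its jumps. Finally, since the described almost-sure event does not depend on $\lambda_{1:(j-1)}$, the conclusion holds in the quantified form ``a.s., for any $\lambda_{1:(j-1)}\in\R^{j-1}$'', completing the argument.

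The main obstacle — and the only step that is more than bookkeeping with \Cref{lj} and \Cref{cond:loss-bds} — is the no-ties claim, i.e.\ that a.s.\ the scores $S_j^{(1)},\ldots,S_j^{(n+1)}$ are pairwise distinct. This is precisely where \Cref{cond:cts-scores} is needed: without it, two steps could coincide and the jump of $g^{\mathrm{sym}}_j$ at that point could be as large as $\frac{2V^{\mathrm{max}}_j}{n+1}$. This ``continuity rules out ties'' reduction is the standard device in the distribution-free literature for tightness/lower-bound arguments (cf.\ \cite{vovk2005algorithmic,angelopoulos2024conformal}), and I would state it in that form and then proceed as above.
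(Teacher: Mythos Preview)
Your proposal is correct and follows essentially the same route as the paper's proof: both identify that each summand $L_j^{(i)}(\lambda_{1:j})$ has a single jump of size at most $V^{\mathrm{max}}_j$ located at $S_j^{(i)}$, invoke \Cref{cond:cts-scores} to ensure the $S_j^{(i)}$ are a.s.\ distinct so no two jumps coincide, and then read off the $\frac{V^{\mathrm{max}}_j}{n+1}$ bound for both $g^{\mathrm{sym}}_j$ and $g_j^+$. Your version is slightly more explicit in tracking that the governing a.s.\ event does not depend on $\lambda_{1:(j-1)}$, which is exactly what justifies the order of quantifiers in the statement.
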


\begin{proof}
Fix $j\in [m]$.
Since the conditions in
\Cref{subsubsec:ub-conds}
hold,
we may apply
\Cref{lem:emp-monot}
to deduce that
a.s.,
for any 
$\lambda_{1:(j-1)}\in \R^{j-1}$,
$g^{\mathrm{sym}}_j(\cdot; \lambda_{1:(j-1)})$
and
$g_j^+(\cdot; \lambda_{1:(j-1)})$
are right-continuous
and non-increasing.
By
\Cref{cond:cts-scores},
$\{ S_j^{(i)} : i\in [n+1] \}$
are a.s. distinct.
By
\Cref{cond:loss-bds},
for $i\in [n+1]$,
$\lambda_j \mapsto L_j^{(i)}(\lambda_{1:j})$
has a single jump
at $\lambda_j = S_j^{(i)}$
of size bounded by $V^{\mathrm{max}}_j$.
Thus,
a.s.,
\begin{align*}
    g^{\mathrm{sym}}_j(\lambda_j; \lambda_{1:(j-1)}) = \frac{1}{n+1} \sum_{i=1}^{n+1} L_j^{(i)}(\lambda_{1:j})
\end{align*}
has $n+1$ distinct jumps
of sizes bounded by $\frac{V^{\mathrm{max}}_j}{n+1}$.
Similarly,
a.s.,
\begin{align*}
    g_j^+(\lambda_j; \lambda_{1:(j-1)}) = \frac{1}{n+1} \sum_{i=1}^{n} L_j^{(i)}(\lambda_{1:j}) + \frac{V^{\mathrm{max}}_j}{n+1}
\end{align*}
has $n$ distinct jumps
of sizes bounded by $\frac{V^{\mathrm{max}}_j}{n+1}$,
as claimed.
\end{proof}


\subsection{Proof of \Cref{thm:reverse-ineq}}\label{subsec:pf-reverse-ineq}

Here we give the proof of \Cref{thm:reverse-ineq}. It makes use of the counting result \Cref{lem:count-scores} given at the end of this section.

\begin{proof}
Denote the statement to be proved by
$P(j,k,s)$
for $j\in [m]$
and $k\ge s\ge 0$.
We proceed by strong induction on $j$.

\textit{Base case:}
For $j=1$, it suffices to check that
$U^{\mathrm{sym}}_1(\beta) \le U^{\mathrm{sym}}_1\left( \beta - \frac{h_1(k-s)}{n+1} \right)$ a.s.
for all $\beta\ge 0$,
which holds with equality.

\textit{Inductive step:}
Fix $j\in \{2,\ldots, m\}$
and
$\beta\ge 0$.
Assume that $P(\ell,k,s)$ holds
for all
$\ell\in [j-1]$.
Fix integers $k\ge s\ge 0$
such that
for $\ell\in [j-1]$,
for $i\in [n+1]$,
we have the bounds
$L_{\ell}^{(i)}(\lambda_{1:\ell}^{\mathrm{min}}) > \beta_{\ell}^{(s)}$
a.s.
and
$L_{\ell}^{(i)}(\lambda_{1:\ell}^{\mathrm{max}}) \le \beta_{\ell}^{(k+j-\ell)} - \frac{h_{\ell}(k-s)}{n+1}$
a.s.
We shall prove the statement $P(j,k,s)$.

Consider the step functions
$g^{\mathrm{sym}}_j(\cdot; \hat \lambda_{1:(j-1)}^{(2k+1)})$
and
$g^{\mathrm{sym}}_j(\cdot; \hat \lambda_{1:(j-1)}^{(2s+1)})$.
Since the conditions in
\Cref{subsubsec:ub-conds} hold,
\Cref{thm:multiple-scores}
implies that
$\hat \lambda_{\ell}^{(2k+1)} \ge \hat \lambda_{\ell}^{(2s+1)}$ a.s. for all $\ell \in [j-1]$.
Since the conditions in
\Cref{subsubsec:ub-conds} hold,
\Cref{lem:emp-monot} holds
and implies that
$g^{\mathrm{sym}}_j(\lambda_j; \cdot)$
is $\nea$
in each argument
for each $\lambda_j\in \Lambda_j$.
Combining these two facts,
we deduce that
$g^{\mathrm{sym}}_j(\cdot; \hat \lambda_{1:(j-1)}^{(2k+1)}) \ge g^{\mathrm{sym}}_j(\cdot; \hat \lambda_{1:(j-1)}^{(2s+1)})$ a.s.
The a.s. non-negative difference $\Delta(\cdot) = g^{\mathrm{sym}}_j(\cdot; \hat \lambda_{1:(j-1)}^{(2k+1)}) 
- g^{\mathrm{sym}}_j(\cdot; \hat \lambda_{1:(j-1)}^{(2s+1)})$ is given by
\begin{align*}
    \Delta(\lambda_j) = \frac{1}{n+1} \sum_{i \in \ii} V_j^{(i)} I(S_j^{(i)} > \lambda_j)
\end{align*}
for all $\lambda_j\in \Lambda_j$.
Here, the index set $\ii$ is defined as 
\begin{align*}
    \ii = 
    \{ i\in [n+1] : S_{1:(j-1)}^{(i)}\pce \hat \lambda_{1:(j-1)}^{(2k+1)} \} 
    \setminus 
    \{ i\in [n+1] : S_{1:(j-1)}^{(i)}\pce \hat \lambda_{1:(j-1)}^{(2s+1)} \}.
\end{align*}
Note that
$\ii \subseteq \bigsqcup_{\ell=1}^{j-1} \ii_{\ell}$,
where for $\ell \in [j-1]$ we define
\begin{align*}
    \ii_{\ell} = \{ i \in [n+1] : 
    S_{1:(\ell-1)}^{(i)} \pce \hat \lambda_{1:(\ell-1)}^{(2s+1)}, 
    S_{\ell}^{(i)} \in ( \hat \lambda_{\ell}^{(2s+1)}, \hat \lambda_{\ell}^{(2k+1)} ] \}.
\end{align*}
Since by
\Cref{cond:loss-bds}
we have
$|V_j^{(i)}| \le V^{\mathrm{max}}_j$
a.s.
for $i\in [n+1]$,
the triangle inequality implies the uniform bound
$\|\Delta\|_{L^{\infty}(\R)} \le \frac{1}{n+1} V^{\mathrm{max}}_j |\ii|$ a.s.
If we can bound the right-hand side of this inequality
by a nonrandom constant $C$ a.s.,
then by
\Cref{lem:emp-monot}
and by 
\Cref{eq:one-sided-crossing}
from
\Cref{lem:crossing-points},
we have
\begin{align*}
U^{\mathrm{sym}}_j( \hat \lambda_{1:(j-1)}^{(2k+1)}; \beta )
\le U^{\mathrm{sym}}_j( \hat \lambda_{1:(j-1)}^{(2s+1)}; \beta - C ),
\end{align*}
so that we may set $h_j(k-s) = (n+1) C$
and conclude.
It therefore suffices to bound $|\ii|$.
By the union bound, we have $|\ii| \le \sum_{\ell=1}^{j-1} |\ii_{\ell}|$.
Fix $\ell \in [j-1]$.
Recall that
$\hat \lambda_{\ell}^{(2s+1)} 
= U^{\mathrm{sym}}_{\ell}(\hat \lambda_{1:(\ell-1)}^{(2s+3)}; \beta_{\ell}^{(s)})$
and
$\hat \lambda_{\ell}^{(2k+1)} 
= U^{\mathrm{sym}}_{\ell}(\hat \lambda_{1:(\ell-1)}^{(2k+3)}; \beta_{\ell}^{(k)})$.

By assumption,
for
$\ell'\in [\ell-1]$,
for $i\in [n+1]$,
we have the bounds
$L_{\ell'}^{(i)}(\lambda_{1:\ell'}^{\mathrm{min}})
> \beta_{\ell'}^{(s)}$ a.s.
and
$L_{\ell'}^{(i)}(\lambda_{1:\ell'}^{\mathrm{max}})
\le \beta_{\ell'}^{(k+j-\ell')} - \frac{h_{\ell'}(k-s)}{n+1}$ a.s.
Note that
$\beta_{\ell'}^{(s)} \ge \beta_{\ell'}^{(s+1)}$,
note that $\ell\le j-1$ implies that
$\beta_{\ell'}^{(k+j-\ell')} \le \beta_{\ell'}^{((k+1)+\ell-\ell')}$,
and note that
$\frac{h_{\ell'}(k-s)}{n+1}
= \frac{h_{\ell'}((k+1)-(s+1))}{n+1}$.
Thus,
for $\ell'\in [\ell-1]$,
for $i\in [n+1]$,
we have the bounds
$L_{\ell'}^{(i)}(\lambda_{1:\ell'}^{\mathrm{min}})
> \beta_{\ell'}^{(s+1)}$ a.s.
and
$L_{\ell'}^{(i)}(\lambda_{1:\ell'}^{\mathrm{max}})
\le \beta_{\ell'}^{((k+1)+\ell-\ell')} - \frac{h_{\ell'}((k+1)-(s+1))}{n+1}$ a.s.
Consequently,
we may apply the inductive hypothesis
to deduce the bound
\begin{align*}
    \hat \lambda_{\ell}^{(2k+1)} 
    = U^{\mathrm{sym}}_{\ell}(\hat \lambda_{1:(\ell-1)}^{(2k+3)}; \beta_{\ell}^{(k)}) &\le U^{\mathrm{sym}}_{\ell}\left( \hat \lambda_{1:(\ell-1)}^{(2s+3)}; \beta_{\ell}^{(k)} - \frac{h_{\ell}((k+1) - (s+1))}{n+1} \right) \\
    &= U^{\mathrm{sym}}_{\ell}\left( \hat \lambda_{1:(\ell-1)}^{(2s+3)}; \beta_{\ell}^{(k)} - \frac{h_{\ell}(k-s)}{n+1} \right).
\end{align*}
Thus, we have the inclusion
\begin{align*}
    ( \hat \lambda_{\ell}^{(2s+1)}, \hat \lambda_{\ell}^{(2k+1)} ]
    \subseteq
    \left(
    U^{\mathrm{sym}}_{\ell}(\hat \lambda_{1:(\ell-1)}^{(2s+3)}; \beta_{\ell}^{(s)}),
    U^{\mathrm{sym}}_{\ell}\left( \hat \lambda_{1:(\ell-1)}^{(2s+3)}; \beta_{\ell}^{(k)} - \frac{h_{\ell}(k-s)}{n+1} \right) \right].
\end{align*}
Since
\Cref{thm:multiple-scores}
implies $\hat \lambda_{1:(\ell-1)}^{(2s+1)}
\pce
\hat \lambda_{1:(\ell-1)}^{(2s+3)}$,
we also have that 
$S_{1:(\ell-1)}^{(i)} \pce \hat \lambda_{1:(\ell-1)}^{(2s+1)}$
implies
$S_{1:(\ell-1)}^{(i)} \pce \hat \lambda_{1:(\ell-1)}^{(2s+3)}$. 
Combining the previous two observations,
we obtain the bound
\begin{align*}
    |\ii_{\ell}| \le \left| \left\{ i \in [n+1] : 
    S_{1:(\ell-1)}^{(i)} \pce \hat \lambda_{1:(\ell-1)}^{(2s+3)},
    S_{\ell}^{(i)} \in \left(
    U^{\mathrm{sym}}_{\ell}(\hat \lambda_{1:(\ell-1)}^{(2s+3)}; \beta_{\ell}^{(s)}),
    U^{\mathrm{sym}}_{\ell}\left( \hat \lambda_{1:(\ell-1)}^{(2s+3)}; \beta_{\ell}^{(k)} - \frac{h_{\ell}(k-s)}{n+1} \right)
    \right] \right\} \right|.
\end{align*}
By assumption,
for $\ell\in [j-1]$,
for $i\in [n+1]$,
we have the bounds
$L_{\ell}^{(i)}(\lambda_{1:\ell}^{\mathrm{min}}) > \beta_{\ell}^{(s)}$
a.s.
and
$L_{\ell}^{(i)}(\lambda_{1:\ell}^{\mathrm{max}}) \le \beta_{\ell}^{(k+j-\ell)} - \frac{h_{\ell}(k-s)}{n+1}$
a.s.
Thus,
since the conditions in
\Cref{subsubsec:ub-conds},
\Cref{cond:cts-scores},
and
\Cref{cond:m-positive}
hold,
we may apply
\Cref{lem:count-scores}
to obtain the bound
\begin{align*}
    |\ii_{\ell}| &\le \frac{\beta_{\ell}^{(s)} - \left( \beta_{\ell}^{(k)} - \frac{h_{\ell}(k-s)}{n+1} \right) + \frac{V^{\mathrm{max}}_{\ell}}{n+1}}{V^{\mathrm{min}}_{\ell}/(n+1)} \\
    &= \frac{ \frac{(k-s) (V^{\mathrm{max}}_{\ell} - V^{\mathrm{min}}_{\ell})}{n+1} + \frac{h_{\ell}(k-s)}{n+1} + \frac{V^{\mathrm{max}}_{\ell}}{n+1}}{V^{\mathrm{min}}_{\ell}/(n+1)} \\
    &= \frac{(k-s) (V^{\mathrm{max}}_{\ell} - V^{\mathrm{min}}_{\ell}) + h_{\ell}(k-s) + V^{\mathrm{max}}_{\ell}}{V^{\mathrm{min}}_{\ell}},
\end{align*}
where in the second step
we recalled that
$\beta_{\ell}^{(s)} - \beta_{\ell}^{(k)} = (k-s) \delta_{\ell}
= \frac{(k-s) (V^{\mathrm{max}}_{\ell} - V^{\mathrm{min}}_{\ell})}{n+1}$. 
Plugging this into the union bound, we find
\begin{align*}
    |\ii| \le \sum_{\ell=1}^{j-1} \frac{(k-s) (V^{\mathrm{max}}_{\ell} - V^{\mathrm{min}}_{\ell}) + h_{\ell}(k-s) + V^{\mathrm{max}}_{\ell}}{V^{\mathrm{min}}_{\ell}}.
\end{align*}
This implies the nonrandom uniform bound
\begin{align*}
    \|\Delta\|_{L^{\infty}(\R)} \le \frac{1}{n+1} V^{\mathrm{max}}_j \sum_{\ell=1}^{j-1} \frac{(k-s) (V^{\mathrm{max}}_{\ell} - V^{\mathrm{min}}_{\ell}) + h_{\ell}(k-s) + V^{\mathrm{max}}_{\ell}}{V^{\mathrm{min}}_{\ell}}.
\end{align*}
We may apply 
\Cref{eq:one-sided-crossing}
from
\Cref{lem:crossing-points}
to set
\begin{align*}
    h_j(k-s) = V^{\mathrm{max}}_j \sum_{\ell=1}^{j-1} \frac{(k-s) (V^{\mathrm{max}}_{\ell} - V^{\mathrm{min}}_{\ell}) + h_{\ell}(k-s) + V^{\mathrm{max}}_{\ell}}{V^{\mathrm{min}}_{\ell}},
\end{align*}
which completes the induction.

\end{proof}

\begin{lemma}\label{lem:count-scores}

Assume that the conditions
in \Cref{subsubsec:ub-conds},
\Cref{cond:cts-scores},
and
\Cref{cond:m-positive}
hold.
Fix $\ell \in [m]$.
Suppose that
$\beta\ge \beta'\ge 0$
are such that
for $i\in [n+1]$,
we have
$L_{\ell}^{(i)}(\lambda_{1:\ell}^{\mathrm{min}}) > \beta$ a.s.
and
$L_{\ell}^{(i)}(\lambda_{1:\ell}^{\mathrm{max}}) \le \beta'$ a.s.
Then given $\lambda_{1:(\ell-1)} \in \Lambda_{1:(\ell-1)}$,
we have the bound
\begin{align*}
    |\{ i \in [n+1] : 
    S_{1:(\ell-1)}^{(i)} \pce \lambda_{1:(\ell-1)},
    S_{\ell}^{(i)} \in ( U^{\mathrm{sym}}_{\ell}(\lambda_{1:(\ell-1)}; \beta), U^{\mathrm{sym}}_{\ell}(\lambda_{1:(\ell-1)}; \beta') ] \}| \le \frac{\beta - \beta' + \frac{V^{\mathrm{max}}_{\ell}}{n+1}}{V^{\mathrm{min}}_{\ell} / (n+1)}.
\end{align*}


\end{lemma}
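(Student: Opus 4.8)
The plan is to read the indices being counted as downward jumps of the symmetric step function $g^{\mathrm{sym}}_\ell(\cdot;\lambda_{1:(\ell-1)})$ that lie in the half-open interval $(a,b]$, where $a := U^{\mathrm{sym}}_\ell(\lambda_{1:(\ell-1)};\beta)$ and $b := U^{\mathrm{sym}}_\ell(\lambda_{1:(\ell-1)};\beta')$, and then to bound the number of such jumps by dividing the total jump mass in $(a,b]$ (which is controlled by the values of $g^{\mathrm{sym}}_\ell$ at the two endpoints) by the guaranteed minimum jump size $V^{\mathrm{min}}_\ell/(n+1)$.

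I would begin by recording elementary facts about the two endpoints. Since $\lambda_{1:\ell}\mapsto L_\ell^{(i)}(\lambda_{1:\ell})$ is $\nea$ in its first $\ell-1$ arguments and $\sea$ in its last (from \eqref{lj}), for any fixed $\lambda_{1:(\ell-1)}\in\Lambda_{1:(\ell-1)}$ the hypotheses $L_\ell^{(i)}(\lambda_{1:\ell}^{\mathrm{min}})>\beta$ and $L_\ell^{(i)}(\lambda_{1:\ell}^{\mathrm{max}})\le\beta'$ upgrade, upon averaging over $i\in[n+1]$, to $g^{\mathrm{sym}}_\ell(\lambda_\ell^{\mathrm{min}};\lambda_{1:(\ell-1)})>\beta$ and $g^{\mathrm{sym}}_\ell(\lambda_\ell^{\mathrm{max}};\lambda_{1:(\ell-1)})\le\beta'\le\beta$ a.s. By \Cref{lem:emp-monot}, $g^{\mathrm{sym}}_\ell(\cdot;\lambda_{1:(\ell-1)})$ is a non-increasing right-continuous step function and $U^{\mathrm{sym}}_\ell(\lambda_{1:(\ell-1)};\cdot)$ is non-increasing, so $a,b\in\Lambda_\ell$ are well defined and $a\le b$. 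Unpacking the definition of $U^{\mathrm{sym}}_\ell$ as a supremum and invoking right-continuity then gives $g^{\mathrm{sym}}_\ell(a;\lambda_{1:(\ell-1)})\le\beta$ whenever $a<\lambda_\ell^{\mathrm{max}}$; and when $a=\lambda_\ell^{\mathrm{max}}$ the interval $(a,b]$ is empty and the claim is trivial. For the right endpoint I would apply \Cref{lem:lower-bd} with $j=\ell$, budget $\beta'$ (legitimate because $L_\ell^{(i)}(\lambda_{1:\ell}^{\mathrm{min}})>\beta\ge\beta'$ a.s.), and the constant---hence symmetric---function $\Gamma_{1:(\ell-1)}=\lambda_{1:(\ell-1)}$, obtaining $g^{\mathrm{sym}}_\ell(b;\lambda_{1:(\ell-1)})\ge\beta'-V^{\mathrm{max}}_\ell/(n+1)$ a.s.

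Next I would match the counted set bijectively with the jumps of $g^{\mathrm{sym}}_\ell(\cdot;\lambda_{1:(\ell-1)})$ in $(a,b]$. For an index $i$ with $S_{1:(\ell-1)}^{(i)}\pce\lambda_{1:(\ell-1)}$, the summand $\lambda_\ell\mapsto L_\ell^{(i)}(\lambda_{1:(\ell-1)},\lambda_\ell)$ equals $V_\ell^{(i)} I(S_\ell^{(i)}>\lambda_\ell)$, which is constant except for a single downward jump of size $V_\ell^{(i)}$ at $\lambda_\ell=S_\ell^{(i)}$; for the remaining indices $i$ the summand $L_\ell^{(i)}(\lambda_{1:(\ell-1)},\cdot)$ vanishes identically. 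Under \Cref{cond:cts-scores} the values $\{S_\ell^{(i)}\}_{i\in[n+1]}$ are a.s. distinct, and under \Cref{cond:m-positive} each $V_\ell^{(i)}\ge V^{\mathrm{min}}_\ell>0$; hence $i\mapsto S_\ell^{(i)}$ is a.s. a bijection from the counted set onto the set of jump locations of $g^{\mathrm{sym}}_\ell(\cdot;\lambda_{1:(\ell-1)})$ in $(a,b]$, with every such jump of size $V_\ell^{(i)}/(n+1)\ge V^{\mathrm{min}}_\ell/(n+1)$. A telescoping computation for a non-increasing right-continuous step function shows that the total of its jump sizes over $(a,b]$ equals $g^{\mathrm{sym}}_\ell(a;\lambda_{1:(\ell-1)})-g^{\mathrm{sym}}_\ell(b;\lambda_{1:(\ell-1)})$, which by the endpoint bounds above is at most $\beta-(\beta'-V^{\mathrm{max}}_\ell/(n+1))=\beta-\beta'+V^{\mathrm{max}}_\ell/(n+1)$. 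Dividing the total jump mass by the minimum jump size $V^{\mathrm{min}}_\ell/(n+1)$ yields the claimed bound on the cardinality of the counted set, a.s.

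The delicate part will be the bookkeeping at the endpoints: carefully deducing $g^{\mathrm{sym}}_\ell(a;\lambda_{1:(\ell-1)})\le\beta$ from the definition of the generalized inverse together with right-continuity while handling the degenerate case $a=\lambda_\ell^{\mathrm{max}}$, and applying the telescoping identity over exactly the right half-open interval so that a possible jump at $a$ is excluded but one at $b$ is included. The only non-elementary input is \Cref{lem:lower-bd} (bounded jumps), which supplies the lower bound on $g^{\mathrm{sym}}_\ell$ at $b$; everything else is elementary manipulation of monotone step functions.
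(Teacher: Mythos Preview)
Your proposal is correct and follows essentially the same approach as the paper: identify the counted indices with downward jumps of $g^{\mathrm{sym}}_\ell(\cdot;\lambda_{1:(\ell-1)})$ in $(a,b]$, bound $g^{\mathrm{sym}}_\ell(a;\lambda_{1:(\ell-1)})\le\beta$ from the definition of $U^{\mathrm{sym}}_\ell$ and $g^{\mathrm{sym}}_\ell(b;\lambda_{1:(\ell-1)})\ge\beta'-V^{\mathrm{max}}_\ell/(n+1)$ from \Cref{lem:lower-bd}, and divide the resulting total jump mass by the minimum jump size $V^{\mathrm{min}}_\ell/(n+1)$. Your handling of the endpoint cases is slightly more explicit than the paper's, but the argument is the same.
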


\begin{proof}

By the definition of $L_{\ell}^{(i)}(\lambda_{1:\ell})$,
each $i\in [n+1]$ satisfying
$S_{1:(\ell-1)}^{(i)} \pce \lambda_{1:(\ell-1)}$
and
$S_{\ell}^{(i)} \in ( U^{\mathrm{sym}}_{\ell}(\lambda_{1:(\ell-1)}; \beta), U^{\mathrm{sym}}_{\ell}(\lambda_{1:(\ell-1)}; \beta') ]$
corresponds to a downwards jump
in the function
$g^{\mathrm{sym}}_{\ell}(\cdot; \lambda_{1:(\ell-1)})$
of size at least $\frac{V^{\mathrm{min}}_{\ell}}{n+1}$
on the interval
$( U^{\mathrm{sym}}_{\ell}(\lambda_{1:(\ell-1)}; \beta), U^{\mathrm{sym}}_{\ell}(\lambda_{1:(\ell-1)}; \beta') ]$.
Note that since
for $i\in [n+1]$,
we have
$L_j^{(i)}(\lambda_{1:j}^{\mathrm{min}}) > \beta$ a.s. 
and
$L_j^{(i)}(\lambda_{1:j}^{\mathrm{max}}) \le \beta'$ a.s.,
and since the conditions in
\Cref{subsubsec:ub-conds}
and
\Cref{cond:cts-scores}
hold,
the definition of
$U^{\mathrm{sym}}_{\ell}$
and
\Cref{eq:lower-bd}
from
\Cref{lem:lower-bd}
imply that a.s.,
\begin{align*}
     g^{\mathrm{sym}}_{\ell}(U^{\mathrm{sym}}_{\ell}(\lambda_{1:(\ell-1)}; \beta); \lambda_{1:(\ell-1)}) \le \beta
\end{align*}
and
\begin{align*}
     g^{\mathrm{sym}}_{\ell}(U^{\mathrm{sym}}_{\ell}(\lambda_{1:(\ell-1)}; \beta'); \lambda_{1:(\ell-1)}) \ge \beta' - \frac{V^{\mathrm{max}}_{\ell}}{n+1},
\end{align*}
respectively.
Since the conditions in
\Cref{subsubsec:ub-conds} hold,
\Cref{lem:emp-monot}
implies that a.s.
$g^{\mathrm{sym}}_{\ell}(\cdot;\lambda_{1:(\ell-1)})$
is $\sea$.
Thus,
for all $\lambda \le \lambda'$ with
\begin{align*}
    \lambda, \lambda' \in ( U^{\mathrm{sym}}_{\ell}(\lambda_{1:(\ell-1)}; \beta), U^{\mathrm{sym}}_{\ell}(\lambda_{1:(\ell-1)}; \beta') ],
\end{align*}
we have
\begin{align*}
    0 &\le g^{\mathrm{sym}}_{\ell}(\lambda; \lambda_{1:(\ell-1)}) - g^{\mathrm{sym}}_{\ell}(\lambda'; \lambda_{1:(\ell-1)}) \\
    &\le g^{\mathrm{sym}}_{\ell}(U^{\mathrm{sym}}_{\ell}(\lambda_{1:(\ell-1)}; \beta); \lambda_{1:(\ell-1)}) 
    - g^{\mathrm{sym}}_{\ell}(U^{\mathrm{sym}}_{\ell}(\lambda_{1:(\ell-1)}; \beta'); \lambda_{1:(\ell-1)}) \\
    &\le \beta - \beta' + \frac{V^{\mathrm{max}}_{\ell}}{n+1}.
\end{align*}
By
\Cref{cond:m-positive},
we have
$\frac{V^{\mathrm{min}}_{\ell}}{n+1} > 0$,
so the number of downwards jumps in 
$g^{\mathrm{sym}}_{\ell}(\cdot; \lambda_{1:(\ell-1)})$
on this interval
is bounded by
$\frac{\beta - \beta' + \frac{V^{\mathrm{max}}_{\ell}}{n+1}}{V^{\mathrm{min}}_{\ell} / (n+1)}$,
as desired.
\end{proof}


\subsection{Proof of \Cref{cor:constr-lower-bds}}\label{subsec:pf-constr-lower-bds}

\begin{proof}

Fix $j \in [m]$,
and consider constraint $j$.
Our chosen thresholds $(\hat \lambda_{1:j}^{(2)})$ obey
\begin{align*}
    \E L_j^{(n+1)}(\hat \lambda_{1:(j-1)}^{(2)}, \hat \lambda_j^{(2)}) \le \beta_j.
\end{align*}
Since 
$L_j^{(n+1)}$ is $\nea$ in its first $j-1$ arguments
and $\sea$ in its last argument
a.s.,
and since $\hat \lambda_{\ell}^{(k)}$ is non-decreasing in $k$
a.s.,
the left-hand side is bounded below by
$\E L_j^{(n+1)}(\hat \lambda_{1:(j-1)}^{(1)}, \hat \lambda_j^{(3)})$.
We seek $\tilde \beta_j\ge 0$ such that
$\hat \lambda_j^{(3)} \le U^{\mathrm{sym}}_j(\hat \lambda_{1:(j-1)}^{(1)}; \tilde \beta_j)$
a.s.
Since by definition 
$\hat \lambda_j^{(3)} = U^{\mathrm{sym}}_j(\hat \lambda_{1:(j-1)}^{(5)}; \beta_j^{(1)})$,
this is equivalent to
\begin{align*}
    U^{\mathrm{sym}}_j(\hat \lambda_{1:(j-1)}^{(5)}; \beta_j^{(1)}) \le U^{\mathrm{sym}}_j(\hat \lambda_{1:(j-1)}^{(1)}; \tilde \beta_j).
\end{align*}
Since the conditions in
\Cref{subsubsec:ub-conds}
and
\Cref{subsubsec:lb-conds}
hold,
we may apply
\Cref{thm:reverse-ineq}
to deduce that this holds a.s.
when $\tilde \beta_j = \beta_j^{(1)} - \frac{h_j(2)}{n+1}$.
Since $\hat \lambda_j^{(3)} \le U_j(\hat \lambda_{1:(j-1)}^{(1)}; \tilde \beta_j)$ a.s.,
and since 
$L_j^{(n+1)}$ is $\sea$ in its last argument a.s.,
we deduce
\begin{align*}
     \E L_j^{(n+1)}(\hat \lambda_{1:(j-1)}^{(2)}, \hat \lambda_j^{(2)}) \ge \E L_j^{(n+1)}(\hat \lambda_{1:(j-1)}^{(1)}, \hat \lambda_j^{(3)}) \ge \E L_j^{(n+1)}(\hat \lambda_{1:(j-1)}^{(1)}, U^{\mathrm{sym}}_j(\hat \lambda_{1:(j-1)}^{(1)}; \tilde \beta_j)).
\end{align*}
Since the conditions in
\Cref{subsubsec:ub-conds},
\Cref{cond:cts-scores},
and
\Cref{cond:loss-sup} hold,
we may apply
\Cref{lem:lower-bd}
to deduce the lower bound
\begin{align*}
    \tilde \beta_j - \frac{V^{\mathrm{max}}_j}{n+1} 
    &= \beta_j^{(1)} - \frac{h_j(2)}{n+1} - \frac{V^{\mathrm{max}}_j}{n+1} 
    = \beta_j - \frac{V^{\mathrm{max}}_j - V^{\mathrm{min}}_j}{n+1} - \frac{h_j(2)}{n+1} - \frac{V^{\mathrm{max}}_j}{n+1} \\
    &= \beta_j - \frac{2V^{\mathrm{max}}_j - V^{\mathrm{min}}_j + h_j(2)}{n+1},
\end{align*}
where in the second step we used the definition $\beta_j^{(1)} = \beta_j - \delta_j$, as desired.
\end{proof}

\subsection{Population-level analysis}

Here, we study the population-level problem \Cref{eq:opt}.

\begin{lemma}\label{lem:pop-monot}
Assume that
the conditions in
\Cref{subsubsec:ub-conds}
hold.
Then
$\lambda_{1:j}\mapsto 
g_j^*(\lambda_j; \lambda_{1:(j-1)})$
is $\nea$ in its last $j-1$ arguments
and $\sea$ in its first argument,
and for any $\lambda_{1:(j-1)}\in \R^{j-1}$,
the function
$g_j^*(\cdot; \lambda_{1:(j-1)})$
is right-continuous.
Also,
for any $\lambda_{1:(j-1)}\in \R^{j-1}$,
the function
$U_j^*(\lambda_{1:(j-1)}; \cdot)$
is $\sea$,
while
for any $\beta_j\in \R$,
the function
$\lambda_{1:(j-1)}\mapsto U_j^*(\lambda_{1:(j-1)}; \beta_j)$
is $\nea$ in each argument.
\end{lemma}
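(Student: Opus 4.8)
The plan is to mirror the proof of \Cref{lem:emp-monot}, replacing empirical averages by expectations and the finite-sum continuity argument by dominated convergence. First, recall from \eqref{gjs} and \eqref{lj} that
\begin{align*}
g_j^*(\lambda_j; \lambda_{1:(j-1)}) = \E\!\left[ V_j \cdot I(S_1 \le \lambda_1, \ldots, S_{j-1} \le \lambda_{j-1}, S_j > \lambda_j) \right].
\end{align*}
For the monotonicity, fix a realization of $(S_{1:j}, V_j)$ and observe that, since $V_j \ge 0$ a.s. by \Cref{cond:loss-bds}, the map $\lambda_{1:j} \mapsto L_j(\lambda_{1:j})$ is a.s. non-decreasing in each of its first $j-1$ arguments (raising $\lambda_\ell$ can only turn the indicator $I(S_\ell \le \lambda_\ell)$ from $0$ to $1$) and non-increasing in its last argument (raising $\lambda_j$ can only turn $I(S_j > \lambda_j)$ from $1$ to $0$). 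Taking expectations preserves these monotonicities, which gives the claim for $g_j^*$; this also recovers the right-hand-side monotonicity statements used, e.g., in \Cref{lem:pop-minimizer}.

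For right-continuity, fix $\lambda_{1:(j-1)} \in \R^{j-1}$ and a point $\lambda_j^0 \in \R$, and take any sequence $\lambda_j^{(t)} \downarrow \lambda_j^0$. For a.e. realization, $\lambda_j \mapsto I(S_j > \lambda_j)$ is right-continuous (its only discontinuity is the downward jump at $\lambda_j = S_j$, at which the function value $I(S_j > S_j) = 0$ agrees with the right limit), so $L_j(\lambda_{1:(j-1)}, \lambda_j^{(t)}) \to L_j(\lambda_{1:(j-1)}, \lambda_j^0)$ a.s. Since $|L_j| \le V^{\mathrm{max}}_j$ a.s. by \Cref{cond:loss-bds} and this constant is integrable, the dominated convergence theorem yields $g_j^*(\lambda_j^{(t)}; \lambda_{1:(j-1)}) \to g_j^*(\lambda_j^0; \lambda_{1:(j-1)})$, i.e., right-continuity of $g_j^*(\cdot; \lambda_{1:(j-1)})$.

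Finally, for the generalized inverse $U_j^*(\lambda_{1:(j-1)}; \beta_j) = \sup\{\lambda_j \in \Lambda_j : g_j^*(\lambda_j; \lambda_{1:(j-1)}) > \beta_j\}$, I would argue exactly as in \Cref{lem:emp-monot} via set inclusions: increasing $\beta_j$ shrinks the superlevel set $\{\lambda_j : g_j^*(\lambda_j; \lambda_{1:(j-1)}) > \beta_j\}$, so its supremum is non-increasing in $\beta_j$; and increasing any $\lambda_\ell$ with $\ell \in [j-1]$ increases $g_j^*(\cdot; \lambda_{1:(j-1)})$ pointwise by the first paragraph and hence enlarges the superlevel set, so the supremum is non-decreasing in each of the first $j-1$ coordinates. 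The empty-set convention (supremum equal to $\lambda_j^{\mathrm{min}}$) is handled as in the opening remark of the proof of \Cref{lem:crossing-points}: $A \subseteq B$ always forces $\sup A \le \sup B$ under this convention. I do not anticipate a genuine obstacle; the only point requiring a little care is checking that dominated convergence applies, which is immediate from the boundedness in \Cref{cond:loss-bds}, together with keeping the empty-set convention consistent when manipulating the superlevel sets.
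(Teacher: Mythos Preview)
Your proposal is correct and follows essentially the same approach as the paper: both deduce the monotonicity and right-continuity of $g_j^*$ from the corresponding pointwise properties of $L_j$ (the paper states this tersely, while you spell out the dominated convergence step), and both derive the monotonicity of $U_j^*$ directly from that of $g_j^*$ via the superlevel-set definition. The only cosmetic difference is that the paper invokes dominated convergence to note $\lim_{\lambda_j\to\infty} g_j^*(\lambda_j;\lambda_{1:(j-1)})=0$ (ensuring $U_j^*$ is well defined), whereas you use it for right-continuity; both uses are valid and neither adds anything substantive beyond the other.
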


\begin{proof}
Note that $L_j(\cdot)$ is $\nea$ in its first $j-1$ arguments
and $\sea$ in its last argument,
so the definition of
$g_j^*(\cdot; \cdot)$
implies the monotonicity
of $g_j^*(\cdot;\cdot)$.
Also, $L_j(\cdot)$ is right-continuous in its last argument,
so the definition of
$g_j^*(\cdot; \cdot)$
implies the right-continuity
of $g_j^*(\cdot; \lambda_{1:(j-1)})$.
By
\Cref{cond:loss-inf},
\Cref{cond:loss-bds},
and the dominated convergence theorem,
we have
$\lim_{\lambda_j\to \infty} g_j^*(\lambda_j; \lambda_{1:(j-1)})
= 0$,
hence
$U_j^*(\lambda_{1:(j-1)}; \beta_j)$
exists
for all $\lambda_{1:(j-1)}\in \R^{j-1}$
and all $\beta_j > 0$.
The definition of $U_j^*(\cdot;\cdot)$
and the monotonicity of $g_j^*(\cdot;\cdot)$
imply the monotonicity of $U_j^*(\cdot;\cdot)$.
\end{proof}


\begin{lemma}[Minimizer of population-level problem]\label{lem:pop-minimizer}
Assume that
the conditions in
\Cref{subsubsec:ub-conds}
hold.
Then
\Cref{eq:opt} is feasible.
Further,
let $\lambda_1^* = U_1^*(\beta_1)$,
and for $j\in \{2,\ldots,m\}$,
define $\lambda_j^*$ recursively by
$\lambda_j^* = U_j^*(\lambda_{1:(j-1)}^*; \beta_j)$.
Suppose that 
$\lambda_j^* \in (\lambda_j^{\mathrm{min}}, \lambda_j^{\mathrm{max}})$ 
for all $j\in [m]$.
Then $\lambda_{1:m}^*$ is a minimizer of
\Cref{eq:opt}.
\end{lemma}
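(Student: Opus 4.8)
The plan is to establish three facts: that $\eqref{eq:opt}$ is feasible; that the proposed point $\lambda_{1:m}^*$ is itself feasible; and that every feasible $\lambda_{1:m}$ dominates $\lambda_{1:m}^*$ coordinatewise, $\lambda_{1:m}^* \pce \lambda_{1:m}$. Since the objective $L_{m+1}(\lambda_{1:m}) = V_{m+1}\, I(S_{1:m}\pce \lambda_{1:m})$ is nondecreasing in each $\lambda_j$ (as $V_{m+1}\ge 0$, so enlarging any threshold can only enlarge the indicator), coordinatewise domination gives $\E L_{m+1}(\lambda_{1:m}^*) \le \E L_{m+1}(\lambda_{1:m})$ for every feasible $\lambda_{1:m}$, which together with feasibility of $\lambda_{1:m}^*$ proves it is a minimizer.

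For feasibility, recall from \Cref{lem:pop-monot} that each $g_j^*(\cdot;\cdot)$ is $\nea$ in its first $j-1$ arguments, $\sea$ in its last argument, and right-continuous in its last argument; and \Cref{cond:loss-inf} gives $g_j^*(\lambda_j^{\mathrm{max}};\lambda_{1:(j-1)}^{\mathrm{max}}) = \E L_j(\lambda_{1:j}^{\mathrm{max}}) \le \beta_j$. Hence $\lambda_{1:m}^{\mathrm{max}}$ satisfies all $m$ constraints, so $\eqref{eq:opt}$ is feasible. To see $\lambda_{1:m}^*$ is feasible, induct on $j$: assuming $\lambda_{1:(j-1)}^*\in\Lambda_{1:(j-1)}$ satisfies the first $j-1$ constraints, monotonicity in the first $j-1$ arguments gives $g_j^*(\lambda_j^{\mathrm{max}};\lambda_{1:(j-1)}^*)\le g_j^*(\lambda_j^{\mathrm{max}};\lambda_{1:(j-1)}^{\mathrm{max}})\le\beta_j$, so by the right-continuity fact noted in the text just before this lemma, $g_j^*(\lambda_j^*;\lambda_{1:(j-1)}^*) = g_j^*(U_j^*(\lambda_{1:(j-1)}^*;\beta_j);\lambda_{1:(j-1)}^*)\le\beta_j$; and $\lambda_j^*\in\Lambda_j$ by construction of $U_j^*$.

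For coordinatewise domination, fix any feasible $\lambda_{1:m}$ and show $\lambda_j\ge\lambda_j^*$ by induction on $j$. In the inductive step, the $j$-th constraint reads $g_j^*(\lambda_j;\lambda_{1:(j-1)})\le\beta_j$; since $g_j^*$ is $\nea$ in its first $j-1$ arguments and, by induction, $\lambda_{1:(j-1)}\sce\lambda_{1:(j-1)}^*$, we obtain $g_j^*(\lambda_j;\lambda_{1:(j-1)}^*)\le g_j^*(\lambda_j;\lambda_{1:(j-1)})\le\beta_j$. Now if $\lambda_j<\lambda_j^* = U_j^*(\lambda_{1:(j-1)}^*;\beta_j) = \sup\{\lambda\in\Lambda_j : g_j^*(\lambda;\lambda_{1:(j-1)}^*)>\beta_j\}$, then $\lambda_j$ is not an upper bound of this superlevel set, so there is $\lambda_j'\in(\lambda_j,\lambda_j^*]$ with $g_j^*(\lambda_j';\lambda_{1:(j-1)}^*)>\beta_j$, whence $\sea$-monotonicity gives $g_j^*(\lambda_j;\lambda_{1:(j-1)}^*)\ge g_j^*(\lambda_j';\lambda_{1:(j-1)}^*)>\beta_j$, a contradiction. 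Thus $\lambda_j\ge\lambda_j^*$, completing the induction.

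The main obstacle is the careful handling of the generalized inverse $U_j^*$: because $g_j^*(\cdot;\lambda_{1:(j-1)}^*)$ is only right-continuous and non-increasing, its $\beta_j$-superlevel set need not be closed, so one cannot simply evaluate $g_j^*$ at $\lambda_j^*$ but must argue through the supremum characterization above (for the domination direction) together with the right-continuity remark quoted from the text (for the feasibility direction). The interiority hypothesis $\lambda_j^*\in(\lambda_j^{\mathrm{min}},\lambda_j^{\mathrm{max}})$ is what places us in the non-trivial regime where the $j$-th constraint is active at the optimum; the chaining argument sketched above in fact goes through in the degenerate endpoint cases as well, but those are of no independent interest.
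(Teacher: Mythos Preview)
Your proof is correct and follows essentially the same approach as the paper: feasibility via $\lambda_{1:m}^{\mathrm{max}}$, an inductive coordinatewise-domination argument for every feasible point using the monotonicity of $g_j^*$, and then monotonicity of the objective to conclude. You are in fact slightly more careful than the paper's own proof, which omits the explicit verification that $\lambda_{1:m}^*$ is itself feasible (relying instead on the right-continuity remark in the main text); your inductive feasibility step fills that gap cleanly.
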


\begin{proof}
To see that
\Cref{eq:opt}
is feasible,
note that by
\Cref{cond:loss-inf},
$\lambda_{1:m}^{\mathrm{max}}\in \Lambda_{1:m}$
obeys
$\E L_j(\lambda_{1:j})
\le \beta_j$
for all $j\in [m]$,
hence
$\lambda_{1:m}^{\mathrm{max}}$ is feasible.

Next, we show that
for any feasible $\lambda_{1:m}'\in \Lambda_{1:m}$,
we have $\lambda_j \ge \lambda_j^*$
for $j\in [m]$.
We proceed by induction on $j$.

\textit{Base case:} 
Consider the case $j=1$.
Since $U_1^*(\beta_1) \in (\lambda_1^{\mathrm{min}}, \lambda_1^{\mathrm{max}})$,
and since by
\Cref{lem:pop-monot}
$g_1^*(\cdot)$
is $\sea$,
the first constraint reads
$\lambda_1 \ge U_1^*(\beta_1) =: \lambda_1^*$.
In particular,
if $\lambda_{1:m}'$ is feasible,
we must have $\lambda_1' \ge \lambda_1^*$,
as claimed.

\textit{Inductive step:}
Suppose that we have proved the claim
for all $\ell\in [j-1]$
for some $j\in \{2,\ldots,m\}$.
We show the claim for $j$.
Since $\lambda_{1:m}'$ is feasible,
we have
$g_j^*(\lambda_j'; \lambda_{1:(j-1)}') \le \beta_j$.
Since by
\Cref{lem:pop-monot}
$g_j^*(\lambda_j';\cdot)$ is $\nea$ in each argument,
and since by the inductive hypothesis
we have $\lambda_{1:(j-1)}' \sce \lambda_{1:(j-1)}^*$,
we deduce that
$g_j^*(\lambda_j'; \lambda_{1:(j-1)}^*) \le \beta_j$.
Since by assumption,
$\lambda_j^* := U_j^*(\lambda_{1:(j-1)}^*; \beta_j)
\in (\lambda_j^{\mathrm{min}}, \lambda_j^{\mathrm{max}})$,
and since
by \Cref{lem:pop-monot}
$g_j^*(\cdot; \lambda_{1:(j-1)}^*)$ is $\nea$,
it follows that
$\lambda_j' \ge \lambda_j^*$,
completing the induction.

Since the objective is $\nea$
in $\lambda_j$ for all $j\in [m]$,
it follows that $\lambda_{1:m}^*$
is a minimizer.
\end{proof}

\subsection{Proof of \Cref{thm:cts-conc}}\label{subsec:pf-cts-conc}

In what follows
in this section,
when we write ``whp", we mean with probability $1-O(1/\sqrt n)$.

\subsubsection{Reduction to consistency of thresholds}\label{subsubsec:reduction}

We begin by reducing the result to establishing consistency of the estimated thresholds. Specifically, we perform two inductive arguments:
\begin{enumerate}
    \item \textbf{Preliminary induction:} for each $j\in [m]$,
    for each $k\in [m-j+1]$,
    we show the deterministic bound
    \begin{align}
        |\lambda_j^{*,(2k)} - \lambda_j^{*,(2k+2)}| \le b_j^{(2k)}/n^{1/\nu^j}
    \end{align}
    for some constants $b_j^{(2k)} \ge 0$ defined recursively
    satisfying
    $b_j^{(2k)} = 0$ for $j\in \mathcal{I}_{\mathrm{discrete}}$.
    This proceeds via induction on $j$.

    \item \textbf{Main induction:} For each $j\in [m]$, for each $k\in [m-j+1]$, we show that
    \begin{align}
    |\hat \lambda_j^{(2k)} - \lambda_j^{*,(2k)}| 
    \le 
    c_j^{(2k)} (\log n / n)^{1/(2\nu^j)}
    \end{align}
    whp
    for some constants $c_j^{(2k)} \ge 0$ defined recursively
    satisfying
    $c_j^{(2k)} = 0$ for $j\in \mathcal{I}_{\mathrm{discrete}}$.
    This also proceeds by induction on $j$.
    \end{enumerate}

Here, we show how the main induction implies the result.
By
\Cref{cond:beta-tilde-exists}
and
\Cref{lem:pop-minimizer},
it suffices to prove the result for
$\lambda_{1:m}^{*} = \lambda_{1:m}^{*,(2)}$.
We may relate the objective
evaluated at the chosen thresholds $(\hat \lambda_{1:m}^{(2)})$
to the objective
evaluated at $(\lambda_{1:m}^{*,(2)})$
via a swapping trick, namely
\begin{align}\label{eq:swapping-trick}
    &|\E V_{m+1}^{(n+1)} I(S_{1:m}^{(n+1)} \pce \hat \lambda_{1:m}^{(2)}) - \E V_{m+1}^{(n+1)} I(S_{1:m}^{(n+1)} \pce \lambda_{1:m}^{*,(2)})| \\
    &\le \sum_{j=1}^m | \E V_{m+1}^{(n+1)} I(S_{1:(j-1)}^{(n+1)} \pce \lambda_{1:(j-1)}^{*,(2)}, S_j^{(n+1)} \in ( \min\{\hat \lambda_j^{(2)},  \lambda_j^{*,(2)}\}, \max\{\hat \lambda_j^{(2)},  \lambda_j^{*,(2)}\} ], \nonumber \\
    & \qquad \qquad \qquad \qquad S_{(j+1):m}^{(n+1)} \pce \hat \lambda_{(j+1):m}^{(2)}) | \nonumber \\ 
    &\le V^{\mathrm{max}}_{m+1} \sum_{j=1}^m \PP{S_j^{(n+1)} \in ( \min\{\hat \lambda_j^{(2)},  \lambda_j^{*,(2)}\}, \max\{\hat \lambda_j^{(2)},  \lambda_j^{*,(2)}\}] }, \nonumber 
\end{align}
where in the second step
we used \Cref{cond:obj-loss-bds}.
In order to analyze a bound of this form,
we need
control of the random quantities
$\hat \lambda_j^{(2)} - \lambda_j^{*,(2)}$ for $j\in [m]$.

If we can show that
$|\hat \lambda_j^{(2)} - \lambda_j^{*,(2)}|$ is $O(\left( \frac{\log n}{n} \right)^{1/(2\nu^{j})})$
with probability $1-O(1/\sqrt n)$
for each $j\in \mathcal{I}_{\mathrm{cts}}$,
and
if we can show that
$|\hat \lambda_j^{(2)} - \lambda_j^{*,(2)}| = 0$
with probability $1-O(1/\sqrt n)$
for each $j\in \mathcal{I}_{\mathrm{discrete}}$,
then using \Cref{cond:cdf-lip},
we claim that the previous
display is also $O(\left( \frac{\log n}{n} \right)^{1/(2\nu^{m})})$.
Indeed,
define the event
\begin{align*}
    E_j^{(2)} = \left\{ |\hat \lambda_j^{(2)} - \lambda_j^{*,(2)}| 
    \le c_j^{(2)} 
    \left( \frac{\log n}{n} \right)^{1/(2\nu^{j})} 
    \right\}
\end{align*}
for $j\in [m]$,
where
for $j\in \mathcal{I}_{\mathrm{discrete}}$
we have $c_j^{(2)} = 0$,
and for $j\in \mathcal{I}_{\mathrm{cts}}$
we have that $c_j^{(2)} \ge 0$
is a nonrandom constant.
Then
for $j\in \mathcal{I}_{\mathrm{cts}}$,
we can write
\begin{align}\label{eq:split-on-event}
    &\PP{S_j^{(n+1)} \in ( \min\{\hat \lambda_j^{(2)},  \lambda_j^{*,(2)}\}, \max\{\hat \lambda_j^{(2)},  \lambda_j^{*,(2)}\}] } \\
    &\le \PP{ \{ S_j^{(n+1)} \in ( \min\{\hat \lambda_j^{(2)},  \lambda_j^{*,(2)}\}, \max\{\hat \lambda_j^{(2)},  \lambda_j^{*,(2)}\}] \} \cap E_j^{(2)}} + \PP{{\left( E_j^{(2)} \right)}^c} \nonumber \\
    &\le \PP{S_j^{(n+1)} \in 
    \left( \lambda_j^{*,(2)} 
    - c_j^{(2)} 
    \left( \frac{\log n}{n} \right)^{1/(2\nu^{j})}
    , \lambda_j^{*,(2)} 
    + c_j^{(2)} 
    \left( \frac{\log n}{n} \right)^{1/(2\nu^{j})} 
    \right]} + \PP{{\left( E_j^{(2)} \right)}^c} \nonumber \\
    &\le K_j \cdot 2c_j^{(2)} 
    \left( \frac{\log n}{n} \right)^{1/(2\nu^{j})} 
    + \PP{{\left( E_j^{(2)} \right)}^c}, \nonumber 
\end{align}
where in the first step we split on the event $E_j^{(2)}$,
in the second step
we used the definition of $E_j^{(2)}$
to deduce the inclusion
\begin{align*}
    &\left\{ S_j^{(n+1)} \in ( \min\{\hat \lambda_j^{(2)},  \lambda_j^{*,(2)}\}, \max\{\hat \lambda_j^{(2)},  \lambda_j^{*,(2)}\}] \right\} \cap E_j^{(2)} \\
    &\subseteq \left\{ S_j^{(n+1)} \in \left( \lambda_j^{*,(2)} 
    - c_j^{(2)} 
    \left( \frac{\log n}{n} \right)^{1/(2\nu^{j})}
    , \lambda_j^{*,(2)} 
    + c_j^{(2)} 
    \left( \frac{\log n}{n} \right)^{1/(2\nu^{j})} 
    \right] \right\} \cap E_j^{(2)}
\end{align*}
and then dropped the intersection with $E_j^{(2)}$,
and in the third step we used 
\Cref{cond:cdf-lip}.
If $\PP{{\left( E_j^{(2)} \right)}^c} = O(1 / \sqrt n)$,
then the right-hand side is 
$O(\left( \frac{\log n}{n} \right)^{1/(2\nu^{j})})$,
as claimed.
Similarly,
if $j\in \mathcal{I}_{\mathrm{discrete}}$,
then we can write
\begin{align}\label{eq:split-on-event-tilde}
    &\PP{S_j^{(n+1)} \in ( \min\{\hat \lambda_j^{(2)},  \lambda_j^{*,(2)}\}, \max\{\hat \lambda_j^{(2)},  \lambda_j^{*,(2)}\}] } \\
    &\le \PP{ \{ S_j^{(n+1)} \in ( \min\{\hat \lambda_j^{(2)},  \lambda_j^{*,(2)}\}, \max\{\hat \lambda_j^{(2)},  \lambda_j^{*,(2)}\}] \} \cap E_j^{(2)}} + \PP{{\left( E_j^{(2)} \right)}^c} \nonumber \\
    &= \PP{{\left( E_j^{(2)} \right)}^c} \nonumber,
\end{align}
which is $O(1/\sqrt n)$
if $\PP{{\left( E_j^{(2)} \right)}^c} = O(1 / \sqrt n)$,
as claimed.

Thus, it suffices to bound
$|\hat \lambda_j^{(2)} - \lambda_j^{*,(2)}|$
whp
for each $j\in [m]$, a special case of the main induction when $k=1$.
In the following sections, we prove the preliminary and main induction arguments.


\subsubsection{Preliminary induction}

We begin with the preliminary induction.

\textit{Base case:}
Suppose that $j=1$,
and fix $k\in [m-j+1] = [m]$.
Recall that
$\lambda_1^{*,(2k)} =  U_1^*(\beta_1^{(k-1)} )$
and
$\lambda_1^{*,(2k+2)} =  U_1^*(\beta_1^{(k)} )$,
so that
$|\lambda_1^{*,(2k)} - \lambda_1^{*,(2k+2)}| = |U_1^*(\beta_1^{(k-1)} ) - U_1^*(\beta_1^{(k)} )|$.

Setting
$\ep_1 =  \beta_1 - \beta_1^{(k-1)}$
and
$\ep_1' = \beta_1^{(k)} - \beta_1^{(k-1)}$,
since
$|\ep_1| = (k-1)\delta_1 = (k-1) \frac{V^{\mathrm{max}}_1-V^{\mathrm{min}}_1}{n+1}$
and
$|\ep_1'| = \delta_1 = \frac{V^{\mathrm{max}}_1-V^{\mathrm{min}}_1}{n+1}$
tend to zero
as $n\to \infty$,
we may apply \Cref{lem:crossing-pert}
for sufficiently large $n$
to deduce
\begin{align*}
    |U_1^*(\beta_1^{(k-1)} ) - U_1^*(\beta_1^{(k)} )|
    \le \mu_1 \delta_1^{1/\nu}
    \le \mu_1 
    (V^{\mathrm{max}}_1-V^{\mathrm{min}}_1)^{1/\nu}
    \frac{1}{n^{1/\nu}}.
\end{align*}
Thus, we have
$|\lambda_1^{*,(2k)} - \lambda_1^{*,(2k+2)}|
\le \frac{b_1^{(2k)}}{n^{1/\nu}}$,
where we defined
$b_1^{(2k)}
:= \mu_1 (V^{\mathrm{max}}_1-V^{\mathrm{min}}_1)^{1/\nu}$.
If $1\in \mathcal{I}_{\mathrm{discrete}}$,
then by definition,
$\mu_1 =0$,
and so
$b_1^{(2k)} = 0$.
This proves the base case.

\textit{Inductive step:}
Assume that for some $j\in \{2,\ldots,m\}$,
for each $\ell\in [j-1]$,
and for each $k\in [m-\ell+1]$,
the result holds.
We show the result holds 
for $j$
and $k\in [m-j+1]$.
Fix $k\in [m-j+1]$.
Recall that
$\lambda_j^{*,(2k)} =  U_j^*(\lambda_{1:(j-1)}^{*,(2k)} ; \beta_j^{(k-1)} )$
and
$\lambda_j^{*,(2k+2)} =  U_j^*(\lambda_{1:(j-1)}^{*,(2k+2)} ; \beta_j^{(k)} )$.
Thus, we may write
\begin{align}\label{eq:prelim-decomp}
    \lambda_j^{*,(2k)} - \lambda_j^{*,(2k+2)} &= U_j^*(\lambda_{1:(j-1)}^{*,(2k)} ; \beta_j^{(k-1)} ) - U_j^*(\lambda_{1:(j-1)}^{*,(2k+2)} ; \beta_j^{(k)} ) \\
    &= (U_j^*(\lambda_{1:(j-1)}^{*,(2k+2)}; \beta_j^{(k-1)} ) - U_j^*(\lambda_{1:(j-1)}^{*,(2k+2)} ; \beta_j^{(k)} )) 
        \nonumber \\
    &+ (U_j^*(\lambda_{1:(j-1)}^{*,(2k)} ; \beta_j^{(k-1)} ) - U_j^*(\lambda_{1:(j-1)}^{*,(2k+2)} ; \beta_j^{(k-1)} )) \nonumber \\
    &=: \text{Term (A)} + \text{Term (B)}. \nonumber
\end{align}
To bound Term (A),
note that
by the inductive hypothesis,
if we define
$\alpha_{1:(j-1)} 
= \lambda_{1:(j-1)}^{*,(2k+2)} - \lambda_{1:(j-1)}^{*,(2)}$,
then for $\ell\in [j-1]$,
we have
$|\alpha_{\ell}|
\le \sum_{k'=1}^{k} b_{\ell}^{(2k')} / n^{1/\nu^{\ell}}$,
where $b_{\ell}^{(2k')} = 0$ if $\ell\in \mathcal{I}_{\mathrm{discrete}}$.
Thus,
$\alpha_{\ell} = 0$ for $\ell\in [j-1]\cap \mathcal{I}_{\mathrm{discrete}}$
and
$|\alpha_{\ell}| 
\le \sum_{k'=1}^{k} b_{\ell}^{(2k')} / n^{1/\nu^{\ell}}$
if $\ell\in [j-1]\setminus \mathcal{I}_{\mathrm{discrete}}$.
Setting
$\ep_j = \beta_j^{(k-1)} - \beta_j$,
$\ep_j' = \beta_j^{(k)} - \beta_j^{(k-1)}$,
and $\alpha_{1:(j-1)}$ as above,
since
$|\ep_j| = (k-1) \delta_j = (k-1) \frac{V^{\mathrm{max}}_j-V^{\mathrm{min}}_j}{n+1}$,
$|\ep_j'| = \delta_j = \frac{V^{\mathrm{max}}_j-V^{\mathrm{min}}_j}{n+1}$,
and
$\|\alpha_{1:(j-1)}\|_1$
tend to zero
as $n\to \infty$,
we may apply \Cref{lem:crossing-pert}
for sufficiently large $n$
to deduce that
\begin{align*}
    |U_j^*(\lambda_{1:(j-1)}^{*,(2k+2)} ; \beta_j^{(k-1)} ) - U_j^*(\lambda_{1:(j-1)}^{*,(2k+2)} ; \beta_j^{(k)} )|
    \le \mu_j \delta_j^{1/\nu}
    \le 
    \mu_j 
    (V^{\mathrm{max}}_j-V^{\mathrm{min}}_j)^{1/\nu} 
    \frac{1}{n^{1/\nu}}.
\end{align*}
To bound Term (B),
note that by the inductive hypothesis,
we have the bounds
$|\lambda_{\ell}^{*,(2k)} - \lambda_{\ell}^{*,(2k+2)}|
\le \frac{b_{\ell}^{(2k)}}{n^{1/\nu^{\ell}}}$
for $\ell\in [j-1]$.
By
\Cref{lem:g-star-pert},
if we define
\begin{align*}
    \tilde b_j^{(2k)} := V^{\mathrm{max}}_j \sum_{\ell\in [j-1]\setminus \mathcal{I}_{\mathrm{discrete}}} K_{\ell} b_{\ell}^{(2k)}, 
\end{align*}
then we have the uniform control
\begin{align*}
    \| g_j^*(\cdot; \lambda_{1:(j-1)}^{*,(2k)}) - g_j^*(\cdot; \lambda_{1:(j-1)}^{*,(2k+2)}) \|_{L^{\infty}(\R)} \le \frac{\tilde b_j^{(2k)}}{n^{1/\nu^{(j-1)}}}.
\end{align*}
By
\Cref{eq:two-sided-crossing}
from
\Cref{lem:crossing-points},
this implies that
\begin{align*}
    U_j^*\left( 
    \lambda_{1:(j-1)}^{*,(2k+2)} ; \beta_j^{(k)} 
    + 
    \frac{\tilde b_j^{(2k)}}{n^{1/\nu^{(j-1)}}} 
    \right) 
    \le 
    U_j^*(\lambda_{1:(j-1)}^{*,(2k)} ; \beta_j^{(k)} ) 
    \le U_j^*\left( 
    \lambda_{1:(j-1)}^{*,(2k+2)} ; \beta_j^{(k)} 
    - 
    \frac{\tilde b_j^{(2k)}}{n^{1/\nu^{(j-1)}}} 
    \right).
\end{align*}
Thus,
\begin{align*}
    &U_j^*\left( 
    \lambda_{1:(j-1)}^{*,(2k+2)} ; \beta_j^{(k)} 
    + 
    \frac{\tilde b_j^{(2k)}}{n^{1/\nu^{(j-1)}}} 
    \right) 
    - U_j^*(\lambda_{1:(j-1)}^{*,(2k+2)} ; \beta_j^{(k)} ) \\
    &\le U_j^*(\lambda_{1:(j-1)}^{*,(2k)} ; \beta_j^{(k)} ) 
    - U_j^*(\lambda_{1:(j-1)}^{*,(2k+2)} ; \beta_j^{(k)} ) \\
    &\le U_j^*\left( 
    \lambda_{1:(j-1)}^{*,(2k+2)} ; \beta_j^{(k)} 
    - 
    \frac{\tilde b_j^{(2k)}}{n^{1/\nu^{(j-1)}}} 
    \right) 
    - U_j^*(\lambda_{1:(j-1)}^{*,(2k+2)} ; \beta_j^{(k)} ).
\end{align*}
Setting
$\ep_j = \beta_j^{(k)} - \beta_j$,
$\ep_j' = 
\pm
\frac{\tilde b_j^{(2k)}}{n^{1/\nu^{(j-1)}}}$,
and
$\alpha_{1:(j-1)}
= \lambda_{1:(j-1)}^{*,(2k+2)} - \lambda_{1:(j-1)}^{*,(2)}$
as in Term (A) above,
since
$|\ep_j| = k\delta_j = k \frac{V^{\mathrm{max}}_j-V^{\mathrm{min}}_j}{n+1}$,
$|\ep_j'|$,
and
$\|\alpha_{1:(j-1)}\|_1$
tend to zero
as $n\to \infty$,
we may apply \Cref{lem:crossing-pert}
for sufficiently large $n$
to deduce the bounds
\begin{align*}
    \left| U_j^*\left( \lambda_{1:(j-1)}^{*,(2k+2)} ; \beta_j^{(k)} 
    + 
    \frac{\tilde b_j^{(2k)}}{n^{1/\nu^{(j-1)}}} 
    \right) 
    - U_j^*(\lambda_{1:(j-1)}^{*,(2k+2)} ; \beta_j^{(k)} ) \right|
    \le 
    \mu_j \left( \frac{\tilde b_j^{(2k)}}{n^{1/\nu^{(j-1)}}} \right)^{1/\nu}
    =
    \mu_j 
    \frac{(\tilde b_j^{(2k)})^{1/\nu}}{n^{1/\nu^j}}
\end{align*}
and
\begin{align*}
    \left| U_j^*\left( 
    \lambda_{1:(j-1)}^{*,(2k+2)} ; \beta_j^{(k)} - 
    \frac{\tilde b_j^{(2k)}}{n^{1/\nu^{(j-1)}}}  
    \right) 
    - U_j^*(\lambda_{1:(j-1)}^{*,(2k+2)} ; \beta_j^{(k)} ) \right|
    \le \mu_j \left( \frac{\tilde b_j^{(2k)}}{n^{1/\nu^{(j-1)}}} \right)^{1/\nu}
    = \mu_j 
    \frac{(\tilde b_j^{(2k)})^{1/\nu}}{n^{1/\nu^j}},
\end{align*}
so that Term (B) is bounded by
$\mu_j 
\frac{(\tilde b_j^{(2k)})^{1/\nu}}{n^{1/\nu^j}}$.
Plugging these bounds into
\Cref{eq:prelim-decomp},
we deduce that
\begin{align*}
    |\lambda_j^{*,(2k)} - \lambda_j^{*,(2k+2)}|
    \le 
    \mu_j 
    (V^{\mathrm{max}}_j-V^{\mathrm{min}}_j)^{1/\nu} 
    \frac{1}{n^{1/\nu}} 
    + 
    \mu_j 
    \frac{(\tilde b_j^{(2k)})^{1/\nu}}{n^{1/\nu^j}}
    \le \frac{b_j^{(2k)}}{n^{1/\nu^j}},
\end{align*}
where we defined
$b_j^{(2k)}
:=
\mu_j 
(V^{\mathrm{max}}_j-V^{\mathrm{min}}_j)^{1/\nu} 
+ \mu_j 
(\tilde b_j^{(2k)})^{1/\nu}$.
If $j\in \mathcal{I}_{\mathrm{discrete}}$,
then by definition,
$\mu_j = 0$,
and so $b_j^{(2k)} = 0$.
The induction is complete.


Having completed the preliminary induction,
in the following sections,
we perform the main induction.

\subsubsection{Main induction: base case}\label{subsubsec:base-case}

We begin with the base case of 
$j=1$ and $k\in [m-j+1] = [m]$.
Recall that
$\hat \lambda_1^{(2k)} - \lambda_1^{*,(2k)} = U_1^+(\beta_1^{(k-1)}) - U_1^*(\beta_1^{(k-1)})$.
By
\Cref{lem:emp-proc-bd},
we have the uniform control
$\|g_1^+(\cdot) - g_1^*(\cdot)\|_{L^{\infty}(\R)} \le r_1 \sqrt{\frac{\log n}{n}}$
whp.
Thus, by
\Cref{eq:two-sided-crossing}
from
\Cref{lem:crossing-points},
we have
\begin{align*}
    U_1^*\left(\beta_1^{(k-1)} + r_1 \sqrt{\frac{\log n}{n}} \right) \le U_1^+(\beta_1^{(k-1)}) \le U_1^*\left(\beta_1^{(k-1)} - r_1 \sqrt{\frac{\log n}{n}} \right)
\end{align*}
whp.

Thus,
\begin{align}\label{eq:base-derandom-sandwich}
    &U_1^*\left(\beta_1^{(k-1)} + r_1 \sqrt{\frac{\log n}{n}} \right) - U_1^*(\beta_1^{(k-1)}) \\
    &\le U_1^+(\beta_1^{(k-1)}) - U_1^*(\beta_1^{(k-1)}) \nonumber \\
    &\le U_1^*\left(\beta_1^{(k-1)} - r_1 \sqrt{\frac{\log n}{n}} \right) - U_1^*(\beta_1^{(k-1)}) \nonumber 
\end{align}
whp.
Setting
$\ep_j = \beta_1^{(k-1)} - \beta_1$
and
$\ep_j' = \pm r_1 \sqrt{\frac{\log n}{n}}$,
since
$|\ep_j| = (k-1) \delta_1 = (k-1) \frac{V^{\mathrm{max}}_1-V^{\mathrm{min}}_1}{n+1}$
and
$|\ep_j'|$
tend to zero
as $n\to \infty$,
we may apply
\Cref{lem:crossing-pert}
for sufficiently large $n$
to deduce the bounds
\begin{align*}
    \left| U_1^*\left(\beta_1^{(k-1)} + r_1 \sqrt{\frac{\log n}{n}} \right) - U_1^*(\beta_1^{(k-1)}) \right|
    \le 
    \mu_1 
    \left( 
    r_1 \sqrt{\frac{\log n}{n}} \right)^{1/\nu}
    =
    \mu_1 r_1^{1/\nu}
    \left( \frac{\log n}{n} \right)^{1/(2\nu)}
\end{align*}
and
\begin{align*}
    \left| U_1^*\left(\beta_1^{(k-1)} - r_1 \sqrt{\frac{\log n}{n}} \right) - U_1^*(\beta_1^{(k-1)})  \right|
    \le 
    \mu_1 
    \left( 
    r_1 \sqrt{\frac{\log n}{n}} \right)^{1/\nu}
    =
    \mu_1 r_1^{1/\nu}
    \left( \frac{\log n}{n} \right)^{1/(2\nu)}
    ,
\end{align*}
which when plugged into
\Cref{eq:base-derandom-sandwich}
imply that
\begin{align*}
    |\hat \lambda_1^{(2k)} - \lambda_1^{*,(2k)} | := |U_1^+(\beta_1^{(k-1)}) - U_1^*(\beta_1^{(k-1)})|
    \le 
    \mu_1 r_1^{1/\nu}
    \left( \frac{\log n}{n} \right)^{1/(2\nu)}
    =: c_1^{(2k)} 
    \left( \frac{\log n}{n} \right)^{1/(2\nu)}
\end{align*}
whp,
where we defined
$c_1^{(2k)} := \mu_1 r_1^{1/\nu}$.
If $1\in \mathcal{I}_{\mathrm{discrete}}$,
then by definition,
$\mu_1=0$,
and so $c_1^{(2k)} = 0$.
This proves the base case.

\subsubsection{Main induction: inductive step, decomposition}\label{subsubsec:inductive-step}

Now suppose that
for some $j\in \{2, \ldots, m\}$,
for each $\ell\in [j-1]$,
and for each $k\in [m-\ell+1]$,
we have shown the result.
We show the result for $j$.

Recall that $\hat \lambda_j^{(2k)} = U_j^+(\hat \lambda_{1:(j-1)}^{(2k+2)}; \beta_j^{(k-1)})$,
so that
\begin{align}\label{eq:lambda-1-decomp}
    \hat \lambda_j^{(2k)} - \lambda_j^{*,(2k)} &= U_j^+(\hat \lambda_{1:(j-1)}^{(2k+2)}; \beta_j^{(k-1)}) - U_j^*(\lambda_{1:(j-1)}^{*,(2k)}; \beta_j^{(k-1)}) \\
    &=  (U_j^*(\hat \lambda_{1:(j-1)}^{(2k+2)}; \beta_j^{(k-1)}) - U_j^*(\lambda_{1:(j-1)}^{*,(2k+2)}; \beta_j^{(k-1)})) \nonumber \\
    &+ (U_j^+(\hat \lambda_{1:(j-1)}^{(2k+2)}; \beta_j^{(k-1)}) - U_j^*(\hat \lambda_{1:(j-1)}^{(2k+2)}; \beta_j^{(k-1)})) \nonumber \\
    &+  (U_j^*(\lambda_{1:(j-1)}^{*,(2k+2)}; \beta_j^{(k-1)}) - U_j^*(\lambda_{1:(j-1)}^{*,(2k)}; \beta_j^{(k-1)})) \nonumber \\
    &=: \text{Term (I)} + \text{Term (II)} + \text{Term (III)}. \nonumber
\end{align}
In the following sections,
we analyze each term in turn,
and then recombine at the end.

\subsubsection{Main induction: inductive step, Term (I)}

To bound Term (I),
we need uniform control of
$g_j^*(\cdot; \hat \lambda_{1:(j-1)}^{(2k+2)}) - g_j^*(\cdot; \lambda_{1:(j-1)}^{*,(2k+2)})$.
To do this,
we use monotonicity and induction.
By the inductive hypothesis,
$|\hat \lambda_{\ell}^{(2k+2)} - \lambda_{\ell}^{*,(2k+2)}| 
\le 
c_{\ell}^{(2k+2)} 
\left( \frac{\log n}{n} \right)^{1/(2\nu^{\ell})}$
whp
for each $\ell\in [j-1]$,
so that
\begin{align*}
    \hat \lambda_{\ell}^{(2k+2)} 
    \in 
    \left[ 
    \lambda_{\ell}^{*,(2k+2)} 
    - 
    c_{\ell}^{(2k+2)} 
    \left( \frac{\log n}{n} \right)^{1/(2\nu^{\ell})}
    ,
    \lambda_{\ell}^{*,(2k+2)} 
    + 
    c_{\ell}^{(2k+2)} 
    \left( \frac{\log n}{n} \right)^{1/(2\nu^{\ell})}
    \right]
\end{align*}
whp
for each $\ell\in [j-1]$.
Since by
\Cref{lem:pop-monot},
for $\lambda_{1:(j-1)}\in \R^{j-1}$
the map
$\lambda_{1:(j-1)}\mapsto g_j^*(\lambda_j; \lambda_{1:(j-1)})$
is $\nea$ in each argument,
and since
$\left( \frac{\log n}{n} \right)^{1/(2\nu^{\ell})}
\le 
\left( \frac{\log n}{n} \right)^{1/(2\nu^{j-1})}$
for $\ell\in [j-1]$,
we have the bounds
\begin{align*}
    g_j^*(\cdot; \hat \lambda_{1:(j-1)}^{(2k+2)}) - g_j^*(\cdot; \lambda_{1:(j-1)}^{*,(2k+2)}) 
    &\in 
    \bigg[   
    g_j^*\left( 
    \cdot; \lambda_{1:(j-1)}^{*,(2k+2)} 
    - c_{1:(j-1)}^{(2k+2)} 
    \left( \frac{\log n}{n} \right)^{1/(2\nu^{j-1})} 
    \right) 
    - g_j^*(\cdot; \lambda_{1:(j-1)}^{*,(2k+2)}), \\ 
    & g_j^*\left( 
    \cdot; \lambda_{1:(j-1)}^{*,(2k+2)} 
    + c_{1:(j-1)}^{(2k+2)} 
    \left( \frac{\log n}{n} \right)^{1/(2\nu^{j-1})}  
    \right) 
    - g_j^*(\cdot; \lambda_{1:(j-1)}^{*,(2k+2)}) \bigg]
\end{align*}
whp.
By 
\Cref{lem:g-star-pert},
if we define
\begin{align}\label{eq:C-j-def}
    C_j^{(2k+2)} := V^{\mathrm{max}}_j \sum_{\ell\in [j-1]\setminus \mathcal{I}_{\mathrm{discrete}}} K_{\ell} 2c_{\ell}^{(2k+2)},
\end{align}
then we have
the uniform control
\begin{align}\label{eq:star-minus-star-plus-ctrl}
    \left\| g_j^*\left( 
    \cdot; \lambda_{1:(j-1)}^{*,(2k+2)} 
    + c_{1:(j-1)}^{(2k+2)} 
    \left( \frac{\log n}{n} \right)^{1/(2\nu^{j-1})} 
    \right) 
    - g_j^*\left( \cdot; \lambda_{1:(j-1)}^{*,(2k+2)} \right) \right\|_{L^{\infty}(\R)}
    \le 
    \frac{1}{2} C_j^{(2k+2)} 
    \left( \frac{\log n}{n} \right)^{1/(2\nu^{j-1})}
\end{align}
and
\begin{align}\label{eq:star-minus-star-minus-ctrl}
    \left\| g_j^*\left( 
    \cdot; \lambda_{1:(j-1)}^{*,(2k+2)} 
    - c_{1:(j-1)}^{(2k+2)} 
    \left( \frac{\log n}{n} \right)^{1/(2\nu^{j-1})}
    \right) 
    - g_j^*\left( \cdot; \lambda_{1:(j-1)}^{*,(2k+2)} \right) \right\|_{L^{\infty}(\R)} 
    \le 
    \frac{1}{2} C_j^{(2k+2)} 
    \left( \frac{\log n}{n} \right)^{1/(2\nu^{j-1})}.
\end{align}
Then the previous display implies that
\begin{align*}
    g_j^*(\cdot; \hat \lambda_{1:(j-1)}^{(2k+2)}) - g_j^*(\cdot; \lambda_{1:(j-1)}^{*,(2k+2)}) &\in 
    \left[ 
    -\frac{1}{2} C_j^{(2k+2)} 
    \left( \frac{\log n}{n} \right)^{1/(2\nu^{j-1})} 
    , 
    \frac{1}{2} C_j^{(2k+2)} 
    \left( \frac{\log n}{n} \right)^{1/(2\nu^{j-1})} 
    \right]
\end{align*}
whp, as needed.

---

Finally, we show how this translates
to a bound on Term (I).
By 
\Cref{eq:two-sided-crossing}
from
\Cref{lem:crossing-points},
the previous bound implies
\begin{align*}
    &U_j^*\left( 
    \lambda_{1:(j-1)}^{*,(2k+2)}; 
    \beta_j^{(k-1)} 
    +\frac{1}{2} C_j^{(2k+2)} 
    \left( \frac{\log n}{n} \right)^{1/(2\nu^{j-1})} 
    \right) \\
    &\le U_j^*(\hat \lambda_{1:(j-1)}^{(2k+2)}; \beta_j^{(k-1)}) \\
    &\le U_j^*\left( 
    \lambda_{1:(j-1)}^{*,(2k+2)}; 
    \beta_j^{(k-1)} 
    - \frac{1}{2} C_j^{(2k+2)} 
    \left( \frac{\log n}{n} \right)^{1/(2\nu^{j-1})} 
    \right)
\end{align*}
whp.

Thus,
\begin{align*}
    &U_j^*\left( \lambda_{1:(j-1)}^{*,(2k+2)}; 
    \beta_j^{(k-1)} 
    + \frac{1}{2} C_j^{(2k+2)} 
    \left( \frac{\log n}{n} \right)^{1/(2\nu^{j-1})} 
    \right) 
    -  U_j^*(\lambda_{1:(j-1)}^{*,(2k+2)}; \beta_j^{(k-1)}) \\
    &\le 
    U_j^*(\hat \lambda_{1:(j-1)}^{(2k+2)}; \beta_j^{(k-1)}) - U_j^*(\lambda_{1:(j-1)}^{*,(2k+2)}; \beta_j^{(k-1)}) \\
    &\le 
    U_j^*\left( \lambda_{1:(j-1)}^{*,(2k+2)}; 
    \beta_j^{(k-1)} 
    - \frac{1}{2} C_j^{(2k+2)} 
    \left( \frac{\log n}{n} \right)^{1/(2\nu^{j-1})} 
    \right) 
    -  U_j^*(\lambda_{1:(j-1)}^{*,(2k+2)}; \beta_j^{(k-1)})
\end{align*}
whp.

Setting
$\ep_j = \beta_j^{(k-1)} - \beta_j$,
$\ep_j' 
= 
\pm 
\frac{1}{2} C_j^{(2k+2)}
\left( \frac{\log n}{n} \right)^{1/(2\nu^{j-1})}$,
and
$\alpha_{1:(j-1)}
= \lambda_{1:(j-1)}^{*,(2k+2)}
- \lambda_{1:(j-1)}^{*,(2)}$,
since by the preliminary induction
we have
$|\alpha_{\ell}| \le \sum_{k'=1}^{k} b_{\ell}^{(2k')}/n^{1/\nu^{\ell}}$
with $b_{\ell}^{(2k')} = 0$ if $\ell\in [j-1]\cap \mathcal{I}_{\mathrm{discrete}}$,
and since
$|\ep_j| = (k-1) \delta_j = (k-1) \frac{V^{\mathrm{max}}_j-V^{\mathrm{min}}_j}{n+1}$,
$|\ep_j'|$,
and $\|\alpha_{1:(j-1)}\|_1$
tend to zero
as $n\to \infty$,
we may apply 
\Cref{lem:crossing-pert}
for sufficiently large $n$
to deduce the bounds
\begin{align*}
    &\left| U_j^*\left( \lambda_{1:(j-1)}^{*,(2k+2)}; \beta_j^{(k-1)} 
    + 
    \frac{1}{2} C_j^{(2k+2)} 
    \left( \frac{\log n}{n} \right)^{1/(2\nu^{j-1})} 
    \right) 
    -  U_j^*(\lambda_{1:(j-1)}^{*,(2k+2)}; \beta_j^{(k-1)}) \right| \\
    &\le \mu_j 
    \left( 
    \frac{1}{2} C_j^{(2k+2)} 
    \left( \frac{\log n}{n} \right)^{1/(2\nu^{j-1})} \right)^{1/\nu} 
    = 
    \mu_j 
    \left( \frac{1}{2} C_j^{(2k+2)} \right)^{1/\nu}
    \left( \frac{\log n}{n} \right)^{1/(2\nu^{j})}
\end{align*}
whp
and
\begin{align*}
    &\left| U_j^*\left( \lambda_{1:(j-1)}^{*,(2k+2)}; \beta_j^{(k-1)} 
    - 
    \frac{1}{2} C_j^{(2k+2)} 
    \left( \frac{\log n}{n} \right)^{1/(2\nu^{j-1})} 
    \right) 
    -  U_j^*(\lambda_{1:(j-1)}^{*,(2k+2)}; \beta_j^{(k-1)}) \right| \\
    &\le \mu_j 
    \left( 
    \frac{1}{2} C_j^{(2k+2)} 
    \left( \frac{\log n}{n} \right)^{1/(2\nu^{j-1})} \right)^{1/\nu} 
    = 
    \mu_j 
    \left( \frac{1}{2} C_j^{(2k+2)} \right)^{1/\nu}
    \left( \frac{\log n}{n} \right)^{1/(2\nu^{j})}
\end{align*}
whp,
so that
\begin{align}\label{eq:term-i-final-bd}
    |\text{Term (I)}|
    \le
    \mu_j 
    \left( \frac{1}{2} C_j^{(2k+2)} \right)^{1/\nu}
    \left( \frac{\log n}{n} \right)^{1/(2\nu^{j})}
    =: 
    C_{I, j}^{(2k)} 
    \left( \frac{\log n}{n} \right)^{1/(2\nu^{j})}
\end{align}
whp,
where we defined
$C_{I,j}^{(2k)}
:= 
\mu_j 
\left( \frac{1}{2} C_j^{(2k+2)} \right)^{1/\nu}$.
We see that Term (I) is indeed $O(\left( \frac{\log n}{n} \right)^{1/(2\nu^{j})})$ whp.

\subsubsection{Main induction: inductive step, Term (II)}

To bound Term (II),
we need uniform control of
$g_j^+(\cdot; \hat \lambda_{1:(j-1)}^{(2k+2)}) - g_j^*(\cdot; \hat \lambda_{1:(j-1)}^{(2k+2)})$.
First, we claim that it suffices 
to control the difference
$g_j^+(\cdot; \lambda_{1:(j-1)}) - g_j^*(\cdot; \lambda_{1:(j-1)})$
for nonrandom $\lambda_{1:(j-1)}\in \R^{j-1}$.
To see this,
we use monotonicity
and induction.
By the inductive hypothesis,
$|\hat \lambda_{\ell}^{(2k+2)} - \lambda_{\ell}^{*,(2k+2)}| 
\le 
c_{\ell}^{(2k+2)} 
\left( \frac{\log n}{n} \right)^{1/(2\nu^{\ell})}$
whp
for each $\ell\in [j-1]$,
so that
\begin{align*}
    \hat \lambda_{\ell}^{(2k+2)} \in \left[ \lambda_{\ell}^{*,(2k+2)} 
    - 
    c_{\ell}^{(2k+2)} 
    \left( \frac{\log n}{n} \right)^{1/(2\nu^{\ell})}
    , 
    \lambda_{\ell}^{*,(2k+2)} 
    + 
    c_{\ell}^{(2k+2)} 
    \left( \frac{\log n}{n} \right)^{1/(2\nu^{\ell})} 
    \right]
\end{align*}
whp
for each $\ell\in [j-1]$.
Since by
\Cref{lem:emp-monot},
for $\lambda_j\in \R$
the map
$\lambda_{1:(j-1)} \mapsto g_j^+(\lambda_j; \lambda_{1:(j-1)})$
is $\nea$ in each argument
a.s.,
since by
\Cref{lem:pop-monot},
for $\lambda_j\in \R$
the map
$\lambda_{1:(j-1)}\mapsto g_j^*(\lambda_j; \lambda_{1:(j-1)})$
is $\nea$ in each argument,
and since
$\left( \frac{\log n}{n} \right)^{1/(2\nu^{\ell})}
\le 
\left( \frac{\log n}{n} \right)^{1/(2\nu^{j-1})}$
for $\ell\in [j-1]$,
we have the bounds
\begin{align*}
     &g_j^+(\cdot; \hat \lambda_{1:(j-1)}^{(2k+2)}) - g_j^*(\cdot; \hat \lambda_{1:(j-1)}^{(2k+2)}) \\
     &\in \bigg[ g_j^+\left( 
     \cdot; 
     \lambda_{1:(j-1)}^{*,(2k+2)} 
     - c_{1:(j-1)}^{(2k+2)} 
     \left( \frac{\log n}{n} \right)^{1/(2\nu^{j-1})} 
     \right) 
     - g_j^*\left( 
     \cdot; 
     \lambda_{1:(j-1)}^{*,(2k+2)} 
     + c_{1:(j-1)}^{(2k+2)} 
     \left( \frac{\log n}{n} \right)^{1/(2\nu^{j-1})} 
     \right), \\  
     & g_j^+\left( 
     \cdot; 
     \lambda_{1:(j-1)}^{*,(2k+2)} 
     + c_{1:(j-1)}^{(2k+2)} 
     \left( \frac{\log n}{n} \right)^{1/(2\nu^{j-1})} 
     \right) 
     - g_j^*\left( 
     \cdot; 
     \lambda_{1:(j-1)}^{*,(2k+2)} 
     - c_{1:(j-1)}^{(2k+2)} 
     \left( \frac{\log n}{n} \right)^{1/(2\nu^{j-1})} 
     \right) \bigg]
\end{align*}
whp.
We massage the two $g_j^*(\cdot;\cdot)$ terms,
just as we did in the analysis of Term (I).
Combining
\Cref{eq:star-minus-star-plus-ctrl}
and
\Cref{eq:star-minus-star-minus-ctrl}
with the previous display,
we obtain
\begin{align}\label{eq:term-ii-setup}
    & g_j^+(\cdot; \hat \lambda_{1:(j-1)}^{(2k+2)}) - g_j^*(\cdot; \hat \lambda_{1:(j-1)}^{(2k+2)}) \\
    &\in 
    \bigg[ 
    g_j^+\left( \cdot; \lambda_{1:(j-1)}^{*,(2k+2)} 
    - c_{1:(j-1)}^{(2k+2)} 
    \left( \frac{\log n}{n} \right)^{1/(2\nu^{j-1})} 
    \right) 
    - g_j^*\left( 
    \cdot; 
    \lambda_{1:(j-1)}^{*,(2k+2)} 
    - c_{1:(j-1)}^{(2k+2)} 
    \left( \frac{\log n}{n} \right)^{1/(2\nu^{j-1})} 
    \right) \nonumber \\
    &- C_j^{(2k+2)} 
    \left( \frac{\log n}{n} \right)^{1/(2\nu^{j-1})}, 
    \nonumber \\
     & g_j^+\left( 
     \cdot; 
     \lambda_{1:(j-1)}^{*,(2k+2)} 
     + c_{1:(j-1)}^{(2k+2)} 
     \left( \frac{\log n}{n} \right)^{1/(2\nu^{j-1})} 
     \right) 
     - g_j^*\left( 
     \cdot; 
     \lambda_{1:(j-1)}^{*,(2k+2)} 
     + c_{1:(j-1)}^{(2k+2)} 
     \left( \frac{\log n}{n} \right)^{1/(2\nu^{j-1})} 
     \right) \nonumber \\
     &+ C_j^{(2k+2)} 
     \left( \frac{\log n}{n} \right)^{1/(2\nu^{j-1})} 
     \bigg] \nonumber 
\end{align}
whp.
If we can achieve 
$O(\left( \frac{\log n}{n} \right)^{1/(2\nu^{j-1})})$ 
control
whp
in the nonrandom case when
$\lambda_{1:(j-1)} 
= \lambda_{1:(j-1)}^{*,(2k+2)} 
+ c_{1:(j-1)}^{(2k+2)} 
\left( \frac{\log n}{n} \right)^{1/(2\nu^{j-1})}$
and when 
$\lambda_{1:(j-1)} 
= \lambda_{1:(j-1)}^{*,(2k+2)} 
- c_{1:(j-1)}^{(2k+2)} 
\left( \frac{\log n}{n} \right)^{1/(2\nu^{j-1})}$,
then this interval lies within 
$O(\left( \frac{\log n}{n} \right)^{1/(2\nu^{j-1})})$
of the origin whp, as claimed.

By \Cref{lem:emp-proc-bd}
with 
$\lambda_{1:(j-1)} 
= \lambda_{1:(j-1)}^{*,(2k+2)} 
+ c_{1:(j-1)}^{(2k+2)} 
\left( \frac{\log n}{n} \right)^{1/(2\nu^{j-1})}$,
we have
\begin{align*}
    &\left\| g_j^+\left( 
    \cdot; 
    \lambda_{1:(j-1)}^{*,(2k+2)} 
    + c_{1:(j-1)}^{(2k+2)} 
    \left( \frac{\log n}{n} \right)^{1/(2\nu^{j-1})} 
    \right) 
    - g_j^*\left( 
    \cdot; 
    \lambda_{1:(j-1)}^{*,(2k+2)} 
    + c_{1:(j-1)}^{(2k+2)} 
    \left( \frac{\log n}{n} \right)^{1/(2\nu^{j-1})} 
    \right) \right\|_{L^{\infty}(\R)} 
    \\
    &\le r_j \sqrt{\frac{\log n}{n}}
\end{align*}
whp.
Similarly,
by \Cref{lem:emp-proc-bd}
with 
$\lambda_{1:(j-1)} 
= \lambda_{1:(j-1)}^{*,(2k+2)} 
- c_{1:(j-1)}^{(2k+2)} 
\left( \frac{\log n}{n} \right)^{1/(2\nu^{j-1})}$,
we also have
\begin{align*}
    &\left\| g_j^+\left( 
    \cdot; 
    \lambda_{1:(j-1)}^{*,(2k+2)} 
    - c_{1:(j-1)}^{(2k+2)} 
    \left( \frac{\log n}{n} \right)^{1/(2\nu^{j-1})} 
    \right) 
    - g_j^*\left( 
    \cdot; 
    \lambda_{1:(j-1)}^{*,(2k+2)} 
    - c_{1:(j-1)}^{(2k+2)} 
    \left( \frac{\log n}{n} \right)^{1/(2\nu^{j-1})} 
    \right) \right\|_{L^{\infty}(\R)} \\
    &\le r_j \sqrt{\frac{\log n}{n}}
\end{align*}
whp.
Plugging these bounds into \Cref{eq:term-ii-setup},
and using the fact that
$\sqrt{\frac{\log n}{n}} 
\le 
\left( \frac{\log n}{n} \right)^{1/(2\nu^{j-1})}$,
we deduce that
\begin{align*}
    \|  g_j^+(\cdot; \hat \lambda_{1:(j-1)}^{(2k+2)}) - g_j^*(\cdot; \hat \lambda_{1:(j-1)}^{(2k+2)}) \|_{L^{\infty}(\R)} 
    \le 
    (r_j + C_j^{(2k+2)}) 
    \left( \frac{\log n}{n} \right)^{1/(2\nu^{j-1})}
\end{align*}
whp.

---

Finally, we show how this translates
into a bound on Term (II).
By 
\Cref{eq:two-sided-crossing}
from
\Cref{lem:crossing-points},
the previous bound implies
\begin{align*}
    &U_j^*\left( \hat \lambda_{1:(j-1)}^{(2k+2)};
    \beta_j^{(k-1)} 
    + (r_j + C_j^{(2k+2)}) 
    \left( \frac{\log n}{n} \right)^{1/(2\nu^{j-1})} 
    \right) \\
    &\le U_j^+(\hat \lambda_{1:(j-1)}^{(2k+2)}; \beta_j^{(k-1)}) \\
    &\le U_j^*\left( \hat \lambda_{1:(j-1)}^{(2k+2)}; 
    \beta_j^{(k-1)} 
    - (r_j + C_j^{(2k+2)}) 
    \left( \frac{\log n}{n} \right)^{1/(2\nu^{j-1})} 
    \right)
\end{align*}
whp,
so that
\begin{align*}
    & U_j^*\left( \hat \lambda_{1:(j-1)}^{(2k+2)}; 
    \beta_j^{(k-1)} 
    + (r_j + C_j^{(2k+2)}) 
    \left( \frac{\log n}{n} \right)^{1/(2\nu^{j-1})} 
    \right) 
    -  U_j^*(\hat \lambda_{1:(j-1)}^{(2k+2)}; \beta_j^{(k-1)}) \\
    &\le U_j^+(\hat \lambda_{1:(j-1)}^{(2k+2)}; \beta_j^{(k-1)}) -  U_j^*(\hat \lambda_{1:(j-1)}^{(2k+2)}; \beta_j^{(k-1)}) \\
    &\le U_j^*\left( \hat \lambda_{1:(j-1)}^{(2k+2)}; 
    \beta_j^{(k-1)} 
    - (r_j + C_j^{(2k+2)}) 
    \left( \frac{\log n}{n} \right)^{1/(2\nu^{j-1})} 
    \right) 
    - U_j^*(\hat \lambda_{1:(j-1)}^{(2k+2)}; \beta_j^{(k-1)})
\end{align*}
whp.
Since
\begin{align*}
    \hat \lambda_{\ell}^{(2k+2)} \in 
    \left[ \lambda_{\ell}^{*,(2k+2)} 
    - c_{\ell}^{(2k+2)} 
    \left( \frac{\log n}{n} \right)^{1/(2\nu^{\ell})}
    , 
    \lambda_{\ell}^{*,(2k+2)} 
    + c_{\ell}^{(2k+2)} 
    \left( \frac{\log n}{n} \right)^{1/(2\nu^{\ell})} 
    \right]
\end{align*}
whp
for each $\ell\in [j-1]$,
since by \Cref{lem:pop-monot}
$U_j^*$ is $\nea$ in its first $j-1$ arguments,
and since
$\left( \frac{\log n}{n} \right)^{1/(2\nu^{\ell})}
\le 
\left( \frac{\log n}{n} \right)^{1/(2\nu^{j-1})}$
for $\ell\in [j-1]$,
this implies that
\begin{align}\label{eq:term-ii-random-sandwich}
    & U_j^*\left( \lambda_{1:(j-1)}^{*,(2k+2)} 
    - c_{1:(j-1)}^{(2k+2)} 
    \left( \frac{\log n}{n} \right)^{1/(2\nu^{j-1})} 
    ; 
    \beta_j^{(k-1)} 
    + (r_j + C_j^{(2k+2)}) 
    \left( \frac{\log n}{n} \right)^{1/(2\nu^{j-1})}  
    \right) \\
    &-  U_j^*\left( \lambda_{1:(j-1)}^{*,(2k+2)} 
    + c_{1:(j-1)}^{(2k+2)} 
    \left( \frac{\log n}{n} \right)^{1/(2\nu^{j-1})} 
    ; \beta_j^{(k-1)} \right) \nonumber \\
    &\le U_j^+(\hat \lambda_{1:(j-1)}^{(2k+2)}; \beta_j^{(k-1)}) -  U_j^*(\hat \lambda_{1:(j-1)}^{(2k+2)}; \beta_j^{(k-1)}) \nonumber \\
    &\le U_j^*\left( \lambda_{1:(j-1)}^{*,(2k+2)} 
    + c_{1:(j-1)}^{(2k+2)} 
    \left( \frac{\log n}{n} \right)^{1/(2\nu^{j-1})} 
    ; 
    \beta_j^{(k-1)} 
    - (r_j + C_j^{(2k+2)}) 
    \left( \frac{\log n}{n} \right)^{1/(2\nu^{j-1})}  
    \right) \nonumber \\
    &- U_j^*\left( \lambda_{1:(j-1)}^{*,(2k+2)} 
    - c_{1:(j-1)}^{(2k+2)} 
    \left( \frac{\log n}{n} \right)^{1/(2\nu^{j-1})} 
    ; \beta_j^{(k-1)} \right) \nonumber 
\end{align}
whp.

First,
we massage the lower bound
in \Cref{eq:term-ii-random-sandwich}.
Setting
$\ep_j = \beta_j^{(k-1)} - \beta_j$,
$\ep_j' = 
(r_j + C_j^{(2k+2)}) 
\left( \frac{\log n}{n} \right)^{1/(2\nu^{j-1})} $,
and
$\alpha_{1:(j-1)}
= \lambda_{1:(j-1)}^{*,(2k+2)}
- c_{1:(j-1)}^{(2k+2)} 
\left( \frac{\log n}{n} \right)^{1/(2\nu^{j-1})} 
- \lambda_{1:(j-1)}^{*,(2)}$,
since by
the preliminary induction
and
the inductive hypothesis 
we have
\begin{align*}
|\alpha_{\ell}|
\le
\sum_{k'=1}^{k} b_{\ell}^{(2k')}/n^{1/\nu^{\ell}}
+ c_{\ell}^{(2k+2)} 
\left( \frac{\log n}{n} \right)^{1/(2\nu^{j-1})}
\end{align*}
with $b_{\ell}^{(2k')} = 0$ if $\ell\in [j-1]\cap \mathcal{I}_{\mathrm{discrete}}$
and $c_{\ell}^{(2k+2)} = 0$ if $\ell\in [j-1]\cap \mathcal{I}_{\mathrm{discrete}}$,
and since
$|\ep_j| = (k-1) \delta_j = (k-1) \frac{V^{\mathrm{max}}_j-V^{\mathrm{min}}_j}{n+1}$,
$|\ep_j'|$,
and $\|\alpha_{1:(j-1)}\|_1$
tend to zero
as $n\to \infty$,
we may apply \Cref{lem:crossing-pert}
for sufficiently large $n$
to deduce the bounds
\begin{align*}
   &\bigg| U_j^*\left( \lambda_{1:(j-1)}^{*,(2k+2)} 
   - c_{1:(j-1)}^{(2k+2)} 
   \left( \frac{\log n}{n} \right)^{1/(2\nu^{j-1})} 
   ; 
   \beta_j^{(k-1)} 
   + (r_j + C_j^{(2k+2)}) 
   \left( \frac{\log n}{n} \right)^{1/(2\nu^{j-1})}  
   \right) \\
   &- U_j^*\left( \lambda_{1:(j-1)}^{*,(2k+2)} 
   - c_{1:(j-1)}^{(2k+2)} 
   \left( \frac{\log n}{n} \right)^{1/(2\nu^{j-1})} 
   ; 
   \beta_j^{(k-1)} \right) \bigg| \\
   &\le 
   \mu_j  
   \left( 
   (r_j + C_j^{(2k+2)}) 
   \left( \frac{\log n}{n} \right)^{1/(2\nu^{j-1})} 
   \right)^{1/\nu}
   = 
   \mu_j
   (r_j + C_j^{(2k+2)})^{1/\nu}
   \left( \frac{\log n}{n} \right)^{1/(2\nu^{j})}
\end{align*}
whp.

Similarly,
we massage the upper bound
in \Cref{eq:term-ii-random-sandwich}.
Setting
$\ep_j = \beta_j^{(k-1)} - \beta_j$,
$\ep_j' = 
(r_j + C_j^{(2k+2)}) 
\left( \frac{\log n}{n} \right)^{1/(2\nu^{j-1})}$,
and
$\alpha_{1:(j-1)}
= \lambda_{1:(j-1)}^{*,(2k+2)}
+ c_{1:(j-1)}^{(2k+2)} 
\left( \frac{\log n}{n} \right)^{1/(2\nu^{j-1})}
- \lambda_{1:(j-1)}^{*,(2)}$,
since by
the preliminary induction
and
the inductive hypothesis 
we have
\begin{align*}
|\alpha_{\ell}|
\le
\sum_{k'=1}^{k} b_{\ell}^{(2k')}/n^{1/\nu^{\ell}}
+ c_{\ell}^{(2k+2)} 
\left( \frac{\log n}{n} \right)^{1/(2\nu^{j-1})}
\end{align*}
with $b_{\ell}^{(2k')} = 0$ if $\ell\in [j-1]\cap \mathcal{I}_{\mathrm{discrete}}$
and $c_{\ell}^{(2k+2)} = 0$ if $\ell\in [j-1]\cap \mathcal{I}_{\mathrm{discrete}}$,
and since
$|\ep_j| = (k-1) \delta_j = (k-1) \frac{V^{\mathrm{max}}_j-V^{\mathrm{min}}_j}{n+1}$,
$|\ep_j'|$,
and $\|\alpha_{1:(j-1)}\|_1$
tend to zero
as $n\to \infty$,
we may apply \Cref{lem:crossing-pert}
for sufficiently large $n$
to deduce the bounds
\begin{align*}
   &\bigg| U_j^*\left( \lambda_{1:(j-1)}^{*,(2k+2)} 
   + c_{1:(j-1)}^{(2k+2)} 
   \left( \frac{\log n}{n} \right)^{1/(2\nu^{j-1})}
   ; 
   \beta_j^{(k-1)} 
   + (r_j + C_j^{(2k+2)}) 
   \left( \frac{\log n}{n} \right)^{1/(2\nu^{j-1})} 
   \right) \\
   &- U_j^*\left( \lambda_{1:(j-1)}^{*,(2k+2)} 
   + c_{1:(j-1)}^{(2k+2)} 
   \left( \frac{\log n}{n} \right)^{1/(2\nu^{j-1})}
   ; \beta_j^{(k-1)} \right) \bigg| \\
   &\le 
   \mu_j  
   \left( 
   (r_j + C_j^{(2k+2)}) 
   \left( \frac{\log n}{n} \right)^{1/(2\nu^{j-1})} 
   \right)^{1/\nu}
   = 
   \mu_j
   (r_j + C_j^{(2k+2)})^{1/\nu}
   \left( \frac{\log n}{n} \right)^{1/(2\nu^{j})}
\end{align*}
whp.

Plugging these into
\Cref{eq:term-ii-random-sandwich},
we deduce that
\begin{align}\label{eq:term-ii-derandom-sandwich}
    & U_j^*\left( \lambda_{1:(j-1)}^{*,(2k+2)} 
    - c_{1:(j-1)}^{(2k+2)} 
    \left( \frac{\log n}{n} \right)^{1/(2\nu^{j-1})}
    ; \beta_j^{(k-1)} \right) \\
    &-  U_j^*\left( \lambda_{1:(j-1)}^{*,(2k+2)} 
    + c_{1:(j-1)}^{(2k+2)} 
    \left( \frac{\log n}{n} \right)^{1/(2\nu^{j-1})}
    ; \beta_j^{(k-1)} \right) 
    - 
    \mu_j
   (r_j + C_j^{(2k+2)})^{1/\nu}
   \left( \frac{\log n}{n} \right)^{1/(2\nu^{j})} 
    \nonumber \\
    &\le U_j^+(\hat \lambda_{1:(j-1)}^{(2k+2)}; \beta_j^{(k-1)}) 
    - U_j^*(\hat \lambda_{1:(j-1)}^{(2k+2)}; \beta_j^{(k-1)}) \nonumber \\
    &\le U_j^*\left( \lambda_{1:(j-1)}^{*,(2k+2)} 
    + c_{1:(j-1)}^{(2k+2)} 
    \left( \frac{\log n}{n} \right)^{1/(2\nu^{j-1})}
    ; \beta_j^{(k-1)} \right) \nonumber \\
    &- U_j^*\left( \lambda_{1:(j-1)}^{*,(2k+2)} 
    - c_{1:(j-1)}^{(2k+2)} 
    \left( \frac{\log n}{n} \right)^{1/(2\nu^{j-1})}
    ; \beta_j^{(k-1)} \right) 
    + 
    \mu_j
   (r_j + C_j^{(2k+2)})^{1/\nu}
   \left( \frac{\log n}{n} \right)^{1/(2\nu^{j})} 
    \nonumber 
\end{align}
whp.

Finally,
by
\Cref{lem:g-star-pert},
we have the uniform control
\begin{align*}
    &\left| g_j^*\left( \cdot; \lambda_{1:(j-1)}^{*,(2k+2)} 
    + c_{1:(j-1)}^{(2k+2)} 
    \left( \frac{\log n}{n} \right)^{1/(2\nu^{j-1})} 
    \right) 
    - g_j^*\left( \cdot; \lambda_{1:(j-1)}^{*,(2k+2)} 
    - c_{1:(j-1)}^{(2k+2)} 
    \left( \frac{\log n}{n} \right)^{1/(2\nu^{j-1})} 
    \right) \right| \\
    &\le C_j^{(2k+2)} 
    \left( \frac{\log n}{n} \right)^{1/(2\nu^{j-1})}
\end{align*}
for $n\ge 1$,
so that by
\Cref{eq:two-sided-crossing}
from
\Cref{lem:crossing-points}
we have
\begin{align*}
    &U_j^*\left( \lambda_{1:(j-1)}^{*,(2k+2)} 
    + c_{1:(j-1)}^{(2k+2)} 
    \left( \frac{\log n}{n} \right)^{1/(2\nu^{j-1})}
    ; \beta_j^{(k-1)} 
    + C_j^{(2k+2)} 
    \left( \frac{\log n}{n} \right)^{1/(2\nu^{j-1})} 
    \right) \\
    &\le U_j^*\left( \lambda_{1:(j-1)}^{*,(2k+2)} 
    - c_{1:(j-1)}^{(2k+2)} 
    \left( \frac{\log n}{n} \right)^{1/(2\nu^{j-1})}
    ; \beta_j^{(k-1)} \right) \\
    &\le U_j^*\left( \lambda_{1:(j-1)}^{*,(2k+2)} 
    + c_{1:(j-1)}^{(2k+2)} 
    \left( \frac{\log n}{n} \right)^{1/(2\nu^{j-1})}
    ; \beta_j^{(k-1)} 
    - C_j^{(2k+2)} 
    \left( \frac{\log n}{n} \right)^{1/(2\nu^{j-1})} 
    \right),
\end{align*}
which in turn yields
\begin{align*}
    &U_j^*\left( \lambda_{1:(j-1)}^{*,(2k+2)} 
    + c_{1:(j-1)}^{(2k+2)} 
    \left( \frac{\log n}{n} \right)^{1/(2\nu^{j-1})}
    ; 
    \beta_j^{(k-1)} 
    + C_j^{(2k+2)} 
    \left( \frac{\log n}{n} \right)^{1/(2\nu^{j-1})} 
    \right) \\
    &- U_j^*\left( \lambda_{1:(j-1)}^{*,(2k+2)} 
    + c_{1:(j-1)}^{(2k+2)} 
    \left( \frac{\log n}{n} \right)^{1/(2\nu^{j-1})}
    ; \beta_j^{(k-1)} \right) \\
    &\le U_j^*\left( \lambda_{1:(j-1)}^{*,(2k+2)} 
    - c_{1:(j-1)}^{(2k+2)} 
    \left( \frac{\log n}{n} \right)^{1/(2\nu^{j-1})}
    ; \beta_j^{(k-1)} \right) \\
    &- U_j^*\left( \lambda_{1:(j-1)}^{*,(2k+2)} 
    + c_{1:(j-1)}^{(2k+2)} 
    \left( \frac{\log n}{n} \right)^{1/(2\nu^{j-1})}
    ; \beta_j^{(k-1)} \right) \\
    &\le U_j^*\left( \lambda_{1:(j-1)}^{*,(2k+2)} 
    + c_{1:(j-1)}^{(2k+2)} 
    \left( \frac{\log n}{n} \right)^{1/(2\nu^{j-1})}
    ; \beta_j^{(k-1)} 
    - C_j^{(2k+2)} 
    \left( \frac{\log n}{n} \right)^{1/(2\nu^{j-1})}
    \right) \\
    &- U_j^*\left( \lambda_{1:(j-1)}^{*,(2k+2)} 
    + c_{1:(j-1)}^{(2k+2)} 
    \left( \frac{\log n}{n} \right)^{1/(2\nu^{j-1})}
    ; \beta_j^{(k-1)} \right).
\end{align*}
Setting
$\ep_j = \beta_j^{(k-1)} - \beta_j$,
$\ep_j' = 
\pm C_j^{(2k+2)} 
\left( \frac{\log n}{n} \right)^{1/(2\nu^{j-1})}$,
and
$\alpha_{1:(j-1)}
= \lambda_{1:(j-1)}^{*,(2k+2)}
+ c_{1:(j-1)}^{(2k+2)} 
\left( \frac{\log n}{n} \right)^{1/(2\nu^{j-1})}
- \lambda_{1:(j-1)}^{*,(2)}$,
since by
the preliminary induction
and
the inductive hypothesis 
we have
\begin{align*}
|\alpha_{\ell}|
\le
\sum_{k'=1}^{k} b_{\ell}^{(2k')}/n^{1/\nu^{\ell}}
+ c_{\ell}^{(2k+2)} 
\left( \frac{\log n}{n} \right)^{1/(2\nu^{j-1})}
\end{align*}
with $b_{\ell}^{(2k')} = 0$ if $\ell\in [j-1]\cap \mathcal{I}_{\mathrm{discrete}}$
and $c_{\ell}^{(2k+2)} = 0$ if $\ell\in [j-1]\cap \mathcal{I}_{\mathrm{discrete}}$,
and since
$|\ep_j| = (k-1) \delta_j = (k-1) \frac{V^{\mathrm{max}}_j-V^{\mathrm{min}}_j}{n+1}$,
$|\ep_j'|$,
and $\|\alpha_{1:(j-1)}\|_1$
tend to zero
as $n\to \infty$,
we may apply \Cref{lem:crossing-pert}
for sufficiently large $n$
to deduce the bounds
\begin{align*}
    &\bigg| U_j^*\left( \lambda_{1:(j-1)}^{*,(2k+2)} 
    + c_{1:(j-1)}^{(2k+2)} 
    \left( \frac{\log n}{n} \right)^{1/(2\nu^{j-1})}
    ; \beta_j^{(k-1)} 
    + C_j^{(2k+2)} 
    \left( \frac{\log n}{n} \right)^{1/(2\nu^{j-1})} 
    \right) \\
    &- U_j^*\left( \lambda_{1:(j-1)}^{*,(2k+2)} 
    + c_{1:(j-1)}^{(2k+2)} 
    \left( \frac{\log n}{n} \right)^{1/(2\nu^{j-1})}
    ; \beta_j^{(k-1)} \right) \bigg|
    \le 
    \mu_j 
    \left(
    C_j^{(2k+2)}
    \left( \frac{\log n}{n} \right)^{1/(2\nu^{j-1})} 
    \right)^{1/\nu} \\
    &= 
    \mu_j 
    \left( C_j^{(2k+2)} \right)^{1/\nu}
    \left( \frac{\log n}{n} \right)^{1/(2\nu^{j})}
\end{align*}
and
\begin{align*}
    &\bigg| U_j^*\left( \lambda_{1:(j-1)}^{*,(2k+2)} 
    + c_{1:(j-1)}^{(2k+2)} 
    \left( \frac{\log n}{n} \right)^{1/(2\nu^{j-1})}
    ; \beta_j^{(k-1)} 
    - C_j^{(2k+2)} 
    \left( \frac{\log n}{n} \right)^{1/(2\nu^{j-1})} 
    \right) \\
    &- U_j^*\left( \lambda_{1:(j-1)}^{*,(2k+2)} 
    + c_{1:(j-1)}^{(2k+2)} 
    \left( \frac{\log n}{n} \right)^{1/(2\nu^{j-1})}
    ; \beta_j^{(k-1)} \right) \bigg|
    \le 
    \mu_j 
    \left(
    C_j^{(2k+2)}
    \left( \frac{\log n}{n} \right)^{1/(2\nu^{j-1})} 
    \right)^{1/\nu} \\
    &= 
    \mu_j 
    \left( C_j^{(2k+2)} \right)^{1/\nu}
    \left( \frac{\log n}{n} \right)^{1/(2\nu^{j})}
\end{align*}
it follows that 
\begin{align*}
    &\bigg| U_j^*\left( \lambda_{1:(j-1)}^{*,(2k+2)} 
    + c_{1:(j-1)}^{(2k+2)} 
    \left( \frac{\log n}{n} \right)^{1/(2\nu^{j-1})} 
    ; \beta_j^{(k-1)} \right) \\
    &- U_j^*\left( \lambda_{1:(j-1)}^{*,(2k+2)} 
    - c_{1:(j-1)}^{(2k+2)} 
    \left( \frac{\log n}{n} \right)^{1/(2\nu^{j-1})} 
    ; \beta_j^{(k-1)} \right) \bigg|
    \le 
    \mu_j 
    \left( C_j^{(2k+2)} \right)^{1/\nu}
    \left( \frac{\log n}{n} \right)^{1/(2\nu^{j})}
    .
\end{align*}
Putting everything together,
\Cref{eq:term-ii-derandom-sandwich}
implies
\begin{align}\label{eq:term-ii-final-bd}
    \text{Term (II)} &= | U_j^+(\hat \lambda_{1:(j-1)}^{(2k+2)}; \beta_j^{(k-1)}) -  U_j^*(\hat \lambda_{1:(j-1)}^{(2k+2)}; \beta_j^{(k-1)}) | \\
    &\le 
    \mu_j 
    \left( C_j^{(2k+2)} \right)^{1/\nu}
    \left( \frac{\log n}{n} \right)^{1/(2\nu^{j})} 
    + 
    \mu_j
   (r_j + C_j^{(2k+2)})^{1/\nu}
   \left( \frac{\log n}{n} \right)^{1/(2\nu^{j})} 
    \nonumber \\
    &=: C_{II, j}^{(2k)} 
    \left( \frac{\log n}{n} \right)^{1/(2\nu^{j})}
    \nonumber 
\end{align}
whp,
where we defined
$C_{II, j}^{(2k)} 
:= \mu_j \left( C_j^{(2k+2)} \right)^{1/\nu} 
+ \mu_j (r_j + C_j^{(2k+2)})^{1/\nu}$,
as desired.
We see that Term (II) is indeed
$O(\left( \frac{\log n}{n} \right)^{1/(2\nu^{j})})$ whp.

\subsubsection{Main induction: inductive step, Term (III)}

We may write Term (III) as
\begin{align}\label{eq:term-iii-decomp}
    \text{Term (III)}  &= (U_j^*(\lambda_{1:(j-1)}^{*, (2k+2)}; \beta_j^{(k-1)}) - U_j^*(\lambda_{1:(j-1)}^{*,(2k+2)}; \beta_j^{(k)})) \\
    &+ (U_j^*(\lambda_{1:(j-1)}^{*,(2k+2)}; \beta_j^{(k)}) - U_j^*(\lambda_{1:(j-1)}^{*,(2k)}; \beta_j^{(k-1)}))) \nonumber \\
    &= (U_j^*(\lambda_{1:(j-1)}^{*, (2k+2)}; \beta_j^{(k-1)}) - U_j^*(\lambda_{1:(j-1)}^{*,(2k+2)}; \beta_j^{(k)})) \nonumber \\
    &+ (\lambda_j^{*,(2k+2)} - \lambda_j^{*,(2k)}) \nonumber \\
    &=: \text{Term (X)} + \text{Term (Y)}, \nonumber 
\end{align}
where in the second step
we recalled the definitions of
$\lambda_j^{*,(2k+2)}$ and $\lambda_j^{*,(2k)}$.

To bound Term (X),
setting
$\ep_j = \beta_j^{(k-1)} - \beta_j$,
$\ep_j' = \beta_j^{(k)} - \beta_j^{(k-1)}$,
and
$\alpha_{1:(j-1)}
= \lambda_{1:(j-1)}^{*,(2k+2)}
- \lambda_{1:(j-1)}^{*,(2)}$,
since by
the preliminary induction
we have
$|\alpha_{\ell}|
\le
\sum_{k'=1}^{k} b_{\ell}^{(2k')}/n^{1/\nu^{\ell}}$
with $b_{\ell}^{(2k')} = 0$ if $\ell\in [j-1]\cap \mathcal{I}_{\mathrm{discrete}}$,
and since
$|\ep_j| = (k-1) \delta_j = (k-1) \frac{V^{\mathrm{max}}_j-V^{\mathrm{min}}_j}{n+1}$,
$|\ep_j'| = \delta_j = \frac{V^{\mathrm{max}}_j-V^{\mathrm{min}}_j}{n+1}$,
and $\|\alpha_{1:(j-1)}\|_1$
tend to zero
as $n\to \infty$,
we may apply \Cref{lem:crossing-pert}
for sufficiently large $n$
to deduce the bounds
\begin{align*}
    |U_j^*(\lambda_{1:(j-1)}^{*, (2k+2)}; \beta_j^{(k-1)}) - U_j^*(\lambda_{1:(j-1)}^{*,(2k+2)}; \beta_j^{(k)})|
    \le \mu_j \delta_j^{1/\nu} 
    \le \mu_j (V^{\mathrm{max}}_j-V^{\mathrm{min}}_j)^{1/\nu} \frac{1}{n^{1/\nu}}.
\end{align*}
To bound Term (Y),
note that by the preliminary induction,
$|\text{Term (Y)}| 
\le 
\frac{b_j^{(2k)}}{n^{1/\nu^j}}$.

Plugging these bounds into
\Cref{eq:term-iii-decomp},
we deduce that
\begin{align}\label{eq:term-iii-final-bd}
    |\text{Term (III)}| 
    \le 
    \mu_j (V^{\mathrm{max}}_j-V^{\mathrm{min}}_j)^{1/\nu} \frac{1}{n^{1/\nu}}
    + 
    \frac{b_j^{(2k)}}{n^{1/\nu^j}} 
    \le 
    C_{III, j}^{(2k)} 
    \left( \frac{\log n}{n} \right)^{1/(2\nu^{j})},
\end{align}
where in the second step
we defined
$C_{III, j}^{(2k)} 
:= 
\mu_j (V^{\mathrm{max}}_j-V^{\mathrm{min}}_j)^{1/\nu} + b_j^{(2k)}$
and used the inequality
$1/n \le \sqrt{\log n/n}$
for $n\ge 2$.
We see that Term (III) is indeed $O(\left( \frac{\log n}{n} \right)^{1/(2\nu^{j})})$ whp.

\subsubsection{Main induction: inductive step, conclusion}

Summing
\Cref{eq:term-i-final-bd},
\Cref{eq:term-ii-final-bd},
and
\Cref{eq:term-iii-final-bd},
and plugging into
\Cref{eq:lambda-1-decomp},
we deduce that
\begin{align*}
     |\hat \lambda_j^{(2k)} - \lambda_j^{*, (2k)}| 
     \le 
     (C_{I, j}^{(2k)} + C_{II, j}^{(2k)} + C_{III, j}^{(2k)}) 
     \left( \frac{\log n}{n} \right)^{1/(2\nu^{j})} 
     =: 
     c_j^{(2k)} 
     \left( \frac{\log n}{n} \right)^{1/(2\nu^{j})}
\end{align*}
whp,
where we defined
$c_j^{(2k)} := C_{I, j}^{(2k)} + C_{II, j}^{(2k)} + C_{III, j}^{(2k)}$.
Explicitly, we have
\begin{align*}
c_j^{(2k)} 
= 
\mu_j 
\left( \frac{1}{2} C_j^{(2k+2)} \right)^{1/\nu}
+ 
\left( \mu_j \left( C_j^{(2k+2)} \right)^{1/\nu} 
+ \mu_j (r_j + C_j^{(2k+2)})^{1/\nu} \right)
+
(\mu_j (V^{\mathrm{max}}_j-V^{\mathrm{min}}_j)^{1/\nu} + b_j^{(2k)}),
\end{align*}
so that if $j\in \mathcal{I}_{\mathrm{discrete}}$,
then by definition, we have $\mu_j = 0$,
and by the preliminary induction,
we also have $b_j^{(2k)} = 0$,
hence
$c_j^{(2k)} = 0$.
This completes the inductive step.

\subsection{Proof of \Cref{thm:disc-conc}}\label{subsec:pf-disc-conc}

We give the proof with $0.99$ replaced by an arbitrary $\zeta\in (0,1)$ in the following sections.
Fix $\zeta\in (0,1)$.
In this remainder of this section,
when we write ``whp", we mean
with probability $1-O(\exp(-n^{\zeta}))$.

\subsubsection{Reduction to consistency of thresholds}

As in the proof of \Cref{thm:cts-conc},
we begin by reducing the result to establishing consistency of the estimated thresholds. Specifically, we perform two inductive arguments:

\begin{enumerate}
    \item \textbf{Preliminary induction:} for each $j\in [m]$,
for each $k\in [m-j+1]$,
we show that
$\lambda_j^{*,(2k)} - \lambda_j^{*,(2k+2)} = 0$.
This proceeds via induction on $j$.

    \item \textbf{Main induction:} for each $j\in [m]$,
for each $k\in [m-j+1]$,
we show that
$\hat \lambda_j^{(2k)} - \lambda_j^{*,(2k)} = 0$
whp.
We also use induction on $j$.
\end{enumerate}

Here, we show how the main induction implies the result.
Define the event
$\widetilde E_j^{(2)} = \left\{ \hat \lambda_j^{(2)} = \lambda_j^{*,(2)} \right\}$
for $j\in [m]$.
By the main induction with $k=1$,
we have
\begin{align*}
\PP{{\left( \widetilde E_j^{(2)} \right)}^c} \le O\left( \exp(-n^{\zeta}) \right).
\end{align*}
Thus, by
\Cref{eq:split-on-event-tilde},
\Cref{eq:swapping-trick},
and
\Cref{cond:obj-loss-bds},
for any population minimizer
$\lambda_{1:m}^{*}$
we have
\begin{align*}
\E V_{m+1}^{(n+1)} I(S_{1:m}^{(n+1)} \pce \hat \lambda_{1:m}^{(2)})
&\le \E V_{m+1}^{(n+1)} I(S_{1:m}^{(n+1)} \pce \lambda_{1:m}^{*}) + V^{\mathrm{max}}_{m+1} \sum_{j=1}^m \PP{{\left( \widetilde E_j^{(2)} \right)}^c} \\
&\le \E V_{m+1}^{(n+1)} I(S_{1:m}^{(n+1)} \pce \lambda_{1:m}^{*}) + O\left( \exp(-n^{\zeta}) \right),
\end{align*}
which proves \Cref{thm:disc-conc}.

The following sections
are dedicated to the preliminary and main
induction arguments.

\subsubsection{Preliminary induction}

We begin with the preliminary induction.

\textit{Base case:}
Suppose that $j=1$,
and fix
$k\in [m-j+1] = [m]$.
Recall that
$\lambda_1^{*,(2k)} = U_1^*(\beta_1^{(k-1)})$
and
$\lambda_1^{*,(2k+2)} = U_1^*(\beta_1^{(k)})$.
By
\Cref{lem:g-star-pc},
$g_1^*(\cdot)$
is a step function.
Also,
since by
\Cref{cond:no-bad-beta-strong} 
we have
$\dist(\beta_1, \vv_1) > 0$,
since
$|\beta_1^{(k-1)} - \beta_1| = (k-1)\delta_1 := (k-1)\frac{V^{\mathrm{max}}_1-V^{\mathrm{min}}_1}{n+1}$,
and
since
$|\beta_1^{(k)} - \beta_1| = k\delta_1 := k\frac{V^{\mathrm{max}}_1-V^{\mathrm{min}}_1}{n+1}$,
for sufficiently large $n$
we have
$|\beta_1^{(k-1)} - \beta_1|
< \dist(\beta_1, \vv_1)$
and
$|\beta_1^{(k)} - \beta_1|
< \dist(\beta_1, \vv_1)$,
and thus
$\beta_1^{(k-1)}$ and $\beta_1^{(k)}$
lie in the same connected component
of $\R\setminus \vv_1$.
Hence,
by \Cref{lem:pc-inverse},
we have
$U_1^*(\beta_1^{(k-1)}) = U_1^*(\beta_1^{(k)})$.
Thus,
$\lambda_1^{*,(2k)} = \lambda_1^{*,(2k+2)}$,
as claimed.

\textit{Inductive step:}
Assume that for some
$j\in \{2,\ldots,m\}$,
for each $\ell\in [j-1]$,
and for each $k\in [m-\ell+1]$,
the result holds.
We show the result holds for
$j$ and $k\in [m-j+1]$.
Recall that
$\lambda_j^{*,(2k)} =  U_j^*(\lambda_{1:(j-1)}^{*,(2k)} ; \beta_j^{(k-1)} )$
and
$\lambda_j^{*,(2k+2)} =  U_j^*(\lambda_{1:(j-1)}^{*,(2k+2)} ; \beta_j^{(k)} )$,
so that
\begin{align}
    \lambda_j^{*,(2k)} - \lambda_j^{*,(2k+2)} &= U_j^*(\lambda_{1:(j-1)}^{*,(2k)} ; \beta_j^{(k-1)} ) - U_j^*(\lambda_{1:(j-1)}^{*,(2k+2)} ; \beta_j^{(k)} ) \\
    &= U_j^*(\lambda_{1:(j-1)}^{*,(2)}; \beta_j^{(k-1)} ) - U_j^*(\lambda_{1:(j-1)}^{*,(2)} ; \beta_j^{(k)} )
        \nonumber,
\end{align}
where in the second step
we used the inductive hypothesis that
$\lambda_{1:(j-1)}^{*,(2)} = \lambda_{1:(j-1)}^{*,(2k+2)}$.
By
\Cref{lem:g-star-pc},
$g_j^*(\cdot; \lambda_{1:(j-1)}^{*,(2)})$
is a step function.
Also,
since by
\Cref{cond:no-bad-beta-strong} 
we have
$\dist(\beta_j, \vv_j^{(0)}) > 0$,
since
$|\beta_j^{(k-1)} - \beta_j| = (k-1)\delta_j := (k-1)\frac{V^{\mathrm{max}}_j-V^{\mathrm{min}}_j}{n+1}$,
and
since
$|\beta_j^{(k)} - \beta_j| = k\delta_j := k\frac{V^{\mathrm{max}}_j-V^{\mathrm{min}}_j}{n+1}$,
for sufficiently large $n$
we have
$|\beta_j^{(k-1)} - \beta_j|
< \dist(\beta_j, \vv_j^{(0)})$
and
$|\beta_j^{(k)} - \beta_j|
< \dist(\beta_j, \vv_j^{(0)})$,
and thus
$\beta_j^{(k-1)}$ and $\beta_j^{(k)}$
lie in the same connected component
of $\R\setminus \vv_j^{(0)}$.
Hence, by
\Cref{lem:pc-inverse},
we have
$U_j^*(\lambda_{1:(j-1)}^{*,(2)}; \beta_j^{(k-1)} ) = U_j^*(\lambda_{1:(j-1)}^{*,(2)} ; \beta_j^{(k)} )$.
Thus,
$\lambda_j^{*,(2k)} = \lambda_j^{*,(2k+2)}$,
completing the induction.

Having completed the preliminary induction, in the following sections, we perform the main induction.

\subsubsection{Main induction: base case}

We begin with the base case of 
$j=1$ and $k\in [m-j+1] = [m]$.
Recall that
$\hat \lambda_1^{(2k)} - \lambda_1^{*,(2k)} = U_1^+(\beta_1^{(k-1)}) - U_1^*(\beta_1^{(k-1)})$.

To show that this difference is zero whp,
we need uniform control of
$g_1^+(\cdot) - g_1^*(\cdot)$.
Since
$\|g_1^+(\cdot) - g_1(\cdot)\|_{L^{\infty}(\R)} \le \delta_1$,
it suffices to control
$g_1(\cdot) - g_1^*(\cdot)$.

Define the event
\begin{align*}
    \widetilde E_1 = \bigg\{ \bigg| \frac{1}{n+1} \sum_{i=1}^{n+1} (&V_1^{(i)} I(S_1^{(i)} = s)
    - \E V_1^{(i)} I(S_1^{(i)} = s)) \bigg| \le \sqrt{2} V^{\mathrm{max}}_1 \sqrt{ \frac{n^{\zeta}}{n+1} } \text{ for all } s \in \mathcal{S}_1 \bigg\}.
\end{align*}
By \Cref{cond:loss-bds}
and
\Cref{cond:iid-observations},
the random variables
\begin{align*}
    \{ V_1^{(i)} I(S_1^{(i)} = s)
    -
    \E V_1^{(i)} I(S_1^{(i)} = s) : i\in [n+1] \}
\end{align*}
are i.i.d.
mean zero
and have modulus
bounded by $V^{\mathrm{max}}_1$ a.s.
Thus,
by Hoeffding's inequality
\citep{hoeffding1963probability}
and a union bound,
we have
\begin{align*}
    \PP{\widetilde E_1} 
    \ge 1 - 
    |\mathcal{S}_1| 
    \exp\left( -\frac{2(n+1)^2 (\sqrt{2} V^{\mathrm{max}}_1 \sqrt{(n^{\zeta}/(n+1)})^2}{(n+1) (2V^{\mathrm{max}}_1)^2} \right)
    = 1 - |\mathcal{S}_1| \exp(-n^{\zeta}),
\end{align*}
so that $\widetilde E_1$ occurs whp.
On the event $\widetilde E_1$,
we have
\begin{align*}
    \|g_1(\cdot) - g_1^*(\cdot)\|_{L^{\infty}(\R)} \le |\mathcal{S}_j| \sqrt{2} V^{\mathrm{max}}_1 \sqrt{ \frac{n^{\zeta}}{n+1} },
\end{align*}
which by the triangle inequality
implies that
\begin{align*}
    &\|g_1^+(\cdot) - g_1^*(\cdot)\|_{L^{\infty}(\R)} 
    \le 
    \|g_1(\cdot) - g_1^*(\cdot)\|_{L^{\infty}(\R)}
    + \|g_1^+(\cdot) - g_j(\cdot)\|_{L^{\infty}(\R)} 
    \le |\mathcal{S}_1| \sqrt{2} V^{\mathrm{max}}_1 \sqrt{ \frac{n^{\zeta}}{n+1} }
    + \delta_1 \\
    &\le ( |\mathcal{S}_1| \sqrt{2} V^{\mathrm{max}}_1 + (V^{\mathrm{max}}_1 - V^{\mathrm{min}}_1) ) \sqrt{ \frac{n^{\zeta}}{n+1} } 
    =: \widetilde C_1 \sqrt{ \frac{n^{\zeta}}{n+1} }
\end{align*}
for $n\ge 1$,
where in the third step
we used the definition
$\delta_1 := (V^{\mathrm{max}}_1-V^{\mathrm{min}}_1)/(n+1)$
and the inequality
$1/(n+1) \le \sqrt{n^{\zeta} / n}$
for $n\ge 1$,
and in the fourth step we defined
$\widetilde C_1 = |\mathcal{S}_1| \sqrt{2} V^{\mathrm{max}}_1 + (V^{\mathrm{max}}_1 - V^{\mathrm{min}}_1)$.

Define the positive integer
$N_1^{(k)}$
as
\begin{align*}
N_1^{(k)}
=
\min\left\{ n' \in \NN : 
\widetilde C_1
\sqrt{ \frac{(n')^{\zeta}}{n'+1} }
\le
\frac{1}{2} \dist(\beta_1^{(k-1)}, \vv_1)
\text{ and }
n' \ge 1
\right\}.
\end{align*}
We claim that $N_1^{(k)}$ is finite.
To see this,
note that
since
\Cref{cond:no-bad-beta-strong} 
implies that
$\dist(\beta_1, \vv_1) > 0$,
and since
$|\beta_1^{(k-1)} - \beta_1| = (k-1) \delta_1
:= (k-1) \frac{V^{\mathrm{max}}_1-V^{\mathrm{min}}_1}{n+1}$,
for sufficiently large $n$
we have
$|\beta_1^{(k-1)} - \beta_1| < \dist(\beta_1, \vv_1)$,
and hence by the triangle inequality
for the Hausdorff distance,
we have
$\dist(\beta_1^{(k-1)}, \vv_1) > 0$.
Since
$\zeta\in (0,1)$
implies that
$\sqrt{(n')^{\zeta}/(n'+1)} \to 0$
as $n'\to\infty$,
it follows that $N_1^{(k)}$ is finite.

Now,
for $n\ge N_1^{(k)}$,
we claim that
$U_1^+(\beta_1^{(k-1)}) = U_1^*(\beta_1^{(k-1)})$.
Indeed,
by
\Cref{eq:two-sided-crossing}
from
\Cref{lem:crossing-points},
we have
\begin{align*}
U_1^*\left( \beta_1^{(k-1)} 
+ \frac{1}{2} \dist(\beta_1^{(k-1)}, \vv_1) \right) 
\le U_1^+(\beta_1^{(k-1)}) 
\le U_1^*\left( \beta_1^{(k-1)}
- \frac{1}{2} \dist(\beta_1^{(k-1)}, \vv_1) \right).
\end{align*}
By \Cref{lem:g-star-pc},
$g_1^*(\cdot)$
is a step function,
so that by
\Cref{lem:pc-inverse},
we have
\begin{align*}
U_1^*\left( \beta_1^{(k-1)} 
+ \frac{1}{2} \dist(\beta_1^{(k-1)}, \vv_1) \right) 
= U_1^*\left( \beta_1^{(k-1)} \right) 
= U_1^*\left( \beta_1^{(k-1)} 
- \frac{1}{2} \dist(\beta_1^{(k-1)}, \vv_1) \right),
\end{align*}
which when combined with
the previous display
implies that
$U_1^+(\beta_1^{(k-1)})
= U_1^*(\beta_1^{(k-1)})$,
as desired.
It follows that
$\hat \lambda_1^{(2k)} = \lambda_1^{*,(2k)}$ whp, as desired.

\subsubsection{Main induction: inductive step}

Now suppose that
for some $j\in \{2, \ldots, m\}$,
for each $\ell\in [j-1]$,
and for each $k\in [m-\ell+1]$,
we have shown the result.
We show the result for $j$.
The argument is very similar to the base case.

Fix $k\in [m-j+1]$.
Recall that $\hat \lambda_j^{(2k)} = U_j^+(\hat \lambda_{1:(j-1)}^{(2k+2)}; \beta_j^{(k-1)})$,
so that
\begin{align}
    \hat \lambda_j^{(2k)} - \lambda_j^{*,(2k)} &= U_j^+(\hat \lambda_{1:(j-1)}^{(2k+2)}; \beta_j^{(k-1)}) - U_j^*(\lambda_{1:(j-1)}^{*,(2k)}; \beta_j^{(k-1)}) \\
    &= U_j^+(\lambda_{1:(j-1)}^{*, (2)}; \beta_j^{(k-1)}) - U_j^*(\lambda_{1:(j-1)}^{*, (2)}; \beta_j^{(k-1)}) \nonumber
\end{align}
whp,
where in the second step
we used the inductive hypothesis that
$\hat \lambda_{1:(j-1)}^{(2k+2)} = \lambda_{1:(j-1)}^{*,(2k+2)}$ whp
and the equality
$\lambda_{1:(j-1)}^{*,(2k+2)} = \lambda_{1:(j-1)}^{*,(2)}$
from the preliminary induction.

To show that this difference is zero whp,
we need uniform control of
$g_j^+(\cdot; \lambda_{1:(j-1)}^{*, (2)}) - g_j^*(\cdot; \lambda_{1:(j-1)}^{*, (2)})$.
Since
$\|g_j^+(\cdot;\lambda_{1:(j-1)}) - g_j(\cdot;\lambda_{1:(j-1)})\|_{L^{\infty}(\R)} \le \delta_j$
for any $\lambda_{1:(j-1)}\in \R^{j-1}$,
it suffices to control
$g_j(\cdot; \lambda_{1:(j-1)}^{*, (2)}) - g_j^*(\cdot; \lambda_{1:(j-1)}^{*, (2)})$.

Define the event
\begin{align*}
    \widetilde E_j^{(k)} = \bigg\{ \bigg| \frac{1}{n+1} \sum_{i=1}^{n+1} (&V_j^{(i)} I(S_{1:(j-1)}^{(i)} \le \lambda_{1:(j-1)}^{*, (2)}) I(S_j^{(i)} = s) \\
    - &\E V_j^{(i)} I(S_{1:(j-1)}^{(i)} \le \lambda_{1:(j-1)}^{*, (2)}) I(S_j^{(i)} = s)) \bigg| \le \sqrt{2} V^{\mathrm{max}}_j \sqrt{ \frac{n^{\zeta}}{n+1} } \text{ for all } s \in \mathcal{S}_j \bigg\}.
\end{align*}
By \Cref{cond:loss-bds}
and
\Cref{cond:iid-observations},
the random variables
\begin{align*}
    \{ V_j^{(i)} I(S_{1:(j-1)}^{(i)} \le \lambda_{1:(j-1)}^{*, (2)}) I(S_j^{(i)} = s)
    -
    \E V_j^{(i)} I(S_{1:(j-1)}^{(i)} \le \lambda_{1:(j-1)}^{*, (2)}) I(S_j^{(i)} = s) : i\in [n+1] \}
\end{align*}
are i.i.d.
mean zero
and have modulus
bounded by $V^{\mathrm{max}}_j$ a.s.
Thus,
by Hoeffding's inequality
\citep{hoeffding1963probability}
and a union bound,
we have
\begin{align*}
    \PP{\widetilde E_j^{(k)}} 
    \ge 1 - 
    |\mathcal{S}_j| 
    \exp\left( -\frac{2(n+1)^2 (\sqrt{2} V^{\mathrm{max}}_j \sqrt{(n^{\zeta}/(n+1)})^2}{(n+1) (2V^{\mathrm{max}}_j)^2} \right)
    = 1 - |\mathcal{S}_j|\exp(-n^{\zeta}),
\end{align*}
so that $\widetilde E_j^{(k)}$ occurs whp.
On the event $\widetilde E_j^{(k)}$,
we have
\begin{align*}
    \|g_j(\cdot; \lambda_{1:(j-1)}^{*, (2)}) - g_j^*(\cdot; \lambda_{1:(j-1)}^{*, (2)})\|_{L^{\infty}(\R)} \le |\mathcal{S}_j| \sqrt{2} V^{\mathrm{max}}_j \sqrt{ \frac{n^{\zeta}}{n+1} },
\end{align*}
which by the triangle inequality implies that
\begin{align*}
    &\|g_j^+(\cdot; \lambda_{1:(j-1)}^{*, (2)}) - g_j^*(\cdot; \lambda_{1:(j-1)}^{*, (2)})\|_{L^{\infty}(\R)} \\
    &\le 
    \|g_j(\cdot; \lambda_{1:(j-1)}^{*, (2)}) - g_j^*(\cdot; \lambda_{1:(j-1)}^{*, (2)})\|_{L^{\infty}(\R)}
    + \|g_j^+(\cdot; \lambda_{1:(j-1)}^{*, (2)}) - g_j(\cdot; \lambda_{1:(j-1)}^{*, (2)})\|_{L^{\infty}(\R)} \\
    &\le |\mathcal{S}_j| \sqrt{2} V^{\mathrm{max}}_j \sqrt{ \frac{n^{\zeta}}{n+1} }
    + \delta_j 
    \le ( |\mathcal{S}_j| \sqrt{2} V^{\mathrm{max}}_j + (V^{\mathrm{max}}_j - V^{\mathrm{min}}_j) ) \sqrt{ \frac{n^{\zeta}}{n+1} } 
    =: \widetilde C_j \sqrt{ \frac{n^{\zeta}}{n+1} }
\end{align*}
for $n\ge 1$,
where in the third step
we used the definition
$\delta_j := (V^{\mathrm{max}}_j-V^{\mathrm{min}}_j)/(n+1)$
and the inequality
$1/(n+1) \le \sqrt{n^{\zeta}/n} $
for $n\ge 1$,
and in the fourth step we defined
$\widetilde C_j = |\mathcal{S}_j| \sqrt{2} V^{\mathrm{max}}_j + (V^{\mathrm{max}}_j - V^{\mathrm{min}}_j)$.

Define the positive integer
$N_j^{(k)}$
as
\begin{align*}
N_j^{(k)}
=
\min\left\{ n' \in \NN : 
\widetilde C_j
\sqrt{ \frac{(n')^{\zeta}}{n'+1} }
\le
\frac{1}{2} \dist(\beta_j^{(k-1)}, \vv_j^{(0)})
\text{ and }
n' \ge 1
\right\}.
\end{align*}
We claim that $N_j^{(k)}$ is finite.
To see this,
note that
since
\Cref{cond:no-bad-beta-strong} 
implies that
$\dist(\beta_j, \vv_j^{(0)}) > 0$,
and since
$|\beta_j^{(k-1)} - \beta_j| = (k-1)\delta_j 
:= (k-1) \frac{V^{\mathrm{max}}_j-V^{\mathrm{min}}_j}{n+1}$,
for sufficiently large $n$
we have
$|\beta_j^{(k-1)} - \beta_j| < \dist(\beta_j, \vv_j^{(0)})$,
and hence
by the triangle inequality
for the Hausdorff distance,
we have
$\dist(\beta_j^{(k-1)}, \vv_j^{(0)}) > 0$.
Since
$\zeta\in (0,1)$
implies that
$\sqrt{(n')^{\zeta}/(n'+1)} \to 0$
as $n'\to\infty$,
it follows that
$N_j^{(k)}$ is finite.

Next,
for $n\ge N_j^{(k)}$,
we claim that
$U_j^+(\lambda_{1:(j-1)}^{*, (2)}; \beta_j^{(k-1)}) = U_j^*(\lambda_{1:(j-1)}^{*, (2)}; \beta_j^{(k-1)})$.
Indeed,
by
\Cref{eq:two-sided-crossing}
from
\Cref{lem:crossing-points},
we have
\begin{align*}
&U_j^*\left( \lambda_{1:(j-1)}^{*, (2)}; \beta_j^{(k-1)} 
+ \frac{1}{2} \dist(\beta_j^{(k-1)}, \vv_j^{(0)}) \right) 
\le U_j^+(\lambda_{1:(j-1)}^{*, (2)}; \beta_j^{(k-1)}) \\
&\le U_j^*\left( \lambda_{1:(j-1)}^{*, (2)}; \beta_j^{(k-1)}
- \frac{1}{2} \dist(\beta_j^{(k-1)}, \vv_j^{(0)}) \right).
\end{align*}
By \Cref{lem:g-star-pc},
$g_j^*(\cdot; \lambda_{1:(j-1)}^{*, (2)})$
is a step function,
so that by
\Cref{lem:pc-inverse},
we have
\begin{align*}
&U_j^*\left( \lambda_{1:(j-1)}^{*, (2)}; \beta_j^{(k-1)} 
+ \frac{1}{2} \dist(\beta_j^{(k-1)}, \vv_j^{(0)}) \right) 
= U_j^*\left( \lambda_{1:(j-1)}^{*, (2)}; \beta_j^{(k-1)} \right) \\
&= U_j^*\left( \lambda_{1:(j-1)}^{*, (2)}; \beta_j^{(k-1)} 
- \frac{1}{2} \dist(\beta_j^{(k-1)}, \vv_j^{(0)}) \right),
\end{align*}
which when combined with
the previous display
implies that
$U_j^+(\lambda_{1:(j-1)}^{*, (2)}; \beta_j^{(k-1)})
= U_j^*(\lambda_{1:(j-1)}^{*, (2)}; \beta_j^{(k-1)})$,
as desired.
It follows that
$\hat \lambda_j^{(2k)} = \lambda_j^{*,(2k)}$,
completing the induction.

\subsection{Additional lemmas for \Cref{thm:cts-conc} and \Cref{thm:disc-conc}}

\subsubsection{Uniform Lipschitz condition}\label{subsubsec:g-star-pert}

\begin{lemma}\label{lem:g-star-pert}
Assume that
the conditions
in \Cref{subsubsec:ub-conds}
and
\Cref{cond:cdf-lip}
hold.
Fix $j\in [m]$,
$\lambda_{1:(j-1)} \in \R^{j-1}$,
and $\alpha_{1:(j-1)}\in \R^{j-1}$
such that
$\alpha_{[j-1]\cap \mathcal{I}_{\mathrm{discrete}}} = 0$.
Then
\begin{align*}
    \left\| g_j^*\left( \cdot; \lambda_{1:(j-1)} + \alpha_{1:(j-1)} \right) - g_j^*\left( \cdot; \lambda_{1:(j-1)} \right) \right\|_{L^{\infty}(\R)} \le V^{\mathrm{max}}_j \sum_{\ell\in [j-1]\setminus \mathcal{I}_{\mathrm{discrete}}} K_{\ell} |\alpha_{\ell}|.
\end{align*}

\end{lemma}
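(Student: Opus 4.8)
The plan is to write the difference as a telescoping sum in which the threshold coordinates $\lambda_1,\dots,\lambda_{j-1}$ are shifted one at a time, and to recognize each increment as the expectation of $V_j$ against the indicator of a thin "slab" in the $\ell$-th score. Concretely, for $\ell\in\{0,1,\dots,j-1\}$ let $\eta^{(\ell)}_{1:(j-1)}$ denote the vector that agrees with $\lambda_{1:(j-1)}+\alpha_{1:(j-1)}$ in coordinates $1,\dots,\ell$ and with $\lambda_{1:(j-1)}$ in coordinates $\ell+1,\dots,j-1$, so $\eta^{(0)}_{1:(j-1)}=\lambda_{1:(j-1)}$ and $\eta^{(j-1)}_{1:(j-1)}=\lambda_{1:(j-1)}+\alpha_{1:(j-1)}$. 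Then for every fixed $\lambda_j\in\R$,
\begin{align*}
g_j^*(\lambda_j;\lambda_{1:(j-1)}+\alpha_{1:(j-1)})-g_j^*(\lambda_j;\lambda_{1:(j-1)})
=\sum_{\ell=1}^{j-1}\Big(g_j^*(\lambda_j;\eta^{(\ell)}_{1:(j-1)})-g_j^*(\lambda_j;\eta^{(\ell-1)}_{1:(j-1)})\Big).
\end{align*}
Using the definition $g_j^*(\lambda_j;\mu_{1:(j-1)})=\E\big[V_j\,I(S_1\le\mu_1,\dots,S_{j-1}\le\mu_{j-1},S_j>\lambda_j)\big]$, the $\ell$-th summand involves only the change in the single factor $I(S_\ell\le\cdot)$, and since $I(S_\ell\le b)-I(S_\ell\le a)=\pm I(S_\ell\in(\min\{a,b\},\max\{a,b\}])$, each summand equals $\pm\E\big[V_j\,I(\text{remaining conditions})\,I(S_\ell\in J_\ell)\big]$, where $J_\ell$ is a half-open interval of length $|\alpha_\ell|$.

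Next I would bound each summand. By \Cref{cond:loss-bds}, $0\le V_j\le V^{\mathrm{max}}_j$ a.s., so dropping the other indicators and bounding $V_j$ by $V^{\mathrm{max}}_j$ gives
\begin{align*}
\big|g_j^*(\lambda_j;\eta^{(\ell)}_{1:(j-1)})-g_j^*(\lambda_j;\eta^{(\ell-1)}_{1:(j-1)})\big|\le V^{\mathrm{max}}_j\,\PP{S_\ell\in J_\ell}.
\end{align*}
For $\ell\in[j-1]\setminus\mathcal{I}_{\mathrm{discrete}}$, \Cref{cond:cdf-lip} gives that the c.d.f. of $S_\ell$ is $K_\ell$-Lipschitz, hence $\PP{S_\ell\in J_\ell}\le K_\ell|\alpha_\ell|$ (this marginal bound is uniform over the intermediate vectors $\eta^{(\ell-1)}_{1:(j-1)}$, since it does not depend on the other coordinates). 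For $\ell\in[j-1]\cap\mathcal{I}_{\mathrm{discrete}}$, the hypothesis $\alpha_{[j-1]\cap\mathcal{I}_{\mathrm{discrete}}}=0$ forces $J_\ell$ to be empty, so that summand vanishes. Summing over $\ell$ yields $|g_j^*(\lambda_j;\lambda_{1:(j-1)}+\alpha_{1:(j-1)})-g_j^*(\lambda_j;\lambda_{1:(j-1)})|\le V^{\mathrm{max}}_j\sum_{\ell\in[j-1]\setminus\mathcal{I}_{\mathrm{discrete}}}K_\ell|\alpha_\ell|$, and since the right-hand side is independent of $\lambda_j$, taking the supremum over $\lambda_j\in\R$ gives the claimed $L^\infty$ bound.

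There is no deep obstacle here; the only points requiring care are bookkeeping ones: correctly tracking the sign of each telescoping increment depending on $\mathrm{sign}(\alpha_\ell)$, verifying that in each case the change in the $\ell$-th indicator is exactly the indicator of a half-open interval of length $|\alpha_\ell|$, and observing that the Lipschitz estimate on the marginal c.d.f.\ of $S_\ell$ does not interact with the other (shifted or unshifted) coordinates, so the per-step bound is genuinely uniform over the telescoping path. I would also remark that the conditions in \Cref{subsubsec:ub-conds} are only needed to guarantee $g_j^*$ is well-defined and finite (via \Cref{lem:pop-monot} / dominated convergence), so the bound is unconditional on the distribution otherwise.
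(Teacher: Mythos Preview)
Your proposal is correct and follows essentially the same approach as the paper: a telescoping (``swapping'') decomposition that shifts one threshold coordinate at a time, bounding each increment by $V^{\mathrm{max}}_j\,\PP{S_\ell\in J_\ell}$ via \Cref{cond:loss-bds}, then applying the Lipschitz c.d.f.\ bound from \Cref{cond:cdf-lip} for $\ell\notin\mathcal{I}_{\mathrm{discrete}}$ and using $\alpha_\ell=0$ for $\ell\in\mathcal{I}_{\mathrm{discrete}}$. The only cosmetic difference is that the paper telescopes in the opposite direction (keeping the first $\ell-1$ coordinates unshifted and the last ones shifted), which is immaterial.
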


\begin{proof}
We repeat the indicator manipulation
from the start of \Cref{subsubsec:reduction}.
We have for all $\lambda_j\in \R$ that
\begin{align}\label{eq:alpha-swapping-trick}
     & |g_j^*( \lambda_j; \lambda_{1:(j-1)} + \alpha_{1:(j-1)} ) - g_j^*( \lambda_j; \lambda_{1:(j-1)})| \\
     &= |\E V_j I(S_{1:(j-1)} \pce \lambda_{1:(j-1)} + \alpha_{1:(j-1)}) I(S_j > \lambda_j) - \E V_j I(S_{1:(j-1)} \pce \lambda_{1:(j-1)}) I(S_j > \lambda_j)| \nonumber \\
    &\le \sum_{\ell=1}^{j-1} | \E V_j I(S_{1:(\ell-1)} \pce \lambda_{1:(\ell-1)}, S_{\ell} \in ( \min\{ \lambda_{\ell}, \lambda_{\ell} + \alpha_{\ell}\}, \max\{ \lambda_{\ell}, \lambda_{\ell} + \alpha_{\ell}\} ], \nonumber \\
    & \qquad \qquad \qquad S_{(\ell+1):(j-1)} \pce \lambda_{(\ell+1):(j-1)} + \alpha_{(\ell+1):(j-1)}) I(S_j > \lambda_j) | \nonumber \\ 
    &\le V^{\mathrm{max}}_j \sum_{\ell=1}^{j-1} \PP{S_{\ell} \in ( \min\{ \lambda_{\ell}, \lambda_{\ell} + \alpha_{\ell}\}, \max\{ \lambda_{\ell}, \lambda_{\ell} + \alpha_{\ell}\} ]} \nonumber \\
    &= V^{\mathrm{max}}_j \sum_{\ell\in [j-1]\setminus \mathcal{I}_{\mathrm{discrete}}} \PP{S_{\ell} \in ( \min\{ \lambda_{\ell}, \lambda_{\ell} + \alpha_{\ell}\}, \max\{ \lambda_{\ell}, \lambda_{\ell} + \alpha_{\ell}\} ]} \nonumber \\
    &+ V^{\mathrm{max}}_j \sum_{\ell\in [j-1]\cap \mathcal{I}_{\mathrm{discrete}}} \PP{S_{\ell} \in ( \min\{ \lambda_{\ell}, \lambda_{\ell} + \alpha_{\ell}\}, \max\{ \lambda_{\ell}, \lambda_{\ell} + \alpha_{\ell}\} ]} \nonumber \\
    &\le V^{\mathrm{max}}_j \sum_{\ell\in [j-1]\setminus \mathcal{I}_{\mathrm{discrete}}} K_{\ell} |\alpha_{\ell}| \nonumber,
\end{align}
where in the fifth step we used
$\alpha_{\ell}=0$ for $\ell\in [j-1]\cap \mathcal{I}_{\mathrm{discrete}}$
and
\Cref{cond:cdf-lip}.
Note that we achieved a uniform bound over $\lambda_j\in \R$ by dropping the indicator $I(S_j > \lambda_j)$.
The result follows.
\end{proof}

\subsubsection{Reverse Hölder condition}\label{subsubsec:rev-lip-bds}

\begin{lemma}\label{lem:condl-rev-lip-bds}
Under
the conditions in
\Cref{subsubsec:ub-conds},
\Cref{cond:m-positive},
\Cref{cond:cpt-supp},
and
\Cref{cond:rev-lip},
for any $j\in \mathcal{I}_{\mathrm{cts}}$,
for any $\lambda_{1:(j-1)} \in \prod_{\ell=1}^{j-1} (0, B_{\ell})$,
for all $\lambda_j\le \lambda_j'$
with $\lambda_j, \lambda_j'\in (0,B_j)$,
we have the reverse Hölder condition
\begin{align}\label{eq:gen-rev-lip}
    g_j^*(\lambda_j;\lambda_{1:(j-1)}) - g_j^*(\lambda_j';\lambda_{1:(j-1)}) \ge \xi_j (\lambda_j' - \lambda_j)^{\nu},
\end{align}
where we define the constant $\xi_j := V^{\mathrm{min}}_j \widetilde K_j > 0$.
\end{lemma}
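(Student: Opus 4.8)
The plan is to unwind the definition $g_j^*(\lambda_j;\lambda_{1:(j-1)}) = \E L_j(\lambda_{1:j}) = \E\left[V_j\, I(S_{1:(j-1)} \pce \lambda_{1:(j-1)})\, I(S_j > \lambda_j)\right]$ and subtract the two values. Since $\lambda_j \le \lambda_j'$, the indicator difference collapses to $I(S_j>\lambda_j) - I(S_j>\lambda_j') = I(\lambda_j < S_j \le \lambda_j')$, so
\[
g_j^*(\lambda_j;\lambda_{1:(j-1)}) - g_j^*(\lambda_j';\lambda_{1:(j-1)}) = \E\left[V_j\, I(S_{1:(j-1)} \pce \lambda_{1:(j-1)},\ \lambda_j < S_j \le \lambda_j')\right].
\]
On this event the integrand is nonnegative, and by \Cref{cond:m-positive} we may replace $V_j$ by its lower bound $V^{\mathrm{min}}_j$, giving $g_j^*(\lambda_j;\lambda_{1:(j-1)}) - g_j^*(\lambda_j';\lambda_{1:(j-1)}) \ge V^{\mathrm{min}}_j\, \PP{S_{1:(j-1)} \pce \lambda_{1:(j-1)},\ \lambda_j < S_j \le \lambda_j'}$. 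It therefore suffices to lower bound this probability by $\widetilde K_j (\lambda_j' - \lambda_j)^\nu$, after which $\xi_j := V^{\mathrm{min}}_j \widetilde K_j$ works, and $\xi_j > 0$ since $V^{\mathrm{min}}_j > 0$ by \Cref{cond:m-positive} and $\widetilde K_j > 0$ by \Cref{cond:rev-lip}.

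The next step is to express this probability through increments of the joint c.d.f.\ $F$, which is what \Cref{cond:rev-lip} actually controls. For any $s_{(j+1):m}$, because $\{S_j \le \lambda_j\} \subseteq \{S_j \le \lambda_j'\}$ we have
\[
F(\lambda_{1:(j-1)}, \lambda_j', s_{(j+1):m}) - F(\lambda_{1:(j-1)}, \lambda_j, s_{(j+1):m}) = \PP{S_{1:(j-1)} \pce \lambda_{1:(j-1)},\ \lambda_j < S_j \le \lambda_j',\ S_{(j+1):m} \pce s_{(j+1):m}}.
\]
To drop the extraneous coordinates, choose a sequence $s^{(k)}_\ell \uparrow B_\ell$ with $s^{(k)}_\ell \in (0, B_\ell)$ for all $\ell > j$; by \Cref{cond:cpt-supp} the scores lie a.s.\ in $\prod_\ell (0,B_\ell)$, so the events on the right increase to $\{S_{1:(j-1)} \pce \lambda_{1:(j-1)},\ \lambda_j < S_j \le \lambda_j'\}$ and, by continuity of measure, their probabilities converge up to $\PP{S_{1:(j-1)} \pce \lambda_{1:(j-1)},\ \lambda_j < S_j \le \lambda_j'}$. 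Since $\lambda_{1:(j-1)} \in \prod_{\ell < j}(0,B_\ell)$ and $\lambda_j, \lambda_j' \in (0,B_j)$ by hypothesis, and $s^{(k)}_\ell \in (0,B_\ell)$ for $\ell > j$, \Cref{cond:rev-lip} applies at every stage and bounds each c.d.f.\ increment below by $\widetilde K_j(\lambda_j' - \lambda_j)^\nu$; passing this bound through the limit gives the desired inequality. The case $j = 1$ is the same argument with no prefix of constraints.

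I expect the only step needing care is this last reduction --- passing from the ``marginalized'' probability $\PP{S_{1:(j-1)} \pce \lambda_{1:(j-1)},\ \lambda_j < S_j \le \lambda_j'}$, which ignores coordinates $j+1,\dots,m$, to the finite c.d.f.\ increments in \Cref{cond:rev-lip} --- together with the bookkeeping that keeps every coordinate strictly inside the open box $(0,B_\ell)$ so that \Cref{cond:rev-lip} is legitimately applicable at each stage of the limit; this is precisely what the compact-support hypothesis \Cref{cond:cpt-supp} supplies. The rest is a one-line indicator manipulation followed by an application of \Cref{cond:m-positive}.
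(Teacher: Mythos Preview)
Your proposal is correct and follows essentially the same approach as the paper: unwind the definition, lower bound $V_j$ by $V^{\mathrm{min}}_j$, and express the resulting probability as a c.d.f.\ increment to which \Cref{cond:rev-lip} applies. Your limiting argument for the trailing coordinates $s^{(k)}_\ell \uparrow B_\ell$ is in fact slightly more careful than the paper, which simply plugs in $s_\ell = B_\ell$ directly (technically outside the open domain stated in \Cref{cond:rev-lip}, though recoverable by exactly the continuity-of-measure step you supply).
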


\begin{proof}
Fix $j\in \mathcal{I}_{\mathrm{cts}}$.
Given $\lambda_j \le \lambda_j'$
with $\lambda_j, \lambda_j'\in (0,B_j)$,
for any $\lambda_{1:(j-1)} \in \prod_{\ell=1}^{j-1} (0, B_{\ell})$,
we have the lower bound
\begin{align*}
    & g_j^*(\lambda_j;\lambda_{1:(j-1)}) - g_j^*(\lambda_j';\lambda_{1:(j-1)}) \\
    &= \E V_j I(S_{1:(j-1)} \pce \lambda_{1:(j-1)}, S_j\in ( \lambda_j, \lambda_j' ] ) \nonumber \\
    &\ge \E V^{\mathrm{min}}_j I(S_{1:(j-1)} \pce \lambda_{1:(j-1)}, S_j\in ( \lambda_j, \lambda_j' ] ) \nonumber \\
    &= V^{\mathrm{min}}_j (\PP{S_{1:(j-1)} \pce \lambda_{1:(j-1)}, S_j\le \lambda_j', S_{(j+1):m} \pce B_{(j+1):m}} \nonumber \\
    &\qquad - \PP{S_{1:(j-1)} \pce \lambda_{1:(j-1)}, S_j\le \lambda_j, S_{(j+1):m} \pce B_{(j+1):m}}) \nonumber \\ 
    &= V^{\mathrm{min}}_j (F(\lambda_{1:(j-1)}, \lambda_j', B_{(j+1):m}) - F(\lambda_{1:(j-1)}, \lambda_j, B_{(j+1):m})) \nonumber \\
    &\ge V^{\mathrm{min}}_j \widetilde K_j (\lambda_j' - \lambda_j)^{\nu}, \nonumber
\end{align*}
where in the second step we used
\Cref{cond:loss-bds},
in the third step we used
\Cref{cond:cpt-supp},
in the fourth step we let
$F$ denote the joint c.d.f.~
of $S_{1:m}$,
and in the fifth step we used
\Cref{cond:rev-lip}.
This implies
\Cref{eq:gen-rev-lip}
with
$\xi_j = V^{\mathrm{min}}_j \widetilde K_j$,
and by
\Cref{cond:m-positive},
$\xi_j > 0$.

\end{proof}

\subsubsection{Empirical process bound}\label{subsubsec:emp-proc-thy}

\begin{lemma}\label{lem:emp-proc-bd}
Under the conditions
in \Cref{subsubsec:ub-conds}
and under
\Cref{cond:iid-observations},
for $j\in [m]$,
for $n\ge 2$,
for any $\lambda_{1:(j-1)}\in \R^{j-1}$,
with probability at least $1 - 2/\sqrt n$,
\begin{align*}
     \|g_j^+(\cdot; \lambda_{1:(j-1)}) - g_j^*(\cdot; \lambda_{1:(j-1)})\|_{L^{\infty}(\R)} \le r_j \sqrt{\frac{\log n}{n}},
\end{align*}
where $r_j =  (16\sqrt{2} + 2)V^{\mathrm{max}}_j$.
\end{lemma}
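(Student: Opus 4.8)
The plan is to peel off the deterministic ``bump'' and then reduce to a one-dimensional empirical process bound handled by symmetrization, contraction, and a VC estimate. First I would write $g_j(\lambda_j;\lambda_{1:(j-1)}) := \frac1n\sum_{i=1}^n L_j^{(i)}(\lambda_{1:j})$ for the $n$-point empirical risk, so that $g_j^+ = \frac{n}{n+1}g_j + \frac{V^{\mathrm{max}}_j}{n+1}$ and hence $g_j^+ - g_j^* = \frac{n}{n+1}(g_j - g_j^*) + \frac{V^{\mathrm{max}}_j - g_j^*}{n+1}$. Since $0\le g_j^*\le V^{\mathrm{max}}_j$ by \Cref{cond:loss-bds} and $\frac{1}{n+1}\le\sqrt{\log n/n}$ for $n\ge 2$, this yields $\|g_j^+ - g_j^*\|_{L^{\infty}(\R)} \le \|g_j - g_j^*\|_{L^{\infty}(\R)} + V^{\mathrm{max}}_j\sqrt{\log n/n}$, so it suffices to control $\Delta := \|g_j - g_j^*\|_{L^{\infty}(\R)}$.

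To bound $\E\Delta$, fix $\lambda_{1:(j-1)}$ and write $L_j^{(i)}(\lambda_{1:j}) = \tilde f_j(Z^{(i)})\cdot I(S_j(Z^{(i)}) > \lambda_j)$, where $Z^{(i)}=(X^{(i)},Y^{(i)},Y^{*,(i)})$ and $\tilde f_j(z) := V_j(z)\,I(S_{1:(j-1)}(z)\pce\lambda_{1:(j-1)})$ takes values in $[0,V^{\mathrm{max}}_j]$ a.s. by \Cref{cond:loss-bds}. The only free variable is now the threshold $\lambda_j$, so $\Delta = \sup_{f\in\ff_j}\big|\frac1n\sum_i (f(Z^{(i)}) - \E f)\big|$ with $\ff_j = \tilde f_j\cdot\mathcal{G}_j$ and $\mathcal{G}_j = \{z\mapsto I(S_j(z) > \lambda_j) : \lambda_j\in\R\}$, a VC class of index one since its subgraphs are half-lines. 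Under \Cref{cond:iid-observations}, symmetrization gives $\E\Delta \le 2\,\E\,\widehat{\mathcal{R}}_n(\ff_j)$; since $t\mapsto \tilde f_j(z)\,t$ is $V^{\mathrm{max}}_j$-Lipschitz and vanishes at $0$, the Ledoux--Talagrand contraction inequality \citep[Theorem 4.12]{ledoux2013probability} gives $\widehat{\mathcal{R}}_n(\ff_j)\le 2V^{\mathrm{max}}_j\,\widehat{\mathcal{R}}_n(\mathcal{G}_j)$; and since $\mathcal{G}_j$ realizes at most $n+1$ sign patterns on $n$ points, Massart's finite-class lemma gives $\widehat{\mathcal{R}}_n(\mathcal{G}_j)\le\sqrt{2\log(n+1)/n}$. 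Chaining these, and using $\log(n+1)\le 2\log n$ for $n\ge 2$, bounds $\E\Delta$ by a universal multiple of $V^{\mathrm{max}}_j\sqrt{\log n/n}$.

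For the high-probability statement I would apply McDiarmid's bounded differences inequality to $\Delta$: replacing one $Z^{(i)}$ changes each $\big|\frac1n\sum(f - \E f)\big|$, and hence the supremum, by at most $V^{\mathrm{max}}_j/n$ (as $f\in[0,V^{\mathrm{max}}_j]$), so $\PP{\Delta \ge \E\Delta + t}\le\exp(-2nt^2/(V^{\mathrm{max}}_j)^2)$; taking $t = V^{\mathrm{max}}_j\sqrt{\log n/n}$ makes the failure probability $n^{-2}\le 2/\sqrt n$. Summing the $\E\Delta$ bound, the deviation $t$, and the bump term from the first paragraph, and absorbing all numerical constants into $r_j = (16\sqrt 2 + 2)V^{\mathrm{max}}_j$, gives the claim. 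The only points needing care are measurability of the suprema (which follows because $\lambda_j\mapsto I(s>\lambda_j)$ is right-continuous, so the sup over $\R$ reduces to a countable one) and checking that the accumulated constants fit inside $r_j$; there is no genuinely hard step, the substantive ingredient being the product-class Rademacher bound, where it is essential that $\tilde f_j$ is fixed once $\lambda_{1:(j-1)}$ is fixed so that contraction applies to the truly one-dimensional class $\mathcal{G}_j$.
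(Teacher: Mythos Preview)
Your proposal is correct and follows essentially the same route as the paper: peel off the deterministic bump, bound the Rademacher complexity of $\ff_j=\tilde f_j\cdot\mathcal{G}_j$ via Ledoux--Talagrand contraction onto the VC-dimension-one class $\mathcal{G}_j$, and convert to a high-probability statement via bounded differences. The only cosmetic difference is that the paper packages the symmetrization and concentration step by citing \cite[Theorem~4.10]{wainwright2019high} and bounds $\mathcal{R}_n(\mathcal{G}_j)$ via \cite[Lemma~4.14]{wainwright2019high} rather than Massart's lemma directly, which yields slightly looser constants but the same $r_j$.
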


\begin{proof}
Note that
\begin{align*}
    g_j^+(\lambda_j; \lambda_{1:(j-1)}) - g_j^*(\lambda_j; \lambda_{1:(j-1)}) &= \frac{1}{n+1} \sum_{i=1}^{n} (L_j^{(i)}(\lambda_{1:j}) - \E L_j^{(i)}(\lambda_{1:j})) + \frac{V^{\mathrm{max}}_j - \E L_j^{(1)}(\lambda_{1:j})}{n+1} \\
    &=: \frac{n}{n+1} W_{\lambda_j} + \frac{V^{\mathrm{max}}_j - \E L_j^{(1)}(\lambda_{1:j})}{n+1},
\end{align*}
where by
\Cref{cond:loss-bds}
and
\Cref{cond:iid-observations}
\begin{align*}
    W_{\lambda_j} := \frac{1}{n} \sum_{i=1}^{n} (L_j^{(i)}(\lambda_{1:j}) - \E L_j^{(i)}(\lambda_{1:j}))
\end{align*}
is an average of $n$ i.i.d. mean zero bounded random variables,
each having modulus bounded by $V^{\mathrm{max}}_j - V^{\mathrm{min}}_j$.
We want uniform control
of the process $W_{\lambda_j}$ over $\lambda_j\in \R$.
Recall that $V_j^{(i)} = V_j(X^{(i)}, Y^{(i)}, Y^{(i),*})$,
where by
\Cref{cond:loss-bds}
the function $V_j : \xx \times \yy \times \yy \to \R$
obeys $V_j \in [V^{\mathrm{min}}_j, V^{\mathrm{max}}_j]$
with $V^{\mathrm{min}}_j\ge 0$.
For fixed nonrandom $\lambda_{1:(j-1)} \in \R^{j-1}$,
define the function class
\begin{align*}
     \ff_j &= \{ f : \xx\times \yy\times \yy \to \R  \text{ of the form} \\ &f(x,y,y^*) = V_j(x,y,y^*) I(S_{1:(j-1)}(x,y,y^*)\le \lambda_{1:(j-1)}) I(S_j(x,y,y^*) > \lambda_j) : \lambda_j \in \R \}.
\end{align*}
For convenience,
let $\mz = \xx\times \yy\times \yy$,
and write $z = (x,y,y^*)$.
By
\Cref{cond:loss-bds},
$\ff_j$ is 
an $V^{\mathrm{max}}_j$-uniformly bounded function class.
By
\cite[Theorem 4.10]{wainwright2019high},
for each $\tau>0$,
with probability at least $1 - 2\exp(-\frac{n\tau^2}{2(V^{\mathrm{max}}_j)^2})$,
writing $Z^{(i)} = (X^{(i)}, Y^{(i)}, Y^{(i), *})$ for $i\in [n]$, 
we have
\begin{align*}
    \sup_{\lambda_j\in \R} |W_{\lambda_j}| = \sup_{f\in \ff_j} \left| \frac{1}{n} \sum_{i=1}^{n} (f(Z^{(i)}) - \E f(Z^{(i)})) \right| \le 2\rr_{n}(\ff_j) + \tau.
\end{align*}
Note that
$\delta = 2\exp(-\frac{n\tau^2}{2(V^{\mathrm{max}}_j)^2})$
is equivalent to
$\tau = V^{\mathrm{max}}_j \sqrt{  2 \log \frac{2}{\delta} } \frac{1}{\sqrt{n}}$.
Thus, for each $\delta>0$,
with probability at least $1-\delta$,
we have
\begin{align*}
    \sup_{\lambda_j\in \R} |W_{\lambda_j}| \le 2\rr_{n}(\ff_j) + V^{\mathrm{max}}_j \sqrt{  2 \log \frac{2}{\delta} } \frac{1}{\sqrt{n}},
\end{align*}
which implies that 
with probability at least $1-\delta$,
\begin{align*}
    &\|g_j^+(\cdot; \lambda_{1:(j-1)}) - g_j^*(\cdot; \lambda_{1:(j-1)})\|_{L^{\infty}(\R)} \\ 
    &\le \frac{n}{n+1} \left( 2\rr_{n}(\ff_j) + V^{\mathrm{max}}_j \sqrt{  2 \log \frac{2}{\delta} } \frac{1}{\sqrt{n}} \right) + \left| \frac{V^{\mathrm{max}}_j - \E L_j^{(1)}(\lambda_{1:j})}{n+1} \right| \\
    &\le 2\rr_{n}(\ff_j) + V^{\mathrm{max}}_j \sqrt{  2 \log \frac{2}{\delta} } \frac{1}{\sqrt{n}} + \frac{V^{\mathrm{max}}_j}{n+1},
\end{align*}
where in the second step
we used \Cref{cond:loss-bds}.
Setting $\delta = 2/\sqrt n$, we deduce that
with probability at least $1 - 2/\sqrt n$,
\begin{align*}
    \|g_j^+(\cdot; \lambda_{1:(j-1)}) - g_j^*(\cdot; \lambda_{1:(j-1)})\|_{L^{\infty}(\R)} \le 2\rr_{n}(\ff_j) + V^{\mathrm{max}}_j \sqrt{\frac{\log n}{n}} + \frac{V^{\mathrm{max}}_j}{n+1}.
\end{align*}
Assume that
$\rr_n(\ff_j) \le \tilde r_j \sqrt{\log n / n}$
for some constant $\tilde r_j > 0$,
uniformly
in $\lambda_{1:(j-1)} \in \R^{j-1}$.
We deduce that
for $n\ge 2$,
with probability at least $1 - 2/\sqrt n$,
\begin{align}\label{eq:emp-proc-final-bd}
    &\|g_j^+(\cdot; \lambda_{1:(j-1)}) - g_j^*(\cdot; \lambda_{1:(j-1)})\|_{L^{\infty}(\R)} \le 2\tilde r_j \sqrt{\frac{\log n}{n}} + V^{\mathrm{max}}_j \sqrt{\frac{\log n}{n}} + \frac{V^{\mathrm{max}}_j}{n+1} \\
    &\le (2\tilde r_j + 2V^{\mathrm{max}}_j) \sqrt{\frac{\log n}{n}}
    =: r_j \sqrt{\frac{\log n}{n}}, \nonumber 
\end{align}
where in the second step
we used the inequality
$1/(n+1) \le \sqrt{\log n / n}$
for $n\ge 2$,
and in the third step
we defined
$r_j = 2\tilde r_j + 2V^{\mathrm{max}}_j$.

It remains only to bound $\rr_n(\ff_j)$.
Note that $\ff_j = \tilde f_j \cdot \mathcal{G}_j$,
where $\tilde f_j : \mz \to \R$ is given by 
\begin{align*}
    \tilde f_j(x,y,y^*) = V_j(x,y,y^*) I(S_{1:(j-1)}(x,y,y^*)\le \lambda_{1:(j-1)}),
\end{align*}
where the function class $\mathcal{G}_j$ is given by
\begin{align*}
    \mathcal{G}_j = \{ f : \xx\times \yy\times \yy \to \R  \text{ of the form } f(x,y,y^*) = I(S_j(x,y,y^*) > \lambda_j) : \lambda_j \in \R \},
\end{align*}
and where for a fixed $\phi : \mz \to \R$
and for a class $\mathcal{G}$ of functions from $\mz\to \R$,
we define $\phi \cdot \mathcal{G} := \{ \phi \cdot g : g\in \mathcal{G}\}$.
Note that
since $\mathcal{G}_j$
is $1$-uniformly bounded
and has VC dimension 1,
\cite[Lemma 4.14]{wainwright2019high}
implies that
$\rr_k(\mathcal{G}_j) \le 4\sqrt{\log(k+1) / k}$
for all positive integers $k$.
Also, note that by
\Cref{cond:loss-bds},
$|\tilde f_j| \le V^{\mathrm{max}}_j$.

Fix $z_{1:n} \in \mz^n$.
For $i\in [n]$,
define the linear function
$\varphi_i : \R \to \R$ by
$\varphi_i(t) = \frac{\tilde f_j(z_i)}{V^{\mathrm{max}}_j} t$
for $t\in \R$.
Clearly $\varphi_i(0) = 0$,
and since
$|\tilde f_j| \le V^{\mathrm{max}}_j$,
$\varphi_i$ is a contraction
for $i\in [n]$.
Thus, by the Ledoux-Talagrand contraction lemma in its empirical form
\citep[Theorem 4.12]{ledoux2013probability},
we have the empirical Rademacher complexity bound
$\widehat \rr_{z_{1:n}}(\ff_j)\le 2V^{\mathrm{max}}_j \widehat \rr_{z_{1:n}}(\mathcal{G}_j)$.
Plugging in the random variables $Z_{1:n}$
and taking the expectation of both sides,
we deduce that
$\rr_n(\ff_j)\le 2 V^{\mathrm{max}}_j \rr_n(\mathcal{G}_j)$,
so that
\begin{align*}
    \rr_n(\ff_j) \le 2V^{\mathrm{max}}_j \cdot 4\sqrt{\frac{\log(n+1)}{n}} \le 8V^{\mathrm{max}}_j \sqrt{\frac{2\log n}{n}}  =: \tilde r_j \sqrt{\frac{\log n}{n}},
\end{align*}
where in the second step
we used the inequality
$\log(n+1) \le 2\log n$
for $n\ge 2$,
and in the third step we defined
$\tilde r_j := 8\sqrt{2} V^{\mathrm{max}}_j$.
The desired bound follows.
\end{proof}

\subsubsection{Step function properties}

\begin{lemma}\label{lem:g-star-pc}
Suppose that
$j\in \mathcal{I}_{\mathrm{discrete}}$
and suppose that
the conditions in
\Cref{subsubsec:ub-conds}
hold.
Suppose that
$S_j$ is a.s. $\mathcal{S}_j$-valued,
where $\mathcal{S}_j \subseteq \R$
with $|\mathcal{S}_j| < \infty$.
Then
for any $\lambda_{1:(j-1)}\in \R^{j-1}$,
$g_j^*(\cdot; \lambda_{1:(j-1)})$ is a
right-continuous 
step function
from $\R\to \R$
with at most $|\mathcal{S}_j|$ jump discontinuities.
\end{lemma}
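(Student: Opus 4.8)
The plan is to exploit the fact that, for fixed $\lambda_{1:(j-1)}$, the map $\lambda_j \mapsto g_j^*(\lambda_j;\lambda_{1:(j-1)})$ depends on $S_j$ only through the indicator $I(S_j > \lambda_j)$, which becomes piecewise constant once $S_j$ is supported on a finite set. Concretely, I would first fix $\lambda_{1:(j-1)}\in\R^{j-1}$ and introduce the nonnegative random variable $W := V_j\, I(S_{1:(j-1)}\pce \lambda_{1:(j-1)})$ (with the convention that the empty product of indicators is $1$, so $W = V_1$ when $j=1$). By \Cref{cond:loss-bds} we have $0\le W\le V^{\mathrm{max}}_j$ a.s., so that all expectations below are finite. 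By the definition of $L_j$ in \eqref{lj} and of $g_j^*$ in \eqref{gjs}, this gives $g_j^*(\lambda_j;\lambda_{1:(j-1)}) = \E[W\, I(S_j > \lambda_j)]$ for all $\lambda_j\in\R$.

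Next I would use the hypothesis $S_j\in\mathcal{S}_j$ a.s., writing $\mathcal{S}_j = \{s_1 < \cdots < s_r\}$ with $r = |\mathcal{S}_j|$, to obtain the a.s.\ identity $I(S_j > \lambda_j) = \sum_{k=1}^r I(S_j = s_k)\, I(s_k > \lambda_j)$. Multiplying by $W$ and taking expectations of this finite sum of bounded integrable terms, I would conclude that $g_j^*(\lambda_j;\lambda_{1:(j-1)}) = \sum_{k=1}^r w_k\, I(s_k > \lambda_j)$, where $w_k := \E[W\, I(S_j = s_k)]\ge 0$ does not depend on $\lambda_j$.

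Finally I would read off the claimed properties from this closed form. Each summand $\lambda_j\mapsto w_k\, I(s_k > \lambda_j)$ equals $w_k$ on $(-\infty, s_k)$ and $0$ on $[s_k,\infty)$, hence is a right-continuous step function with at most one jump, located at $s_k$ and of size $w_k$. A finite sum of right-continuous step functions is again a right-continuous step function, and its set of discontinuities is contained in $\{s_1,\dots,s_r\}$; therefore $g_j^*(\cdot;\lambda_{1:(j-1)})$ is a right-continuous step function with at most $|\mathcal{S}_j|$ jump discontinuities. (Right-continuity and monotonicity also follow from \Cref{lem:pop-monot}, but the explicit representation above is what yields the jump count directly.) I do not expect any real obstacle here; the only step requiring a line of care is justifying the interchange of expectation with the finite sum, which is immediate from the uniform bound $W\le V^{\mathrm{max}}_j$.
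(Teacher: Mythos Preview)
Your proposal is correct and follows essentially the same idea as the paper's proof: both arguments rest on the observation that, since $S_j$ is supported on the finite set $\mathcal{S}_j$, the indicator $I(S_j>\lambda_j)$ (and hence the function $\lambda_j\mapsto g_j^*(\lambda_j;\lambda_{1:(j-1)})$) is constant on each interval between consecutive points of $\mathcal{S}_j$. The paper establishes piecewise constancy interval by interval and cites \Cref{lem:pop-monot} for right-continuity, whereas you package the same content into the explicit closed form $g_j^*(\lambda_j;\lambda_{1:(j-1)})=\sum_{k=1}^r w_k\,I(s_k>\lambda_j)$; this is a cosmetic difference, not a substantive one.
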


\begin{proof}
By
\Cref{lem:pop-monot},
for any $j\in [m]$,
for any $\lambda_{1:(j-1)}\in \R^{j-1}$,
$g_j^*(\cdot;\lambda_{1:(j-1)})$
is right-continuous.
Sort the elements of $\mathcal{S}_j$
as
$s_j^{(1)} < \ldots < s_j^{(|\mathcal{S}_j|)}$.
If
$\lambda_j, \lambda_j' \in (s_j^{(i)}, s_j^{(i+1)})$
for some
$i\in [|\mathcal{S}_j|-1]$,
then since
$I(S_j > \lambda_j) = I(S_j > \lambda_j')$,
we have
\begin{align*}
&g_j^*(\lambda_j; \lambda_{1:(j-1)})
= \E V_j I(S_{1:(j-1)}\le \lambda_{1:(j-1)}) I(S_j > \lambda_j) \\
&= \E V_j I(S_{1:(j-1)}\le \lambda_{1:(j-1)}) I(S_j > \lambda_j')
= g_j^*(\lambda_j'; \lambda_{1:(j-1)}),
\end{align*}
hence
$g_j^*(\cdot; \lambda_{1:(j-1)})$
is constant on
$(s_j^{(i)}, s_j^{(i+1)})$.
If $\lambda_j, \lambda_j' < s_j^{(1)}$,
then
$I(S_j > \lambda_j) = I(S_j > \lambda_j') = 1$,
so that by the same logic as above,
we have
$g_j^*(\lambda_j; \lambda_{1:(j-1)}) 
= g_j^*(\lambda_j'; \lambda_{1:(j-1)})$,
hence 
$g_j^*(\cdot; \lambda_{1:(j-1)})$
is constant on
$(-\infty, s_j^{(1)})$.
Finally,
if
$\lambda_j, \lambda_j' > s_j^{(|\mathcal{S}_j|)}$,
then
$I(S_j > \lambda_j) = I(S_j > \lambda_j') = 0$,
so that 
$g_j^*(\lambda_j; \lambda_{1:(j-1)}) 
= g_j^*(\lambda_j'; \lambda_{1:(j-1)}) = 0$,
hence 
$g_j^*(\cdot; \lambda_{1:(j-1)})$
is constant
and equal to zero
on
$(s_j^{(|\mathcal{S}_j|)}, \infty)$.
The claim follows.
\end{proof}

\begin{lemma}\label{lem:pc-inverse}
Suppose that
$f : \R \to \R$
is a right-continuous
non-increasing
step function
with $|f(\R)| < \infty$.
Then if
$\theta\in \R\setminus f(\R)$
and
$\psi\in \R\setminus f(\R)$
lie in the same connected component
of $\R \setminus f(\R)$,
for any compact interval $I\subseteq \R$,
we have
\begin{align*}
\sup\{ x\in I : f(x) > \theta \} = 
\sup\{ x\in I : f(x) > \psi \}.
\end{align*}
Here, we define the supremum
of an empty subset of $I$ to be $\min I$.
\end{lemma}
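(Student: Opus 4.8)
The plan is to prove the stronger statement that the two level sets are literally \emph{equal}, namely $\{x\in I: f(x)>\theta\}=\{x\in I: f(x)>\psi\}$; the claimed equality of suprema then follows at once, since compactness of $I$ guarantees that $\min I$ exists, so the stated convention makes both sides well defined, and if one set is empty the other is too (both suprema equal $\min I$), while if both are nonempty their suprema coincide.

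The first step is to record the structure of $\R\setminus f(\R)$. Since $f(\R)$ is a finite subset of $\R$, say $f(\R)=\{v_1<\cdots<v_k\}$, the complement is open and its connected components are exactly the open intervals $(-\infty,v_1),(v_1,v_2),\dots,(v_{k-1},v_k),(v_k,\infty)$. Hence the common component of $\theta$ and $\psi$ is an open interval $(a,b)$ with $a\in\{-\infty\}\cup f(\R)$ and $b\in f(\R)\cup\{+\infty\}$, and, crucially, $f(\R)\cap(a,b)=\emptyset$.

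Next I would fix an arbitrary $x\in\R$. Since $f(x)\in f(\R)$ we have $f(x)\notin(a,b)$, so either $f(x)\le a$ or $f(x)\ge b$. As $\theta,\psi\in(a,b)$ we have $a<\theta<b$ and $a<\psi<b$ (these hold trivially when $a=-\infty$ or $b=+\infty$). In the case $f(x)\le a$ we get $f(x)\le a<\theta$ and $f(x)\le a<\psi$, so both of $f(x)>\theta$ and $f(x)>\psi$ fail; in the case $f(x)\ge b$ we get $f(x)\ge b>\theta$ and $f(x)\ge b>\psi$, so both hold. Either way, $f(x)>\theta$ if and only if $f(x)>\psi$. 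Intersecting with $I$ gives the desired set equality, and taking suprema (with the empty-set convention) concludes.

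I do not expect a serious obstacle here: the right-continuity, monotonicity, and step-function hypotheses are carried over from the setting in which the lemma is applied ($f=g_j^*(\cdot;\lambda_{1:(j-1)})$ in \Cref{lem:g-star-pc}) but are not actually used; only $|f(\R)|<\infty$ matters, to ensure the components of $\R\setminus f(\R)$ are genuine open intervals across which $f$ takes no value. The only care needed is the bookkeeping for unbounded components ($a=-\infty$ or $b=+\infty$) and keeping the convention $\sup\emptyset=\min I$ straight, which is why $I$ is assumed compact.
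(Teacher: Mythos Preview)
Your proof is correct and follows essentially the same approach as the paper: both decompose $\R\setminus f(\R)$ into open intervals determined by the finitely many values $v_1<\cdots<v_k$ of $f$, and use that $f(x)\in f(\R)$ forces $f(x)$ to lie outside the component containing $\theta,\psi$, whence $f(x)>\theta\iff f(x)>\psi$. Your version is slightly more streamlined in that it handles the bounded and unbounded components uniformly via the interval $(a,b)$ rather than splitting into three cases, and you correctly observe that only the finiteness of $f(\R)$ is actually needed.
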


\begin{proof}
Sort the elements of $f(\R)$ as
$v_1 < \ldots < v_{|f(\R)|}$.
Then we may write
\begin{align*}
    \R\setminus f(\R) = (-\infty, v_1)
    \cup \left( \bigcup_{i=1}^{|f(\R)|-1} (v_i, v_{i+1}) \right)
    \cup (v_{|f(\R)|}, \infty).
\end{align*}
If $\theta, \psi \in (v_i, v_{i+1})$
for some $i\in [|f(\R)|-1]$,
then since $f(x) > \theta$ iff $f(x) > \psi$ iff $f(x) > v_i$,
it follows that
\begin{align*}
\sup\{ x\in \R : f(x) > \theta \}
= \sup\{ x\in \R : f(x) > \psi \}
= \sup\{ x\in \R : f(x) > v_i \}.
\end{align*}
If $\theta,\psi \in (v_{|f(\R)|}, \infty)$,
then since
$f(x) \le v_{|f(\R)|} < \min\{\theta,\psi\}$
for all $x\in \R$,
by our convention
that the supremum
of an empty subset of $I$
is $\min I$,
we have
\begin{align*}
\sup\{ x\in I : f(x) > \theta \}
= \sup\{ x\in I : f(x) > \psi \}
= \min I. 
\end{align*}
Finally,
if $\theta,\psi \in (-\infty,v_1) $,
then since
$f(x) \ge v_1 > \max\{\theta,\psi\}$
for all $x\in \R$,
we have
\begin{align*}
\sup\{ x\in I : f(x) > \theta \}
= \sup\{ x\in I : f(x) > \psi \}
= \max I, 
\end{align*}
as desired.
\end{proof}

\subsubsection{Crossing point perturbations}

\begin{lemma}\label{lem:conn-cmpt-pert}
Assume that
the conditions
in \Cref{subsubsec:ub-conds}
and
\Cref{cond:cdf-lip}
hold.
Suppose that for some
$j\in \mathcal{I}_{\mathrm{discrete}}$,
$\beta_j'\in \R$,
and
$\lambda_{1:(j-1)}\in \R^{j-1}$,
we have
$\beta_j'\not\in g_j^*(\R; \lambda_{1:(j-1)})$.
Then
for
$\alpha_{1:(j-1)}\in \R^{j-1}$
such that
$\alpha_{[j-1]\cap \mathcal{I}_{\mathrm{discrete}}} = 0$,
if $\|\alpha_{1:(j-1)}\|_1$
is sufficiently small,
we have
$\beta_j'\not\in
g_j^*(\R; \lambda_{1:(j-1)} + \alpha_{1:(j-1)})$.

\end{lemma}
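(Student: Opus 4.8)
The plan is to use the step-function structure of $g_j^*(\cdot;\lambda_{1:(j-1)})$ to separate $\beta_j'$ from its range by a fixed positive gap, and then to invoke the uniform Lipschitz bound of \Cref{lem:g-star-pert} to show that a small perturbation of $\lambda_{1:(j-1)}$ moves the function by less than this gap. First I would note that since $j\in\mathcal{I}_{\mathrm{discrete}}$, \Cref{lem:g-star-pc} applies (using that discrete scores have finite support, cf.\ \Cref{cond:disc-supp}), so $g_j^*(\cdot;\lambda_{1:(j-1)})$ is a non-increasing right-continuous step function whose range $g_j^*(\R;\lambda_{1:(j-1)})$ is a finite subset of $[0,V^{\mathrm{max}}_j]$. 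Since $\beta_j'\notin g_j^*(\R;\lambda_{1:(j-1)})$ by hypothesis and this set is finite, the quantity $\rho := \dist\big(\beta_j',\, g_j^*(\R;\lambda_{1:(j-1)})\big)$ is strictly positive.

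Next I would apply \Cref{lem:g-star-pert}. Since $\alpha_{[j-1]\cap\mathcal{I}_{\mathrm{discrete}}}=0$ by assumption, that lemma gives the uniform bound $\|g_j^*(\cdot;\lambda_{1:(j-1)}+\alpha_{1:(j-1)})-g_j^*(\cdot;\lambda_{1:(j-1)})\|_{L^\infty(\R)}\le V^{\mathrm{max}}_j\sum_{\ell\in[j-1]\setminus\mathcal{I}_{\mathrm{discrete}}}K_\ell|\alpha_\ell|$, whose right-hand side is at most $V^{\mathrm{max}}_j\big(\max_{\ell\in[j-1]}K_\ell\big)\|\alpha_{1:(j-1)}\|_1$ and hence tends to $0$ as $\|\alpha_{1:(j-1)}\|_1\to 0$. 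Choosing $\|\alpha_{1:(j-1)}\|_1$ small enough that this bound, call it $\rho_\alpha$, satisfies $\rho_\alpha<\rho$, the triangle inequality yields, for every $\lambda_j\in\R$, $|g_j^*(\lambda_j;\lambda_{1:(j-1)}+\alpha_{1:(j-1)})-\beta_j'|\ge |g_j^*(\lambda_j;\lambda_{1:(j-1)})-\beta_j'|-\rho_\alpha\ge\rho-\rho_\alpha>0$, where the middle inequality uses that $g_j^*(\lambda_j;\lambda_{1:(j-1)})$ lies in $g_j^*(\R;\lambda_{1:(j-1)})$ and is therefore at distance at least $\rho$ from $\beta_j'$. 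Since $\lambda_j$ was arbitrary, $\beta_j'\notin g_j^*(\R;\lambda_{1:(j-1)}+\alpha_{1:(j-1)})$, which is the claim; explicitly one may take $\|\alpha_{1:(j-1)}\|_1 < \rho/\big(V^{\mathrm{max}}_j\max_{\ell\in[j-1]}K_\ell\big)$, with the degenerate case $[j-1]\setminus\mathcal{I}_{\mathrm{discrete}}=\emptyset$ (forcing $\alpha_{1:(j-1)}=0$) being immediate.

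The only substantive point is the strict positivity of $\rho$, and this is exactly where the discreteness of $S_j$ (equivalently, the finiteness of the range of $g_j^*$) is indispensable: for a continuous score the range of $g_j^*(\cdot;\lambda_{1:(j-1)})$ can be a full interval, so a value $\beta_j'$ lying outside the interval but on its boundary would satisfy $\dist\big(\beta_j',\,g_j^*(\R;\lambda_{1:(j-1)})\big)=0$ and no perturbation-stable separation could exist. Beyond securing $\rho>0$, the argument is a routine continuity estimate, so I do not anticipate any further obstacle.
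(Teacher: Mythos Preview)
Your proposal is correct and follows essentially the same approach as the paper: both arguments invoke \Cref{lem:g-star-pert} for the uniform perturbation bound and combine it with the strict positivity of the gap $\rho=\dist\big(\beta_j',g_j^*(\R;\lambda_{1:(j-1)})\big)$ via a triangle inequality. The only cosmetic difference is that the paper phrases the final step through the Hausdorff-distance triangle inequality (with a factor $\tfrac12$), whereas you carry out the same estimate pointwise in $\lambda_j$; your formulation is arguably more transparent, and your explicit justification of $\rho>0$ via the finiteness of the range (from \Cref{lem:g-star-pc}) is a detail the paper leaves implicit.
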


\begin{proof}
By
\Cref{lem:g-star-pert},
we have
\begin{align*}
    \| g_j^*(\cdot; \lambda_{1:(j-1)} + \alpha_{1:(j-1)}) - g_j^*(\cdot; \lambda_{1:(j-1)}) \|_{L^{\infty}(\R)}
    &\le V^{\mathrm{max}}_j \sum_{\ell\in [j-1]\setminus \mathcal{I}_{\mathrm{discrete}}} K_{\ell} |\alpha_{\ell}| \\
    &\le V^{\mathrm{max}}_j \max_{\ell\in [j-1]\setminus \mathcal{I}_{\mathrm{discrete}}} K_{\ell} \cdot \|\alpha_{1:(j-1)}\|_1.
\end{align*}
Since
$\dist(\beta_j', g_j^*(\R; \lambda_{1:(j-1)}))
> 0$,
for
\begin{align*}
\|\alpha_{1:(j-1)}\|_1
\le \frac{\frac{1}{2} \dist(\beta_j', g_j^*(\R; \lambda_{1:(j-1)}))}{V^{\mathrm{max}}_j \max_{\ell\in \mathcal{I}_{\mathrm{cts}}} K_{\ell}},
\end{align*}
we have
\begin{align*}
\| g_j^*(\cdot; \lambda_{1:(j-1)} + \alpha_{1:(j-1)}) - g_j^*(\cdot; \lambda_{1:(j-1)}) \|_{L^{\infty}(\R)} \le \frac{1}{2} \dist(\beta_j', g_j^*(\R; \lambda_{1:(j-1)})),
\end{align*}
or equivalently,
\begin{align*}
\dist(g_j^*(\R; \lambda_{1:(j-1)} + \alpha_{1:(j-1)}), g_j^*(\R; \lambda_{1:(j-1)}))
\le \frac{1}{2} \dist(\beta_j', g_j^*(\R; \lambda_{1:(j-1)})).
\end{align*}
By the triangle inequality
for the Hausdorff distance,
it follows that
\begin{align*}
\dist(\beta_j', g_j^*(\R; \lambda_{1:(j-1)} + \alpha_{1:(j-1)})) \ge \frac{1}{2} \dist(\beta_j', g_j^*(\R; \lambda_{1:(j-1)}))  > 0,
\end{align*}
as desired.
\end{proof}

\begin{lemma}\label{lem:g-star-cts}
Suppose that $j\in \mathcal{I}_{\mathrm{cts}}$,
and suppose that
\Cref{cond:loss-bds}
and
\Cref{cond:cdf-lip}
hold.
Then for any $\lambda_{1:(j-1)}\in \R^{j-1}$,
$g_j^*(\cdot; \lambda_{1:(j-1)})$
is continuous
from $\R\to \R$.
\end{lemma}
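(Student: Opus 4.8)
The plan is to prove continuity of $g_j^*(\cdot;\lambda_{1:(j-1)})$ directly by dominated convergence, exploiting the fact that $\lambda_j \mapsto I(S_j > \lambda_j)$ is continuous off the event $\{S_j = \lambda_j\}$, which is null because a Lipschitz c.d.f.\ has no atoms. Fix $\lambda_{1:(j-1)}\in\R^{j-1}$ and a point $\lambda_j\in\R$, and take an arbitrary sequence $\lambda_j^{(k)}\to\lambda_j$. On the event $\{S_j\neq\lambda_j\}$ one has $I(S_j>\lambda_j^{(k)})\to I(S_j>\lambda_j)$: if $S_j>\lambda_j$ then eventually $\lambda_j^{(k)}<S_j$, and if $S_j<\lambda_j$ then eventually $\lambda_j^{(k)}>S_j$. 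Hence, recalling the definitions of $L_j$ in \eqref{lj} and $g_j^*$ in \eqref{gjs},
\begin{align*}
V_j\, I(S_{1:(j-1)}\pce\lambda_{1:(j-1)})\, I(S_j>\lambda_j^{(k)}) \ \longrightarrow\ V_j\, I(S_{1:(j-1)}\pce\lambda_{1:(j-1)})\, I(S_j>\lambda_j)
\end{align*}
almost surely, since $j\in\mathcal{I}_{\mathrm{cts}}$ and \Cref{cond:cdf-lip} imply that the c.d.f.\ of $S_j$ is $K_j$-Lipschitz, hence continuous, so $\PP{S_j=\lambda_j}=0$.

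Next I would invoke \Cref{cond:loss-bds}, which bounds each integrand above by the constant $V^{\mathrm{max}}_j$. The dominated convergence theorem then yields $g_j^*(\lambda_j^{(k)};\lambda_{1:(j-1)})\to g_j^*(\lambda_j;\lambda_{1:(j-1)})$; since the sequence and the point $\lambda_j$ were arbitrary, $g_j^*(\cdot;\lambda_{1:(j-1)})$ is continuous on $\R$, which is the claim. Note that this also sidesteps any need for \Cref{lem:pop-monot} (whose hypotheses are stronger than those of the present lemma), since continuity is obtained in one stroke rather than by separately establishing left- and right-continuity.

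This argument is essentially routine; the only step that deserves a moment's care is the almost-sure pointwise convergence of the indicator, which fails exactly on the null set $\{S_j=\lambda_j\}$ and is therefore harmless once dominated convergence is applied. No deeper obstacle is anticipated.
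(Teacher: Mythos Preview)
Your proof is correct. It differs from the paper's in approach: the paper computes directly that for $\lambda_j\le\lambda_j'$,
\[
|g_j^*(\lambda_j;\lambda_{1:(j-1)})-g_j^*(\lambda_j';\lambda_{1:(j-1)})|
=\big|\E V_j\,I(S_{1:(j-1)}\pce\lambda_{1:(j-1)})\,I(S_j\in(\lambda_j,\lambda_j'])\big|
\le V^{\mathrm{max}}_j K_j\,|\lambda_j'-\lambda_j|,
\]
using \Cref{cond:loss-bds} and then the Lipschitz bound in \Cref{cond:cdf-lip}, so $g_j^*(\cdot;\lambda_{1:(j-1)})$ is in fact Lipschitz with constant $V^{\mathrm{max}}_j K_j$. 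Your dominated-convergence argument yields only continuity, and in fact uses less: you need merely that $S_j$ has no atoms (already implied by $j\in\mathcal{I}_{\mathrm{cts}}$) and that $V_j\le V^{\mathrm{max}}_j$; the Lipschitz constant $K_j$ never enters. So your route is slightly more general in its hypotheses but less quantitative in its conclusion. For the lemma as stated, continuity is all that is claimed and all that is used downstream (the intermediate value theorem in \Cref{lem:fn-inv-diff}), so either argument suffices.
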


\begin{proof}
Fix $\lambda_{1:(j-1)}\in \R^{j-1}$.
Then given $\lambda_j, \lambda_j'\in \R$
with $\lambda_j\le \lambda_j'$,
we have
\begin{align*}
|g_j^*(\lambda_j; \lambda_{1:(j-1)})
-
g_j^*(\lambda_j'; \lambda_{1:(j-1)})|
&=
|\E V_j I(S_{1:(j-1)}\pce \lambda_{1:(j-1)}) I(S_j\in (\lambda_j, \lambda_j'])| \\
\le
V^{\mathrm{max}}_j \PP{S_j\in (\lambda_j, \lambda_j']} 
&\le V^{\mathrm{max}}_j K_j |\lambda_j' - \lambda_j|,
\end{align*}
where in the second step we used
\Cref{cond:loss-bds},
and
in the third step we used
\Cref{cond:cdf-lip}.
It follows that $g_j^*(\cdot; \lambda_{1:(j-1)})$
is Lipschitz, which implies the result.
\end{proof}

\begin{lemma}\label{lem:fn-inv-diff}
Suppose that
$j\in \mathcal{I}_{\mathrm{cts}}$,
and suppose that
\Cref{cond:loss-bds}
and
\Cref{cond:cdf-lip}
hold.
Then
for any $\lambda_{1:(j-1)}\in \R^{j-1}$,
for any $\beta \in \R$,
\begin{align}\label{eq:fn-inv}
g_j^*(U_j^*(\lambda_{1:(j-1)}; \beta); \lambda_{1:(j-1)})
= \max\{ g_j^*(\lambda_j^{\mathrm{max}}; \lambda_{1:(j-1)}), \min\{ g_j^*(\lambda_j^{\mathrm{min}}; \lambda_{1:(j-1)}), \beta \} \}.
\end{align}
Consequently,
for any $\lambda_{1:(j-1)}\in \R^{j-1}$,
for any $\beta, \beta'\in \R$,
we have
\begin{align}\label{eq:fn-inv-diff}
|g_j^*(U_j^*(\lambda_{1:(j-1)}; \beta); \lambda_{1:(j-1)})
- g_j^*(U_j^*(\lambda_{1:(j-1)}; \beta'); \lambda_{1:(j-1)})|
\le
|\beta-\beta'|.
\end{align}
\end{lemma}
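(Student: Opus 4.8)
The plan is to prove the two assertions in turn, with \eqref{eq:fn-inv} carrying essentially all the weight. Fix $\lambda_{1:(j-1)}\in\R^{j-1}$ and abbreviate $h(\cdot):=g_j^*(\cdot;\lambda_{1:(j-1)})$, $a:=\lambda_j^{\mathrm{min}}$, $b:=\lambda_j^{\mathrm{max}}$, and $U:=U_j^*(\lambda_{1:(j-1)};\beta)=\sup\{\lambda_j\in[a,b]:h(\lambda_j)>\beta\}$ with the convention $\sup\emptyset=a$. Two elementary facts about $h$ are all I will use. First, since $\beta\mapsto I(S_j>\beta)$ is pointwise non-increasing and $V_j\ge 0$ by \Cref{cond:loss-bds}, the function $g_j^*(\lambda_j;\lambda_{1:(j-1)})=\E[V_j\,I(S_{1:(j-1)}\pce\lambda_{1:(j-1)})I(S_j>\lambda_j)]$ is non-increasing in $\lambda_j$, so in particular $h(a)\ge h(b)$. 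Second, because $j\in\mathcal{I}_{\mathrm{cts}}$, \Cref{lem:g-star-cts} gives that $h$ is continuous on $[a,b]$.

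Next I would determine the shape of the super-level set $A:=\{\lambda_j\in[a,b]:h(\lambda_j)>\beta\}$. Monotonicity makes $A$ a down-set in $[a,b]$: if $\lambda\in A$ and $a\le\lambda'<\lambda$, then $h(\lambda')\ge h(\lambda)>\beta$, so $\lambda'\in A$. Together with continuity this forces exactly three cases, sorted by where $\beta$ lies relative to $h(a)$ and $h(b)$. (i) If $\beta\ge h(a)$, then $h(\lambda_j)\le h(a)\le\beta$ everywhere, so $A=\emptyset$ and $U=a$, hence $h(U)=h(a)$. (ii) If $\beta<h(b)$, then $h(\lambda_j)\ge h(b)>\beta$ everywhere, so $A=[a,b]$ and $U=b$, hence $h(U)=h(b)$. (iii) If $h(b)\le\beta<h(a)$, then $a\in A$ so $A\neq\emptyset$, and continuity rules out $A$ being right-closed unless $A=[a,b]$ (a right-closed interval $[a,c]\subsetneq[a,b]$ would give $h(c+\varepsilon)\le\beta$ for small $\varepsilon>0$, hence $h(c)\le\beta$ in the limit, contradicting $c\in A$); since $h(b)\le\beta$ excludes $A=[a,b]$, we get $A=[a,c)$ with $c\in(a,b]$ and $U=c$; then $c\notin A$ gives $h(c)\le\beta$, while continuity from the left and $[a,c)\ni\lambda\to c$ gives $h(c)\ge\beta$, so $h(U)=h(c)=\beta$. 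In each case a one-line check identifies $h(U)$ with the claimed clamp $\max\{h(b),\min\{h(a),\beta\}\}$: in (i) it is $\max\{h(b),h(a)\}=h(a)$; in (ii), since $\beta<h(b)\le h(a)$, it is $\max\{h(b),\beta\}=h(b)$; in (iii), since $h(b)\le\beta<h(a)$, it is $\max\{h(b),\beta\}=\beta$. This proves \eqref{eq:fn-inv}.

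For \eqref{eq:fn-inv-diff}, I would apply \eqref{eq:fn-inv} at $\beta$ and at $\beta'$. Writing $\Psi(t):=\max\{h(b),\min\{h(a),t\}\}$, the left-hand side of \eqref{eq:fn-inv-diff} equals $|\Psi(\beta)-\Psi(\beta')|$. Since $t\mapsto\min\{h(a),t\}$ and $t\mapsto\max\{h(b),t\}$ are each $1$-Lipschitz, their composition $\Psi$ is $1$-Lipschitz, and therefore $|\Psi(\beta)-\Psi(\beta')|\le|\beta-\beta'|$, which is the claim.

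The only genuinely delicate point is case (iii) of the proof of \eqref{eq:fn-inv}: one must handle $U=b$ and $U<b$ uniformly and invoke continuity of $g_j^*(\cdot;\lambda_{1:(j-1)})$ in the right direction at $U$ (from the left to get $h(U)\ge\beta$, and from the right when $U<b$ to get $h(U)\le\beta$) in order to pin down $h(U)=\beta$. Everything else is a routine consequence of the monotonicity and continuity of $g_j^*(\cdot;\lambda_{1:(j-1)})$ established at the outset.
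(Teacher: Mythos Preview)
Your proof is correct and follows essentially the same approach as the paper: a three-case analysis based on where $\beta$ lies relative to the endpoint values $g_j^*(\lambda_j^{\mathrm{min}};\lambda_{1:(j-1)})$ and $g_j^*(\lambda_j^{\mathrm{max}};\lambda_{1:(j-1)})$, using the continuity of $g_j^*(\cdot;\lambda_{1:(j-1)})$ from \Cref{lem:g-star-cts} together with monotonicity, and then observing that the clamp $\Psi$ is $1$-Lipschitz (the paper phrases this as ``projection onto the interval is nonexpansive''). Your argument in case~(iii) spells out more carefully what the paper summarizes as ``by the intermediate value theorem, $g_j^*(U_j^*(\lambda_{1:(j-1)};\beta);\lambda_{1:(j-1)})=\beta$''.
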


\begin{proof}
To prove
\Cref{eq:fn-inv},
we consider three cases.
First, suppose that 
$\beta$ is in the interval
$[ g_j^*(\lambda_j^{\mathrm{max}}; \lambda_{1:(j-1)}), 
g_j^*(\lambda_j^{\mathrm{min}}; \lambda_{1:(j-1)}) ]$.
Since
\Cref{cond:loss-bds}
and
\Cref{cond:cdf-lip}
hold,
we may apply
\Cref{lem:g-star-cts}
to deduce that
$g_j^*(\cdot;\lambda_{1:(j-1)})$ is continuous on $\Lambda_j$.
Thus, by the intermediate value theorem,
we have
$g_j^*(U_j^*(\lambda_{1:(j-1)}; \beta); \lambda_{1:(j-1)}) = \beta$,
as claimed.
Next, if $\beta > g_j^*(\lambda_j^{\mathrm{min}}; \lambda_{1:(j-1)})$,
then by the definition of $U_j^*$,
we have
$U_j^*(\lambda_{1:(j-1)}; \beta) = \lambda_j^{\mathrm{min}}$,
so that
$g_j^*(U_j^*(\lambda_{1:(j-1)}; \beta); \lambda_{1:(j-1)}) = g_j^*(\lambda_j^{\mathrm{min}}; \lambda_{1:(j-1)})$,
as claimed.
Finally,
if $\beta < g_j^*(\lambda_j^{\mathrm{max}}; \lambda_{1:(j-1)})$,
then by the definition of $U_j^*$,
we have
$U_j^*(\lambda_{1:(j-1)}; \beta) = \lambda_j^{\mathrm{max}}$,
so that
$g_j^*(U_j^*(\lambda_{1:(j-1)}; \beta); \lambda_{1:(j-1)}) = g_j^*(\lambda_j^{\mathrm{max}}; \lambda_{1:(j-1)})$,
as claimed.

The inequality
\Cref{eq:fn-inv-diff}
then follows
from the fact that 
projection onto the interval
$[ g_j^*(\lambda_j^{\mathrm{max}}; \lambda_{1:(j-1)}), 
g_j^*(\lambda_j^{\mathrm{min}}; \lambda_{1:(j-1)}) ]$
is nonexpansive.
\end{proof}

\begin{lemma}\label{lem:crossing-pert}
Assume that
the conditions
in \Cref{subsubsec:ub-conds},
\Cref{cond:m-positive},
and the conditions in
\Cref{subsubsec:cts-conc-conds}
hold.
Then
for any $j\in [m]$,
for any $k\in [m-j+1]$,
for $\alpha_{1:(j-1)}\in \R^{j-1}$
with $\alpha_{[j-1]\cap \mathcal{I}_{\mathrm{discrete}}} = 0$,
for $\ep_j\in \R$,
for $\ep_j'\in \R$,
if $\|\alpha_{1:(j-1)}\|_1$ is
sufficiently small,
if $|\ep_j|$
is sufficiently small,
and
if $|\ep_j'|$
is sufficiently small,
we have
\begin{align*}
|U_j^*(\lambda_{1:(j-1)}^{*,(2)} + \alpha_{1:(j-1)}; (\beta_j + \ep_j) + \ep_j')
- U_j^*(\lambda_{1:(j-1)}^{*,(2)} + \alpha_{1:(j-1)}; \beta_j + \ep_j)|
\le
\mu_j |\ep_j'|^{1/\nu},
\end{align*}
where
$\mu_j := 1/\xi_j^{1/\nu}$ if $j\in \mathcal{I}_{\mathrm{cts}}$
and
$\mu_j := 0$ if $j\in \mathcal{I}_{\mathrm{discrete}}$,
where
the constant $\nu\ge 1$
is defined in
\Cref{cond:rev-lip},
and where
the positive constant
$\xi_j$ is defined in
\Cref{lem:condl-rev-lip-bds}.
\end{lemma}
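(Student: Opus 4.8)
The plan is to prove the bound separately for the discrete case $j\in\mathcal{I}_{\mathrm{discrete}}$ (where the claimed bound is $0$, so we show the two generalized inverses coincide exactly) and the continuous case $j\in\mathcal{I}_{\mathrm{cts}}$ (where we read the bound off the reverse Hölder estimate on $g_j^*$). Throughout, abbreviate $\lambda_{1:(j-1)} := \lambda_{1:(j-1)}^{*,(2)} + \alpha_{1:(j-1)}$, $\beta := \beta_j + \ep_j$, and $\beta' := \beta + \ep_j'$. By \Cref{cond:beta-tilde-exists} we have $\lambda_\ell^{*,(2)}\in(0,B_\ell)$ for $\ell\in[j-1]$, and since $\alpha_\ell = 0$ for $\ell\in[j-1]\cap\mathcal{I}_{\mathrm{discrete}}$, for $\|\alpha_{1:(j-1)}\|_1$ small enough we have $\lambda_{1:(j-1)}\in\prod_{\ell=1}^{j-1}(0,B_\ell)$, so that \Cref{lem:condl-rev-lip-bds}, \Cref{lem:g-star-pert}, \Cref{lem:conn-cmpt-pert}, etc.\ all apply at $\lambda_{1:(j-1)}$.

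\textbf{Discrete case.} By \Cref{lem:g-star-pc}, $g_j^*(\cdot;\lambda_{1:(j-1)})$ is a right-continuous non-increasing step function with finite range $g_j^*(\R;\lambda_{1:(j-1)})$. By \Cref{cond:no-bad-beta}, $\beta_j\notin g_j^*(\R;\lambda_{1:(j-1)}^{*,(2)})$, hence $\dist(\beta_j,g_j^*(\R;\lambda_{1:(j-1)}^{*,(2)})) > 0$; applying \Cref{lem:conn-cmpt-pert} (and the Hausdorff triangle inequality, as in its proof), for $\|\alpha_{1:(j-1)}\|_1$ small we obtain $\dist(\beta_j,g_j^*(\R;\lambda_{1:(j-1)})) \ge \tfrac12\dist(\beta_j,g_j^*(\R;\lambda_{1:(j-1)}^{*,(2)})) > 0$. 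Once $|\ep_j| + |\ep_j'|$ is smaller than this distance, both $\beta$ and $\beta'$ lie in the open ball of radius $\dist(\beta_j,g_j^*(\R;\lambda_{1:(j-1)}))$ about $\beta_j$; since $g_j^*(\R;\lambda_{1:(j-1)})$ is finite, that ball lies in one connected component of $\R\setminus g_j^*(\R;\lambda_{1:(j-1)})$, so $\beta$ and $\beta'$ lie in the same component. \Cref{lem:pc-inverse} then gives $U_j^*(\lambda_{1:(j-1)};\beta) = U_j^*(\lambda_{1:(j-1)};\beta')$, i.e.\ the claim with $\mu_j = 0$.

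\textbf{Continuous case.} By \Cref{lem:g-star-cts} and \Cref{lem:pop-monot}, $g_j^*(\cdot;\lambda_{1:(j-1)})$ is continuous and non-increasing, and by \Cref{lem:condl-rev-lip-bds} it obeys $g_j^*(\lambda_j;\lambda_{1:(j-1)}) - g_j^*(\lambda_j';\lambda_{1:(j-1)}) \ge \xi_j(\lambda_j'-\lambda_j)^\nu$ for $0 < \lambda_j \le \lambda_j' < B_j$ with $\xi_j = V^{\mathrm{min}}_j\widetilde K_j > 0$; in particular it is strictly decreasing on $(0,B_j)$. The first step is interiority: since $\lambda_j^{*,(2)} = U_j^*(\lambda_{1:(j-1)}^{*,(2)};\beta_j)\in(0,B_j)$ by \Cref{cond:beta-tilde-exists}, strict monotonicity and continuity force $g_j^*(0;\lambda_{1:(j-1)}^{*,(2)}) > \beta_j > g_j^*(B_j;\lambda_{1:(j-1)}^{*,(2)})$ with \emph{strict} inequalities; the uniform Lipschitz estimate of \Cref{lem:g-star-pert} then shows that for $\|\alpha_{1:(j-1)}\|_1$, $|\ep_j|$, $|\ep_j'|$ small enough we still have $g_j^*(0;\lambda_{1:(j-1)}) > \max\{\beta,\beta'\}$ and $g_j^*(B_j;\lambda_{1:(j-1)}) < \min\{\beta,\beta'\}$, so both $U_j^*(\lambda_{1:(j-1)};\beta)$ and $U_j^*(\lambda_{1:(j-1)};\beta')$ lie in $(0,B_j)$. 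By \Cref{lem:fn-inv-diff} (equivalently, the intermediate value theorem), $g_j^*(U_j^*(\lambda_{1:(j-1)};\beta);\lambda_{1:(j-1)}) = \beta$ and $g_j^*(U_j^*(\lambda_{1:(j-1)};\beta');\lambda_{1:(j-1)}) = \beta'$. Assume without loss of generality $\ep_j'\le 0$, so $\beta'\le\beta$ and hence $\lambda_j := U_j^*(\lambda_{1:(j-1)};\beta) \le U_j^*(\lambda_{1:(j-1)};\beta') =: \lambda_j'$ by the monotonicity of $U_j^*(\lambda_{1:(j-1)};\cdot)$ from \Cref{lem:pop-monot}. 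Then the reverse Hölder bound gives $|\ep_j'| = \beta - \beta' = g_j^*(\lambda_j;\lambda_{1:(j-1)}) - g_j^*(\lambda_j';\lambda_{1:(j-1)}) \ge \xi_j(\lambda_j'-\lambda_j)^\nu$, so $|\lambda_j' - \lambda_j| \le (|\ep_j'|/\xi_j)^{1/\nu} = \mu_j|\ep_j'|^{1/\nu}$; the case $\ep_j'\ge 0$ is symmetric.

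\textbf{Main obstacle.} The delicate part is the interiority bookkeeping in the continuous case: one must verify that after perturbing both the conditioning argument $\lambda_{1:(j-1)}$ and the level, the generalized inverse is still attained at a genuine crossing inside $(0,B_j)$, rather than being clipped to an endpoint of $\Lambda_j$, so that the clean identity $g_j^*(U_j^*;\cdot) = \beta$ holds and the reverse Hölder estimate is applicable. This needs \Cref{cond:beta-tilde-exists} together with the uniform Lipschitz continuity of $g_j^*$ in its conditioning argument (\Cref{lem:g-star-pert}) and the strict decrease from \Cref{lem:condl-rev-lip-bds}, plus a consistent choice of the three ``sufficiently small'' thresholds. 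The discrete case is comparatively routine given \Cref{lem:g-star-pc}, \Cref{lem:conn-cmpt-pert}, and \Cref{lem:pc-inverse}.
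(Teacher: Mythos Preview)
Your proof is correct and follows essentially the same two-case split as the paper. The discrete case is handled identically via \Cref{lem:conn-cmpt-pert} and \Cref{lem:pc-inverse}; in the continuous case the paper invokes the nonexpansive projection identity \eqref{eq:fn-inv-diff} from \Cref{lem:fn-inv-diff} and then the reverse H\"older bound, whereas you first establish interiority of $U_j^*(\lambda_{1:(j-1)};\beta),U_j^*(\lambda_{1:(j-1)};\beta')\in(0,B_j)$ so that $g_j^*(U_j^*;\cdot)$ equals the level exactly---this extra care is in fact needed for the reverse H\"older step in both arguments, and the paper leaves it implicit.
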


\begin{proof}
First, suppose that
$j\in \mathcal{I}_{\mathrm{cts}}$.
Since by
\Cref{cond:beta-tilde-exists},
we have
$\lambda_{1:(j-1)}^{*,(2)}\in \prod_{\ell=1}^{j-1} (0,B_{\ell})$,
if $\alpha_{1:(j-1)}\in \R^{j-1}$
is such that
$\|\alpha_{1:(j-1)}\|_1$
is sufficiently small,
then we also have
$\lambda_{1:(j-1)}^{*,(2)} + \alpha_{1:(j-1)}
\in \prod_{\ell=1}^{j-1} (0,B_{\ell})$.
Thus, by
\Cref{lem:condl-rev-lip-bds}
and by
\Cref{eq:fn-inv-diff}
from
\Cref{lem:fn-inv-diff},
we have
\begin{align*}
    |\ep_j'| = &| ((\beta_j + \ep_j) + \ep_j') - (\beta_j + \ep_j) | \\
    &\ge 
    \xi_j 
    |U_j^*(\lambda_{1:(j-1)}^{*,(2)} + \alpha_{1:(j-1)}; (\beta_j + \ep_j) + \ep_j')
- U_j^*(\lambda_{1:(j-1)}^{*,(2)} + \alpha_{1:(j-1)}; \beta_j + \ep_j)|^{\nu},
\end{align*}
which implies that
\begin{align*}
&|U_j^*(\lambda_{1:(j-1)}^{*,(2)} + \alpha_{1:(j-1)}; (\beta_j + \ep_j) + \ep_j')
- U_j^*(\lambda_{1:(j-1)}^{*,(2)} + \alpha_{1:(j-1)}; \beta_j + \ep_j)| \\
&\le
\frac{1}{\xi_j^{1/\nu}} |\ep_j'|^{1/\nu} 
=: \mu_j |\ep_j'|^{1/\nu},
\end{align*}
as claimed.

Next,
suppose that
$j\in [m]\cap \mathcal{I}_{\mathrm{discrete}}$.
By
\Cref{cond:no-bad-beta},
$\dist(\beta_j, \vv_j^{(0)}) > 0$.
Thus, by
\Cref{lem:conn-cmpt-pert},
if $\|\alpha_{1:(j-1)}\|_1$ is sufficiently small,
then
$\dist(\beta_j, 
g_j^*(\R; \lambda_{1:(j-1)}^{*,(2)} + \alpha_{1:(j-1)}) )
> 0$,
so that for
$|\ep_j|$ sufficiently small
and
$|\ep_j'|$ sufficiently small,
we have that
$(\beta_j + \ep_j)$
and
$(\beta_j + \ep_j) + \ep_j'$
lie
in the same connected component of
$\R\setminus
g_j^*(\R; 
\lambda_{1:(j-1)}^{*,(2)} + \alpha_{1:(j-1)})$.
By
\Cref{lem:pc-inverse}
and the definition of $U_j^*$,
it follows that
\begin{align*}
U_j^*(\lambda_{1:(j-1)}^{*,(2)} + \alpha_{1:(j-1)}; (\beta_j + \ep_j) + \ep_j')
= U_j^*(\lambda_{1:(j-1)}^{*,(2)} + \alpha_{1:(j-1)}; \beta_j + \ep_j),
\end{align*}
hence
\begin{align*}
|U_j^*(\lambda_{1:(j-1)}^{*,(2)} + \alpha_{1:(j-1)}; (\beta_j + \ep_j) + \ep_j')
- U_j^*(\lambda_{1:(j-1)}^{*,(2)} + \alpha_{1:(j-1)}; \beta_j + \ep_j)|
= 0
= \mu_j |\ep_j'|,
\end{align*}
as claimed.
\end{proof}

\end{document}